\documentclass[twoside,11pt]{article}
\usepackage{amsthm,letltxmacro}
\LetLtxMacro{\myproof}{\proof}
\let\endmyproof\endproof
\usepackage[preprint]{jmlr2e}
\firstpageno{1}
\usepackage{lastpage}
\makeatletter
\if@preprint\else
\jmlrheading{23}{2022}{1-\pageref{LastPage}}{11/21; Revised
7/22}{9/22}{21-1404}{Zuowei Shen, Haizhao Yang, and Shijun Zhang}
\ShortHeadings{Achieving Arbitrary Accuracy with Fixed Number of Neurons}{Shen, Yang, and Zhang}
\fi
\makeatother
\usepackage{mathrsfs,amsmath,amsfonts,amssymb}
\usepackage{mathtools,xcolor,enumerate}
\usepackage{dsfont}
\usepackage{bm}
\usepackage{tabularx,multirow,array,booktabs}
\usepackage{colortbl}
\usepackage{graphicx}
\graphicspath{{figures/}}%
\usepackage{float}
\usepackage{subcaption} 
\captionsetup{subrefformat=parens} 
\newcommand{\R}{\ifmmode\mathbb{R}\else$\mathbb{R}$\fi}
\newcommand{\N}{\ifmmode\mathbb{N}\else$\mathbb{N}$\fi}
\newcommand{\Z}{\ifmmode\mathbb{Z}\else$\mathbb{Z}$\fi}
\newcommand{\Q}{\ifmmode\mathbb{Q}\else$\mathbb{Q}$\fi}
\newcommand{\A}{\ifmmode\mathbb{A}\else$\mathbb{A}$\fi}
\let\tn\textnormal

\newcommand{\bmx}{{\bm{x}}}

\newcommand{\bme}{{\bm{e}}}
\newcommand{\bmh}{{\bm{h}}}
\newcommand{\bmb}{{\bm{b}}}
\newcommand{\bmxi}{{\bm{\xi}}}

\newcommand{\bmy}{{\bm{y}}}
\newcommand{\bmtheta}{{\bm{\theta}}}

\newcommand{\bmA}{\bm{A}}

\newcommand{\bmzero}{{\bm{0}}}
\newcommand{\calO}{{\mathcal{O}}}

\newcommand{\calN}{{\mathcal{N}}}
\newcommand{\calD}{{\mathcal{D}}}
\newcommand{\calS}{{\mathcal{S}}}

\newcommand{\calI}{{\mathcal{I}}}
\newcommand{\calL}{{\mathcal{L}}}
\newcommand{\bmcalL}{{\bm{\mathcal{L}}}}

\newcommand{\calX}{{\mathcal{X}}}
\newcommand{\tildephi}{{\widetilde{\phi}}}
\newcommand{\tildeg}{{\widetilde{g}}}
\newcommand{\tildeh}{{\widetilde{h}}}

\newcommand{\tildey}{{\widetilde{y}}}
\newcommand{\tildeL}{{\widetilde{L}}}
\newcommand{\tildeN}{{\widetilde{N}}}
\newcommand{\tildeE}{{\widetilde{E}}}
\newcommand{\tildef}{{\widetilde{f}}}
\newcommand{\tildeM}{{\widetilde{M}}}

\newcommand{\caltildeI}{{\mathcal{\widetilde{I}}}}
\newcommand{\caltildeL}{{\mathcal{\widetilde{L}}}}
\newcommand{\tildexi}{{\widetilde{\xi}}}
\newcommand{\tildevarrho}{{\widetilde{\varrho}}}

\newcommand{\tildescrH}{{\widetilde{\mathscr{H}}}}

\newcommand{\tildesigma}{{\widetilde{\sigma}}}

\newcommand{\scrE}{{\mathscr{E}}}

\newcommand{\scrF}{{\mathscr{F}}}

\newcommand{\scrH}{{\mathscr{H}}}
\newcommand{\scrC}{{\mathscr{C}}}

\newcommand{\hatxi}{{\widehat{\xi}}}
\newcommand{\hatn}{{\widehat{n}}}
\newcommand{\hata}{{\widehat{a}}}
\newcommand{\hath}{{\widehat{h}}}
\newcommand{\hatbmf}{{\widehat{\bm{f}}}}

\newcommand{\mystep}[2]{\par \vspace{0.25cm}\noindent\textbf{\hspace{8pt}Step }$#1\colon$ #2 \vspace{0.18cm} \par }

\usepackage{tikz}
\newcommand{\myto}[2][1]{\mathop{
        \vcenter{\hbox{\scalebox{1}[#1]{\tikz{\draw[->,line width=0.72pt] (0,0.5) to (0.69*#2,0.5);}}}}
}}

\def\one{{\ensuremath{\mathds{1}}}}
\newcommand{\setMathResizeRate}[1]{\def\mathResizeRate{#1}}
\setMathResizeRate{0.8}
\newcommand{\mathResize}[2][\mathResizeRate]{
    \scalebox{#1}[#1]{\(\displaystyle #2\)}
}




\DeclareMathOperator*{\argmin}{arg\,min}

\usepackage[left,mathlines]{lineno}
\usepackage{refcount}


\definecolor{mylinenumbercolor}{HTML}{BEBEBE}


\makeatletter
\newcommand*\patchAmsMathEnvironmentForLineno[1]{%
    \expandafter\let\csname old#1\expandafter\endcsname\csname #1\endcsname
    \expandafter\let\csname oldend#1\expandafter\endcsname\csname end#1\endcsname
    \renewenvironment{#1}%
    {\linenomath\csname old#1\endcsname}%
    {\csname oldend#1\endcsname\endlinenomath}}%
\newcommand*\patchBothAmsMathEnvironmentsForLineno[1]{%
    \patchAmsMathEnvironmentForLineno{#1}%
    \patchAmsMathEnvironmentForLineno{#1*}}%
\patchBothAmsMathEnvironmentsForLineno{equation}%
\patchBothAmsMathEnvironmentsForLineno{align}%
\patchBothAmsMathEnvironmentsForLineno{flalign}%
\patchBothAmsMathEnvironmentsForLineno{alignat}%
\patchBothAmsMathEnvironmentsForLineno{gather}%
\patchBothAmsMathEnvironmentsForLineno{multline}%
\makeatother
\newcommand{\mailto}[2][]{\href{mailto:#2?cc=#1}{\color{black}#2}}



\LetLtxMacro{\proof}{\myproof}
\let\endproof\endmyproof


\long\def\acks#1{\vskip 0.3in\noindent{\large\bf Acknowledgments}\vskip 0.2in
    \noindent #1}
\hypersetup{hidelinks}
\makeatletter
\if@preprint
\hypersetup{colorlinks=true,
citecolor=cyan,linkcolor=blue,urlcolor=magenta}
\fi
\makeatother
\hypersetup{pdfauthor=Shijun Zhang,
pdftitle={Deep Network Approximation: Achieving Arbitrary Accuracy with Fixed Number of Neurons},
pdfkeywords={universal approximation property,
        fixed-size neural network,
        classification function, 
        periodic function,
        nonlinear approximation }
}
\usepackage{doi}
\newtheorem*{remark*}{Remark}

\let\epsilon\varepsilon
\let\subset\subseteq
\let\mycdots\cdots
\def\cdots{{\mycdots}}
\let\cite\citep

\definecolor{mypurple}{HTML}{A000A0}

\definecolor{mygray}{rgb}{0.905,0.905,0.905}


\begin{document}
{
\makeatletter
\long\def\@makefntext#1{\@setpar{\@@par\@tempdima \hsize 
             \advance\@tempdima-15pt\parshape \@ne 15pt \@tempdima}\par
             \parindent 2em\noindent \hbox to \z@{\hss{\textsuperscript{\scriptsize\@thefnmark}} \hfill}#1}
\makeatother
\hypersetup{allcolors=black}
\title{Deep Network Approximation: Achieving Arbitrary Accuracy with Fixed Number of Neurons}

\author{\name Zuowei Shen
		\email \mailto{matzuows@nus.edu.sg} \\
		\addr   Department of Mathematics\\
		  National University of Singapore
	\AND \name Haizhao Yang 
	\email \mailto{hzyang@umd.edu}  \\
	\addr   Department of Mathematics\\  
	University of Maryland, College Park
	\AND \name 
     \href{https://shijunzhang.top/}{Shijun Zhang}\thanks{Corresponding author.}
	\email \mailto[shijun.math@outlook.com]{zhangshijun@u.nus.edu}\\ 
	\addr   Department of Mathematics\\  
    National University of Singapore
		}
		
\editor{Ruslan Salakhutdinov}
\maketitle
}
\begin{abstract}

This paper develops simple feed-forward neural networks that achieve the universal approximation property for all continuous functions with a fixed finite number of neurons. These neural networks are simple because they are designed with a simple, computable, and continuous activation function $\sigma$ leveraging a triangular-wave function and the softsign function. We first prove that  $\sigma$-activated networks with width $36d(2d+1)$ and depth $11$ can approximate any continuous function on a $d$-dimensional hypercube within an arbitrarily small error. Hence, for supervised learning and its related regression problems, the hypothesis space generated by these networks with a size not smaller than $36d(2d+1)\times 11$ is dense in the continuous function space $C([a,b]^d)$ and therefore dense in the Lebesgue spaces $L^p([a,b]^d)$ for $p\in [1,\infty)$. Furthermore, we show that classification functions arising from image and signal classification are in the hypothesis space generated by $\sigma$-activated networks with width $36d(2d+1)$ and depth $12$ when there exist pairwise disjoint bounded closed subsets of $\mathbb{R}^d$ such that the samples of the same class are located in the same subset. Finally, we use numerical experimentation to show that replacing the rectified linear unit (ReLU) activation function by ours would improve the experiment results.

\end{abstract}


\begin{keywords}
        universal approximation property,
        fixed-size neural network,
        classification function, 
        periodic function,
        nonlinear approximation 
\end{keywords}

\section{Introduction}
\label{sec:intro}

Deep neural networks have been widely used in data science and artificial intelligence. Their tremendous successes in various applications have motivated extensive research to establish the theoretical foundation of deep learning.  Understanding the approximation capacity of deep neural networks is one of the keys to revealing the power of deep learning. The most basic layers of deep neural networks are nonlinear functions as the composition of an affine linear transform and a nonlinear activation function. The composition of these simple nonlinear functions can generate a complicated deep neural network with powerful approximation capacity, which is the key difference from classic approximation tools. In this paper, we show that the hypothesis  space of deep neural networks generated from the composition of $11$ such simple nonlinear functions is dense in the continuous function space $C([a,b]^d)$ when the affine linear transforms  
are parameterized with $\calO(d^2)$ non-zero parameters in total
and the nonlinear activation function is constructed from a simple triangular-wave function and the softsign function. 

\subsection{Main Results}

One of the key elements of a neural network is its activation functions. Searching for simple activation functions enabling powerful approximation capacity of neural networks is an important mathematical problem that probably originated in the Kolmogorov superposition theorem (KST) \cite{Kol1957} for Hilbert's 13-th problem, where a two-hidden-layer neural network 
with $\calO(d)$ neurons and
complicated activation functions depending on  the target functions are constructed to represent an arbitrary function in $C([0,1]^d)$. Since then, whether simple and computable activation functions independent of  the target function  exist to make the space of neural networks 
with $\calO(d)$ neurons
dense in $C([0,1]^d)$ or even equal to $C([0,1]^d)$ has been an open problem. 
A function $\varrho:\R\to\R$ is said to be a universal activation function (UAF) if the function space generated by $\varrho$-activated networks with $C_{\varrho,d}$ neurons is dense in $C([0,1]^d)$, where $C_{\varrho,d}$ is a constant determined by $\varrho$ and $d$. That is, if $\varrho$ is a UAF, then $\varrho$-activated networks with $C_{\varrho,d}$ neurons can approximate any continuous function within an arbitrary error on $[0,1]^d$ by only adjusting the parameters.

In this paper, we first construct a simple and computable example of UAFs.  
As a typical and simple UAF,
this activation function is called elementary universal activation function (EUAF), and the corresponding networks are called EUAF networks. Then, we prove that the function space generated by EUAF networks with $\calO(d^2)$ neurons is dense in $C([a,b]^d)$. Furthermore, it is shown that EUAF networks with $\calO(d^2)$ neurons can exactly represent $d$-dimensional classification functions. 

While a good activation function should be simple and numerically implementable, the neural network 
activated by it should be able to approximate continuous functions well with a manageable size. Considering these requirements and motivated by previous works \cite{yarotsky:2019:06,shijun4,shijun5}, the activation function to be chosen should have appropriate nonlinearity,  periodicity, and the capacity to reproduce step functions.  It is challenging to find a single activation function with all these properties. Here, we propose an activation function with all required properties by using two simple functions $\sigma_1$ and $\sigma_2$ defined below.

Let $\sigma_1$ be the continuous triangular-wave function with period $2$, i.e.,   
\begin{equation*}
	\sigma_1(x)\coloneqq |x| \quad \tn{for any $x\in[-1,1]$} 
\end{equation*}
and $\sigma_1(x+2)=\sigma_1(x)$ for any $x\in\R$.
Alternatively, $\sigma_1$ can also be written as: 
\begin{equation*}
	\sigma_1(x)=\big|x-2\lfloor \tfrac{x+1}{2}\rfloor\big| \quad \tn{for any $x\in\R$, \quad where $\lfloor\cdot\rfloor$ is the floor function.} 
\end{equation*}
Clearly, $\sigma_1$ is periodic and $x-\sigma_1(x)$ is a continuous variant of the floor function as desired. 

To introduce high  nonlinearity, let $\sigma_2$ be the softsign activation function commonly used in machine learning \cite{10.5555/1620853.1620921,le-zuidema-2015-compositional}:
\begin{equation*}
	\sigma_2(x)\coloneqq\frac{x}{|x|+1} \quad \tn{for any $x\in\R$}.
\end{equation*}

Then the activation function $\sigma$ is  defined as:
\begin{equation}\label{eq:def:sigma}
	\sigma(x)\coloneqq \begin{cases}
		\sigma_1(x) & \tn{for} \  x\in [0,\infty),\\
		\sigma_2(x) & \tn{for} \  x\in (-\infty,0).\\
	\end{cases}
\end{equation}
See an illustration of $\sigma$ in Figure~\ref{fig:sigma}. This activation function $\sigma$ is used to construct powerful neural networks in this paper.

\begin{figure}[htbp!]        
	\centering          
	\includegraphics[width=0.6418\textwidth]{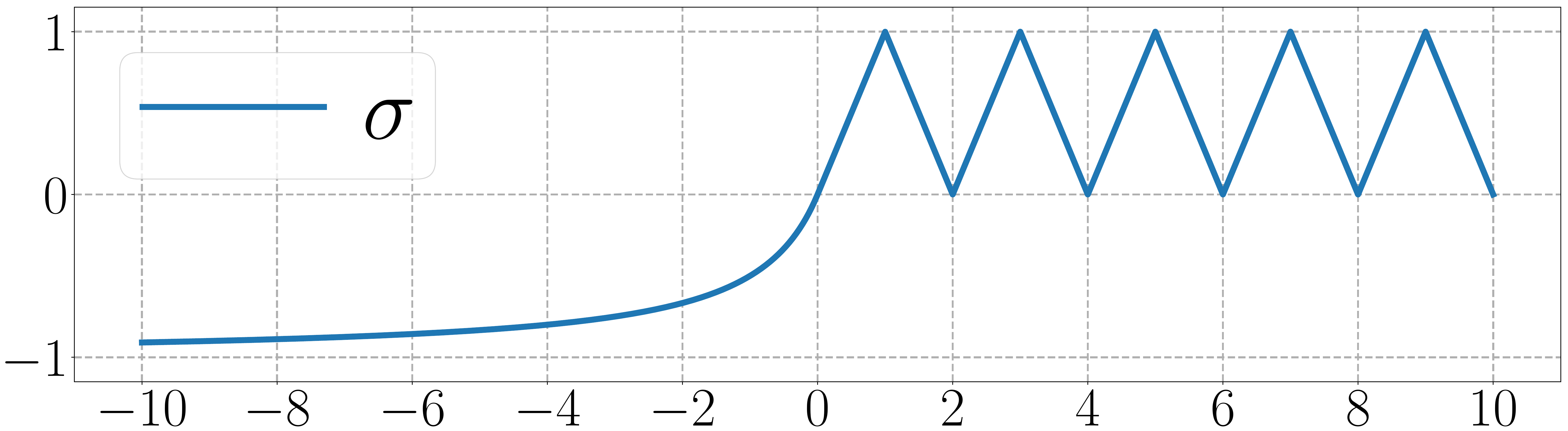}
	\caption{An illustration of $\sigma$ on $[-10,10]$.}
	\label{fig:sigma}
\end{figure}

 As we shall see later, the periodicity of the triangular-wave function $\sigma_1$ and the (high) nonlinearity of the softsign function $\sigma_2$ play crucial roles in the proofs of our main results. One may find more details Section~\ref{sec:key:ideas}, which provides the ideas of proving our main results.
 Observe that $\sigma_1$ is an even function and $\sigma_2$ is an odd function, i.e., $\sigma(x)=\sigma_1(x)=\sigma_1(-x)$ for any $x\ge 0$ and $-\sigma(-x)=-\sigma_2(-x)=\sigma_2(x)$ for any $x\ge 0$. This implies that  
 $\sigma(x)$ and  $-\sigma(-x)$ with $x\geq 0$ have both required periodicity and nonlinearity features and play the same roles as $\sigma_1(x)$ and $\sigma_2(x)$, respectively.
 These requirements lead to our choice of $\sigma$ as the activation function. 
 If allowed to be more complicated, one can design many other UAFs satisfying stronger requirements for various applications. For example, the idea of designing a $C^s$ UAF is given in Section~\ref{sec:UAF:smooth} and
a sigmoidal UAF (see Figure~\ref{fig:tildesigma}) is constructed in Section~\ref{sec:UAF:sig}.

With the  activation function $\sigma$ in hand, let us introduce the network (architecture) using $\sigma$ as the activation function, called $\sigma$-activated network (architecture).
To be precise, a $\sigma$-activated network with a (vector) input $\bmx\in\R^d$, an output $\Phi(\bmx,\bmtheta)\in\R$, and $L\in\N^+$ hidden layers can be briefly described as follows:
\begin{equation}\label{eq:Phi:x:theta}
    \begin{aligned}
    \bm{x}=\widetilde{\bm{h}}_0 
    \myto{2.0}^{\bm{A}_0,\ \bm{b}_0}_{\bmcalL_0} \bm{h}_1
    \myto{1.15}^{\sigma} \widetilde{\bm{h}}_1 
    \quad  \cdots\quad  
    \myto{2.6}^{\bm{A}_{L-1},\ \bm{b}_{L-1}}_{\bmcalL_{L-1}} \bm{h}_L
    \myto{1.15}^{\sigma} \widetilde{\bm{h}}_L
    \myto{2.0}^{\bm{A}_{L},\ \bm{b}_{L}}_{\bmcalL_L} \bm{h}_{L+1}=\Phi(\bm{x},\bmtheta),
    \end{aligned}
\end{equation}
    where $N_0=d\in\N^+$, $N_1,N_2,\cdots,N_L\in\N^+$,  $N_{L+1}=1$, $\bm{A}_i\in \R^{N_{i+1}\times N_{i}}$ and $\bm{b}_i\in \R^{N_{i+1}}$ are the weight matrix and the bias vector in the $i$-th affine linear transform $\bmcalL_i$, respectively, i.e., 
    \[\bm{h}_{i+1} =\bm{A}_i\cdot \widetilde{\bm{h}}_{i} + \bm{b}_i\eqqcolon \bmcalL_i(\widetilde{\bm{h}}_{i})\quad \tn{for $i=0,1,\cdots,L$}\]  
    and
    \[
       \widetilde{{h}}_{i,j}= \sigma({h}_{i,j})\quad \tn{for $j=1,2,\cdots,N_i$ and $i=1,2,\cdots,L$.}
    \]
    Here,
     $\widetilde{{h}}_{i,j}$ and ${h}_{i,j}$ are the $j$-th entries of $\widetilde{\bm{h}}_i$ and $\bm{h}_i$, respectively, for
    $j=1,2,\cdots,N_i$ and $i=1,2,\cdots,L$.
    $\bmtheta$ is a fattened vector consisting of all parameters in  $\bmA_0,\bmb_0,\bmA_1,\bmb_1,\cdots,\bmA_L,\bmb_L$.
    
    With a slight abuse of notation, $\sigma$ can be applied to a vector elementwisely, 
    i.e., given any $k\in\N^+$,
    \begin{equation*}
        \sigma(\bmy)=\big[\sigma(y_1),\,\sigma(y_2),\,\cdots,\,\sigma(y_k)\big]^T\quad\tn{ for any $\bmy=[y_1,y_2,\cdots,y_k]^T\in\R^k$.}
    \end{equation*}
    Then $\Phi$ can be represented in a form of function compositions as follows:
    \begin{equation*}
        \Phi(\bmx,\bmtheta) =\bmcalL_L\circ\sigma\circ
        \bmcalL_{L-1}\circ 
        \,\ \cdots \,\  
        \circ \sigma\circ\bmcalL_1\circ\sigma\circ\bmcalL_0(\bmx)\quad \tn{for any $\bmx\in\R^d$}.
    \end{equation*}
    
Given $N,L\in\N^+$, let $\Phi_{N,L}(\bmx,\bmtheta)$ denote the $\sigma$-activated network architecture $\Phi(\bmx,\bmtheta)$ in Equation~\eqref{eq:Phi:x:theta} with $N_1=N_2=\cdots=N_L=N$.
Let 
\begin{equation*}
    W=W_{d,N,L}=d\times N +N\  +\  (N\times N+N)\times (L-1)\  + \   N\times 1+1=\calO(dN+ N^2L)
\end{equation*}
be the total number of parameters in $\Phi_{N,L}(\bmx,\bmtheta)$, i.e., $\bmtheta\in \R^W$.

Define the hypothesis space $\mathscr{H}_{d}(N,L)$ as the function space generated by $d$-input EUAF networks with width $N$ and depth $L$, i.e.,
\begin{equation}\label{eq:hypothesis:space:def}
    \mathscr{H}_{d}(N,L)\coloneqq \Big\{\phi: \phi(\bmx)=\Phi_{N,L}(\bmx,\bmtheta)\tn{ for any $\bmx\in \R^d$},\quad \bmtheta\in\R^W\Big\}.
\end{equation}

Let $C([a,b]^d)$ be the space of all  continuous functions $f:[a,b]^d\to\R$  with the maximum norm. Our first main result, Theorem~\ref{thm:main} below, shows that EUAF networks with a fixed size $\calO(d^2)$ enjoy the universal approximation property by only adjusting their parameters. 
\begin{theorem}\label{thm:main}
	Let $f\in C([a,b]^d)$ be a continuous function and  $\scrH_d(N,L)$ be the hypothesis space defined in Equation~\eqref{eq:hypothesis:space:def} with $N=36d(2d+1)$ and $L=11$.  Then, for an arbitrary  $\varepsilon>0$,
	there exists $\phi\in \scrH_d(N,L)$  such that
	\begin{equation*}
	|\phi(\bmx)-f(\bmx)|<\varepsilon\quad \tn{for any $\bmx\in[a,b]^d$.}
	\end{equation*}
\end{theorem}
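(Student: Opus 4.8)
The plan is to reduce the $d$-dimensional problem to one dimension by means of Kolmogorov's superposition theorem and to invoke the special structure of $\sigma$ only in that reduced setting. After an initial affine normalization of $[a,b]^d$ onto $[0,1]^d$ (absorbed into $\bmcalL_0$), recall that KST~(\cite{Kol1957}) furnishes continuous functions $\psi_{p,q}\colon[0,1]\to[0,1]$, for $p=1,\dots,d$ and $q=0,\dots,2d$, that are \emph{independent of $f$}, together with continuous functions $g_q\colon\R\to\R$, $q=0,\dots,2d$, depending on $f$, such that
\begin{equation*}
f(\bmx)=\sum_{q=0}^{2d}g_q\!\Big(\sum_{p=1}^{d}\psi_{p,q}(x_p)\Big)\qquad\tn{for every }\bmx\in[0,1]^d.
\end{equation*}
Because the $d(2d+1)$ inner functions and the $2d+1$ outer functions are now fixed (the latter once $f$ is fixed) and the two summations are affine maps, the theorem will follow once each $\psi_{p,q}$ and each $g_q$ is approximated, to a tolerance fixed at the end, by a $\sigma$-activated network whose width and depth are bounded by absolute constants, and once the resulting composite network is checked to lie in $\scrH_d(N,L)$ with $N=36d(2d+1)$ and $L=11$.

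The crux — and the point at which the two features of $\sigma$ enter — is a one-dimensional universality statement that I would isolate as a separate proposition: for every $g\in C([\alpha,\beta])$ and every $\delta>0$ there is a $\sigma$-activated network $\widetilde g$ of width $\le36$ and depth bounded by a fixed constant, both independent of $\delta$, with $\,|\widetilde g-g|<\delta$ on $[\alpha,\beta]$ and $\|\widetilde g\|_\infty\le\|g\|_\infty+1$. I expect this to be the main obstacle, since a fixed architecture must realize arbitrarily fine approximations using only the freedom in its (necessarily large) parameters. I would prove it through a ``point-fitting'' lemma — given any prescribed values $\eta_0,\dots,\eta_{m-1}$, a $\sigma$-network of fixed size sends $j\mapsto\eta_j$ for $j=0,\dots,m-1$ — combined with a discretization of $g$ on a grid fine enough that piecewise-linear interpolation of the sampled values is within $\delta$, while keeping the network between consecutive grid points inside the range of the nearby samples. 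In this argument the periodicity of $\sigma_1$ is what allows a \emph{fixed}-size architecture to address all $m$ grid points: it yields the continuous staircase $x-\sigma_1(\tn{affine}(x))$ and lets one ``fold'' the index range, so that the number of neurons need not grow with $m$. The genuine nonlinearity of $\sigma_2$ — a property no piecewise-linear or polynomial activation possesses — is what lets finitely many units actually pin down $m$ arbitrary target values. This is exactly the interplay highlighted in the introduction.

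Granting the one-dimensional proposition, the assembly is routine. Fix $\delta>0$. Replace each $\psi_{p,q}$ by a width-$\le36$, bounded-depth network $\widetilde\psi_{p,q}$ with $|\widetilde\psi_{p,q}-\psi_{p,q}|<\delta$ on $[0,1]$, and each $g_q$ by a network $\widetilde g_q$ of the same type with $|\widetilde g_q-g_q|<\delta$ on a compact interval containing the range of $\sum_p\widetilde\psi_{p,q}$; then set $\phi(\bmx)\coloneqq\sum_{q=0}^{2d}\widetilde g_q\big(\sum_{p=1}^{d}\widetilde\psi_{p,q}(x_p)\big)$. Running, for each of the $2d+1$ values of $q$, the $d$ inner networks $\widetilde\psi_{1,q},\dots,\widetilde\psi_{d,q}$ in parallel uses total width $\le36\,d(2d+1)$, and the inner sums, the outer networks $\widetilde g_q$, and the final sum are all accommodated within this same width budget; stacking the bounded depths of the inner and outer stages with the two affine ``summation'' layers (padding with identity-type layers if necessary) gives depth $11$, so $\phi\in\scrH_d(N,L)$. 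Finally, writing $\omega_{g_q}$ for the modulus of continuity of $g_q$, the triangle inequality gives
\begin{equation*}
|\phi(\bmx)-f(\bmx)|\le\sum_{q=0}^{2d}\Big(|\widetilde g_q-g_q|+\omega_{g_q}\big({\textstyle\sum_{p}|\widetilde\psi_{p,q}-\psi_{p,q}|}\big)\Big)\le(2d+1)\,\delta+\sum_{q=0}^{2d}\omega_{g_q}(d\delta),
\end{equation*}
which is less than $\varepsilon$ once $\delta$ is chosen small enough, uniformly in $\bmx\in[0,1]^d$; undoing the normalization finishes the proof.
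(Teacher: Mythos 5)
Your proposal follows essentially the same route as the paper's proof: an affine normalization plus KST to reduce to one dimension, a fixed-size one-dimensional universality result (the paper's Theorem~\ref{thm:main:d=1}: width $36$, depth $5$, proved via the staircase $x-\sigma(x)$ and an \emph{approximate} point-fitting argument based on Proposition~\ref{prop:dense}), and then the parallel assembly of the $(d+1)(2d+1)$ sub-networks giving width $36d(2d+1)$ and depth $5+1+5=11$, exactly as in Section~\ref{sec:proof:main}. The only cautions are that your depth count of $11$ silently requires the one-dimensional networks to have depth exactly $5$ (you only claim ``bounded by a fixed constant''), and that the point-fitting lemma can only be approximate rather than exact ($j\mapsto\eta_j$ within a prescribed tolerance), both of which are what the paper's Theorem~\ref{thm:main:d=1} and Proposition~\ref{prop:dense} actually provide.
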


To prove Theorem~\ref{thm:main}, we first summarize key proof ideas in Section~\ref{sec:key:ideas} and then present the detailed proof later in Section~\ref{sec:proof:main}.
\begin{remark*}
The network realizing $\phi$ in Theorem~\ref{thm:main} has
\[d\times N+ N \ +\ (N\times N+N)\times(L-1)\ +\ N\times 1+1  \sim  d^4 \]
parameters, where $N=36d(2d+1)$ and $L=11$.  However, as shown in our constructive proof of Theorem~\ref{thm:main},
it is enough to adjust  $5437(d+1)(2d+1)=\calO(d^2) \ll d^4$ parameters and set all the others to $0$.
\end{remark*}

Since for an arbitrary $M>0$, $2M\sigma(\tfrac{x+M}{2M})-M=x$ for all  $x\in [-M,M]$, we can manually add hidden layers to EUAF networks without changing the output.
This leads to the following immediate corollary of Theorem~\ref{thm:main}.

\begin{corollary}\label{cor:main}
    Assume  $N\ge 36d(2d+1)$ and  $L\ge 11$. Then the hypothesis space
    $\mathscr{H}_d(N,L)$ defined in Equation~\eqref{eq:hypothesis:space:def} is dense in $C([a,b]^d)$.
\end{corollary}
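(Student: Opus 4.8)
The plan is to deduce the corollary from Theorem~\ref{thm:main} by a padding argument. Since Theorem~\ref{thm:main} already shows that $\mathscr{H}_d(N_0,L_0)$ with $N_0=36d(2d+1)$ and $L_0=11$ approximates every $f\in C([a,b]^d)$ arbitrarily well, it suffices to show that for $N\ge N_0$ and $L\ge L_0$ every function in $\mathscr{H}_d(N_0,L_0)$ agrees on $[a,b]^d$ with some function in $\mathscr{H}_d(N,L)$; density then transfers verbatim. Concretely, given $f$ and $\varepsilon>0$, first pick $\phi_0\in\mathscr{H}_d(N_0,L_0)$ with $|\phi_0-f|<\varepsilon$ on $[a,b]^d$, and then modify the network realizing $\phi_0$ into one of width $N$ and depth $L$ computing the same function on $[a,b]^d$.

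For the width, I would keep the network of $\phi_0$ intact and add, in each of its $L_0$ hidden layers, $N-N_0$ extra neurons whose incoming weights and bias are all zero (so each outputs $\sigma(0)=0$) and whose outgoing weights are all zero; the enlarged network has width exactly $N$, depth $L_0$, and still computes $\phi_0$. For the depth, I would use the exact-identity gadget noted just before the corollary: for any $M>0$, the map $x\mapsto 2M\sigma(\tfrac{x+M}{2M})-M$ equals $x$ on $[-M,M]$, because $\tfrac{x+M}{2M}\in[0,1]$ and $\sigma$ coincides there with $\sigma_1(t)=t$. Taking $M=\max\{|a|,|b|\}$ so that $[a,b]\subseteq[-M,M]$, a single width-one hidden layer applied coordinatewise reproduces the input $\bmx\in[a,b]^d$; chaining $m\coloneqq L-L_0\ge 0$ such layers reproduces $\bmx$ through $m$ extra hidden layers (consecutive gadgets splice cleanly since the affine map linking two of them is the identity on $[0,1]$), and the final decoding affine map can be absorbed into the first affine map of the $\phi_0$-network. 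Prepending these $m$ identity layers (using $d\le N$ of the available neurons, the rest zeroed out as above) to the widened network yields a network of width $N$ and depth $L$ whose output on $[a,b]^d$ is exactly $\phi_0(\bmx)$, i.e.\ an element of $\mathscr{H}_d(N,L)$ that is $\varepsilon$-close to $f$ on $[a,b]^d$. Equivalently, one may append the gadget after the last hidden layer, where only the scalar $\phi_0(\bmx)$ must be carried, which needs width one.

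I do not expect a genuine obstacle here; the content is entirely bookkeeping. The only points requiring care are that the architecture constraint $N_1=\dots=N_L=N$ holds exactly after the zero-padding, that every argument fed to $\sigma$ inside the inserted layers lies in $[0,1]$ so that the identity through $\sigma$ is exact rather than approximate, and the trivial edge cases $L=L_0$ and degenerate $[a,b]$. Combining the width and depth paddings with Theorem~\ref{thm:main} then gives the density of $\mathscr{H}_d(N,L)$ in $C([a,b]^d)$.
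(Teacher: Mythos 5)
Your proposal is correct and is essentially the paper's own argument: the paper obtains the corollary directly from Theorem~\ref{thm:main} via the same exact-identity observation $2M\sigma(\tfrac{x+M}{2M})-M=x$ on $[-M,M]$ used to add hidden layers without changing the output, with the width padding by zeroed-out neurons left implicit. Your write-up just spells out the bookkeeping (zero-padding, splicing the identity layers, absorbing affine maps) that the paper treats as immediate.
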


The stable and accurate approximation of discontinuities has many real-world applications and has been widely studied \cite{2639-8001_2019_4_491,BECK2020109824,app10072279,10.1088/2632-2153/ac3149,2021arXiv210605587H}.
Most of common discontinuous functions are in the Lebesgue spaces $L^p([a,b]^d)$ for $p\in [1,\infty)$.
Let us consider the denseness of our hypothesis space in these function spaces.
 Since $C([a,b]^d)$ is dense in $L^p([a,b]^d)$ for $p\in [1,\infty)$,
the hypothesis space in Corollary~\ref{cor:main} is also dense in $L^p([a,b]^d)$ as shown in the following corollary.
 \begin{corollary}\label{cor:main:Lp}
     Assume  $N\ge 36d(2d+1)$, $L\ge 11$, and $p\in [1,\infty)$. Then the hypothesis space
    $\mathscr{H}_d(N,L)$ defined in Equation~\eqref{eq:hypothesis:space:def} is dense in $L^p([a,b]^d)$.
 \end{corollary}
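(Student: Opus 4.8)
The plan is to reduce Corollary~\ref{cor:main:Lp} to Corollary~\ref{cor:main} via the standard density of continuous functions in $L^p$ together with a routine triangle-inequality argument. Concretely, fix $N\ge 36d(2d+1)$, $L\ge 11$, $p\in[1,\infty)$, let $g\in L^p([a,b]^d)$ be arbitrary, and let $\varepsilon>0$. I would first invoke the classical fact that $C([a,b]^d)$ is dense in $L^p([a,b]^d)$ for $p\in[1,\infty)$ (see, e.g., any standard real-analysis reference) to pick $f\in C([a,b]^d)$ with $\|f-g\|_{L^p([a,b]^d)}<\varepsilon/2$.

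Next, I would apply Corollary~\ref{cor:main}: since $\scrH_d(N,L)$ is dense in $C([a,b]^d)$ with respect to the maximum norm, there exists $\phi\in\scrH_d(N,L)$ with $\|f-\phi\|_{C([a,b]^d)}=\sup_{\bmx\in[a,b]^d}|f(\bmx)-\phi(\bmx)|<\delta$, where I choose $\delta$ small enough (for instance $\delta=\varepsilon\big/\big(2(b-a)^{d/p}\big)$) so that the sup-norm bound upgrades to an $L^p$ bound. The key elementary estimate here is that on a bounded domain $[a,b]^d$ of Lebesgue measure $(b-a)^d$, one has
\begin{equation*}
\|f-\phi\|_{L^p([a,b]^d)}=\Big(\int_{[a,b]^d}|f(\bmx)-\phi(\bmx)|^p\,\tnd\bmx\Big)^{1/p}\le (b-a)^{d/p}\,\|f-\phi\|_{C([a,b]^d)}<\varepsilon/2.
\end{equation*}

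Finally, the triangle inequality in $L^p([a,b]^d)$ gives
\begin{equation*}
\|g-\phi\|_{L^p([a,b]^d)}\le \|g-f\|_{L^p([a,b]^d)}+\|f-\phi\|_{L^p([a,b]^d)}<\varepsilon/2+\varepsilon/2=\varepsilon,
\end{equation*}
which shows $\phi\in\scrH_d(N,L)$ approximates $g$ within $\varepsilon$ in $L^p$, establishing denseness. There is essentially no hard obstacle here: the only mild subtlety is ensuring that $\phi$, which a priori is defined on all of $\R^d$, is genuinely in $L^p([a,b]^d)$ so that the norms make sense — but this is immediate since $\phi$ is continuous (a composition of affine maps and the continuous activation $\sigma$) and hence bounded and measurable on the compact set $[a,b]^d$. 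The entire argument is a two-line consequence of Corollary~\ref{cor:main} plus the textbook density of $C$ in $L^p$, so I would keep the write-up short.
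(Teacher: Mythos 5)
Your argument is correct and is exactly the route the paper takes: it deduces the corollary from Corollary~\ref{cor:main} together with the classical density of $C([a,b]^d)$ in $L^p([a,b]^d)$, with the sup-norm-to-$L^p$ estimate and triangle inequality being the routine details the paper leaves implicit. Nothing is missing; your write-up simply spells out the standard steps.
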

This corollary implies that, for  $f\in L^p([a,b]^d)$ and an arbitrary $\varepsilon>0$, there exists $\phi\in \mathscr{H}_d(N,L)$ such that $\|\phi-f\|_{L^p([a,b]^d)}< \varepsilon$.

One can ask whether the arbitrary error $\varepsilon>0$ in Theorem~\ref{thm:main} can be further reduced to $0$.  This is not true in general, but it is true for a class of interesting functions   
widely used in image classification. 
Given any pairwise disjoint bounded closed subsets $E_1,E_2,\cdots,E_J\subseteq \R^d$, define ``the classification function space'' of these subsets as 
\begin{equation*}
    \mathscr{C}_d(E_1,E_2,\cdots,E_J)\coloneqq \bigg\{f: f=\sum_{j=1}^J r_j\cdot \one_{E_j}\tn{\ for any \ }
    r_1,r_2,\cdots,r_J\in \Q 
    \bigg\},
\end{equation*}
where $\one_{E_n}$ is the indicator function of $E_j$ for each $j$. 
Our second main result, Theorem~\ref{thm:main:classification} below, shows that each element of $\mathscr{C}_d(E_1,E_2,\cdots,E_J)$
can be exactly represented by a $\sigma$-activated network 
with $\calO(d^2)$ neurons
in $\bigcup_{j=1}^J E_j$.

\begin{theorem}\label{thm:main:classification}
	Let $E_1,E_2,\cdots,E_J\subseteq \R^d$ be  pairwise disjoint bounded closed subsets and $\mathscr{H}_d(N,L)$ be the hypothesis space
 defined in Equation~\eqref{eq:hypothesis:space:def} with $N=36d(2d+1)$ and $L=12$. Then, for an arbitrary $f\in \mathscr{C}_d(E_1,E_2,\cdots,E_J)$,
	there exists  $\phi\in \scrH_d(N,L)$   such that
	\begin{equation*}
		\phi(\bmx)=f(\bmx)\quad \tn{for any $\bmx\in \bigcup_{j=1}^J E_j$.}
	\end{equation*}
\end{theorem}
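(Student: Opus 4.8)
The plan is to bootstrap from Theorem~\ref{thm:main}. First I would replace $f$ by a continuous surrogate that can be approximated by a depth-$11$ EUAF network, and then append one extra hidden layer that rounds the approximate output to the exact rational target. What makes an \emph{exact} representation possible here is the combination of two facts: the values $r_j$ are rational, so after a single rescaling all targets become integers; and the triangular-wave function $\sigma_1$ is periodic, so one application of $\sigma_1$ acts as a rounding device near \emph{every} integer at once, independently of the number of classes $J$.

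In detail, $\bigcup_{j=1}^J E_j$ is bounded, hence lies in a cube $[a,b]^d$, and $\delta\coloneqq\min_{1\le i<j\le J}\mathrm{dist}(E_i,E_j)>0$ since the $E_j$ are pairwise disjoint and compact. I would take the surrogate $g\coloneqq\sum_{j=1}^J r_j\,\eta_j\in C([a,b]^d)$, where $\eta_j(\bmx)\coloneqq\max\{0,\,1-\tfrac{3}{\delta}\,\mathrm{dist}(\bmx,E_j)\}$; the $\eta_j$ have pairwise disjoint supports and $\eta_j\equiv1$ on $E_j$, so $g\equiv r_j$ on $E_j$ and $|g|\le R\coloneqq\max_j|r_j|$. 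Let $q\in\N^+$ be a common denominator of $r_1,\dots,r_J$, so $qr_j\in\Z$ for all $j$. Applying Theorem~\ref{thm:main} to $g$ with accuracy $\tfrac{1}{4q}$ produces $\psi\in\scrH_d(36d(2d+1),11)$ with $|\psi(\bmx)-g(\bmx)|<\tfrac{1}{4q}$ on $[a,b]^d$; in particular $q\psi(\bmx)$ lies within $\tfrac14$ of the integer $qr_j$ for every $\bmx\in E_j$, and $|\psi|\le R+1$ on $[a,b]^d$.

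Next I would turn $\psi$ into an exact representation using one more $\sigma$. One checks that $T(y)\coloneqq y-\tfrac12\sigma_1\!\bigl(2y+\tfrac12\bigr)+\tfrac14$ is continuous and satisfies $T(y)=k$ for every $k\in\Z$ and $y\in(k-\tfrac14,k+\tfrac14)$, because on such an interval $\tfrac12\sigma_1(2y+\tfrac12)-\tfrac14$ equals $y-k$. Since $q\psi$ is bounded on $[a,b]^d$, I may pick a large even integer $2M$ with $2M\ge 2q(R+1)$ and replace $\sigma_1(2y+\tfrac12)$ by $\sigma(2y+\tfrac12+2M)$ on the relevant range (legitimate by the $2$-periodicity of $\sigma_1$ together with $\sigma=\sigma_1$ on $[0,\infty)$), so that $y\mapsto T(qy)$ is computable with a single application of $\sigma$. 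I then set
\begin{equation*}
\phi(\bmx)\coloneqq\tfrac1q\,T\bigl(q\psi(\bmx)\bigr)=\psi(\bmx)-\tfrac{1}{2q}\,\sigma\bigl(2q\psi(\bmx)+\tfrac12+2M\bigr)+\tfrac{1}{4q}.
\end{equation*}
For $\bmx\in E_j$ we have $q\psi(\bmx)\in(qr_j-\tfrac14,qr_j+\tfrac14)$, hence $T(q\psi(\bmx))=qr_j$ and $\phi(\bmx)=r_j=f(\bmx)$, which is the desired identity.

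It remains to check that $\phi\in\scrH_d(36d(2d+1),12)$. Take the depth-$11$, width-$36d(2d+1)$ network realizing $\psi$ and append a single hidden layer in which only two neurons are active: one computes $\sigma\!\bigl(\tfrac{\psi+R+1}{2(R+1)}\bigr)$ and the other $\sigma\bigl(2q\psi+\tfrac12+2M\bigr)$, both pre-activations being affine functions of the last hidden layer of the $\psi$-network. Because $\sigma$ is the identity on $[0,1]$ and $|\psi|\le R+1$, the first neuron recovers $\psi$ through the linear rule $2(R+1)\sigma\!\bigl(\tfrac{\psi+R+1}{2(R+1)}\bigr)-(R+1)=\psi$, and the output affine map assembles exactly the displayed expression for $\phi$; padding unused entries with zeros gives a $\sigma$-activated network of width $36d(2d+1)$ and depth $12$. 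The step I expect to be the crux is the rounding device: a naive continuous piecewise-linear ``staircase'' that is flat on all $J$ target neighborhoods would require width growing with $J$, and it is only the rationality of the $r_j$ (rescaling by $q$ sends the targets to integers) combined with the periodicity of $\sigma_1$ (one triangular wave rounds near all of them simultaneously) that confines the extra cost to one hidden layer of constant width; constructing $g$ and carrying $\psi$ linearly through the extra nonlinearity are routine.
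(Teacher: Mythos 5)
Your proposal is correct and follows essentially the same route as the paper's proof: build a continuous surrogate equal to $f$ on $\bigcup_j E_j$, approximate it by a depth-$11$ EUAF network via Theorem~\ref{thm:main} after rescaling by rationality so the targets become integers, and append one hidden layer with two active neurons that uses the periodicity of $\sigma$ on $[0,\infty)$ to round exactly while passing the approximation through via the identity representation $\sigma(x)=x$ on a suitable interval. The only differences are cosmetic (the paper shifts the target values to odd positive integers and uses $\phi_2(x)=x+1/2-\sigma(x+3/2)$ with tolerance $1/2$, whereas you shift the argument by a large even integer and round to all integers with tolerance $1/4$; your bump-function surrogate replaces the paper's distance-quotient construction in Lemma~\ref{lem:continuous:indicator}).
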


\begin{remark*}
The network realizing $\phi$ in Theorem~\ref{thm:main:classification} has
\[d\times N+ N \ +\ (N\times N+N)\times(L-1)\ +\ N\times 1+1  \sim  d^4 \]
parameters, where $N=36d(2d+1)$ and $L=12$.  However, as shown in our constructive proof of Theorem~\ref{thm:main:classification} in Section~\ref{sec:proof:thm:main:classification}, it is enough to adjust  $5509(d+1)(2d+1)=\calO(d^2) \ll d^4$ parameters and set all the others to $0$.
\end{remark*}

For a general function space $\scrF$, define  $\scrF|_E\coloneqq \big\{f|_E: f\in \scrF\big\}$, where $f|_E$ is the function achieved via limiting $f$ on $E$. Then, we have a corollary of Theorem~\ref{thm:main:classification} as follows.
\begin{corollary}\label{cor:main:classification}
	 Let $E_1,E_2,\cdots,E_J\subseteq \R^d$  be pairwise disjoint bounded closed subsets and $\mathscr{H}_d(N,L)$ be the hypothesis space
 defined in Equation~\eqref{eq:hypothesis:space:def}.  If $N\ge 36d(2d+1)$ and  $L\ge 12$, then
	\begin{equation*}
	    \scrC_d(E_1,E_2,\cdots,E_J)\big|_E \subseteq \scrH_d(N,L)\big|_E
	    \quad \tn{with $E=\bigcup_{j=1}^J E_j$.}
	\end{equation*}
\end{corollary}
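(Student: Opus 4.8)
The plan is to deduce Corollary~\ref{cor:main:classification} from Theorem~\ref{thm:main:classification} by two elementary architecture-enlargement tricks: padding each hidden layer with inactive neurons to raise the width from $36d(2d+1)$ to an arbitrary $N\ge 36d(2d+1)$, and prepending identity layers to raise the depth from $12$ to an arbitrary $L\ge 12$. Fix $f=\sum_{j=1}^J r_j\one_{E_j}\in\scrC_d(E_1,\dots,E_J)$ and write $E=\bigcup_{j=1}^J E_j$; this set is bounded, being a finite union of bounded sets. By Theorem~\ref{thm:main:classification} there is $\phi_0\in\scrH_d(N_0,L_0)$ with $N_0=36d(2d+1)$ and $L_0=12$ such that $\phi_0(\bmx)=f(\bmx)$ for all $\bmx\in E$. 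It then suffices to produce, for every $N\ge N_0$ and $L\ge L_0$, some $\phi\in\scrH_d(N,L)$ with $\phi(\bmx)=\phi_0(\bmx)$ for all $\bmx\in E$: indeed $f|_E=\phi|_E\in\scrH_d(N,L)|_E$, and since $f\in\scrC_d(E_1,\dots,E_J)$ is arbitrary, $\scrC_d(E_1,\dots,E_J)|_E\subseteq\scrH_d(N,L)|_E$ follows.

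To widen from $N_0$ to $N$, append $N-N_0$ extra neurons to each of the $L_0$ hidden layers of a network realizing $\phi_0$, with all outgoing weights of these extra neurons set to $0$ so that they never influence later layers; the output is unchanged, so $\phi_0\in\scrH_d(N,L_0)$. To deepen from $L_0$ to $L$, use the relation noted just before Corollary~\ref{cor:main}: for any $M>0$ one has $2M\sigma\big(\tfrac{x+M}{2M}\big)-M=x$ for all $x\in[-M,M]$, since $\tfrac{x+M}{2M}\in[0,1]$, on which $\sigma=\sigma_1$ is the identity. Choose $M>0$ with $E\subseteq[-M,M]^d$. If $L=L_0$ nothing more is needed; if $L>L_0$, prepend $L-L_0$ hidden layers of width $N$ in front of the (already widened) $\phi_0$-network, where the first prepended layer applies $\bmx\mapsto\tfrac{1}{2M}(\bmx+M\bm{1})$ in its first $d$ coordinates and zero elsewhere, each subsequent prepended layer applies the identity affine map, and the affine map originally feeding the first hidden layer of $\phi_0$ is replaced by its composition with $\bmy\mapsto 2M\bmy-M\bm{1}$ acting on the first $d$ coordinates. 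For $\bmx\in E$ the first $d$ coordinates stay in $[0,1]$ after each $\sigma$ and are passed through unchanged, the padding coordinates stay at $0$, and the final adjustment recovers $\bmx$ exactly, so the remaining layers reproduce the original computation and the network outputs $\phi_0(\bmx)=f(\bmx)$. This network has width $N$ and depth $L$, which gives the desired $\phi$.

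The one point needing care --- and the reason the conclusion is stated with the restriction $|_E$ rather than as global equality --- is that $x\mapsto 2M\sigma\big(\tfrac{x+M}{2M}\big)-M$ coincides with the identity only on the bounded interval $[-M,M]$, while outside it $\sigma_1$ behaves as a triangular wave; thus the prepended layers act as the identity only for inputs lying in $E$. This causes no real difficulty, because $E$ is bounded and $M$ is chosen accordingly. (Alternatively, one may \emph{append} the identity layers after the $\phi_0$-network instead of prepending them, in which case only the boundedness of $\phi_0$ on $E$ is required, and that is immediate since $\phi_0=f$ takes only the finitely many values $0,r_1,\dots,r_J$ on $E$.) I do not expect any genuine obstacle here beyond bookkeeping the affine maps of the enlarged architecture.
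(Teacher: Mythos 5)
Your proposal is correct and follows the same route the paper intends: the paper treats this corollary as immediate from Theorem~\ref{thm:main:classification} together with the remark preceding Corollary~\ref{cor:main} that $2M\sigma(\tfrac{x+M}{2M})-M=x$ on $[-M,M]$ allows extra hidden layers to be added without changing the output on a bounded set, while extra width is absorbed by zero parameters. Your write-up simply fills in the bookkeeping (zero-padding for width, identity layers via the $\sigma$-identity for depth, with $M$ chosen from the boundedness of $E$), which matches the paper's argument.
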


 One of the most successful applications of deep learning is image and signal classification. In supervised classification problems, given a few samples and their labels (usually integers), the goal of the task is to learn how to assign a label to a new sample. For example, in binary classification via deep learning, a neural network is trained based on given samples (and labels) to approximate a classification function mapping one class of samples to $0$ and the other class of samples to $1$. 
Theorem~\ref{thm:main:classification} (or Corollary~\ref{cor:main:classification}) implies that the classification function can be exactly realized by an EUAF network with a size depending only on the dimension of the problem domain via adjusting its parameters.
This means that the best approximation error of EUAF networks to classification functions in the classification problem is $0$.

We remark that, in the worst scenario, there might exist complicated high-dimensional functions such that,
the parameters of the EUAF network in Theorem~\ref{thm:main} (or \ref{thm:main:classification}) require high computer precision for storage, and the precision might be exponentially high in the problem dimension. We refer to this as the curse of memory, which may make Theorem~\ref{thm:main} and  \ref{thm:main:classification} less interesting in real-world applications, though the number of parameters can be very small. The key question to be addressed is how rare the curse of memory would happen in real-world applications. If the target functions in real-world applications typically have no curse of memory with a high probability, then EUAF networks would be very useful in real-world applications. In future work, we will explore the statistical characterization of high-dimensional functions for the curse of memory of EUAF networks. Another approach to reducing the memory requirement is to increase the network size. Our main result has provided a network size $\calO(d^2)$ to achieve an arbitrary error. If a larger network size is used, the curse of memory can be lessened as we shall discuss in Section~\ref{sec:computablity}.


\subsection{Related Work}

In recent years, there has been an increasing amount of literature on the approximation power of neural networks as a special case of nonlinear approximation \cite{devore_1998,cohen2020optimal,Ingrid}. 
In the early works of approximation theory for neural networks, the universal approximation theorem  \cite{Cybenko1989ApproximationBS,HORNIK1991251,HORNIK1989359} without approximation errors showed that there exists a sufficiently large neural network approximating a target function in a certain function space within any given error $\varepsilon>0$. 
There are also other versions of the universal approximation theorem. 
For example, it was shown in \cite{NEURIPS2018_03bfc1d4} that the residual neural networks activated the rectified linear unit (ReLU) with one neuron per hidden layer and a sufficiently large depth are a universal approximator. The universal approximation property for general residual neural networks was proved in \cite{2019arXiv191210382L} via a dynamical system approach. In all papers discussed above, the network size goes to infinity when the target approximation error approaches $0$. However, our result in Theorem~\ref{thm:main} implies that EUAF networks with a fixed size ($\calO(d^2)$ neurons in total)  can achieve an arbitrary small  error for approximating $f\in C([a,b]^d)$.

The approximation errors in terms of  the total number of parameters  of ReLU networks are well studied for basic function spaces with (nearly) optimal approximation errors, e.g., (nearly) optimal asymptotic errors for continuous functions \cite{yarotsky18a}, $C^s$  functions \cite{yarotsky:2019:06}, piecewise smooth functions \cite{PETERSEN2018296}, solutions of special PDEs \cite{sPDE1,sPDE2}, functions that can be optimally approximated by affine systems \cite{B_lcskei_2019},  and Sobolev spaces \cite{yang2020approximation,hon2021simultaneous}. Approximation errors in terms of width and depth would be more useful than those in terms of the total number of nonzero parameters in practice, because width and depth are two essential hyper-parameters in every numerical algorithm instead of the number of nonzero parameters. This motivated the works on the (nearly) optimal non-asymptotic errors in terms of width and depth with explicit pre-factors for approximating continuous functions in \cite{shijun2,shijun6,shijun:thesis} and for $C^s$ functions in \cite{shijun3,shijun:thesis}. 
As the errors are (nearly) optimal, there are two possible directions to improve the approximation error in order to reduce the effect of the curse of dimensionality. The first one is to consider smaller target function spaces, e.g., analytic functions \cite{DBLP:journals/corr/abs-1807-00297,10.1007/s00365-020-09511-4}, Barron
spaces \cite{barron1993,Weinan2019,e2020representation,siegel2021optimal},  and band-limited functions \cite{doi:10.1002/mma.5575,bandlimit}.

Another direction is to design advanced activation functions, where one can use multiple activation functions,  to enhance the power of neural networks, especially to conquer the curse of dimensionality in network approximation.
 There have been several papers
designing activation functions to achieve good approximation errors. 
The results in \cite{yarotsky:2019:06} imply that
 $(\sin,\tn{ReLU})$-activated neural networks (i.e., the activation function of a neuron can be  chosen from either $\sin$ or $\tn{ReLU}$) with $W$ parameters can approximate Lipschitz continuous functions with an asymptotic approximation error $\calO(e^{-c_d\sqrt{W}})$, where $c_d$ is a constant depending on $d$.
In \cite{shijun4}, it was shown that $(\tn{Floor},\tn{ReLU})$-activated neural networks with width $\calO(N)$ and depth $\calO(L)$ admit an quantitative approximation error $\calO(\sqrt{d}N^{-\sqrt{L}})$ for Lipschitz continuous functions, conquering the curse of dimensionality in approximation with a root-exponentially small error in depth $L$.\footnote{ 
Although there is no curse of dimensionality in network approximation, the construction requires exponentially many data samples of the target function and computer memory. Hence, there would be a curse of dimensionality in inferring a target function from its finite samples when standard learning techniques are applied to a computer.
} 
In \cite{shijun5}, it was shown that, even if the depth is as small as $3$, neural networks with width $N$  and $\calO(d+N)$ nonzero parameters can approximate Lipschitz continuous functions with an exponentially small error $\calO(\sqrt{d}\,2^{-N})$, if the floor function $\lfloor x \rfloor$, the exponential function $2^x$, and the step function $\one_{\{x\ge 0\}}$ are used as activation functions. Recently in \cite{DBLP:journals/corr/abs-2103-00542}, the results in \cite{yarotsky:2019:06,shijun5} were combined to avoid the curse of dimensionality using ReLU, sin, and $2^x$ activation functions. Corollary~\ref{cor:main} implies that the hypothesis  space  of EUAF networks activated by a single activation function with  $\calO(d^2)$ neurons is dense in $C([a,b]^d)$. Particularly,  all continuous functions can be arbitrarily approximated by fixed-size EUAF networks with width $N$ and depth $L$ on a $d$-dimensional hypercube whenever $N\ge 36d(2d+1)$ and  $L\ge 11$.

 There is another research line for the approximation error of neural networks: applying KST \cite{Kol1957} or its variants to explore new activation functions for a fixed-size network to achieve an arbitrary error. 
 The original KST shows that any multivariate function $f\in C([0,1]^d)$ can be represented as $f(\bmx)=\sum_{i=0}^{2d}g_i\big(\sum_{j=1}^d h_{i,j}(x_j)\big)$ for any $\bmx=[x_1,x_2,\cdots,x_d]^T\in[0,1]^d$, where $g_i$ and $h_{i,j}$ are univariate continuous functions. In fact, the composition architecture of KST  can be regarded as a special neural network with (complicated) activation functions depending on the target function, which results in the failure of KST in practice. To alleviate this issue, 
a single activation function independent of the target function is designed in \cite{MAIOROV199981} to construct networks with a fixed size ($\calO(d)$ neurons) to achieve an arbitrary error for approximating functions in $C([-1,1]^d)$. However, the activation function in \cite{MAIOROV199981} has no closed form and is hardly computable. See Section~\ref{sec:key:ideas} for a detailed discussion of the construction in \cite{MAIOROV199981}. The computability issue of activation functions was addressed recently in \cite{yarotsky:2021:02}. It was shown in \cite{yarotsky:2021:02} that, for an arbitrary  $\varepsilon>0$ and any function $f$ in $C([0,1]^d)$, there exists a network of size only depending on $d$ constructed with multiple activation functions either  ($\sin$ \& $\arcsin$) or ($\lfloor \cdot\rfloor$ \& a non-polynomial analytic function) to approximate $f$ within an error $\varepsilon$. To the best of our knowledge, there is no explicit characterization of the size dependence on $d$ in \cite{yarotsky:2021:02}. For example, a very important question is whether the dependence can be mild, e.g., only a polynomial of $d$, or has to be severe, e.g., exponentially in $d$. The results of the current paper provide positive answers to all the issues discussed above: We show that  EUAF networks with a simple and computable activation function, width $36d(2d+1)$, and depth $11$ can approximate functions in $C([a,b]^d)$ within an arbitrary pre-specified error $\varepsilon>0$.
 
In summary, this paper aims to design a simple and computable activation function $\sigma$  to construct fixed-size neural networks with the universal approximation property. The network width and depth are explicitly characterized,   depending only on the dimension $d$.   The fixed-size neural network is designed to approximate any continuous functions on a hypercube within an arbitrary error by only adjusting  $\calO(d^2)$ network parameters.   Moreover, we prove that an arbitrary classification function can be exactly represented by such a fixed-size network architecture via only adjusting $\calO(d^2)$ network parameters.  The main contribution of this paper is to develop a  rigorous mathematical analysis for the universal approximation property of fixed-size neural networks. The mathematical analysis developed here would provide a deeper understanding for other neural networks and the approximation results discussed here can be applied to the full error analysis of deep learning in the next subsection.

\subsection{Error Analysis}
\label{sec:error:analysis}

We will briefly discuss the full error analysis of deep neural networks.
Let $\Phi(\bm{x},\bm{\theta})$ denote a function of $\bmx\in\calX$ generated by a  network architecture  parameterized with $\bm{\theta}\in\R^W$. 
Given a target function $f$ defined on $\calX$, the final goal is to find the expected risk minimizer
\begin{equation*}
	\bm{\theta}_{\mathcal{D}}
	\in 
	\argmin_{\bm{\theta}\in\R^W} R_{\mathcal{D}}(\bm{\theta}),\quad \tn{where}\   R_{\mathcal{D}}(\bm{\theta})\coloneqq \mathbb{E}_{\bm{x}\sim U(\mathcal{X})} \big[\ell\big( \Phi(\bm{x},\bm{\theta}),f(\bm{x})\big)\big]
\end{equation*}
with an unknown data {distribution} $U(\mathcal{X})$ over $\calX$ and a loss function $\ell(\cdot,\cdot)$ typically taken as $\ell(y_1,y_2)=\tfrac{1}{2}|y_1-y_2|^2$. 
Note that $\bmtheta_\calD$ may not be always achievable.
For any pre-specified $\eta>0$, one can always identify $\bmtheta_{\calD,\eta}\in\R^W$ instead of $\bmtheta_\calD$ such that
\begin{equation}\label{eq:RD:inf:error}
    R_\calD\big(\bmtheta_{\calD,\eta}\big)
    \le \inf_{\bmtheta\in\R^W} R_\calD\big(\bmtheta\big)+\eta/2.
\end{equation}
Since the expected risk $R_\calD(\bmtheta)$ is not available in practice, we use the empirical risk $R_\calS(\bmtheta)$ to approximate $R_\calD(\bmtheta)$ for given {samples} $\big\{\big( \bm{x}_i,f(\bm{x}_i)\big)\big\}_{i=1}^n$ and our goal is to identify  the empirical risk minimizer
\begin{equation*}
	\bm{\theta}_{\mathcal{S}}
	\in 
	\argmin_{\bm{\theta}\in\R^W}R_{\mathcal{S}}(\bm{\theta}),\quad \tn{where}\ R_{\mathcal{S}}(\bm{\theta}):=
	\frac{1}{n}\sum_{i=1}^n \ell\big( \Phi(\bm{x}_i,\bm{\theta}),f(\bm{x}_i)\big).
\end{equation*}
Similarly, 
$\bmtheta_\calS$ is not always achievable. For any pre-specified $\eta>0$, one can always identify $\bmtheta_{\calS,\eta}\in \R^W$ instead of $\bmtheta_\calS$ such that
\begin{equation}\label{eq:RS:inf:error}
    R_\calS\big(\bmtheta_{\calS,\eta}\big)
    \le \inf_{\bmtheta\in\R^W} R_\calS\big(\bmtheta\big)+\eta/2.
\end{equation}

In practical implementation, only a numerical minimizer $\bm{\theta}_{\mathcal{N}}$ of $R_\calS(\bmtheta)$ can be achieved via  a numerical optimization method. The discrepancy between  the learned function $\Phi(\bm{x},\bm{\theta}_{\mathcal{N}})$ and the target function $f$ is measured by $R_{\mathcal{D}}(\bm{\theta}_{\mathcal{N}}) $, which is bounded by
\begin{equation*}\setMathResizeRate{0.70}
\begin{split}
		\mathResize{
		R_{\mathcal{D}}(\bm{\theta}_{\mathcal{N}})\,  
		} 
		&
		\mathResize{
		 =  \underbrace{[R_\calD(\bmtheta_\calN)-R_\calS(\bmtheta_\calN)]}_{\tn{GE}}
		+\underbrace{[R_\calS(\bmtheta_\calN)-R_\calS(\bmtheta_{\calS,\eta})]}_{\tn{OE}}
		+\underbrace{[R_\calS(\bmtheta_{\calS,\eta})-R_\calS(\bmtheta_{\calD,\eta})]}_{\le \eta/2\  \tn{by \eqref{eq:RS:inf:error}}}
		+\underbrace{[R_\calS(\bmtheta_{\calD,\eta})-R_\calD(\bmtheta_{\calD,\eta})]}_{\tn{GE}}
		+\underbrace{R_\calD(\bmtheta_{\calD,\eta})}_{\le \inf\limits_{\bmtheta\in\R^W}R_\calD(\bmtheta)+\eta/2\ \tn{by \eqref{eq:RD:inf:error}}}
		}\\  
		&
		\mathResize{
		\le \underbrace{\eta}_{\tn{perturbation}} 
		\ +\
		\underbrace{\inf_{\bmtheta\in\R^W} R_{\mathcal{D}}(\bm{\theta})}_{\tn{approximation error}} 
		\   + \ 
		\underbrace{[R_{\mathcal{S}}(\bm{\theta}_{\mathcal{N}})-
		\inf_{\bmtheta\in\R^W} R_{\mathcal{S}}(\bm{\theta})]}_{\tn{optimization error (OE)}}
		\  + \ \underbrace{[R_{\mathcal{D}}(\bm{\theta}_{\mathcal{N}})-R_{\mathcal{S}}(\bm{\theta}_{\mathcal{N}})]
			+[R_{\mathcal{S}}(\bm{\theta}_{\mathcal{D},\eta})-R_{\mathcal{D}}(\bm{\theta}_{\mathcal{D},\eta})]}_{\tn{generalization error (GE)}}.  
			}
\end{split}
\end{equation*}
If $\Phi(\bmx,\bmtheta)$ is realized by  EUAF networks, then
Theorem~\ref{thm:main} implies
\begin{equation*}
    \inf_{\bmtheta\in\R^W}\|\Phi(\cdot,\bmtheta)-f(\cdot)\|_{L^\infty(\calX)}=0\quad \tn{ for all $f\in C(\calX)$ with $\calX=[a,b]^d$.}
\end{equation*} 
It follows that 
\begin{equation*}
    \inf_{\bmtheta\in\R^W}R_\calD(\bmtheta)=\inf_{\bmtheta\in\R^W}\mathbb{E}_{\bm{x}\sim U(\mathcal{X})} \big[\ell\big( \Phi(\bm{x},\bm{\theta}),f(\bm{x})\big)\big]=0.
\end{equation*}
Since the pre-specified hyper-parameter $\eta$ can be arbitrarily small, the full error analysis can be reduced to
the analysis of the optimization and generalization errors, which depends on data samples, optimization algorithms, etc. 
One could refer to \cite{neyshabur2018the,Weinan2019APE,Weinan2019,2020arXiv200913500E,NIPS2016_6112,DBLP:journals/corr/NguyenH17,opt,xu2020,JMLR:v20:17-526} for the analysis of the generalization and optimization errors. 

\subsection{Computability}\label{sec:computablity}

The EUAF network is simple and computable in the sense that the output and subgradient of EUAF networks can be efficiently evaluated. The computability of EUAF implies that we can numerically implement the optimization algorithm to find a numerical minimizer of the empirical risk.  Therefore, EUAF can be directly applied to existing deep learning software in the same way as other popular activation functions (such as ReLU or Sigmoid). 
For further discussion on the computability of EUAF, one may refer to Section~\ref{sec:experimentation}, which provides experiments to explore the numerical properties of EUAF.
As opposed to the computability of EUAF, the  activation function proposed in  \cite{MAIOROV199981} is not computable in the sense that there is no numerical algorithm to evaluate the output and subgradient of the corresponding network. 

As we shall see later in the proof of Theorem~\ref{thm:main}, our EUAF network may require sufficiently large parameters to achieve an arbitrarily small error.
The magnitude of network parameters in Theorem~\ref{thm:main} can be dramatically reduced by increasing the network size. 
In particular, if we replace each elemental block like Figure~\ref{fig:magnitude:reduction}(a) by a block like Figure~\ref{fig:magnitude:reduction}(b), then the magnitude of parameters can be roughly reduced to its square root.
By repeatedly applying this idea, it is easy to prove that the magnitude of parameters can be exponentially reduced as the network size increases linearly. If we fix the size of these larger networks and only tune their parameters, they can still approximate high-dimensional continuous functions within an arbitrarily small error. How to fix a network size to balance the number of parameters and their memory depends on both the computer hardware and software. The goal of this paper is to demonstrate the existence of a simple network with a fixed size achieving an arbitrary error in spite of the magnitude of parameters and we have shown that the network size can be as small as $\calO(d^2)$. 
It is interesting to investigate the balance between the network size and the memory requirement in the future. 

\begin{figure}[htbp!]
    \centering
	\begin{subfigure}[b]{0.25\textwidth}
		\centering           \includegraphics[width=0.9868\textwidth]{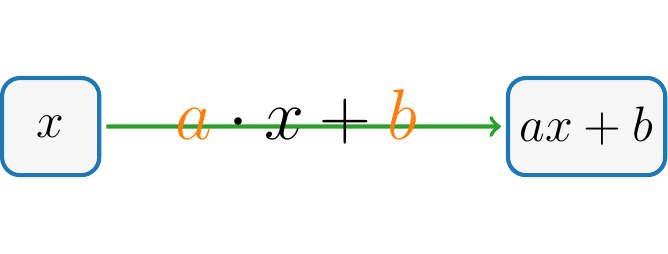}
		\subcaption{}
	\end{subfigure}
	\hspace*{21pt}
	\begin{subfigure}[b]{0.59\textwidth}
		\centering            \includegraphics[width=0.9868\textwidth]{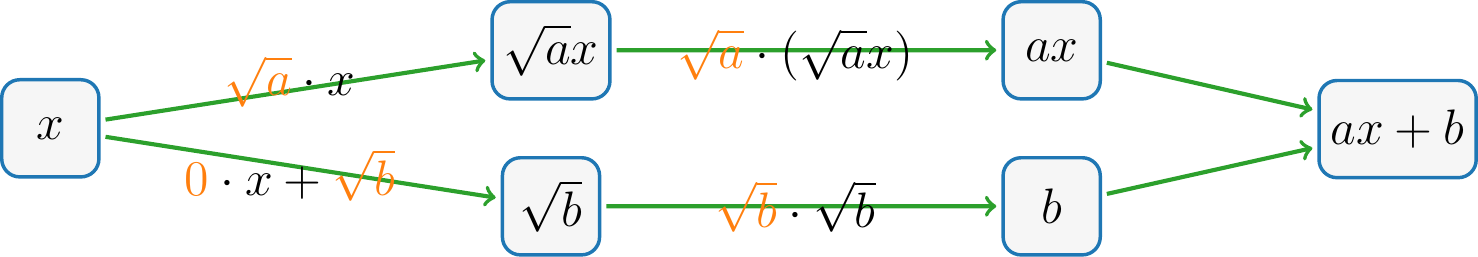}
		\subcaption{}
	\end{subfigure}
	\caption{Illustrations of the magnitude reduction of parameters for a sub-network. The parameters are marked in orange. Without loss of generality, $a\gg 1$ and $b \gg 1$. (a) Return $ax+b$ via two large parameters $a$ and $b$. (b)  Return $ax+b$ via several small parameters bounded by $\max\{\sqrt{a},\sqrt{b}\}$.}
	\label{fig:magnitude:reduction}
\end{figure}

In real-world applications, the parameters of the EUAF network are learned from the samples of the target function, which involves sophisticated numerical optimization. We refer to the learnability of network parameters as the existence of a numerical optimization algorithm that can identify network parameters to achieve a target approximation error. The computability of the EUAF networks does not imply learnability, which involves approximation, optimization, and generalization error analyses. 
The result in this paper shows that there exist computable EUAF networks achieving an arbitrarily small approximation error.  This means the learnability of the best approximation is reduced to achieving small generalization and optimization errors, which depend on the given data, the empirical risk model,  and the optimization algorithm. Therefore, whether or not EUAF networks would be useful in real-world applications also depends on optimization and generalization, which is out of the scope of this paper. The optimization and generalization error analyses of practical deep neural networks including EUAF networks is a challenging problem. To the best of our knowledge, there is no complete error analysis to address the learnability of neural networks with nonlinear activation functions.

\vspace{0.2cm}
The rest of this paper is organized as follows. In Section~\ref{sec:notation:idea}, we first summarize notations used in this paper and then discuss the ideas of proving Theorem~\ref{thm:main}. 
Section~\ref{sec:experimentation} focuses on numerical experimentation of EUAF, which acts as a proof of concept to explore the numerical properties of EUAF.
Next, several UAFs with better properties are proposed in Section~\ref{sec:UAFs}.
After that, we use several sections to present the complete proofs of Theorems~\ref{thm:main} and \ref{thm:main:classification}.
In Section~\ref{sec:proof:main:and:classification},
by assuming Theorem~\ref{thm:main:d=1} is true,
we give the detailed proofs of Theorems~\ref{thm:main} and \ref{thm:main:classification}.
Theorem \ref{thm:main:d=1} is proved in Section~\ref{sec:proof:main:d=1} based on Proposition~\ref{prop:dense}, the proof of which can be found in Section~\ref{sec:proof:prop:dense}.
Finally, Section~\ref{sec:conclusion} concludes this paper with a short discussion.

\section{Notations and Proof Ideas}
\label{sec:notation:idea}

In this section, we first  summarize notations used in this paper and then discuss the ideas of proving Theorem~\ref{thm:main}.

\subsection{Notations}
\label{sec:notation}

    Let us summarize all basic notations used in this paper as follows.
\begin{itemize}
    \item Let $\R$, $\Q$, and $\Z$ denote the set of real numbers, rational numbers, and integers, respectively.
    
    \item Let $\N$ and $\N^+$ denote the set of natural numbers and positive natural numbers, respectively.  That is,
    $\N^+=\{1,2,3,\cdots\}$ and $\N=\N^+\bigcup\{0\}$.
    
    \item For any $x\in \R$, let $\lfloor x\rfloor:=\max \{n: n\le x,\ n\in \Z\}$ and $\lceil x\rceil:=\min \{n: n\ge x,\ n\in \Z\}$.
    
    \item Let $\one_{S}$ be the indicator (characteristic) function of a set $S$, i.e., $\one_{S}$ is equal to $1$ on $S$ and $0$ outside $S$.

    
    \item The set difference of two sets $A$ and $B$ is denoted by $A\backslash B:=\{x:x\in A,\ x\notin B\}$. 

    \item Matrices are denoted by bold uppercase letters. For instance,  $\bm{A}\in\mathbb{R}^{m\times n}$ is a real matrix of size $m\times n$, and $\bm{A}^T$ denotes the transpose of $\bm{A}$.  
    Vectors are denoted as bold lowercase letters. For example, 
    $\bm{v}=[v_1,v_2,\cdots,v_d]^T=\scalebox{0.66}{$\left[\begin{array}{c}
    	v_1  \\
    	v_2 \\
    	\vdots \\
    	v_d
    \end{array}\right]$}\in\R^d$
    is a column vector. 
    Besides, ``$[$'' and ``$]$''  are used to  partition matrices (vectors) into blocks, e.g., $\bmA=\left[\begin{smallmatrix}\bmA_{11}&\bmA_{12}\\ \bmA_{21}&\bmA_{22}\end{smallmatrix}\right]$.

    \item For any $p\in [1,\infty)$, the $p$-norm (or $\ell^p$-norm) of a vector $\bmx=[x_1,x_2,\cdots,x_d]^T\in\R^d$ is defined by 
    \begin{equation*}
    	\|\bmx\|_p=\|\bmx\|_{\ell^p}\coloneqq \big(|x_1|^p+|x_2|^p+\cdots+|x_d|^p\big)^{1/p}.
    \end{equation*}
    In the case $p=\infty$, \begin{equation*}
    	\|\bmx\|_{\infty}=\|\bmx\|_{\ell^\infty}\coloneqq \max\big\{|x_i|: i=1,2,\cdots,d\big\}.
    \end{equation*}

    \item For any $a_1,a_2,\cdots,a_J\in\R$, we say $a_1,a_2,\cdots,a_J$ are \textbf{rationally independent} if they are linearly independent over the rational numbers $\Q$. That is, if there exist $\lambda_1,\lambda_2,\cdots,\lambda_J\in\Q$ such that $\sum_{j=1}^J \lambda_j\cdot a_j=0$, then $\lambda_1=\lambda_2=\cdots=\lambda_J=0$. For a simple example, $1$, $\sqrt{2}$, and $\sqrt{3}$ are rationally independent.

     \item An \textbf{algebraic} number is any complex number (including real numbers) that
    is a root of a polynomial equation with rational coefficients,  i.e., $\alpha$ is an algebraic number if and only if there exist
    $\lambda_0,\lambda_1,\cdots,\lambda_J\in \Q$ with  $\sum_{j=0}^J \lambda_j\alpha^j=0$.\footnote{For simplicity, we denote $1=x^0$ for any $x\in\R$, including the case $0^0$.} Denote the set of all algebraic numbers by $\A$. We say a complex number is \textbf{transcendental} if it is not in $\A$. 
    The set $\A$ is countable, and, therefore, almost all numbers are transcendental.
    The best known transcendental numbers are $\pi$ (the ratio of a circle's circumference to its diameter) and $e$ (the natural logarithmic base).
     
     \item The expression ``a network (architecture) with width $N$ and depth $L$'' means
             \begin{itemize}
             	\item The number of neurons in each \textbf{hidden} layer of this network (architecture) is no more than $N$.
             	\item The number of \textbf{hidden} layers of this network (architecture) is no more than $L$.
             \end{itemize} 
\end{itemize}

\subsection{Key Ideas of Proving Theorem~\ref{thm:main}}
\label{sec:key:ideas}

The proof of Theorem~\ref{thm:main} has two main steps: 1) prove the one-dimensional case; 2) reduce the $d$-dimensional approximation to the one-dimensional case via KST \cite{Kol1957}. In fact, in the case of $d=1$, the size of the network in Theorem~\ref{thm:main} can be further reduced as shown in Theorem~\ref{thm:main:d=1} below. Theorem~\ref{thm:main:d=1} is actually an enhanced version of Theorem~\ref{thm:main} and hence implies Theorem~\ref{thm:main}  in the case $d=1$.
\begin{theorem}\label{thm:main:d=1}
	Let  $f\in C([a,b])$ be a continuous function. Then,  for an arbitrary  $\varepsilon>0$,
	there exists a function $\phi$ generated by an EUAF network with width $36$ and depth $5$ such that
	\begin{equation*}
		|\phi(x)-f(x)|<\varepsilon\quad \tn{for any $x\in [a,b]\subseteq\R$.}
	\end{equation*}
\end{theorem}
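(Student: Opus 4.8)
The plan is to peel off the easy outer layers and locate the real content in Proposition~\ref{prop:dense}. First I would use that the input affine map $\bmcalL_0$ and the output affine map $\bmcalL_L$ in \eqref{eq:Phi:x:theta} are entirely free: replacing $f$ by $x\mapsto f\big(a+(b-a)x\big)\in C([0,1])$, approximating \emph{that} on $[0,1]$, and then precomposing the resulting network with the affine map $x\mapsto (x-a)/(b-a)$ reduces Theorem~\ref{thm:main:d=1} to the same statement on the reference interval $[0,1]$. That statement is what I expect Proposition~\ref{prop:dense} to supply — density in $C([0,1])$ of the functions realized by EUAF networks of some small fixed width $\le 36$ and depth $\le 5$ (if the proposition is stated with a smaller budget, the remaining channels and layers are padded using the identity $2M\sigma\big(\tfrac{x+M}{2M}\big)-M=x$ already noted in the excerpt). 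So the derivation of Theorem~\ref{thm:main:d=1} from Proposition~\ref{prop:dense} is a short paragraph, and I would spend essentially all the effort on the proposition.

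For Proposition~\ref{prop:dense}, fix $\varepsilon>0$. I would first replace $f$ by a \emph{continuous, piecewise-linear} surrogate $g$ with rational breakpoints and rational values and $\|f-g\|_{C([0,1])}<\varepsilon/3$; by uniform continuity this costs only a fine enough mesh. Crucially I would choose $g$ so that it is \emph{flat} (constant) on a prescribed family of subintervals — the ``cells'' — and performs all of its variation on the short ``buffer'' intervals separating them, because then a small error committed on the buffers is negligible in the uniform norm. This particular shape is dictated by the map $x\mapsto x-\sigma_1(\text{affine})$: via the triangular wave $\sigma_1$, a fixed-size EUAF sub-network realizes a \emph{continuous variant of the floor function} that is exactly an integer constant on half of each period and merely ramps on the other half. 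Rescaling so that the cells of $g$ sit on the flat parts and the buffers on the ramps, one gets an $\calO(1)$-neuron sub-network sending $x$ to an exact index $k\in\{0,\dots,K-1\}$ on the cells.

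The heart is to turn that index into $g$'s value on cell $k$ with $\calO(1)$ neurons, and here I would exploit the two features the authors stress. The \emph{periodicity} of $\sigma_1$ makes a single neuron a digit extractor (compositions of triangular-wave maps produce maps with exponentially many teeth), while the \emph{saturation} of the softsign, $\sigma_2(tx)\to\operatorname{sign}(x)$ as $t\to\infty$, lets a single neuron realize an approximate indicator and hence a selector and a smooth glue. The mechanism for packing an arbitrary finite list of rational cell-values into $\calO(1)$ parameters is, I believe, of Kronecker/equidistribution type — this is what the emphasis on rationally independent and transcendental numbers in the Notations section signals: with a suitably generic (transcendental) frequency fed into the periodic $\sigma_1$, the resulting fixed-size family of oscillations is dense enough to reproduce any prescribed value-vector at the $K$ cells up to $\varepsilon/3$, the softsign layers handling the selection and smoothing. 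Adding the three errors ($f$ versus $g$, the buffers and ramps, and the decoder) and padding the binning sub-network and the $(\sigma_1,\sigma_2)$-decoder out to width $36$, depth $5$, finishes the argument.

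The main obstacle, I expect, is the exact resource bookkeeping: arranging the binning sub-network and the decoder so that the width never exceeds $36$ and the depth never exceeds $5$ \emph{simultaneously}, while keeping every intermediate quantity in the range where $\sigma_1$ is on a flat part and $\sigma_2$ is effectively saturated — extracting a clean floor and a clean selector out of genuinely continuous building blocks is what forces the buffer regions and the careful tracking of error propagation through the few available layers. The secondary difficulty is the number-theoretic lemma itself: making precise that one real (or integer) parameter suffices to encode an arbitrary finite rational sequence in a form the fixed decoder can read back, which is presumably where the transcendence hypothesis does its work and the step I would expect to need the most care to state and prove cleanly. The reduction to $[0,1]$, the piecewise-linear approximation, and the summation of three $\varepsilon/3$'s are routine.
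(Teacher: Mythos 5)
Your proposal reproduces the paper's core machinery for this theorem: reduce to a reference interval by composing with affine maps; bin the interval with a continuous floor-type map built from the triangular wave ($\psi(x)=x-\sigma(x)$ in the paper, constant on half of each period and ramping on the other half); and decode the resulting integer index into an approximate sample value by a Kronecker/equidistribution point-fitting argument with rationally independent frequencies coming from a transcendental number — this is exactly Proposition~\ref{prop:dense} combined with the softsign identity $\sigma(y)+1=\tfrac{1}{-y+1}$ ($y\le 0$), which is how the paper's network produces the frequencies $\tfrac{1}{\pi+k}$ from the index $k$. One labeling correction: Proposition~\ref{prop:dense} is only the point-fitting density statement in $[0,1]^K$, not a density statement in $C([0,1])$, so the univariate theorem does not follow from it ``in a short paragraph''; the construction you then sketch is precisely that extra work (it is the paper's Theorem~\ref{thm:main:d=1:half}, a width-$2$, depth-$3$ network).

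The genuine gap is your treatment of the buffer/ramp regions. You assert that the error committed on the buffers is ``negligible in the uniform norm,'' but the theorem demands $|\phi(x)-f(x)|<\varepsilon$ at every point of $[a,b]$, and on the ramps of the continuous floor the decoder's input sweeps continuously between indices while the fitted parameter $w_0$ is enormous, so the argument of the periodic part passes through many periods and the output there is only bounded by $\|f\|_{L^\infty}+1$ — nowhere near $\varepsilon$-close to $f$. Making your surrogate $g$ constant on the cells does not help, because the failure is in the network's output on the buffers, not in $g$. The paper closes this hole with Theorem~\ref{thm:main:d=1:small:region}: it takes four shifted copies of the width-$2$ construction, glues them with an EUAF-realizable partition of unity $\sum_{i=1}^4\psi\big(2Kx+\tfrac{i}{2}\big)=1$ that vanishes identically on each copy's bad set, and multiplies copy by weight \emph{exactly} on a bounded range using the width-$9$ product network of Lemma~\ref{lem:xy} (built from the exact squaring identity of Lemma~\ref{lem:xx}, which exploits the algebraic form of softsign, not its saturation); the explicit bound $\|\phi_i\|_{L^\infty([0,1])}\le\|f\|_{L^\infty}+1$ in Theorem~\ref{thm:main:d=1:half} exists precisely to keep the products inside the range where that gadget is exact. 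This gluing — not padding — is what consumes the stated budget (width $36=4\times 9$, depth $5$). Your one-sentence appeal to softsign saturation as ``a selector and a smooth glue'' would itself require such a product mechanism plus a bound on the off-cell outputs, so as written this step is missing rather than routine.
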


The detailed proof of Theorem~\ref{thm:main:d=1} can be found in Section~\ref{sec:proof:main:d=1}. The main ideas of proving Theorem~\ref{thm:main:d=1}  are developed from some ideas of  our early works \cite{shijun4,shijun5}.
Roughly speaking,  we eventually convert a function approximation problem in an interval (e.g., $[0,1)$) to a point-fitting problem via the composition architecture of neural networks in the following three main steps.\footnote{The goal of a point-fitting problem is to identify a function $\phi:\R^d\to\R$ in a given hypothesis space (e.g., the space of functions realized by neural networks) such that $|\phi(\bmx_i)- y_i|< \varepsilon$ for $i=1,2,\cdots,n$ and a pre-specified error $\varepsilon>0$, where $\{(\bmx_i,y_i)\}_{i=1}^{n}\subseteq \R^{d+1}$ are given samples.} 
\begin{itemize}
	\item  Divide $[0,1)$ into small intervals $\calI_k=[\tfrac{k-1}{K},\tfrac{k}{K})$  with a left endpoint $x_k$ for $k\in\{1,2,\cdots,K\}$, where $K$ is an integer determined by the given error and the target function $f$. 
	
	\item Construct a sub-network to generate a function $\phi_1$ mapping the whole interval $\calI_k$ to $k$ for each $k$. The floor function $\lfloor\cdot\rfloor$ is a good choice to implement this step. Precisely, we can define $\phi_1(x)=\lfloor Kx\rfloor$.  
	The floor function is not continuous and has zero-derivative almost everywhere. As we shall see later, $\sigma_1$ (or $\sigma$) can be a continuous alternative to implement this step, but the construction is more complicated.
	
	\item The final step is to design another sub-network to generate a function $\phi_2$ mapping $k$ approximately to $f(x_k)$ for each $k$. Then $\phi_2\circ\phi_1(x)=\phi_2(k)\approx f(x_k)\approx f(x)$ for any $x\in \calI_k$ and $k\in\{1,2,\cdots,K\}$, which implies $\phi_2\circ\phi_1\approx f$ on $[0,1)$. After the above two steps, we simplify the approximation problem to a point-fitting problem, where $k$ is approximately mapped  to $f(k)$. 
	This step is the bottleneck of the construction in our previous papers \cite{shijun4,shijun5}. Roughly speaking, the final approximation error is essentially determined by how many points we can fit using a neural network. 
\end{itemize}

For the second step,  the capacity to generate step functions with sufficiently many ``steps'' via a sub-network with a limited number of neurons plays an important role. The reproduced step functions can be considered as a  continuous version of the floor function ($\lfloor\cdot\rfloor$) in \cite{shijun4,shijun5},  which is a perfect step function with infinite ``steps''  that improves the approximation power of networks as shown in \cite{shijun4,shijun5}. 
The key ingredient in the third step of the proof of Theorem~\ref{thm:main:d=1} is
essentially a point-fitting problem with arbitrarily many points.  This requires the following proposition motivated by the well-known fact that an irrational winding on the torus is dense.
See Figure~\ref{fig:Er} for illustrations of such a fact.
Here, we propose a new point-fitting technique that can fit arbitrarily many points within an arbitrary error using  fixed-size neural networks.

	\begin{figure}[htbp!]
		\centering
		\begin{subfigure}[b]{0.24\textwidth}
			\centering			\includegraphics[width=0.975\textwidth]{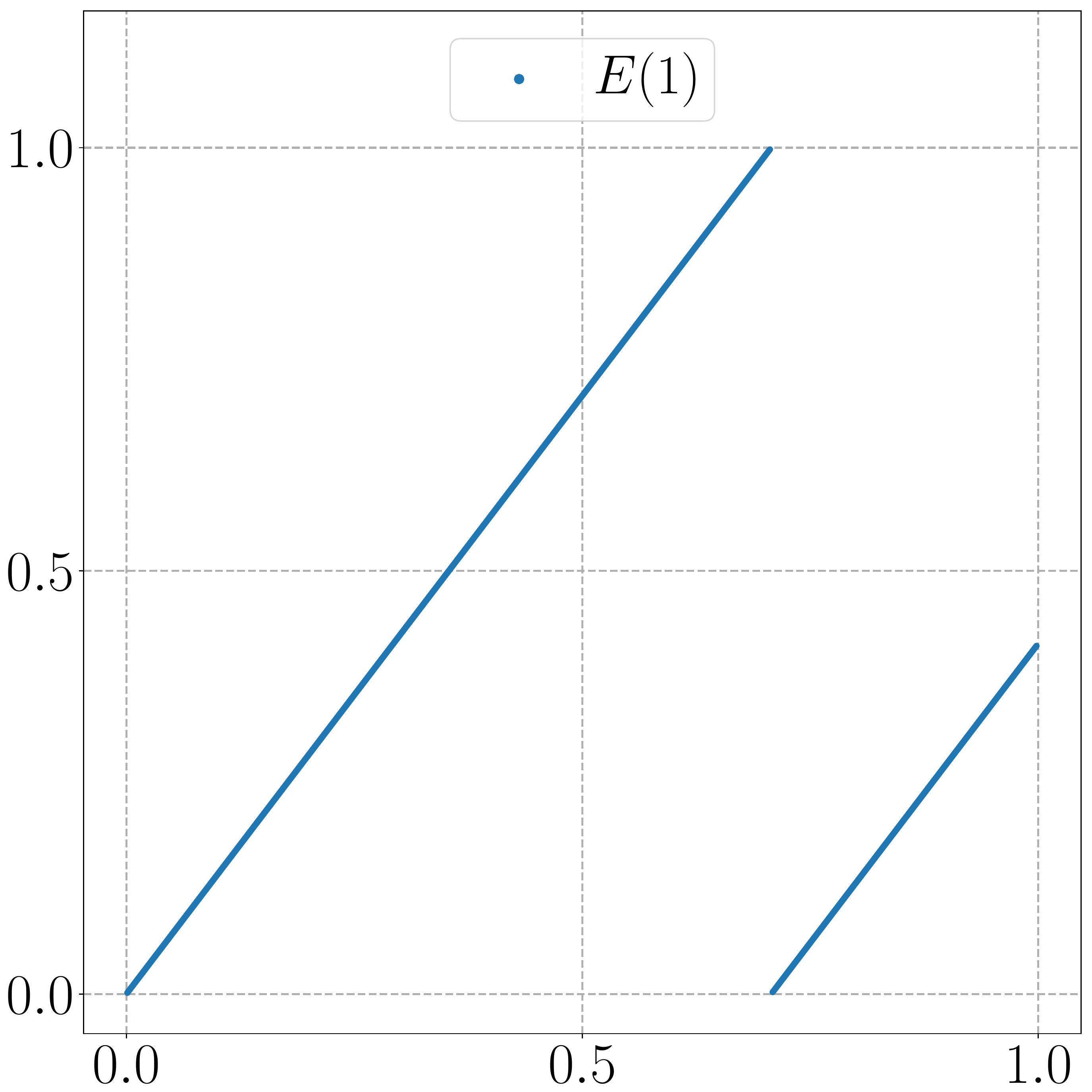}
		\end{subfigure}
		\begin{subfigure}[b]{0.24\textwidth}
			\centering            \includegraphics[width=0.975\textwidth]{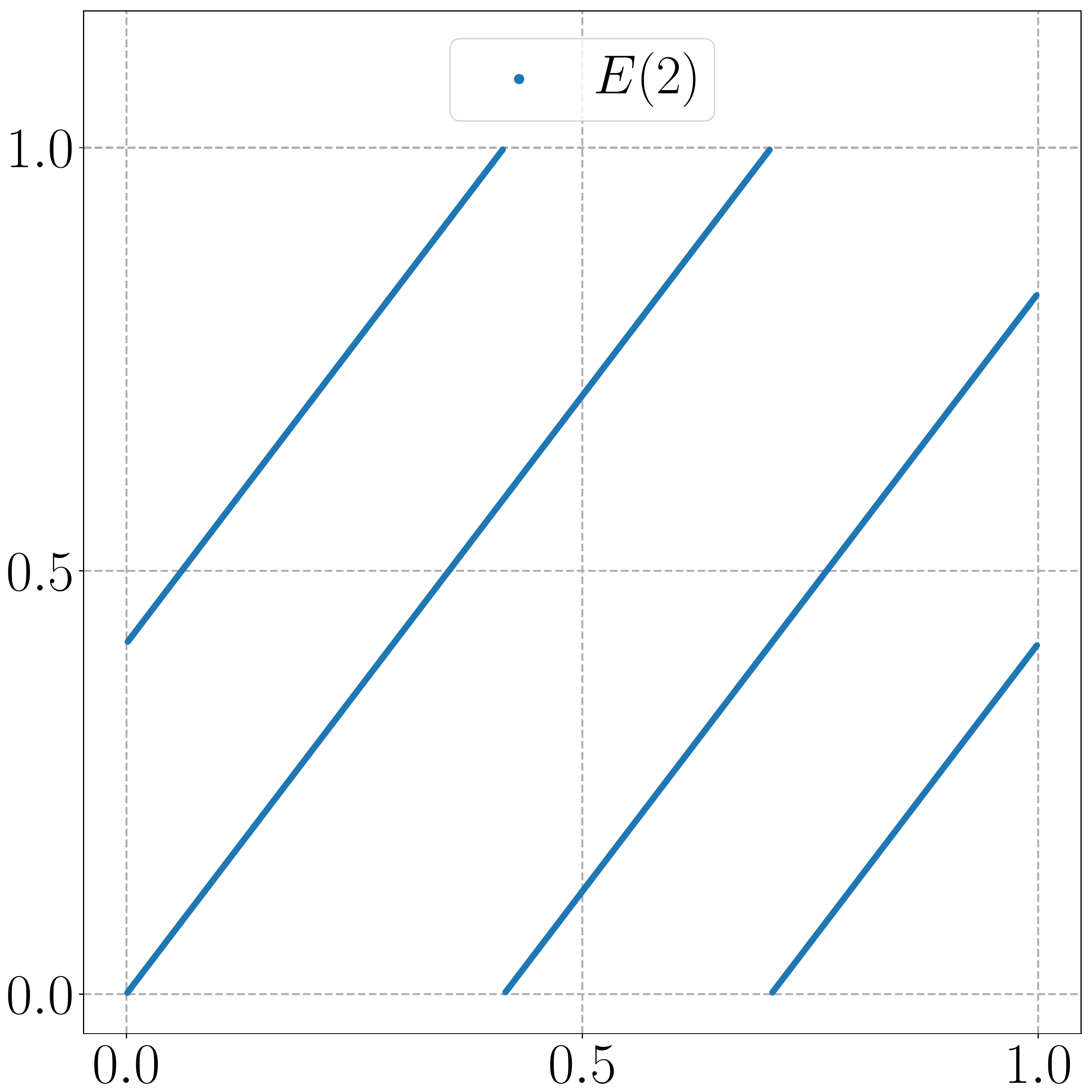}
		\end{subfigure}
		\begin{subfigure}[b]{0.24\textwidth}
			\centering           \includegraphics[width=0.975\textwidth]{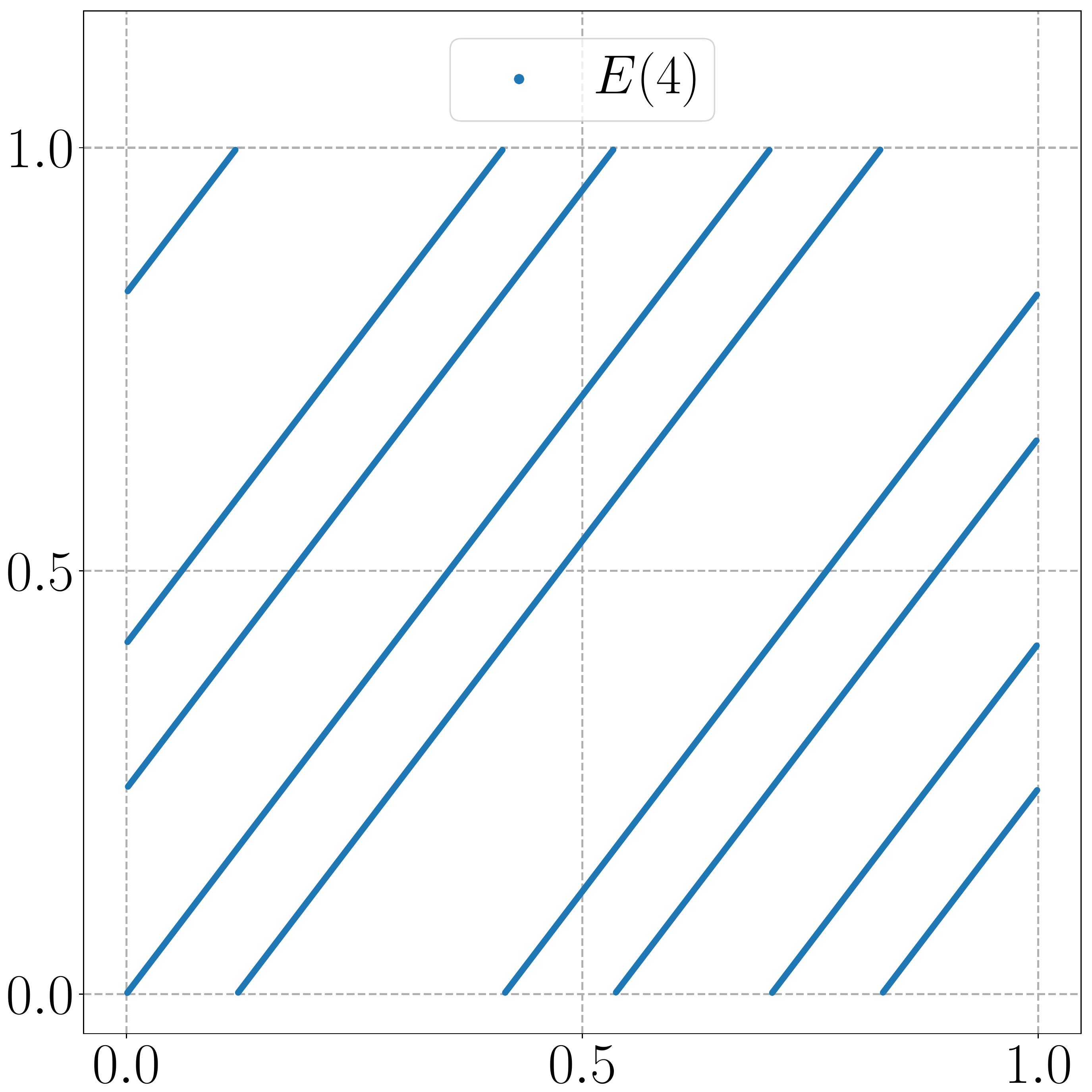}
		\end{subfigure}
		\begin{subfigure}[b]{0.24\textwidth}
			\centering            \includegraphics[width=0.975\textwidth]{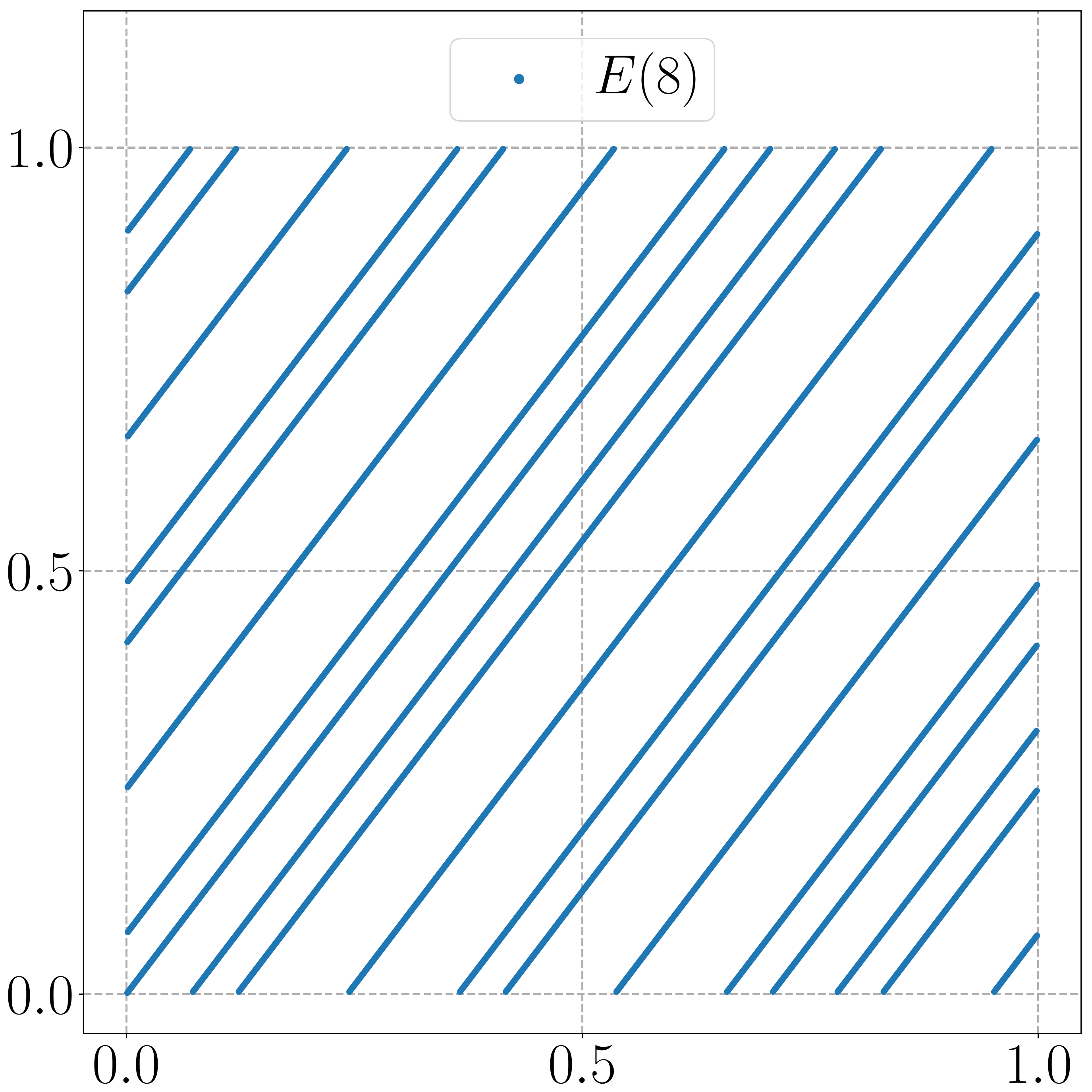}
		\end{subfigure}
		
		\vspace*{6pt}
		\begin{subfigure}[b]{0.24\textwidth}
			\centering			\includegraphics[width=0.975\textwidth]{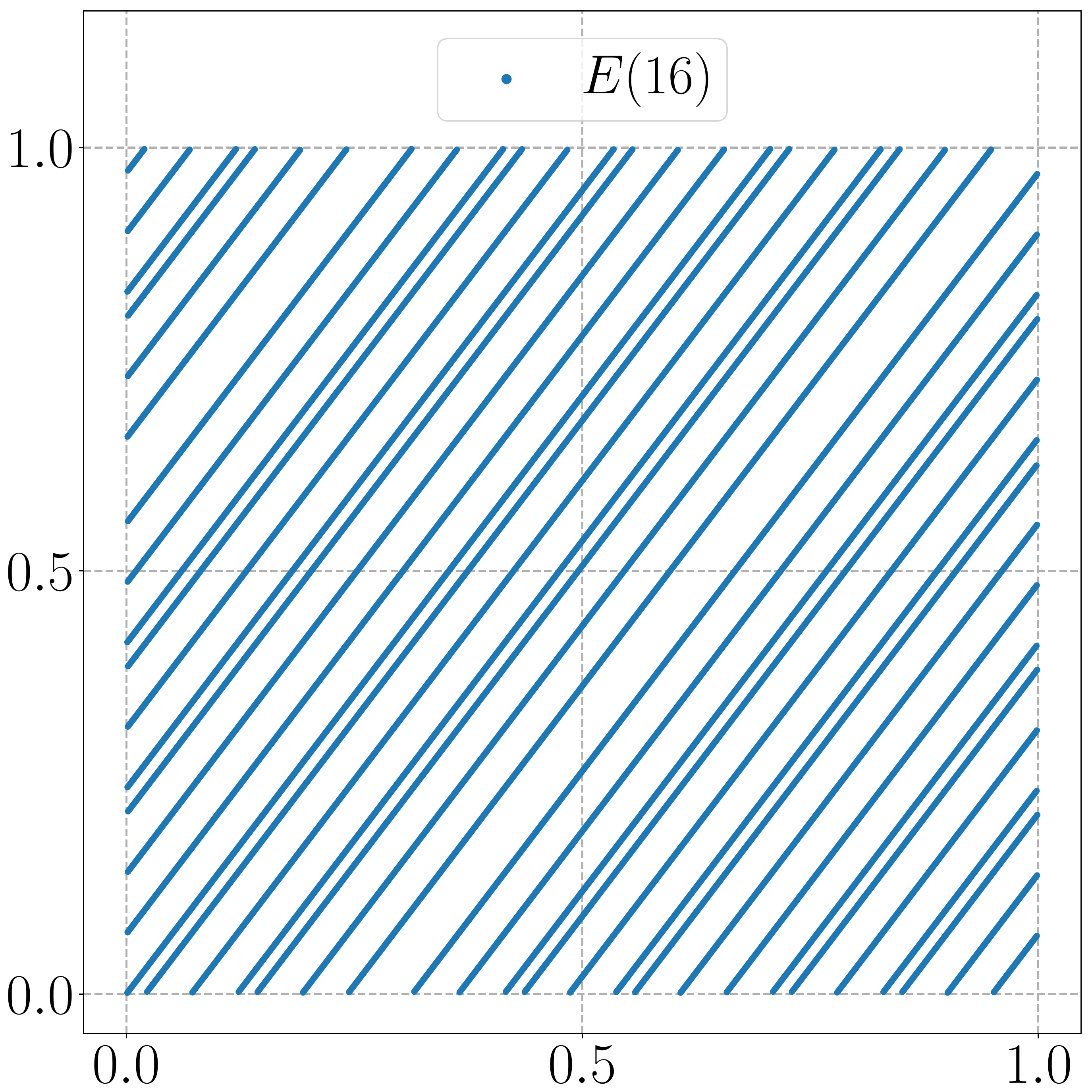}
		\end{subfigure}
		\begin{subfigure}[b]{0.24\textwidth}
			\centering            \includegraphics[width=0.975\textwidth]{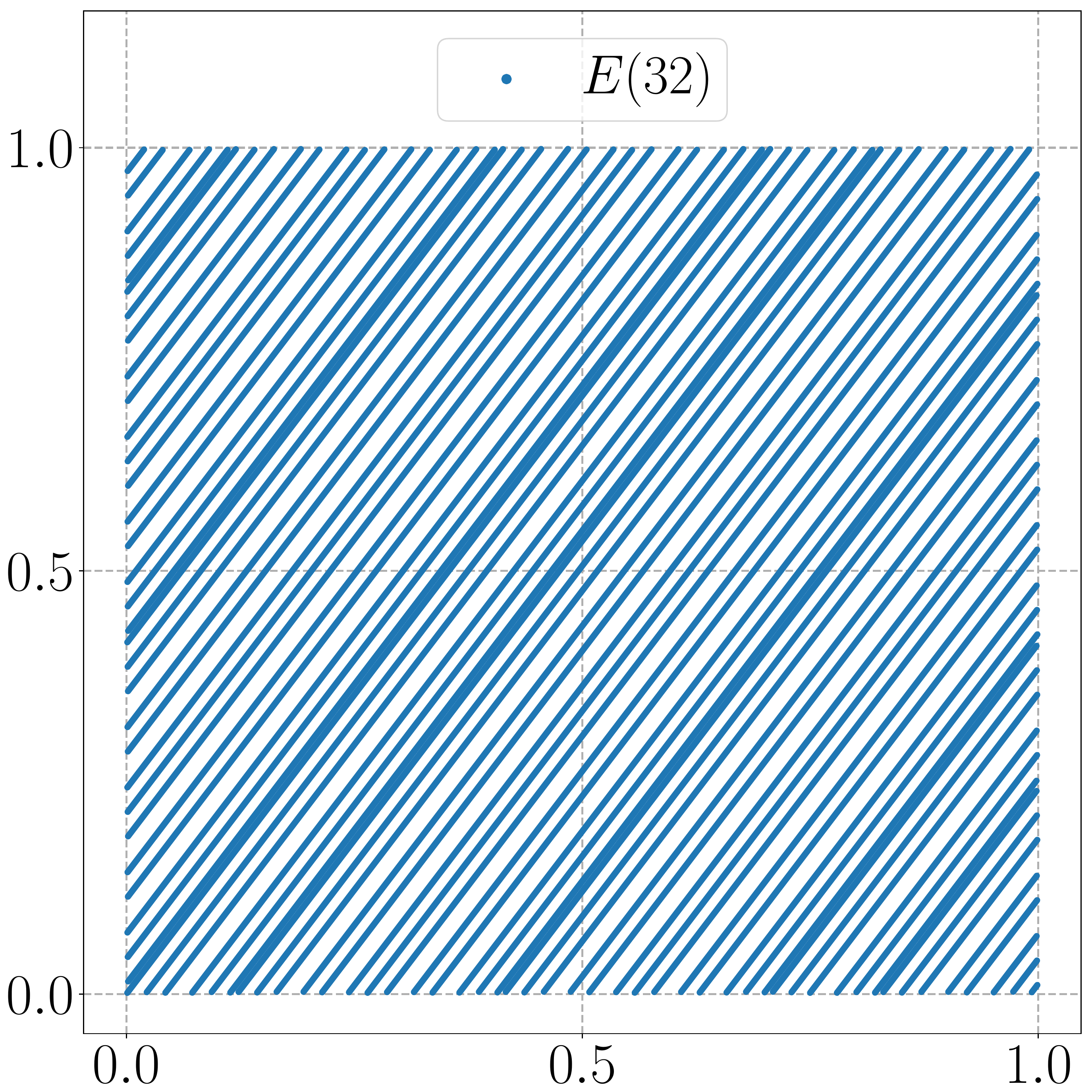}
		\end{subfigure}
		\begin{subfigure}[b]{0.24\textwidth}
			\centering           \includegraphics[width=0.975\textwidth]{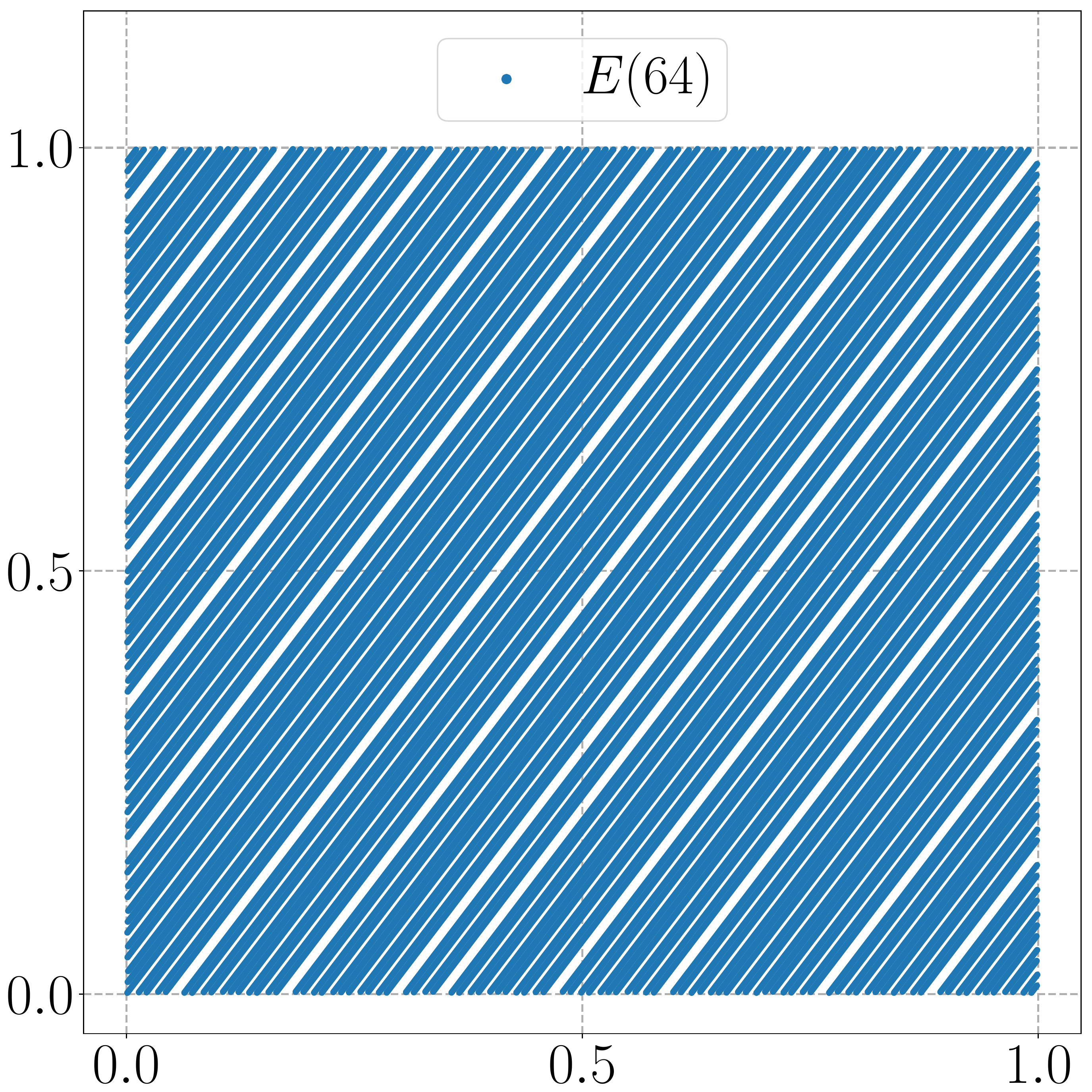}
		\end{subfigure}
		\begin{subfigure}[b]{0.24\textwidth}
			\centering            \includegraphics[width=0.975\textwidth]{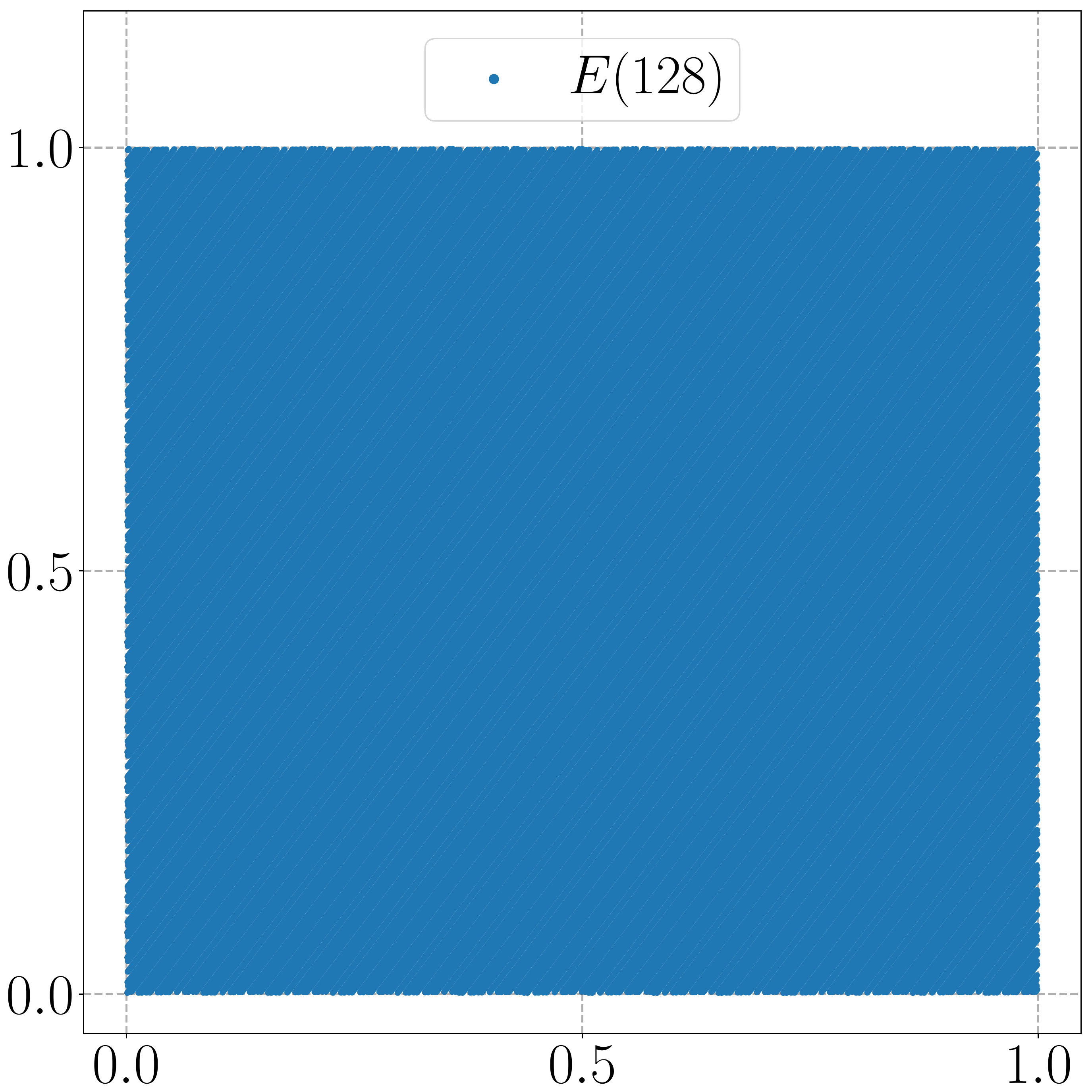}
		\end{subfigure}
		\caption{Illustrations of the denseness of $E(\infty)$ in $[0,1]^2$, where $E(r)$ is a  winding of an ``irrational''  direction $[1,\sqrt{2}]^T$ on $[0,r)$, i.e., $E(r)=\big\{[\tau(t),\tau(\sqrt{2}\,t)]^T:\, t\in[0,r)\big\}$ with $\tau(t)=t-\lfloor t\rfloor$. }
		\label{fig:Er}
	\end{figure}

\begin{proposition}\label{prop:dense}
	For any $K\in \N^+$, the following point set
	\begin{equation*}
	\Big\{\big[\sigma_1(\tfrac{w}{\pi+1}),\   
	\sigma_1(\tfrac{w}{\pi+2}),\   \cdots,\  
	\sigma_1(\tfrac{w}{\pi+K})\big]^T\,  :\, 
	w\in\R   \Big\}\subseteq [0,1]^K
\end{equation*}
is dense in $[0,1]^K$, where $\pi$ is the ratio of a circle's circumference to its diameter.
\end{proposition}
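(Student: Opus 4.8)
The plan is to reduce the claim to a density statement about linear flows on a torus, i.e.\ a Kronecker/Weyl equidistribution argument. First I would observe that $\sigma_1$ is, up to the identification we need, essentially the ``distance to the nearest even integer'' function: for $t\in\R$, write $t = 2m + s$ with $m\in\Z$ and $s\in[-1,1]$, so $\sigma_1(t)=|s|$, and as $t$ ranges over any interval of length $2$ the value $\sigma_1(t)$ sweeps $[0,1]$ (going up then down, a ``tent''). The key point is that $\sigma_1$ is surjective onto $[0,1]$ on every period, and its behaviour on $\R$ is governed by the value of $t \bmod 2$. So I would introduce the map $t\mapsto t\bmod 2$ from $\R$ to the circle $\R/2\Z$, and note $\sigma_1$ factors through it as a fixed continuous surjection $g:\R/2\Z\to[0,1]$ that is $2$-to-$1$ except at the two endpoints.

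Next I would set $w/(\pi+k) = t \cdot \beta_k$ — more precisely, parametrize by a single real $w$ and look at the point
\[
\Big(\tfrac{w}{\pi+1}\bmod 2,\ \tfrac{w}{\pi+2}\bmod 2,\ \cdots,\ \tfrac{w}{\pi+K}\bmod 2\Big)\in(\R/2\Z)^K .
\]
This is exactly the orbit of the one-parameter subgroup (linear flow) on the torus $(\R/2\Z)^K$ in the direction $v=\big(\tfrac{1}{\pi+1},\tfrac{1}{\pi+2},\cdots,\tfrac{1}{\pi+K}\big)$. By the classical Kronecker theorem (equivalently, Weyl's criterion), this orbit is dense in the whole torus provided the components of $v$ together with the number $1$ (after rescaling, the periods) are rationally independent — concretely, provided there is no nontrivial relation $\sum_{k=1}^K q_k \tfrac{1}{\pi+k} \in \mathbb{Q}$ with $q_k\in\Q$ not all zero. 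Granting that, the orbit is dense in $(\R/2\Z)^K$; applying the continuous surjection $g$ coordinatewise, which maps $(\R/2\Z)^K$ onto $[0,1]^K$, shows the image point set is dense in $[0,1]^K$, which is the claim. (One must be slightly careful that $g$ is surjective and continuous so that it carries dense sets to dense sets — since $g$ is continuous and $[0,1]=g(\R/2\Z)$, the image of a dense set is dense in $g$ of the torus, i.e.\ in $[0,1]^K$.)

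So the crux — and the step I expect to be the main obstacle — is the arithmetic fact that $\tfrac{1}{\pi+1},\tfrac{1}{\pi+2},\cdots,\tfrac{1}{\pi+K}$ are rationally independent together with $1$, or more precisely that no rational linear combination of them is rational. Here is where the transcendence of $\pi$ enters. Suppose $\sum_{k=1}^K q_k \tfrac{1}{\pi+k} = q_0$ with $q_0,\dots,q_K\in\Q$ not all of $q_1,\dots,q_K$ zero. Clearing denominators by multiplying through by $\prod_{k=1}^K(\pi+k)$ yields a polynomial identity $P(\pi)=0$ where $P$ is a polynomial with rational coefficients, not identically zero (one checks nonvanishing by evaluating the partial-fraction-type expression at $\pi=-k_0$ for an index $k_0$ with $q_{k_0}\neq 0$: the term $q_{k_0}\prod_{k\neq k_0}(\pi+k)$ is the only one not divisible by $(\pi+k_0)$, so $P(-k_0)=q_{k_0}\prod_{k\neq k_0}(k-k_0)\neq 0$). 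Since $\pi$ is transcendental, $P(\pi)=0$ forces $P\equiv 0$, a contradiction. Hence the required rational independence holds, the Kronecker/Weyl step applies, and the proposition follows. A minor technical wrinkle to handle cleanly is the exact normalization in Weyl's criterion — rescaling the torus $(\R/2\Z)^K$ to $(\R/\Z)^K$ and tracking that the direction vector becomes $\big(\tfrac{2}{\pi+1},\cdots,\tfrac{2}{\pi+K}\big)$, whose rational independence with $1$ is equivalent to what was proved — but this is routine.
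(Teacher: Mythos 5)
Your proposal is correct, and at the top level it follows the same strategy as the paper: establish rational independence of $\tfrac{1}{\pi+1},\dots,\tfrac{1}{\pi+K}$ from the transcendence of $\pi$, invoke density of an irrational winding on a torus, and push the result through the periodic function $\sigma_1$. The differences are in how the two sub-steps are carried out. For the independence step, the paper proves a general lemma (Lemma~\ref{lem:r:i:numbers}) via Lagrange basis polynomials: a supposed rational relation is rewritten in terms of the coefficient matrix of the Lagrange basis, which is shown to be invertible by multiplying against a Vandermonde matrix. Your argument—clear denominators, observe that the resulting rational-coefficient polynomial $P$ satisfies $P(-k_0)=q_{k_0}\prod_{j\neq k_0}(j-k_0)\neq 0$, and contradict $P(\pi)=0$ by transcendence—is more direct and arguably cleaner; it proves the stronger statement that no nontrivial rational combination of the $\tfrac{1}{\pi+k}$ is even rational, which covers whichever normalization of Kronecker's theorem one uses (your hedge about ``independence together with $1$'' and the harmless factor-of-$2$ slip in the rescaling are immaterial, since rational scaling preserves rational independence). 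For the torus step, you cite the classical Kronecker/Weyl density theorem for a one-parameter linear flow, whereas the paper gives a self-contained inductive proof (Lemma~\ref{lem:dense:decimal}) and then a slightly more general transfer lemma (Lemma~\ref{lem:dense:periodic}) handling any periodic $g$ continuous on a subinterval of its period; your transfer is the cleaner observation that $\sigma_1$ factors through a continuous surjection $\R/2\Z\to[0,1]$, and continuous surjections carry dense sets to dense sets. In short: same skeleton, but your write-up buys brevity by a sharper independence argument and by outsourcing the winding density to a standard theorem, while the paper buys self-containedness and a reusable, more general periodic-function lemma.
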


The proof of Proposition~\ref{prop:dense} can be found in Section~\ref{sec:proof:prop:dense}. To prove the  denseness in Proposition~\ref{prop:dense}, we borrow some ideas from transcendental number theory and Diophantine approximations in number theory. 
The number $\pi$ used in Proposition~\ref{prop:dense} is transcendental. It can be replaced by any other transcendental number.

Proposition~\ref{prop:dense} implies that for any given sample points $(k,y_k)\in\R^2$ with $y_k\in[0,1]$ for $k=1,2,\cdots,K$ and any $K\in\N^+$, there exists $w_0\in\R$ such that the function $x\mapsto\sigma_1(\tfrac{w_0}{\pi +x})$ can fit the  points $(k,y_k)\in\R^2$ for $k=1,2,\cdots,K$ within an arbitrary pre-specified error $\varepsilon>0$. To put it another way, for any $\varepsilon>0$, there exists $w_0\in\R$ such that $|\sigma_1(\tfrac{w_0}{\pi +k})-y_k|<\varepsilon$ for all $k$.

As we shall see later in the proof of Proposition~\ref{prop:dense},
the key point  is the periodicity of the outer function
$\sigma_1$. Of course, the inner function $x\mapsto\tfrac{w_0}{\pi +x}$ is also necessary since it helps to adjust sample points for $x=1,2,\cdots,K$. In fact, 
the inner function $x\mapsto\tfrac{w_0}{\pi +x}$ can be regarded as a variant of $\sigma_2$ via scaling and shifting. The periodicity has been explored to improve neural network  approximation in the literature, e.g.  the sine function in \cite{yarotsky:2019:06} is periodic and the floor function ($\lfloor\cdot\rfloor$) in \cite{shijun4,shijun5} is implicitly periodic because $x-\lfloor x\rfloor$ is periodic. 
We remark that a similar result holds if we replace $\sigma_1$  by a non-trivial periodic function and replace the sample locations
$x=1,2,\cdots,K$ by
distinct rational numbers $r_1,r_2,\cdots,r_K\in\Q$. See Section~\ref{sec:proof:prop:dense} for a further discussion.

 Theorem \ref{thm:main:d=1} essentially proves Theorem~\ref{thm:main} for  the univariate  case.  To prove the general case, we need the Kolmogorov superposition theorem (KST) \cite{Kol1957} given below to reduce a multivariate problem to a one-dimensional case.  
\begin{theorem}[KST]
	\label{thm:kst}
	There exist continuous functions $h_{i,j}\in C([0,1])$ for $i=0,1,\cdots,2d$ and $j=1,2,\cdots,d$ such that any continuous function $f\in C([0,1]^d)$ can be represented as 
	\begin{equation*}
		f(\bmx)=\sum_{i=0}^{2d}  g_i \Big(\sum_{j=1}^d h_{i,j}(x_j)\Big)\quad \tn{for any $\bmx=[x_1,x_2,\cdots,x_d]^T\in [0,1]^d$,}
	\end{equation*}
	where $g_i:\R\to\R$ is a continuous function for each $i\in \{0,1,\cdots,2d\}$.
\end{theorem}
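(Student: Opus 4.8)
The plan is to give the classical Kolmogorov construction (as in \cite{Kol1957}), organized so that the inner functions $h_{i,j}$ are built once and for all, independently of $f$, while the outer functions $g_i$ are obtained, for each given $f$, by a telescoping iterative approximation. Throughout write $\eta_i(\bmx)\coloneqq\sum_{j=1}^d h_{i,j}(x_j)$ for $i=0,1,\cdots,2d$, so the goal is $f(\bmx)=\sum_{i=0}^{2d}g_i\big(\eta_i(\bmx)\big)$. The whole point is to choose the $\eta_i$ so that the family of maps $\bmx\mapsto(\eta_0(\bmx),\cdots,\eta_{2d}(\bmx))$ enjoys a strong cell-separation property; granted that, the $g_i$ fall out of a routine contraction argument.

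First I would fix the inner functions. It suffices to look for $h_{i,j}$ of the special form $h_{i,j}(x)=\lambda_j\,\psi(x+ia)$, where $\psi\in C([0,1+2da])$ is a fixed strictly increasing function, $a>0$ is a small shift, and $\lambda_1,\cdots,\lambda_d>0$ are rationally independent with $\sum_{j=1}^d\lambda_j\le1$; then each $h_{i,j}$ restricted to $[0,1]$ lies in $C([0,1])$, as required. The property to be arranged is: for every $K\in\N^+$ there is a gap $\delta_K>0$ such that, after partitioning $[0,1]^d$ into the $K^d$ closed cubes $Q$ of side $1/K$ with rational vertices, every such cube $Q$ is \emph{good} for at least $2d$ of the $2d+1$ indices $i$, in the sense that $\eta_i(Q)$ is an interval lying at distance $\ge\delta_K$ from $\eta_i(Q')$ for every other cube $Q'$ of the same grid. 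The ``$2d$ out of $2d+1$'' count is the combinatorial heart: one uses the $2d+1$ translated copies of the grid coming from the shifts $ia$, and with a suitably chosen $\psi$ (flat near $K$-adic points) together with a generic small $a$ and rationally independent $\lambda_j$, any cube can fall into a transition zone of at most one of the $2d+1$ translated grids, so the remaining $2d$ maps $\eta_i$ genuinely separate its cell from all others. This $\psi$ can be produced either explicitly by a rapidly convergent series adapted to a $K$-adic expansion (Sprecher-type construction) or by a Baire-category argument showing that the admissible tuples $(\psi,\lambda_1,\cdots,\lambda_d)$ with the separation property form a dense $G_\delta$ in a complete metric space; either way this step fixes the $h_{i,j}$ for good.

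Given the separation property, I would construct the outer functions by iteration. Set $f_0\coloneqq f$ and $M_0\coloneqq\|f_0\|_{C([0,1]^d)}$. At step $n\ge1$, choose $K=K_n$ so large that the oscillation of $f_{n-1}$ on each side-$1/K$ cube is at most $\tfrac{1}{2(2d+1)}M_{n-1}$; for each cube $Q$ and each index $i$ that is good for $Q$, prescribe $g_i^{(n)}$ to equal $\tfrac{1}{2d+1}f_{n-1}(c_Q)$ on the interval $\eta_i(Q)$, where $c_Q$ is the center of $Q$, and extend $g_i^{(n)}$ to a continuous function on $\R$, piecewise linear between these intervals, with $\|g_i^{(n)}\|_\infty\le\tfrac{1}{2d+1}M_{n-1}$; the gaps $\delta_{K_n}$ make this possible without conflicts, and on any ``bad-for-$Q$'' index the interpolated value of $g_i^{(n)}$ is still bounded by $\tfrac{1}{2d+1}M_{n-1}$. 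Put $f_n\coloneqq f_{n-1}-\sum_{i=0}^{2d}g_i^{(n)}\circ\eta_i$. On each cube $Q$ the at-least-$2d$ good terms contribute $\tfrac{1}{2d+1}f_{n-1}(c_Q)$ each and the at-most-one bad term is $O\big(\tfrac{1}{2d+1}M_{n-1}\big)$, so combining with the oscillation bound gives $\|f_n\|_\infty\le\theta M_{n-1}$ with $\theta=\theta(d)<1$ (one may take $\theta$ close to $\tfrac{2}{2d+1}$). Hence $M_n\le\theta^nM_0$, the series $g_i\coloneqq\sum_{n\ge1}g_i^{(n)}$ converges uniformly on $\R$ (since $\|g_i^{(n)}\|_\infty\le\tfrac{1}{2d+1}\theta^{n-1}M_0$) to a continuous function $g_i:\R\to\R$, and telescoping $\sum_{n\ge1}\big(f_{n-1}-f_n\big)=f_0-\lim_n f_n=f$ with $f_{n-1}-f_n=\sum_i g_i^{(n)}\circ\eta_i$ yields $f(\bmx)=\sum_{i=0}^{2d}g_i\big(\sum_{j=1}^d h_{i,j}(x_j)\big)$ for all $\bmx\in[0,1]^d$, as claimed.

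The main obstacle is squarely the inner-function step: arranging the separation property with only $2d+1$ outer indices, i.e.\ establishing the geometric/combinatorial fact that each cell is ``bad'' for at most one of the $2d+1$ translated grids while the $\eta_i$ still separate distinct cells — this is where the rational independence of the $\lambda_j$, the flatness structure of $\psi$, and the genericity of $a$ all enter, and where a Diophantine/transcendence flavor reappears if one wants explicit constants. Once that input is secured, the construction of the $g_i$ and the convergence of the resulting series are elementary, since uniform limits of continuous functions are continuous and the contraction factor $\theta(d)<1$ makes every sum involved absolutely and uniformly convergent. (The one-dimensional case $d=1$ is trivial: take $h_{0,1}=\mathrm{id}$, $g_0=f$, and all other $h_{i,j},g_i$ equal to $0$.)
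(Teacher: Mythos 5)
The paper does not prove this statement at all: Theorem~\ref{thm:kst} is the classical Kolmogorov superposition theorem and is simply quoted with a citation to \cite{Kol1957}, so there is no in-paper proof to compare against. Your proposal is an attempt to reconstruct the classical Kolmogorov--Lorentz proof, and its overall architecture (fixed inner functions $h_{i,j}(x)=\lambda_j\psi(x+ia)$ built once and for all, then outer functions $g_i=\sum_n g_i^{(n)}$ obtained by a telescoping contraction) is indeed the standard and correct route.

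However, the key combinatorial/geometric lemma you rely on is wrong as stated, in two ways. First, you partition $[0,1]^d$ into the $K^d$ closed cubes of a tiling and call $Q$ ``good for $i$'' when $\eta_i(Q)$ is separated by a gap $\delta_K>0$ from $\eta_i(Q')$ for \emph{every} other cube $Q'$ of the same grid; since $\eta_i$ is continuous and adjacent closed cubes of a tiling share boundary points, $\eta_i(Q)\cap\eta_i(Q')\neq\emptyset$ for neighbors, so no cube can be good for any $i$ in this sense. The classical construction avoids this by working, for each index $i$, with a family of cubes that does \emph{not} tile: cubes of side roughly $1/K$ separated by small gaps, the $2d+1$ families being translates of one another. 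Second, even with the correct families, the count ``each cube is bad for at most one of the $2d+1$ families, hence good for at least $2d$'' is false for $d\ge 2$: a cube can straddle a gap of family $i_1$ in coordinate $1$ and a gap of a different family $i_2$ in coordinate $2$, so it can be missed by up to $d$ families (one per coordinate). The correct statement is that every point (or sufficiently small cube) lies in cubes of at least $d+1$ of the $2d+1$ families. With this corrected count your iteration still works, but the contraction factor becomes $\theta$ close to $\tfrac{2d}{2d+1}$ rather than $\tfrac{2}{2d+1}$: on each step the good terms remove a fraction $\ge\tfrac{d+1}{2d+1}$ of $f_{n-1}$ and the $\le d$ bad terms contribute at most $\tfrac{d}{2d+1}M_{n-1}$, so $M_n\le(\tfrac{2d}{2d+1}+\epsilon)M_{n-1}$, which still yields uniform convergence of $\sum_n g_i^{(n)}$ and the telescoping identity. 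So the skeleton of your argument is the right one, but the separation lemma must be reformulated (families with gaps, ``at least $d+1$ good'') and the quantitative constants adjusted before the proof is sound; the genuinely hard part, constructing a single $\psi$ and rationally independent $\lambda_j$ making the separation property hold simultaneously for all ranks $K$, is only gestured at and is precisely the technical core of \cite{Kol1957} and its later refinements.
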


KST is often used to reduce a multidimensional problem to a one-dimensional one. In fact, the compositional representation in KST  can be regarded as a special neural network with (complicated) activation functions depending on the target function, which makes KST useless in practical computation. To avoid this dependency, an activation function was designed in \cite{MAIOROV199981} to construct neural network representations with $\calO(d)$ neurons that can approximate functions in $C([-1,1]^d)$ within an arbitrary error.  Let us briefly summarize the main ideas in \cite{MAIOROV199981}: 1) Identify a dense and countable subset $\{u_k\}_{k=1}^\infty$ of $C([-1,1])$, e.g., polynomials with rational coefficients. 
2) Construct an activation function $\varrho$ to encode all $u_k(x)$ for $x\in[-1,1]$. In fact, for each $k$, $u_k|_{[-1,1]}$ is ``stored'' in  $\varrho$ on $[4k,4k+2]$, and the values of $\varrho$ on $[4k+2,4k+4]$ are properly assigned to make $\varrho$ a smooth and monotonically increasing function.
That is, let $\varrho(x+4k+1)=a_k+b_k x+c_k u_k(x)$ for any $x\in [-1,1]$ with carefully chosen constants $a_k$, $b_k$, and $c_k\neq 0$ such that $\varrho(x)$ can be a sigmoidal function. 
3)  For any $g\in C([-1,1])$, there exists a one-hidden-layer $\varrho$-activated network with width $3$ approximating $g$ within an arbitrary error $\delta>0$, i.e., there exists $k$ such that $g(x)\mathop{\approx}\limits^{\delta} u_k(x)= \tfrac{\varrho(x+4k+1)-a_k-b_k x}{c_k}$ for any $x\in [-1,1]$.  
4) Replace the inner and outer functions in KST with these one-hidden-layer networks to achieve a two-hidden-layer $\varrho$-activated network with width $\calO(d)$ to approximate  $f\in C([-1,1]^d)$ within an arbitrary error $\varepsilon>0$.
 As we can see, the key point of the construction in \cite{MAIOROV199981} is to encode a dense and countable subset of the target function space in  an  activation function.

 Note that both \cite{MAIOROV199981} and this paper use KST  to reduce dimension.  However, the activation function of \cite{MAIOROV199981} is complicated without any closed form and there is no efficient numerical algorithm to evaluate it. After encoding a dense subset of continuous function into a single but complicated activation function, one only needs to construct affine linear transformations to select appropriate functions of this dense subset from this complicated activation function to construct approximation. Hence, such a complicated activation function simplifies the proof of the denseness, since the denseness is encoded in the activation function. As a contrast, we design a simple activation function with efficient numerical implementation (see Figure~\ref{fig:sigma} for an illustration) achieving the universal approximation property with fixed-size networks, 
because simple and implementable activation functions are a basic requirement for a neural network to be used in applications. However, the proof of the denseness of a neural network generated by such a simple activation function becomes difficult.  A sophisticated analysis will be developed in the rest of this paper to overcome the difficulties.


\section{Experimentation}
\label{sec:experimentation}


In this section, we will conduct two simple experiments as a proof of concept to explore the numerical performances of the EUAF activation function.
Let us first discuss the numerical implementation of EUAF in \href{https://pytorch.org/}{PyTorch}.
To enable the automatic differentiation feature for EUAF, we need to implement EUAF based on PyTorch built-in functions.
With the following four built-in functions
$\tn{abs}(x)=|x|$, $\tn{floor}(x)=\lfloor  x\rfloor,$
\begin{equation*}
\tn{softsign}(x)=\frac{x}{|x|+1},\quad \tn{and}\quad
    \tn{sign}(x)=\begin{cases}
    1 &\tn{if} \ x>0,\\
    0 &\tn{if} \ x=0,\\
    -1 &\tn{if} \ x<0,\\
    \end{cases}
\end{equation*}
we can represent EUAF as
\begin{equation*}
    \begin{split}
            \tn{EUAF}(x)
            &=\begin{cases}
    \tn{softsign}(x)& \tn{if}\ x<0,\\
    \big|x-2\lfloor \tfrac{x+1}{2}\rfloor\big|& \tn{if}\ x\ge 0
    \end{cases}\\
    &=   \tn{softsign}(x)\cdot \frac{1-\tn{sign}(x)}{2} + 
    \Big|x-2\big\lfloor \frac{x+1}{2}\big\rfloor\Big|
    \cdot\frac{1+\tn{sign}(x)}{2}\\
     &=\tn{softsign}(x)\cdot \frac{1-\tn{sign}(x)}{2} +     \tn{abs}\Big(x-2\cdot\tn{floor}\big( \frac{x+1}{2}\big)\Big)
    \cdot\frac{1+\tn{sign}(x)}{2}.\\
    \end{split}
\end{equation*}
Thus, it is numerically cheap to compute EUAF and its subgradient. We believe the EUAF activation function can achieve good results in some real-world applications if proper optimization algorithms are developed for EUAF.
In this paper, we only 
conduct two simple experiments: a function approximation experiment in Section~\ref{sec:experimentation:func:approx} and a classification experiment in Section~\ref{sec:experimentation:classification}.

Next, let us briefly discuss when our EUAF activation function would outperform the practically used ones (e.g., ReLU, Sigmoid, and Softsign), which is based on full error analysis in Section~\ref{sec:error:analysis}.
In our discussion, we take the ReLU activation function as an example and suppose 
the optimization error is well-controlled.
Clearly, replacing ReLU by EUAF can reduce the approximation error, but would result in a large generalization error. Thus, we would expect that EUAF achieves better results than ReLU if the approximation error is larger than the generalization error. That means EUAF would outperform ReLU in the following two cases.
\begin{itemize}
    \item The approximation error is pretty large (e.g.,  the target function is sufficiently complicated).
    \item The generalization error is well-controlled (e.g., there are sufficiently many samples).
\end{itemize}
If a given problem does not belong to these two cases, one may consider replacing only a small number of ReLUs by EUAFs.
In the function approximation experiment in Section~\ref{sec:experimentation:func:approx}, we first choose a complicated target function and then generate sufficiently many samples to reduce the  generalization error. In the classification experiment in Section~\ref{sec:experimentation:classification}, we control the generalization error via three common methods: 
keeping network parameters small via L2 regularization, dropout \cite{DBLP:journals/corr/abs-1207-0580,JMLR:v15:srivastava14a}, and batch normalization \cite{10.5555/3045118.3045167}.

\subsection{Function Approximation}
\label{sec:experimentation:func:approx}

We will design fully connected neural network (FCNN) architectures activated by ReLU or EUAF to solve a function approximation problem. To better compare the approximation power of ReLU and EUAF activation functions, we choose a complicated (oscillatory) function $f$ as the target function, where $f$ is defined as
\begin{equation*}
   f(x_1,x_2)\coloneqq 0.6\sin(8x_1)+0.4\sin(16x_2)\quad \tn{ for any $(x_1,x_2)\in [0,1]^2$.}
\end{equation*}


To compare the numerical performances of ReLU and EUAF activation functions, we design two FCNN architectures with different activation functions.
Both of them have $4$ hidden layers and each hidden layer has $80$ neurons.  For simplicity, we denote them as FCNN1 and FCNN2. See illustrations of them in Figure~\ref{fig:FCNN:arc}. FCNN1 is a standard fully connected ReLU network and FCNN2 can be regarded as a variant of FCNN1 by replacing ReLU by EUAF.

\begin{figure}[htbp!]
	\centering
	\begin{subfigure}[b]{0.4\textwidth}
		\centering            \includegraphics[width=0.98728\textwidth]{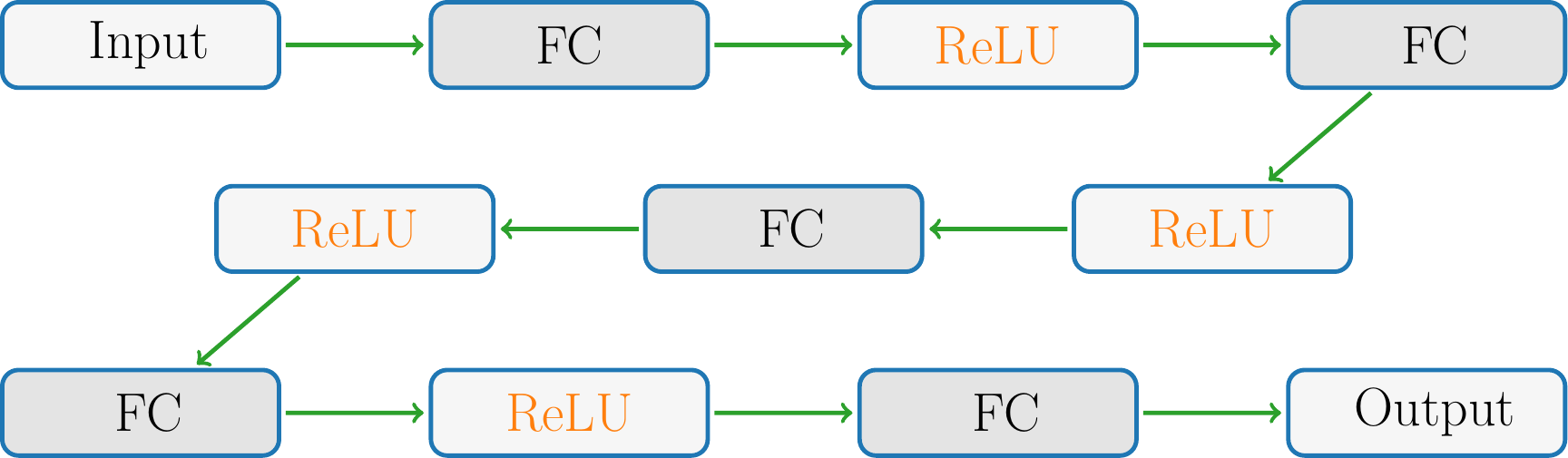}
		\subcaption{FCNN1.}
	\end{subfigure}
		\hspace{21pt}
	\begin{subfigure}[b]{0.4\textwidth}
		\centering           \includegraphics[width=0.98728\textwidth]{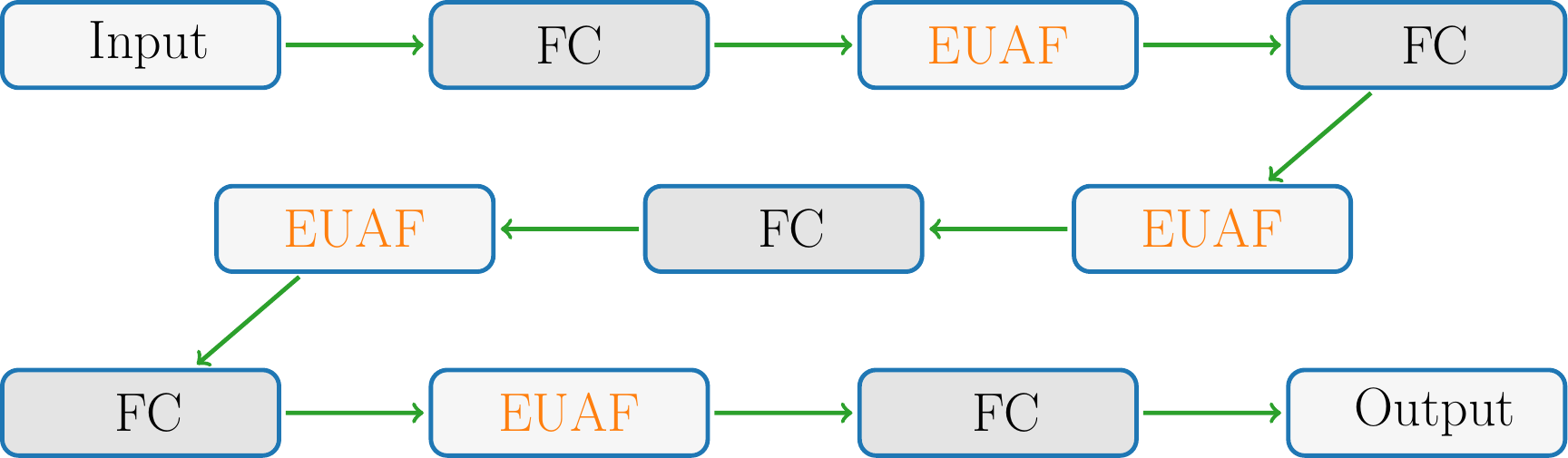}
		\subcaption{FCNN2.}
	\end{subfigure}
	\caption{Illustrations of FCNN1 and FCNN2. FC represents a fully connected layer.}
	\label{fig:FCNN:arc}
\end{figure}

Before presenting the numerical results, 
let us present the hyper-parameters for training FCNN1 and FCNN2. We randomly choose $10^6$ training samples and $10^5$ test samples in $[0,1]^2$.
The number of epochs and the batch size are set to $500$ and $256$, respectively.
We adopt 
RAdam \cite{Liu2020On}
as the
optimization method and the learning rate is $0.002\times0.9^{i-1}$ in epochs $5(i-1)+1$ to $5i$ for $i=1,2,\cdots,100$. 
Several loss functions are used to estimate the training and test losses, including the mean squared error (MSE), the mean absolute error (MAE), and the maximum (MAX) loss functions.
To illustrate MSE, MAE and MAX losses, we denote $\phi$ as the  network-generated function and $\bmx_1,\cdots,\bmx_m$ as the test samples ($m=10^5$ in our setting).
Then, the MSE loss is given by 
$\tfrac{1}{m}\sum_{i=1}^m \big(\phi(x_i)-f(x_i)\big)^2,$
the MAE loss is given by 
$	\tfrac{1}{m}\sum_{i=1}^m \big|\phi(x_i)-f(x_i)\big|,$
and the MAX loss is given by 
$\max\big\{ |\phi(x_i)-f(x_i)|: i=1,2,\cdots,m\big\}.$
The MSE loss is used in our training process.
In the settings above,  we repeat the experiment $12$ times and discard 2 top-performing and 2 bottom-performing trials by using the average of test losses (MSE) in the last 100 epochs as the performance criterion. For each epoch, we adopt the average of training (test) losses in the rest $8$ trials as the target training (test) loss.

Next, let us present the experiment results to compare the numerical performances of ReLU and EUAF activation functions. Training and test losses (MSE) over epochs  for FCNN1 and FCNN2 are summarized in Figure~\ref{fig:training:test:loss}. 



\begin{figure}[htbp!]
	\centering
 \includegraphics[width=0.631\textwidth]{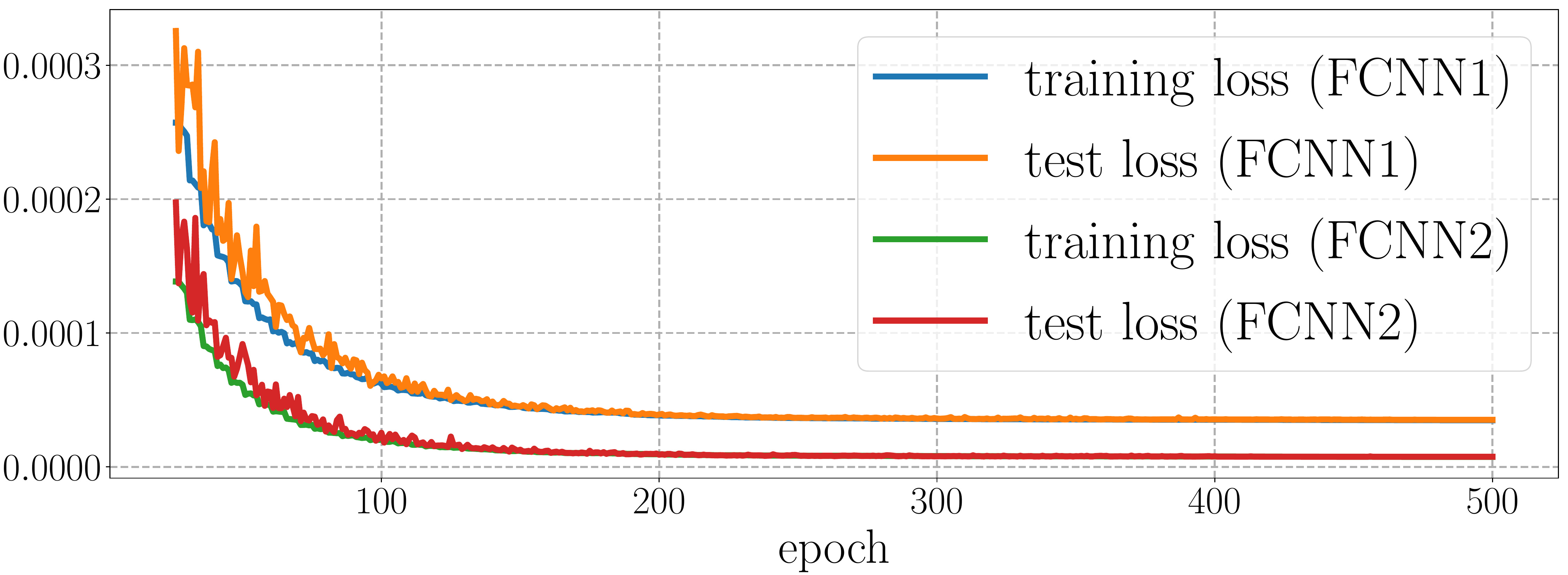}
	\caption{Training and test losses (MSE) in epochs 25-500 for FCNN1 and FCNN2. }
	\label{fig:training:test:loss}
\end{figure}

In Table~\ref{tab:loss:comparison}, we present a comparison
of FCNN1 and FCNN2 for the average of the test losses in the last 100 epochs  measured in several loss functions. As we can see from Figure~\ref{fig:training:test:loss} 
and Table~\ref{tab:loss:comparison}, FCNN2 performs better than FCNN1. That means replacing ReLU by EUAF would improve experiment results.

\begin{table}[htbp!]
	\centering  
	\resizebox{0.651\textwidth}{!}{ 
		\begin{tabular}{ccccccccc} 
			\toprule
			 &  \multirow{2}{*}{activation function}  &     \multicolumn{3}{c}{test loss} \\
			 \cmidrule{3-5}
			 & & MSE & MAE & MAX\\
			\midrule
			 \rowcolor{mygray}
			 FCNN1 & ReLU  &  $3.53\times 10^{-5}$ & $4.57\times 10^{-3}$ & $3.69\times 10^{-2}$\\	
			\midrule
			FCNN2 & EUAF  &  $7.56\times 10^{-6}$ & $2.13\times 10^{-3}$ &  $1.48\times 10^{-2}$ \\			
			\bottomrule
		\end{tabular} 
	}
\caption{Test loss comparison.}
	\label{tab:loss:comparison}
\end{table} 

\subsection{Classification}
\label{sec:experimentation:classification}

The goal of a classification problem with $J\in\N^+$ classes is to identify a classification function $f$ defined by 
\begin{equation*}
    f(\bmx)=j \quad \tn{for any $\bmx\in E_j$ and $j=0,1,\cdots,J-1$,}
\end{equation*}
where $E_0,E_1,\cdots,E_{J-1}$ are pairwise disjoint bounded closed subsets of $\R^d$ and all samples with a label $j$ are contained in $E_j$  for each $j$. Such a classification function $f$ can be continuously extended to $\R^d$, which means a classification problem can also be regarded as a continuous function approximation problem.
We take the case $J=2$ as an example to illustrate the extension. The multiclass case is similar. 
By defining
\begin{equation*}
	\tn{dist}(\bmx,E_i)\coloneqq \inf_{\bmy\in E_i}\|\bmx-\bmy\|_2 \quad \tn{for any $\bmx\in \R^d$ and $i=0,1$,}
\end{equation*} 
we have $\tn{dist}(\bmx,E_0)+\tn{dist}(\bmx,E_1)>0$ for any $\bmx\in \R^d$. Thus, we can define 
\begin{equation*}
	\widetilde{f}(\bmx)\coloneqq \frac{\tn{dist}(\bmx,E_0)}{\tn{dist}(\bmx,E_0)+\tn{dist}(\bmx,E_1)} \quad \tn{for any $\bmx\in \R^d$}.
\end{equation*}
It is easy to verify that $\widetilde{f}$ is continuous on $\R^d$ and 
\begin{equation*}
    \widetilde{f}(\bmx)=\begin{cases}
    0 &\tn{if}\  \bmx\in E_0,\\
    1 &\tn{if}\  \bmx\in E_1
    \end{cases}=f(\bmx)\quad \tn{ for any $\bmx\in E_0\bigcup E_1$.}
\end{equation*}
That means $\widetilde{f}$ is a continuous extension of $f$.
That means we can apply our theory to classification problems.

We will design convolutional neural network (CNN) architectures activated by ReLU or EUAF to solve a classification problem corresponding to a standard benchmark data set Fashion-MNIST \cite{2017arXiv170807747X}.
This data set 
consists of a training set of 60000 samples and a test set of 10000 samples. Each sample is a $28\times28$ grayscale image, associated with a label from 10 classes. 
To compare the numerical performances of ReLU and EUAF activation functions, we design two small CNN architectures with different activation functions.
Both of them have two convolutional layers and two fully connected layers. For simplicity, we denote them as CNN1 and CNN2. 
See illustrations of them in Figure~\ref{fig:CNN:arc}.
We present more details of CNN1 and CNN2 in Table~\ref{tab:CNN:arc}. 

\begin{figure}[htbp!]
	\centering
	\begin{subfigure}[b]{0.48\textwidth}
		\centering            \includegraphics[width=0.97268\textwidth]{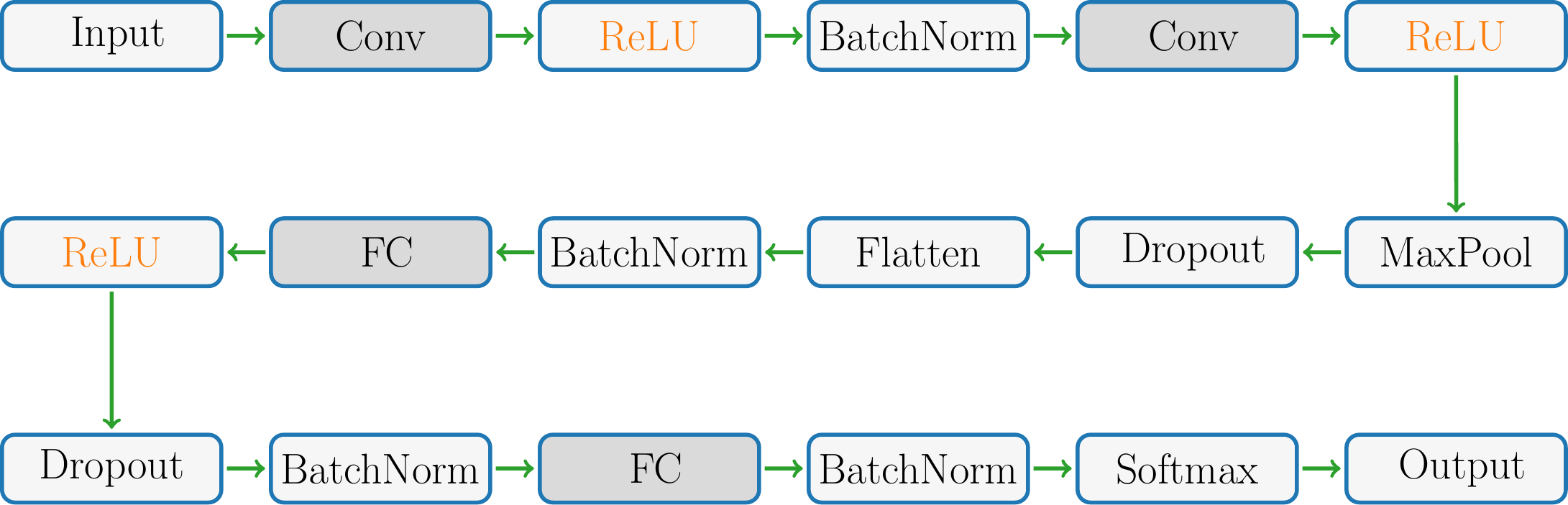}
		\vspace*{3pt}
		\subcaption{CNN1.}
	\end{subfigure}
		\hspace{3pt}
	\begin{subfigure}[b]{0.48\textwidth}
		\centering           \includegraphics[width=0.97268\textwidth]{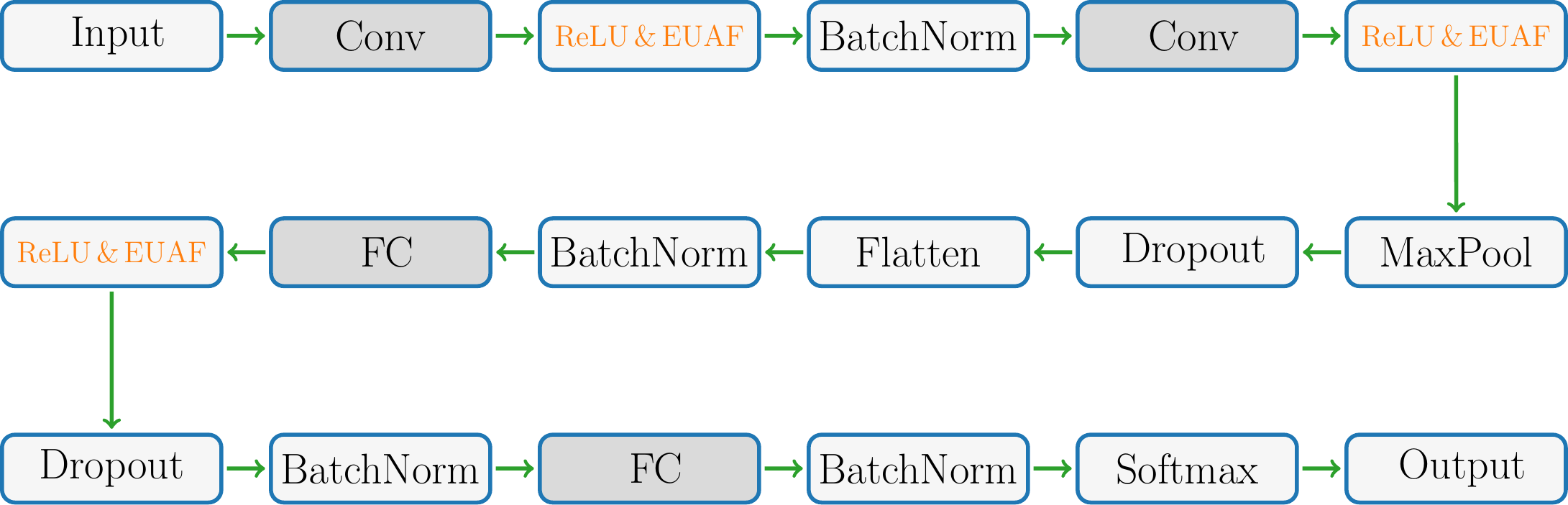}
		\vspace*{3pt}
		\subcaption{CNN2.}
	\end{subfigure}
	\caption{Illustrations of CNN1 and CNN2. Conv and FC represent convolutional  and fully connected layers, respectively.}
	\label{fig:CNN:arc}
\end{figure}

\begin{table}[H]
	\centering  
	\resizebox{0.988\textwidth}{!}{ 
		\begin{tabular}{ccccccccccc} 
			\toprule
			\multirow{2}{*}{layers}   &     
		\multicolumn{2}{c}{activation function} 
			& \multirow{2}{*}{output size of each layer} & \multirow{2}{*}{dropout} & \multirow{2}{*}{batch normalization }  \\
			\cmidrule{2-3}
			& CNN1 & CNN2 \\
			
			\midrule
			input $\in \R^{28\times 28}$  &
			\phantom{$\tn{ReLU},\hspace{6pt}  22\times(26\times26)$}& & $28\times 28$   \\
			
			\midrule
			
			Conv-1: $1\times (3\times 3), \, 24$ & ReLU  & $\begin{array}{cc}
			   \tn{EUAF},\hspace{6pt}  1\times (26\times26) \\
			    \tn{ReLU},\hspace{6pt}  23\times(26\times26)
			\end{array}$ & $24\times(26\times 26)$ &  & yes   \\
			
			\midrule
			
			Conv-2: $24\times (3\times 3), \, 24$ & ReLU & $\begin{array}{cc}
			   \tn{EUAF},\hspace{6pt}  1\times(24\times24) \\
			    \tn{ReLU},\hspace{6pt}  23\times(24\times24)
			\end{array}$ & $3456$ (MaxPool \& Flatten) &  $0.25$ & yes  \\
			
			\midrule
			
			FC-1: $3456, \, 48$ & ReLU & $\begin{array}{cc}
			   \tn{EUAF},\hspace{6pt}  1 \\
			    \tn{ReLU},\hspace{6pt}  47
			\end{array}$& $48$ & $0.5$  & yes   \\
				\midrule
			FC-2: $48, \, 10$ &  &    & $10$ (Softmax) &  & yes  \\
			
			\midrule
			
			output $\in \R^{10}$ &   \\
			
			\bottomrule
		\end{tabular} 
	}
	\caption{Details of CNN1 and CNN2.} 
	\label{tab:CNN:arc}
\end{table} 

CNN1 is activated by ReLU, 
while CNN2 is activated by ReLU and EUAF. In CNN2, only one channel (neuron) of a convolutional (fully connected) hidden layer is activated by EUAF.
CNN2 can be regarded as a variant of CNN1 by replacing a small number of ReLUs by EUAFs. 
This follows a natural question: Why do we not make all (or most) neurons (channels) of CNN2 activated by EUAF?
We use only a few EUAFs in CNN2 for two reasons listed below.
\begin{itemize}
    \item Since the number of available training samples is limited, using too many EUAF activation functions would lead to a large generalization error.
    \item The key difference of EUAF to the practical used activation functions (e.g.,  ReLU, Sigmoid, and Softsign) is the periodic part on $[0,\infty)$. As we shall see later in the proof of our main theorem, only a small number of neurons in the constructed network require the periodic property. Thus, we would expect that neural networks activated by the practical used activation functions and a few EUAFs are super expressive.
\end{itemize}

Next, let us discuss why we choose relatively small network architectures. Since the Fashion-MNIST  classification problem is simple, the expressive power of a relatively large ReLU CNN architecture is enough. That means there is no need to introduce EUAF if the network architecture is relatively large. 
We believe EUAF would be useful for complicated classification problems.

We remark that we use CNNs to approximate an equivalent variant $\hatbmf$ of the original classification function $f$ mentioned previously, where $\hatbmf$ is given by \begin{equation*}
    \hatbmf(\bmx)=\bme_j \quad \tn{for any $\bmx\in E_j$ and $j=0,1,\cdots,J-1$,}
\end{equation*}
where $\{\bme_1,\bme_2,\cdots,\bme_J\}$ is the standard basis of $\R^J$, i.e., $\bme_j\in\R^J$ denotes the vector with a $1$ in the $j$-th coordinate and $0$'s elsewhere.

Before presenting the numerical results, 
let us present the hyper-parameters for training two CNN architectures above. 
We use the cross-entropy loss function to evaluate the loss. 
The number of epochs and the batch size are set to $500$ and $128$, respectively.
We adopt 
RAdam \cite{Liu2020On}
as the
optimization method. The weight decay of the optimizer is $0.0001$ and the learning rate is $0.002\times0.9^{i-1}$ in epochs $5(i-1)+1$ to $5i$ for $i=1,2,\cdots,100$. 
All training (test) samples in the Fashion-MNIST data set are standardized in our experiment, i.e., we rescale all training (test) samples to have a mean of $0$ and a standard deviation of $1$.
In the settings above,  we repeat the experiment $48$ times and discard $8$ top-performing and $8$ bottom-performing trials by using the average of test accuracy in the last 100 epochs as the performance criterion. For each epoch, we adopt the average of test accuracies  in the rest $32$ trials as the target test accuracy.

Let us present the experiment results to compare the numerical performances of  CNN1 and CNN2.
The test accuracy comparison of CNN1 and CNN2 is summarized in Table~\ref{tab:accuracy:comparison}. 

\begin{table}[H]
	\centering  
	\resizebox{0.95\textwidth}{!}{ 
		\begin{tabular}{ccccccccc} 
			\toprule
			 & activation function &  largest accuracy & average of largest 100 accuracies & average accuracy in last 100 epochs \\
			\midrule
		\rowcolor{mygray}	CNN1 & ReLU & 0.933066 & 0.932852 & 0.932698 \\			
			\midrule
			 CNN2 & ReLU and EUAF & 0.933922 & 0.933685 &  0.933508  \\
			\bottomrule
		\end{tabular} 
	}
	\caption{Test accuracy comparison.} 
	\label{tab:accuracy:comparison}
\end{table} 

For each of CNN1 and CNN2, we present
the largest test accuracy,  the average of largest $100$ test accuracies over epochs, and the average of test accuracies in the last 100 epochs. 
For an intuitive comparison, we also provide illustrations of the test accuracy over epochs for CNN1 and CNN2 in Figure~\ref{fig:accuracy:comparison}.
As we can see from Table~\ref{tab:accuracy:comparison} and Figure~\ref{fig:accuracy:comparison}, CNN2 performs better than CNN1. That means replacing a small number of ReLUs by EUAFs would improve the experiment results.

\begin{figure}[htbp!]
	\centering
	\begin{subfigure}[b]{0.48\textwidth}
		\centering            \includegraphics[width=0.9768\textwidth]{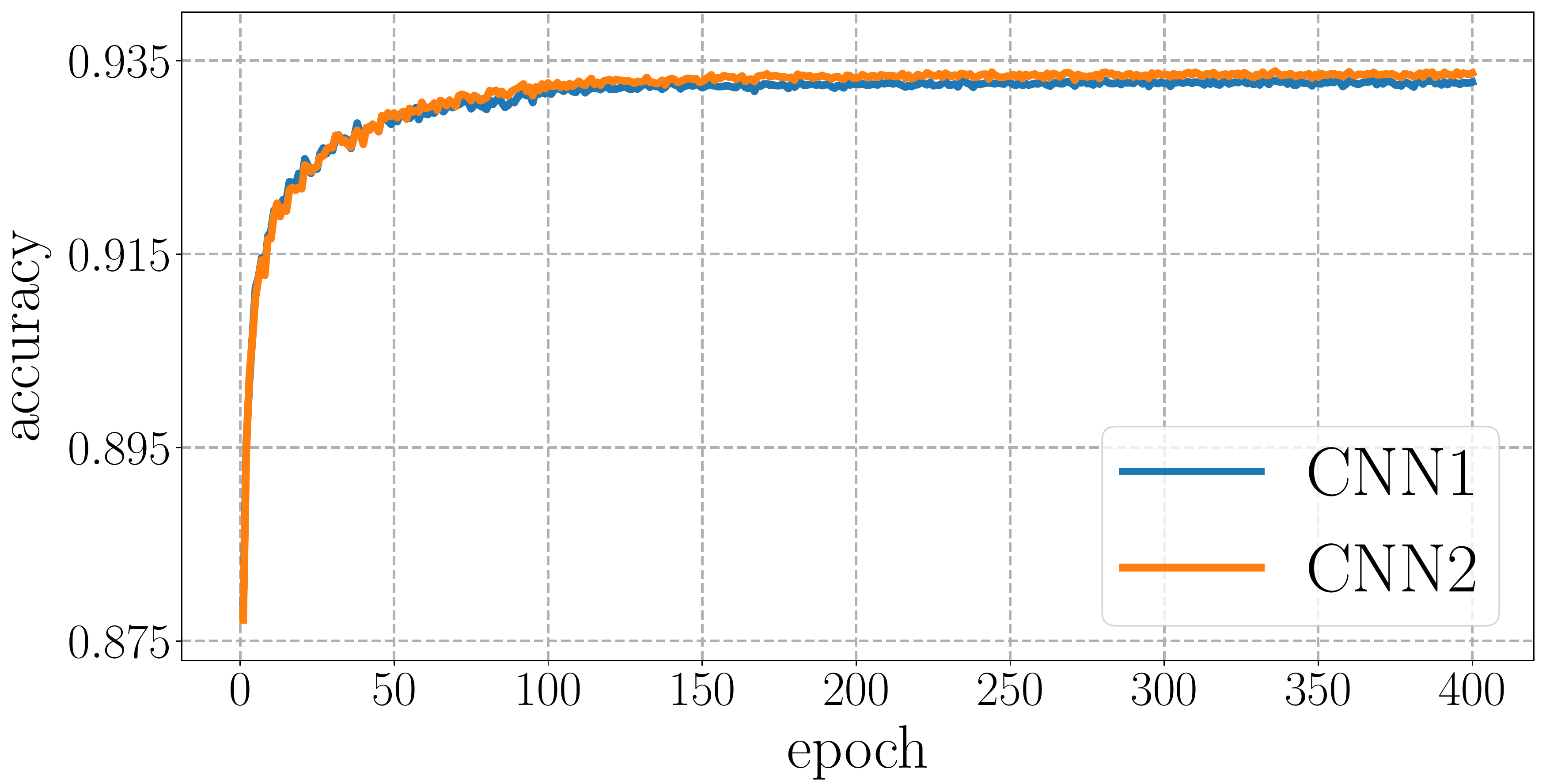}
		\subcaption{Epochs 1-400.}
	\end{subfigure}
		\hspace{3pt}
	\begin{subfigure}[b]{0.48\textwidth}
		\centering           \includegraphics[width=0.9768\textwidth]{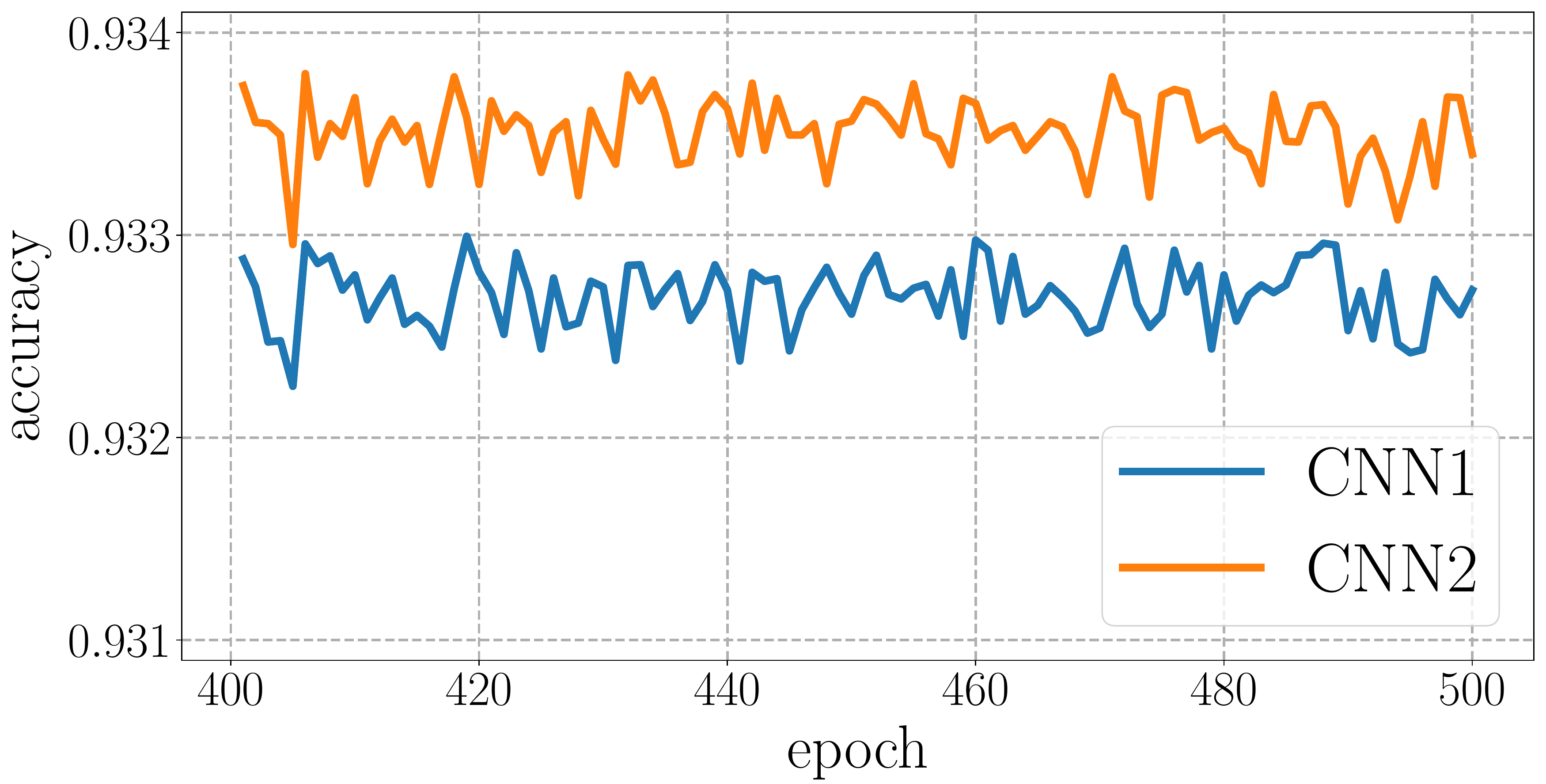}
		\subcaption{Epochs 401-500.}
	\end{subfigure}
	\caption{Test accuracy over epochs.}
	\label{fig:accuracy:comparison}
\end{figure}


\section{Other Examples of UAFs}
\label{sec:UAFs}

This section aims at designing new UAFs with additional properties such as smooth or sigmoidal functions. 
As discussed in the introduction and shown in the proof of our main theorem,  the construction of UAFs  mainly relies on three properties: high nonlinearity,  periodicity, and the capacity to reproduce step functions.  The EUAF $\sigma$ defined in Equation~\eqref{eq:def:sigma}  is a simple and typical example of UAFs satisfying these three properties. 
Indeed, having these properties plays an important role in our proof and is a necessary but not sufficient condition for designing a UAF. In other words, these properties are important, but cannot guarantee the successful construction of UAFs.

Here, we present another idea to design new UAFs, which mainly relies on the following observation: If a UAF $\varrho$ can be approximated by a fixed-size network activated by a new activation function $\tildevarrho$ within an arbitrary error on any bounded interval, then $\tildevarrho$ is also a UAF. Such an observation is a direct result of the lemma below.

\begin{lemma}\label{lem:net:varrho:to:tildevarrho}
    Let $\varrho,\tildevarrho:\R\to\R$ be two functions with $\varrho\in C(\R)$. For an arbitrary given  function  $f\in [a,b]^d\to \R$ and  any $\varepsilon>0$, suppose that the following two conditions hold:
    \begin{itemize}
        \item There exists a function $\phi_\varrho$ realized by a $\varrho$-activated network with width $N$ and depth $L$ such that 
        \begin{equation*}
             |\phi_\varrho(\bmx)-f(\bmx)|<\varepsilon/2 \quad \tn{for any $\bmx\in [a,b]^d$.}
        \end{equation*}
    
        \item For any $M>0$ and each $\delta\in (0,1)$, there exists a function $\varrho_\delta$ realized by a $\tildevarrho$-activated network with width $\tildeN$ and depth $\tildeL$ such that 
        \begin{equation*}
            \varrho_\delta(t)\rightrightarrows \varrho(t)\quad \tn{as}\quad \delta\to 0^+\quad \tn{for any $t\in [-M,M]$,}
        \end{equation*}
        where $\rightrightarrows$ denotes the uniform convergence.
    \end{itemize}
    Then, there exists a function $\phi=\phi_{\tildevarrho}$ generated by a $\tildevarrho$-activated network with width $N\cdot\tildeN$ and depth $L\cdot\tildeL$ such that
    \begin{equation*}
        |\phi(\bmx)-f(\bmx)|<\varepsilon\quad \tn{for any $\bmx\in [a,b]^d$.}
    \end{equation*}
\end{lemma}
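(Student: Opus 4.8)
The plan is to realize $\phi_{\tildevarrho}$ by substituting, for each neuron of the $\varrho$-activated network $\phi_\varrho$, a small $\tildevarrho$-activated sub-network that approximates the scalar map $t\mapsto\varrho(t)$ uniformly on a large enough interval, and then to control the total error by a continuity (stability) argument. Since a $\varrho$-activated network is just the composition $\bmcalL_L\circ\varrho\circ\bmcalL_{L-1}\circ\cdots\circ\varrho\circ\bmcalL_0$ (with $\varrho$ applied entrywise), replacing every occurrence of $\varrho$ by $\varrho_\delta$ yields a new network; because $\varrho_\delta$ has width $\tildeN$ and depth $\tildeL$ and the affine maps $\bmcalL_i$ can be absorbed, the resulting $\tildevarrho$-activated network has width at most $N\cdot\tildeN$ and depth at most $L\cdot\tildeL$, which gives the claimed architecture bound. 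The remaining work is purely an estimate: show that for $\delta$ small enough, the perturbed network stays within $\varepsilon/2$ of $\phi_\varrho$ on $[a,b]^d$, so that the triangle inequality with the first hypothesis delivers $|\phi-f|<\varepsilon$.

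\textbf{Key steps in order.} First I would fix $f$ and $\varepsilon$, invoke the first hypothesis to get $\phi_\varrho$ with $|\phi_\varrho-f|<\varepsilon/2$ on $[a,b]^d$, and write $\phi_\varrho$ explicitly as an alternating composition of affine maps and entrywise $\varrho$. Second, since $[a,b]^d$ is compact and each intermediate map $\bmx\mapsto\bmh_i$ is continuous, I would choose $M>0$ large enough that all the pre-activation values $h_{i,j}$ occurring in $\phi_\varrho$ over $\bmx\in[a,b]^d$ lie in $[-M,M]$; this is where I use $\varrho\in C(\R)$, which guarantees these intermediate quantities are bounded and that $\varrho$ is uniformly continuous on $[-M',M']$ for any $M'$. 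Third, I would apply the second hypothesis with this $M$ (enlarged by a fixed safety margin to accommodate the perturbed pre-activations) to obtain, for each $\delta$, a sub-network $\varrho_\delta$ with $\sup_{[-M,M]}|\varrho_\delta-\varrho|\to 0$. Fourth, define $\phi=\phi_{\tildevarrho}$ as the network obtained by replacing each entrywise $\varrho$ in $\phi_\varrho$ with the entrywise action of $\varrho_\delta$, and estimate $\|\phi-\phi_\varrho\|_{\infty,[a,b]^d}$ layer by layer: a standard induction on the layer index shows that the propagated error is bounded by a constant (depending on $N$, $L$, the operator norms of the $\bmA_i$, and the modulus of continuity of $\varrho$ on a slightly enlarged interval) times $\sup_{[-M,M]}|\varrho_\delta-\varrho|$, hence can be made $<\varepsilon/2$ by taking $\delta$ small. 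Finally, combine the two $\varepsilon/2$ bounds and verify the width/depth count to conclude.

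\textbf{Main obstacle.} The one genuinely delicate point is the layer-by-layer error propagation, because a small $\delta$ only controls $\varrho_\delta-\varrho$ on a \emph{fixed} interval $[-M,M]$, whereas the perturbed pre-activations $\widehat{h}_{i,j}$ need not stay in $[-M,M]$ once errors accumulate through the affine maps. The fix is to do the induction carefully: first enlarge $M$ to some $M^\ast$ so that not only the exact pre-activations but also all perturbed ones (for $\delta$ below a threshold) remain in $[-M^\ast,M^\ast]$ — this is self-consistent because the total perturbation is itself bounded by the inductive estimate — and then use the uniform continuity of $\varrho$ on $[-M^\ast,M^\ast]$ together with $\sup_{[-M^\ast,M^\ast]}|\varrho_\delta-\varrho|\to0$ to close the loop. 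Once this bootstrapping is set up correctly, the rest is routine triangle-inequality bookkeeping.
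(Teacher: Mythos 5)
Your proposal is correct and follows essentially the same route as the paper: replace each entrywise $\varrho$ by $\varrho_\delta$ (giving the width $N\cdot\tildeN$, depth $L\cdot\tildeL$ count), bound all exact pre-activations by some $M$ via compactness and continuity of $\varrho$, note the perturbed pre-activations stay in $[-M,M]$ for small $\delta$, and run an induction over layers combining uniform convergence of $\varrho_\delta$ with uniform continuity of $\varrho$ on $[-M,M]$, finishing by the triangle inequality. The only cosmetic difference is that the paper argues qualitatively (uniform convergence $\bmh_{\ell,\delta}\rightrightarrows\bmh_\ell$) rather than via your ``constant times $\sup|\varrho_\delta-\varrho|$'' bound, which is literally valid only for Lipschitz $\varrho$; since you also invoke the modulus of continuity, your argument closes just as well.
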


The proof of Lemma~\ref{lem:net:varrho:to:tildevarrho} is placed in Section~\ref{sec:proof:lem:net:varrho:to:tildevarrho}.
Based on Lemma~\ref{lem:net:varrho:to:tildevarrho},
we will propose two UAFs with better mathematical properties. That is, the idea of designing a $C^s$ UAF is given in Section~\ref{sec:UAF:smooth}
and a sigmoidal UAF is constructed in detail in Section~\ref{sec:UAF:sig}.

\subsection{Smooth UAF}\label{sec:UAF:smooth}

The smoothness of a function is one of the most desired properties in mathematical modeling and computation. The EUAF $\sigma$ is continuous but not smooth. 
So we will show how to construct a $C^s$ UAF based on an existing one. The key point is the fact that the indefinite integral of a continuous function is continuously differentiable.

Suppose $\varrho$ is a continuous UAF. Define 
\begin{equation*}
    \tildevarrho(x)\coloneqq\int_{0}^{x} \varrho(t)\tn{d}t\quad \tn{ for any $x\in\R$.} 
\end{equation*}
For any $M>0$,
it holds that
\begin{equation*}
    \frac{\tildevarrho(x+\delta)-\tildevarrho(x)}{\delta}
    =\frac{1}{\delta}\int_{x}^{x+\delta} \varrho(t)\tn{d}t
    \rightrightarrows
    \varrho(x) \quad \tn{as} \quad \delta\to 0^+\quad \tn{for any $x\in[-M,M]$}.
\end{equation*}
This means $\varrho$ can be approximated by a one-hidden-layer $\tildevarrho$-activated network with width $2$ arbitrarily well on any bounded interval.
It follows that
$\tildevarrho$ is also a UAF. By repeated applications of the above idea, one could easily construct a $C^s$ UAF.

In particular, set $\varrho_0=\sigma$  and define $\varrho_1,\varrho_2,\cdots,\varrho_s$  by induction as follows.
\begin{equation}\label{eq:varrho:s}
    \varrho_{i+1}(x)\coloneqq \int_{0}^{x} \varrho_i(t)\tn{d} t\quad \tn{for any $x\in\R$ and $i\in \{0,1,\cdots,s-1\}$.}
\end{equation}
Then $\varrho_s$ is a $C^s$ UAF as shown in the following theorem.
    
\begin{theorem}\label{thm:main:smooth}
    Let $\varrho_s\in C^s(\R)$ be the function defined in Equation~\eqref{eq:varrho:s} for any $s\in\N^+$. Then,
    for any $f\in C([a,b]^d)$ and any $\varepsilon>0$, there exists a function $\phi$ generated by a $\varrho_s$-activated network with width $72sd(2d+1)$ and depth $11$ such that
    \begin{equation*}
        |\phi(\bmx)-f(\bmx)|<\varepsilon\quad \tn{for any $\bmx\in [a,b]^d$.}
    \end{equation*}
\end{theorem}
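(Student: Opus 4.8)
The plan is to obtain $\varrho_s$-networks directly from $\sigma$-networks by a \emph{single} application of Lemma~\ref{lem:net:varrho:to:tildevarrho}. The crucial observation is that $\sigma=\varrho_0$ is precisely the $s$-th derivative of $\varrho_s$: applying the fundamental theorem of calculus to Equation~\eqref{eq:varrho:s} gives $\varrho_{i+1}'=\varrho_i$ for $i=0,1,\cdots,s-1$, hence $\varrho_s^{(s)}=\varrho_0=\sigma$. An $s$-th derivative is the limit of $(s+1)$-point finite differences, and a finite difference of $\varrho_s$ is just a one-hidden-layer $\varrho_s$-activated network of width $s+1$. So I would invoke Lemma~\ref{lem:net:varrho:to:tildevarrho} with $\varrho=\sigma$ (which lies in $C(\R)$) and $\tildevarrho=\varrho_s$, taking the approximating $\tildevarrho$-network to be exactly such a finite difference; combined with Theorem~\ref{thm:main}, this transfers the universal approximation property from $\sigma$ to $\varrho_s$ at the cost of only a factor-$(s+1)$ increase in width and no increase in depth.

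Carrying this out: first I would fix $M>0$ and, for each $\delta\in(0,1)$, set
\begin{equation*}
    \varrho_\delta(x)\coloneqq \frac{1}{\delta^s}\sum_{k=0}^{s}(-1)^{s-k}\binom{s}{k}\,\varrho_s(x+k\delta)\quad\tn{for any $x\in\R$,}
\end{equation*}
which is realized by a $\varrho_s$-activated network of width $\tildeN=s+1$ and depth $\tildeL=1$ (the $k$-th hidden neuron outputs $\varrho_s(x+k\delta)$ and the final affine map forms the displayed combination). The key estimate is $\varrho_\delta\rightrightarrows\sigma$ on $[-M,M]$ as $\delta\to0^+$. For this I would use the integral form of the iterated forward difference,
\begin{equation*}
    \sum_{k=0}^{s}(-1)^{s-k}\binom{s}{k}\,\varrho_s(x+k\delta)=\int_{[0,\delta]^s}\varrho_s^{(s)}\big(x+t_1+\cdots+t_s\big)\,\tn{d}t_1\cdots\tn{d}t_s,
\end{equation*}
which follows by induction on $s$ from the single-step identity $g(x+\delta)-g(x)=\int_0^\delta g'(x+t)\,\tn{d}t$. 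Dividing by $\delta^s$ exhibits $\varrho_\delta(x)$ as an average of $\varrho_s^{(s)}=\sigma$ over points in $[x,x+s\delta]$, so $|\varrho_\delta(x)-\sigma(x)|\le\omega_\sigma(s\delta)$ for all $x\in[-M,M]$, where $\omega_\sigma$ is the modulus of continuity of $\sigma$ on $[-M,M+s]$; uniform continuity of $\sigma$ on this compact set gives $\omega_\sigma(s\delta)\to0$ as $\delta\to0^+$, hence the claimed uniform convergence.

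The conclusion then follows at once. Given $f\in C([a,b]^d)$ and $\varepsilon>0$, Theorem~\ref{thm:main} supplies a $\sigma$-activated network of width $N=36d(2d+1)$ and depth $L=11$ realizing some $\phi_\sigma$ with $|\phi_\sigma(\bmx)-f(\bmx)|<\varepsilon/2$ on $[a,b]^d$. Feeding $N,L$ and $\tildeN=s+1,\tildeL=1$ into Lemma~\ref{lem:net:varrho:to:tildevarrho} produces a $\varrho_s$-activated network of width $N\cdot\tildeN=36d(2d+1)(s+1)$ and depth $L\cdot\tildeL=11$ realizing some $\phi$ with $|\phi(\bmx)-f(\bmx)|<\varepsilon$ on $[a,b]^d$; since $s+1\le 2s$ for every $s\in\N^+$ and ``width $W$, depth $L$'' means \emph{at most} $W$ neurons in each hidden layer and \emph{at most} $L$ hidden layers, this network has width $\le 72sd(2d+1)$ and depth $\le 11$, which is what is asserted.

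The hardest part is the uniform-convergence estimate for $\varrho_\delta$, and even that is routine once the integral representation of iterated differences is used. The real point to get right is conceptual: the one-line argument preceding Equation~\eqref{eq:varrho:s} already shows each $\varrho_{i+1}$ is a UAF, but iterating it $s$ times multiplies the width by $2$ at each step, yielding only the exponential bound $2^s\cdot 36d(2d+1)$. Routing instead through the single order-$s$ finite difference $\varrho_s^{(s)}\approx\sigma$ is precisely what keeps the width linear in $s$, and this is the step I would emphasize.
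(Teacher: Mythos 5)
Your proposal is correct and follows essentially the same route as the paper: approximate $\sigma=\varrho_s^{(s)}$ uniformly on compact sets by a one-hidden-layer $\varrho_s$-activated network built from difference quotients, and then transfer the $\sigma$-network of Theorem~\ref{thm:main} through a single application of Lemma~\ref{lem:net:varrho:to:tildevarrho}. The only (harmless) difference is that the paper iterates first-order difference quotients by induction to obtain a width-$2s$ one-hidden-layer approximant of $\sigma$, whereas you use the order-$s$ finite difference directly, giving width $s+1$; both give width at most $72sd(2d+1)$ and depth $11$.
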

\begin{proof}
    For any $i\in \{0,1,\cdots,s-1\}$ and any $M>0$,
    it is easy to verify that
    \begin{equation*}
        \frac{\varrho_{i+1}(x+\delta)-\varrho_{i+1}(x)}{\delta}
        =\frac{1}{\delta}\int_{x}^{x+\delta} \varrho_i(t)\tn{d}t
        \rightrightarrows
        \varrho_i(x) \quad \tn{as} \quad \delta\to 0^+\quad \tn{for any $x\in [-M,M]$}.
    \end{equation*}
    This means $\varrho_{i}$ can be approximated by a one-hidden-layer $\varrho_{i+1}$-activated network with width $2$ arbitrarily well on any bounded interval. By induction, one could easily prove that
    $\varrho_{0}=\sigma$ can be approximated by a one-hidden-layer $\varrho_s$-activated network with width $2s$ arbitrarily well on any bounded interval. That is, for each $\delta\in (0,1)$, there exists a function $\sigma_{s,\delta}$ realized by a $\varrho_s$-activated network with width $2s$ and depth $1$ such that 
    \begin{equation*}
        \sigma_{s,\delta}(t)\rightrightarrows \sigma(t)\quad \tn{as}\quad \delta\to 0^+\quad \tn{for any $t\in [-M,M]$.}
    \end{equation*}
    By Theorem~\ref{thm:main}, there exists a function $\phi_\sigma$ generated by a $\sigma$-activated network with width $36d(2d+1)$ and depth $11$ such that 
    \begin{equation*}
        |\phi_\sigma(\bmx)-f(\bmx)|<\varepsilon/2 \quad \tn{for any $\bmx\in [a,b]^d$.}
    \end{equation*}
        Then, by Lemma~\ref{lem:net:varrho:to:tildevarrho}, there exists another function $\phi=\phi_{\varrho_s}$ realized by a $\varrho_s$-activated network with width $2s\times 36d(2d+1)=72sd(2d+1)$ and depth $1\times 11=11$ such that 
    \begin{equation*}
        |\phi(\bmx)-f(\bmx)|<\varepsilon \quad \tn{for any $\bmx\in [a,b]^d$.}
    \end{equation*}
    So we finish the proof.
\end{proof}

\subsection{Sigmoidal UAF}\label{sec:UAF:sig}

Many activation functions used in real-world applications are sigmoidal functions. Generally,  we say a function $g:\R\to\R$ is sigmoidal (or sigmoid, e.g., see \cite{10.1007/3-540-59497-3_175}) if it satisfies the following conditions.
\begin{itemize}
    \item Bounded: $\lim_{x\to \infty} g(x)=1$ and $\lim_{x\to -\infty} g(x)=-1$ (or $0$). 
    \item Differentiable: $g'(x)$ exists and continuous for all $x\in\R$.
    \item Increasing: $g'(x)$ is non-negative for all $x\in\R$.
\end{itemize}

Our goal is to construct a sigmoidal UAF.  To this end,
we need to design a new function $\tildesigma$ based on $\sigma$ such that $\sigma$ can be reproduced/approximated by a $\tildesigma$-activated network with a fixed size. Making $\tildesigma$ bounded and increasing is not difficult. 
The key is to make $\tildesigma$ continuously differentiable, which can be implemented by the fact that the indefinite integral of a continuous function is continuously differentiable. 
To be exact,
we can define $\tildesigma$ as follows. 
\begin{itemize}
    \item For $x\in (-\infty,0]$, define $\tildesigma(x)\coloneqq\sigma(x)=\tfrac{x}{-x+1}$.
    \item For $x\in (0,\infty)$, define
    \begin{equation*}
        \tildesigma(x)\coloneqq\int_{0}^{x} \frac{c\sigma(t)+1}{(2t+1)^2}\tn{d}t,\quad \tn{where}\quad c=\frac{1}{2\int_{0}^{\infty} \frac{\sigma(t)}{(2t+1)^2}\tn{d}t}\approx 2.554.
    \end{equation*}
\end{itemize}
We remark that there are many possible choices for the integrand in the above definition of $\tildesigma(x)$ for $x\in (0,\infty)$. Here, we just give a simple example.
See an illustration of $\tildesigma$ in Figure~\ref{fig:tildesigma}.

\begin{figure}[htbp!]        
	\centering          
	\includegraphics[width=0.6418\textwidth]{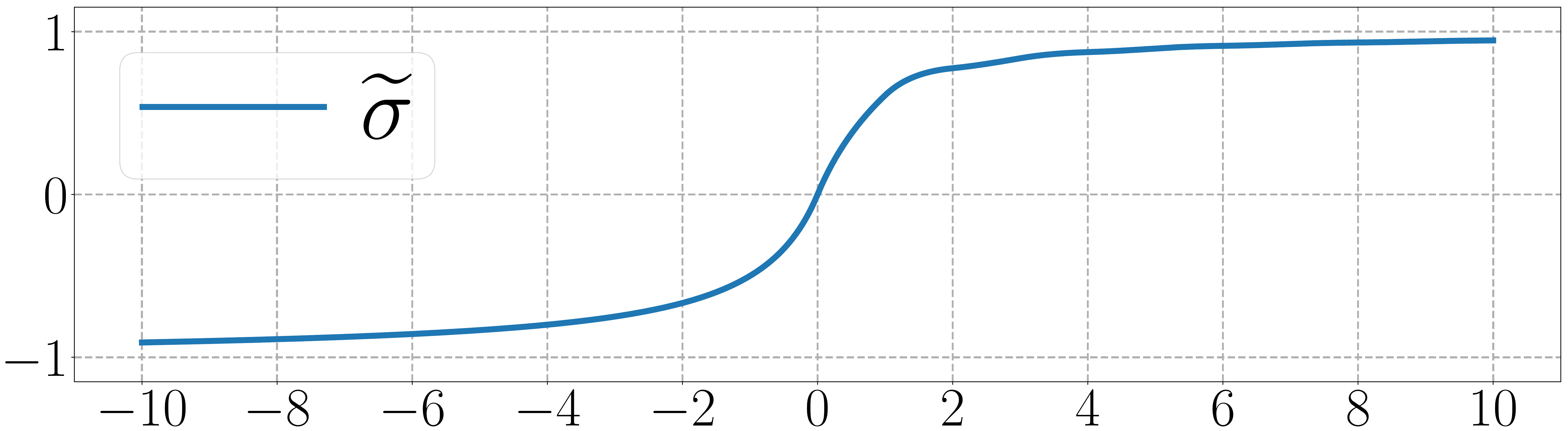}
	\caption{An illustration of $\tildesigma$ on $[-10,10]$.}
	\label{fig:tildesigma}
\end{figure}

Then $\tildesigma$ is a sigmoidal function as verified below.
\begin{itemize}
    \item Clearly, $\lim_{x\to-\infty}\tildesigma(x)=\lim_{x\to-\infty}\tfrac{x}{-x+1}=-1$. Moreover, 
    \begin{equation*}
        \lim_{x\to\infty} \tildesigma(x)
        = \int_{0}^{\infty} \frac{c\sigma(t)+1}{(2t+1)^2}\tn{d}t
        =\frac{1}{2}+\int_{0}^{\infty} \frac{1}{(2t+1)^2}\tn{d}t=1.
    \end{equation*}
    
    \item Obviously, $\tildesigma$ is continuously differentiable on $(-\infty,0)$ and $(0,\infty)$. 
    Meanwhile, we have $\tildesigma'(0)=1$ and 
        $\lim_{x\to 0}\tildesigma'(x)=1$.
    Therefore, we have $\tildesigma\in C^1(\R)$ as desired.
    
    \item For $x\in (-\infty,0)$, $\tildesigma'(x)=\tfrac{1}{(-x+1)^2}> 0$.  For $x=0$, $\tildesigma'(x)=1>0$. For $x\in (0,\infty)$, $\tildesigma'(x)=\tfrac{c\sigma(x)+1}{(2x+1)^2}>0$. Therefore, $\tildesigma'(x)>0$ for all $x\in \R$.
\end{itemize}
 
Based on Theorem~\ref{thm:main} corresponding to $\sigma$,
we establish a similar theorem for $\tildesigma$, Theorem~\ref{thm:main:tildesigma} below, showing that fixed-size  $\tildesigma$-activated networks can also approximate continuous functions within an arbitrary error on a hypercube. 

\begin{theorem}\label{thm:main:tildesigma}
    For any $f\in C([a,b]^d)$ and any $\varepsilon>0$, there exists a function $\phi$ generated by a $\tildesigma$-activated network with width $1800d(2d+1)$ and depth $66$ such that
    \begin{equation*}
        |\phi(\bmx)-f(\bmx)|<\varepsilon\quad \tn{for any $\bmx\in [a,b]^d$.}
    \end{equation*}
\end{theorem}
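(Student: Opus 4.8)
The plan is to derive Theorem~\ref{thm:main:tildesigma} from Theorem~\ref{thm:main} through Lemma~\ref{lem:net:varrho:to:tildevarrho}, applied with $\varrho=\sigma$ and $\tildevarrho=\tildesigma$. Theorem~\ref{thm:main} already supplies a $\sigma$-activated network of width $36d(2d+1)$ and depth $11$ approximating any $f\in C([a,b]^d)$ within $\varepsilon/2$, so by Lemma~\ref{lem:net:varrho:to:tildevarrho} it suffices to prove the following one-dimensional fact: for every $M>0$ and each $\delta\in(0,1)$ there is a $\tildesigma$-activated network $\sigma_\delta$, of width at most $50$ and depth at most $6$ with these bounds independent of $M$, such that $\sigma_\delta(t)\rightrightarrows\sigma(t)$ on $[-M,M]$ as $\delta\to 0^+$. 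This gives width $50\cdot 36d(2d+1)=1800d(2d+1)$ and depth $6\cdot 11=66$. (The constants $50$ and $6$ are deliberately loose; a careful count gives much smaller values, matching the paper's remark that $\tildesigma$ is just a simple example.)

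The construction of $\sigma_\delta$ rests on the identity
\[
\sigma(x)=\tildesigma\big(\min(x,0)\big)+\sigma_1\big(\max(x,0)\big)\qquad\tn{for all }x\in\R,
\]
which holds because $\tildesigma$ agrees with $\sigma$ on $(-\infty,0]$, $\sigma$ agrees with $\sigma_1$ on $[0,\infty)$, and both terms vanish at the origin. So it suffices to build, out of $\tildesigma$ and with $M$-independent size, (i) approximations of $\max(x,0)$ and $\min(x,0)$ on $[-M,M]$ and (ii) an approximation of $\sigma_1$ on $[0,M]$, and then to compose them. For (ii) I exploit the way $\tildesigma$ stores $\sigma_1$ in its derivative: for $x\ge 0$ one has $\tildesigma'(x)=\frac{c\,\sigma_1(x)+1}{(2x+1)^2}$, hence $\sigma_1(x)=\frac{(2x+1)^2\tildesigma'(x)-1}{c}$. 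Since $\tildesigma\in C^1(\R)$, the difference quotient $\frac{\tildesigma(x+\delta)-\tildesigma(x)}{\delta}$ converges uniformly on compacta to $\tildesigma'(x)$, so the only nontrivial ingredient left is the product $(2x+1)^2\cdot\tildesigma'(x)$. This I obtain from a squaring gadget: $\tildesigma$ is $C^\infty$ near $x_0=\tfrac12$ with $\tildesigma''(\tfrac12)=-\tfrac12\neq 0$, so the second central difference $\frac{\tildesigma(\frac12+\delta v)-2\tildesigma(\frac12)+\tildesigma(\frac12-\delta v)}{\delta^2}$ tends uniformly to $\tildesigma''(\tfrac12)v^2$ on any compact set, and the polarization $uv=\tfrac14\big((u+v)^2-(u-v)^2\big)$ upgrades this to an approximate multiplication. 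All intermediate quantities stay in fixed compact sets once $\delta$ is small, so these pieces compose into $P_\delta\rightrightarrows\sigma_1$ on $[0,M]$ with fixed width and depth.

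Gadget (i) is the heart of the argument and the step I expect to be the main obstacle, since $\tildesigma$, being bounded and sigmoidal, must nevertheless reproduce the unbounded map $x\mapsto\max(x,0)$ on arbitrarily large intervals with a network of $M$-independent size. The key observation is that $\tildesigma'$ has a corner at $0$: $\tildesigma''(0^-)=2$ whereas $\tildesigma''(0^+)=c-4$, and $c-6\neq 0$. Consequently $\tildesigma'(x)=1+2x+(c-6)\max(x,0)+O(x^2)$ near $0$, so $\max(x,0)\approx\frac{\tildesigma'(x)-1-2x}{c-6}$ with $O(x^2)$ error there. Writing $\max(x,0)=\lambda\max(x/\lambda,0)$ and letting $\lambda\to\infty$, coupled with $\delta\to 0^+$ in a way (e.g. $\lambda=\delta^{-1/2}$) that kills both the difference-quotient error in $\tildesigma'$ and the rescaled $O(x^2/\lambda)$ Taylor error, promotes this local identity to a uniform approximation of $\max(x,0)$ on all of $[-M,M]$ by a width-$2$, depth-$1$ $\tildesigma$-network; replacing $x$ by $-x$ yields $\max(-x,0)$, and $\min(x,0)=-\max(-x,0)$.

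It remains to assemble $\sigma_\delta(x)=\tildesigma\big(-\widetilde{R}_\delta(x)\big)+P_\delta\big(R_\delta(x)\big)$, where $R_\delta\rightrightarrows\max(\cdot,0)$ and $\widetilde{R}_\delta\rightrightarrows\max(-\cdot,0)$ are the gadgets from (i) and $P_\delta\rightrightarrows\sigma_1$ on $[0,M]$ is that from (ii); uniform convergence $\sigma_\delta\rightrightarrows\sigma$ on $[-M,M]$ then follows from the uniform continuity of $\sigma_1$ on compacta together with the boundedness of all intermediate signals for small $\delta$. A minor nuisance is that $\tildesigma$ is nowhere affine, so intermediate values cannot be carried through a $\tildesigma$-layer for free; this is handled by the standard identity gadget $t\mapsto\frac{\tildesigma(x_0+\delta t)-\tildesigma(x_0)}{\delta\,\tildesigma'(x_0)}\rightrightarrows t$ (one neuron, any $x_0$, since $\tildesigma'>0$ everywhere), whose cost the generous budget of width $50$ and depth $6$ easily absorbs. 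Verifying that the full construction fits inside this budget is then routine bookkeeping of the gadgets above, which finishes the proof.
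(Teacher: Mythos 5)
Your proposal is correct, and it follows the paper's top-level reduction exactly: Theorem~\ref{thm:main} supplies the $\sigma$-network within $\varepsilon/2$, and Lemma~\ref{lem:net:varrho:to:tildevarrho} converts a width-$50$, depth-$6$ family $\sigma_\delta\rightrightarrows\sigma$ on $[-M,M]$ into the stated width $1800d(2d+1)$ and depth $66$. Where you genuinely diverge from the paper is in how that family is built (the paper's Lemma~\ref{lem:sigma:by:varrho:net}). The paper constructs an \emph{exact} product gadget $\Gamma(x,y)=xy$ from the softsign branch of $\tildesigma$ (via $\tildesigma(y)+1=\tfrac{1}{1-y}$ for $y\le 0$), recovers $\sigma$ on $[0,M]$ as $\tfrac{(2M+1)^2}{c}\cdot\tfrac{(2x+1)^2}{(2M+1)^2}\cdot$ (difference quotient of $\tildesigma$) $-\tfrac1c$, and then patches the two half-lines with a network-realized cutoff $g$ supported on $[0,\eta_0]$, which forces a separate error estimate near the origin. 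You instead split the input itself via $\sigma(x)=\tildesigma(\min(x,0))+\sigma_1(\max(x,0))$, extract an approximate $\max(\cdot,0)$ from the second-derivative jump $\tildesigma''(0^+)-\tildesigma''(0^-)=c-6\neq 0$ through a scaling limit (your coupling $\lambda=\delta^{-1/2}$ does balance the $O(\lambda\delta)$ difference-quotient error against the $O(M^2/\lambda)$ Taylor error), and replace the paper's exact squaring by an approximate one from the second central difference at $x_0=\tfrac12$, where indeed $\tildesigma''(\tfrac12)=-\tfrac12\neq 0$. The paper's route buys exact multiplication, so all approximation error sits in one difference quotient plus the cutoff; your route buys the elimination of the cutoff/partition step and uses only local Taylor data about $\tildesigma$ rather than its closed-form algebra. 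Three details to tighten, none fatal: the ReLU gadget is not width $2$ with depth $1$ as stated, since the affine correction $-1-2x/\lambda$ must be carried past the hidden layer by an identity-gadget neuron; the claim that all intermediate quantities lie in fixed compact sets is not literally true for the product $(2x+1)^2\tildesigma'(x)$ unless you either rescale by $(2M+1)$ (as the paper does with $\psi_1$) or note that only the width and depth, not the gadgets' accuracy regions or weights, need be $M$-independent; and since $R_\delta(x)$ only approximately lands in $[0,M]$, you should establish $P_\delta\rightrightarrows \tfrac{(2t+1)^2\tildesigma'(t)-1}{c}$ on a slightly enlarged interval such as $[-1,M+1]$ before running the composition argument (this limit function is continuous there and agrees with $\sigma_1$ on $[0,M+1]$, so the conclusion is unaffected).
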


To prove this theorem based on Theorem~\ref{thm:main}, we only need to show
$\sigma$ can be approximated by  a fixed-size
$\tildesigma$-activated network within an arbitrary error on any pre-specified interval as presented in the following lemma.

\begin{lemma}\label{lem:sigma:by:varrho:net}
    For any $\varepsilon>0$ and any $M>0$, there exists a function $\phi$ realized by a $\tildesigma$-activated network with width $50$ and depth $6$ such that \begin{equation*}
        |\phi(x)-\sigma(x)|<\varepsilon\quad 
        \tn{for any $x\in[-M,M]$.} 
    \end{equation*}
\end{lemma}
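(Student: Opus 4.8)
The plan is to reconstruct $\sigma$ on $[-M,M]$ from the way $\tildesigma$ encodes $\sigma$ through smoothing, and then invoke the usual composition-of-uniform-approximations argument. Two structural facts are the engine. First, by construction $\tildesigma(x)=\sigma(x)=\tfrac{x}{-x+1}$ for every $x\le 0$, so the negative (softsign) branch of $\sigma$ is already present in $\tildesigma$. Second, for $x>0$ we have $\tildesigma'(x)=\tfrac{c\sigma(x)+1}{(2x+1)^2}$, i.e. $\sigma(x)=\tfrac{(2x+1)^2\,\tildesigma'(x)-1}{c}$ on $(0,\infty)$, so the positive (triangular-wave) branch of $\sigma$ can be recovered from $\tildesigma'$ by one squaring, one multiplication, and an affine map. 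Using $\sigma_1(0)=\tildesigma(0)=0$ together with $\sigma=\sigma_1$ on $[0,\infty)$ and $\sigma=\tildesigma$ on $(-\infty,0]$, we get the decomposition
\begin{equation*}
\sigma(x)=\sigma_1\big(\max\{x,0\}\big)+\tildesigma\big(\min\{x,0\}\big),
\end{equation*}
so it suffices to realize, by small fixed-size $\tildesigma$-networks whose errors tend to $0$ uniformly on any prescribed compact set as auxiliary parameters shrink: (i) a finite-difference block $x\mapsto\tfrac{\tildesigma(x+\delta)-\tildesigma(x)}{\delta}\to\tildesigma'(x)$, of width $2$ and depth $1$, which is legitimate since $\tildesigma\in C^1$; (ii) a squaring block $u\mapsto\tfrac{\tildesigma(a+hu)+\tildesigma(a-hu)-2\tildesigma(a)}{h^2\tildesigma''(a)}\to u^2$, of width $2$ and depth $1$, taking $a=\tfrac12$ where $\tildesigma$ is $C^\infty$ and $\tildesigma''(\tfrac12)=-\tfrac12\neq 0$ (so that multiplication $pq=\tfrac14[(p+q)^2-(p-q)^2]$ costs two squarings); and (iii) a block approximating $x\mapsto\max\{x,0\}$.

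For gadget (iii) I would exploit the corner that $\tildesigma'$ has at the origin: a direct computation gives $\tildesigma''(0^-)=2$ and $\tildesigma''(0^+)=c-4$, hence $2-(c-4)=6-c\neq 0$ and near $0$ one has $\tildesigma'(t)=1+\big(2\cdot\one_{t<0}+(c-4)\cdot\one_{t>0}\big)\,t+O(t^2)$. Rescaling the input and adding an affine correction, the quantity
\begin{equation*}
\frac{1}{(c-6)\delta_2}\left[\frac{\tildesigma(\delta_2 x+\delta_1)-\tildesigma(\delta_2 x)}{\delta_1}-1-2\delta_2 x\right]
\end{equation*}
converges to $\max\{x,0\}$ uniformly on $[-M,M]$ as $\delta_1,\delta_2\to 0^+$ with $\delta_1/\delta_2\to 0$ (the finite-difference error $O(\delta_1)$ is amplified by $1/((c-6)\delta_2)$, which is why the nested order matters); the terms $2\delta_2 x$ and $-\tfrac{2x}{c-6}$ are supplied by one extra near-identity neuron $x\mapsto\tfrac{\tildesigma(\delta_3 x)}{\delta_3}\to x$. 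The same gadget applied to $-x$ and negated yields $\min\{x,0\}$ (equivalently $x-\max\{x,0\}$).

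Assembling: one hidden layer produces approximations $r\approx\max\{x,0\}$ and $m\approx\min\{x,0\}$; a layer fed by $r$ computes in parallel the finite difference $D\approx\tildesigma'(r)$ and the square $Q\approx(2r+1)^2$; a layer squares $D\pm Q$ to form $DQ$, and an affine map returns $\tfrac{DQ-1}{c}\approx\sigma_1(r)$; on the other branch a single $\tildesigma$-neuron returns $\tildesigma(m)$; padding the shorter branch with near-identity layers, a final affine map adds the two branches. Counting neurons layer by layer ($\le 4$ for each of the $\max/\min$, derivative-plus-square, and double-squaring layers, plus pass-through copies of the $m$-branch) and counting layers shows the construction fits within width $50$ and depth $6$; one then fixes the auxiliary parameters small enough, in the order they were introduced, to force the total error below $\varepsilon$ on $[-M,M]$.

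The main obstacle is gadget (iii) together with the error bookkeeping around it: one must check that the corner of $\tildesigma'$ at $0$ is genuine (the values $\tildesigma''(0^-),\tildesigma''(0^+)$ above), that all limiting parameters can be driven to $0$ in a single consistent nested order so that each block's error — in particular the finite-difference error magnified by $1/((c-6)\delta_2)$ — stays uniformly small on $[-M,M]$, and that the arguments fed into each gadget remain in the regions where the Taylor expansions used are valid (e.g. the squaring arguments $\tfrac12\pm hu$ stay in $(0,1)$, where $\tildesigma$ is smooth). Everything else reduces to a routine composition of uniform approximations, in the spirit of Lemma~\ref{lem:net:varrho:to:tildevarrho}.
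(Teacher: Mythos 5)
Your proposal is correct in strategy and the key numerical facts you rely on check out ($\tildesigma''(0^-)=2$, $\tildesigma''(0^+)=c-4$, so $6-c\neq 0$; $\tildesigma''(\tfrac12)=-\tfrac12\neq0$; and the limit of your ReLU gadget is indeed $\max\{x,0\}$ uniformly on $[-M,M]$ when $\delta_1/\delta_2\to0$), but it is a genuinely different route from the paper's. The paper never approximates squaring or multiplication: it uses the softsign branch $\tildesigma(y)+1=\tfrac{1}{1-y}$ for $y\le0$ to represent $x^2$ \emph{exactly} on a bounded interval and hence an exact product network $\Gamma(x,y)=xy$ of width $9$ and depth $2$; it then recovers $\sigma$ on $[0,M]$ as $\tfrac{(2M+1)^2}{c}\,\psi_1\cdot\psi_{2,\delta}-\tfrac1c$ with $\psi_1(x)=\tfrac{(2x+1)^2}{(2M+1)^2}$ exact and $\psi_{2,\delta}$ the same finite-difference quotient you use, and finally glues the two half-lines with a ramp cutoff $g$ (an approximate difference of ReLUs obtained from $1-\sigma(y)=|y-1|$ on $[0,2]$), controlling the transition zone $[0,\eta_0]$ by the observation that both $\sigma$ and $\tildesigma$ are of size $O(\varepsilon)$ there. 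You instead glue via the exact identity $\sigma(x)=\sigma_1(\max\{x,0\})+\tildesigma(\min\{x,0\})$, which elegantly sidesteps the cutoff/indicator issue, but at the price of two new approximate gadgets: a ReLU extracted from the second-order corner of $\tildesigma'$ at the origin, and squaring/multiplication via symmetric second differences at $a=\tfrac12$. What the paper's route buys is simpler error bookkeeping (the only limit is the single $\delta$ in $\psi_{2,\delta}$ and in the ReLU ramp, with no error amplification, since products and squares are exact); what your route buys is a conceptually cleaner decomposition with no overlap-region argument. The crux you would still have to write out carefully is exactly what you flag: the $O(\delta_1)$ finite-difference error divided by $(c-6)\delta_2$ forces the nested regime $\delta_1\ll\delta_2\ll1$, and the approximate polarization $DQ=\tfrac14[(D+Q)^2-(D-Q)^2]$ involves intermediate quantities of size $\sim(2M+1)^2$ whose squaring errors scale like $(2M+1)^4\cdot O(h^2)$ and must be sent to zero last; both are manageable by a uniform-convergence composition argument in the spirit of Lemma~\ref{lem:net:varrho:to:tildevarrho}, and your width/depth budget ($\le 50$, $\le 6$) is easily met.
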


The proof of Lemma~\ref{lem:sigma:by:varrho:net} can be found later.
By assuming Lemma~\ref{lem:sigma:by:varrho:net} is true, we can give the proof of Theorem~\ref{thm:main:tildesigma}.
\begin{proof}[Proof of Theorem~\ref{thm:main:tildesigma}]
    By Theorem~\ref{thm:main}, there exists a function $\phi_\sigma$ generated by a $\sigma$-activated network with width $36d(2d+1)$ and depth $11$ such that 
    \begin{equation*}
        |\phi_\sigma(\bmx)-f(\bmx)|<\varepsilon/2 \quad \tn{for any $\bmx\in [a,b]^d$.}
    \end{equation*} 
    By Lemma~\ref{lem:sigma:by:varrho:net}, for any $M>0$ and each $\delta\in (0,1)$, there exists a function $\sigma_{\delta}$ realized by a $\tildesigma$-activated network with width $50$ and depth $6$ such that 
    \begin{equation*}
        \sigma_{\delta}(t)\rightrightarrows \sigma(t)\quad \tn{as}\quad \delta\to 0^+\quad \tn{for any $t\in [-M,M]$.}
    \end{equation*}
    Then, by Lemma~\ref{lem:net:varrho:to:tildevarrho}, there exists another function $\phi=\phi_{\tildesigma}$ realized by a $\tildesigma$-activated network with width $50\times 36d(2d+1)=1800d(2d+1)$ and depth $6\times 11=66$ such that
    \begin{equation*}
        |\phi(\bmx)-f(\bmx)|<\varepsilon \quad \tn{for any $\bmx\in [a,b]^d$.}
    \end{equation*}
    So we finish the proof.
\end{proof}

Finally, let us present the detailed proof of Lemma~\ref{lem:sigma:by:varrho:net}.
\begin{proof}[Proof of Lemma~\ref{lem:sigma:by:varrho:net}]
    Since $1=\tildesigma'(0)=\lim_{x\to 0} \tfrac{\tildesigma(x)}{x}$, it is easy to show:
    For any $\scrE>0$ and any $R>0$, there exists a sufficiently small $w>0$ such that 
    \begin{equation*}
        \big|\tildesigma(wx)/w -x\big|<\scrE\quad \tn{for any $x\in [-R,R]$.}
    \end{equation*}
    Thus, we may assume the identity map is allowed to be the activation function in $\tildesigma$-activated networks. 
    Without loss of generality, we may assume $M\ge 2$ because $\widehat{M}=\max\{2,M\}$ implies $\widehat{M}\ge 2$ and $[-M,M]\subseteq [-\widehat{M},\widehat{M}]$.

    For simplicity, we denote
    $\tildescrH(N,L)$ as the (hypothesis) space of  functions generated by $\tildesigma$-activated networks with width $N$ and depth $L$. 
    Then the proof can be roughly divided into three steps as follows.
    \begin{enumerate}[(1)]
        \item Design $\Gamma\in \tildescrH(9,2)$ to reproduce $xy$ on $[-4\tildeM,4\tildeM]^2$, where $\tildeM=(M+1)^2$.
        
        \item Design $\psi_\delta\in \tildescrH(9,4)$ based on the first step to approximate $\sigma$ well on $[0,M]$.
        
        \item Design $\phi\in \tildescrH(50,6)$ based on the previous two steps to approximate $\sigma$ well on $[-M,M]$.
    \end{enumerate}
    
    The details of these three steps can be found below.
    \mystep{1}{Design $\Gamma\in \tildescrH(9,2)$ to reproduce $xy$ on $[-4\tildeM,4\tildeM]^2$.}
    
    	Observe that
	\begin{equation*}
		\tildesigma(y)+1=\frac{y}{|y|+1}+1=\frac{y}{-y+1}+1=\frac{1}{-y+1}\quad 
	\tn{for any $y\le 0$.}
	\end{equation*}
	For any $x\in[-4,4]$, we have  $-x-4\le 0$ and $-x-5\le 0$, implying
	\begin{equation*}
		\begin{split}
			\tildesigma(-x-4)-\tildesigma(-x-5)&=\Big(\tildesigma(-x-4)+1\Big)-\Big(\tildesigma(-x-5)+1\Big)\\
			&=\frac{1}{-(-x-4)+1}-\frac{1}{-(-x-5)+1}\\
			&=\frac{1}{x+5}-\frac{1}{x+6}=\frac{1}{(x+5)(x+6)}.
		\end{split}
	\end{equation*}
	It follows from  $1-\tfrac{90}{(x+5)(x+6)}\le 0$ for any $x\in[-4,4]$ that 
	\begin{equation*}
		\begin{split}
			\tildesigma\Big(1-\frac{90}{(x+5)(x+6)}\Big)+1
			=\frac{1}{-\big(1-\tfrac{90}{(x+5)(x+6)}\big)+1}
			=\frac{x^2+11x+30}{90},
		\end{split}
	\end{equation*}
	implying
	\begin{equation*}
		\begin{split}
			x^2&=90 \tildesigma\Big(1-\frac{90}{(x+5)(x+6)}\Big)+90-(11x+30)\\
			&=90 \tildesigma\Big(1- 90 \big(\tildesigma(-x-4)- \tildesigma(-x-5)\big)\Big)-11x+60\\
			& =90 \tildesigma\Big(1-90\tildesigma(-x-4)+90\tildesigma(-x-5)\Big)-11x+60.
		\end{split}
	\end{equation*}
    Thus, $x^2$ can be realized by a $\tildesigma$-activated network with width $3$ and depth $2$ on $[-4,4]$.
	Set $\tildeM=(M+1)^2$. Then, for any $x,y\in[-4\tildeM,4\tildeM]$, we have $\tfrac{x}{2\tildeM},\tfrac{y}{2\tildeM},\tfrac{x+y}{2\tildeM}\in [-4,4]$. Recall the fact
	\begin{equation*}
	    xy=2\tildeM^2\Big((\tfrac{x+y}{2\tildeM})^2
	    -(\tfrac{x}{2\tildeM})^2-(\tfrac{y}{2\tildeM})^2\Big).
	\end{equation*}
	Therefore, $xy$ can be realized by a $\tildesigma$-activated network with width $9$ and depth $2$  for any  $x,y\in [-4\tildeM,4\tildeM]$. That is, there exists $\Gamma\in \tildescrH(9,2)$ such that $\Gamma(x,y)=xy$ on $[-4\tildeM,4\tildeM]^2$.

    \mystep{2}{Design $\psi_\delta\in \tildescrH(9,4)$ to approximate $\sigma$ well on $[0,M]$.}
    
    Recall that $x^2$ can be realized by a $\tildesigma$-activated network with width $3$ and depth $2$ on $[-4,4]$. There exists $\psi_1\in \tildescrH(3,2)$ such that
    \begin{equation*}
        \psi_1(x)=\frac{(2x+1)^2}{(2M+1)^2}\quad \tn{for any $x\in [-M,M]$.}
    \end{equation*}

    For any small $\delta>0$, we define 
    \begin{equation*}
        \psi_{2,\delta}(x)\coloneqq\frac{\tildesigma(x+\delta)-\tildesigma(x)}{\delta} \quad \tn{for any $x\in\R$.}
    \end{equation*} 
    Then, we have $\psi_{2,\delta}\in \tildescrH(2,1)$ and   
    \begin{equation*}
        \psi_{2,\delta}(x)\coloneqq\frac{\tildesigma(x+\delta)-\tildesigma(x)}{\delta} \rightrightarrows \frac{\tn{d} }{\tn{d}x}\tildesigma(x)=\frac{c\sigma(x)+1}{(2x+1)^2}\quad \tn{as}\quad \delta\to 0^+
    \end{equation*}
    for any $x\in [0,M]$, where $c$ is a constant given by
    \begin{equation*}
        c=\frac{1}{2\int_{0}^{\infty} \frac{\sigma(t)}{(2t+1)^2}\tn{d}t}\approx 2.554.
    \end{equation*}
    
    For any small $\delta>0$, we define 
    \begin{equation*}
        \psi_\delta(x)\coloneqq \tfrac{(2M+1)^2}{c}\Gamma\Big(\psi_{1}(x), \psi_{2,\delta}(x)\Big)-\tfrac{1}{c}\quad \tn{for any $x\in\R.$}
    \end{equation*}
    Since $\Gamma\in \tildescrH(9,2)$, $\psi_1\in \tildescrH(3,2)$, and $\psi_{2,\delta}\in \tildescrH(2,1)$, we have $\psi_\delta\in \tildescrH(9,4)$.

    Clearly, for any $x\in [0,M]$, we have $\psi_1(x)=\tfrac{(2x+1)^2}{(2M+1)^2}\in [0,1]$ and $\psi_{2,\delta}(x)\rightrightarrows \tfrac{c\sigma(x)+1}{(2x+1)^2}\in [0,c+1]\subseteq [0,3.6]$, implying
    $\psi_{1}(x),\psi_{2,\delta}(x)\in [-4,4]\subseteq [-4\tildeM,4\tildeM]$ for any small $\delta>0$.
    Thus, for any $x\in [0,M]$, as $\delta$ goes to $0^+$, we have
    \begin{equation*}
        \begin{split}
            \psi_\delta(x)
            &=\tfrac{(2M+1)^2}{c}\Gamma\Big(\psi_{1}(x), \psi_{2,\delta}(x)\Big)-\tfrac{1}{c}
            =\tfrac{(2M+1)^2}{c}\cdot\psi_{1}(x)\cdot \psi_{2,\delta}(x)-\tfrac{1}{c}\\
            &\rightrightarrows \tfrac{(2M+1)^2}{c}\cdot \tfrac{(2x+1)^2}{(2M+1)^2}\cdot \tfrac{c\sigma(x)+1}{(2x+1)^2}-\tfrac{1}{c}=\sigma(x).
        \end{split}
    \end{equation*}
    That is, for any $x\in [0,M]$, 
    \begin{equation*}
        \psi_\delta(x)\rightrightarrows \sigma(x)\quad \tn{as} \quad \delta\to 0^+.
    \end{equation*}
    
    \mystep{3}{Design $\phi\in \tildescrH(50,6)$ to approximate $\sigma$ well on $[-M,M]$.}
    
    Note that $\tildesigma(x)=\sigma(x)$ for all $x\in [-M,0)$ and $\psi_\delta(x)$ approximates $\sigma(x)$ well for all $x\in [0,M]$. Then, we have
    \begin{equation*}
        \psi_\delta(x)\cdot \one_{\{x\in [0,M]\}}+\tildesigma(x)\cdot \one_{\{x\in [-M,0)\}}
    \end{equation*}
    approximates $\sigma(x)$ well for all $x\in [-M,M]$.
However, it is impossible to approximate $\one_{\{x\in [0,M]\}}$ well by a $\tildesigma$-activated network due to the continuity of $\tildesigma$. To address this gap, we will construct a continuous function $g$  to replace $\one_{\{x\in [0,M]\}}$ such that
\begin{equation}\label{eq:phi:delta+tsigma}
    \psi_\delta(x)\cdot g(x)+\tildesigma(x)\cdot \big(1-g(x)\big)
\end{equation}
can also approximate $\sigma(x)$ well for all $x\in [-M,M]$.

        By the continuity of $\tildesigma$ and $\sigma$, there exists a small $\eta_0\in (0,1)$ such that 
    \begin{equation}\label{eq:eta0:ineq}
        |\tildesigma(x)|<\varepsilon/6
        \quad \tn{and }\quad 
        |\sigma(x)|<\varepsilon/6
        \quad \tn{for any $x\in [0,\eta_0]$}.
    \end{equation}
    Then we define
    \begin{equation*}
        g(x)\coloneqq \frac{\tn{ReLU}(x)-\tn{ReLU}(x-\eta_0)}{\eta_0},\quad
        \tn{where} \  \tn{ReLU}(x)=\max\{0,x\}\quad  \tn{for any $x\in\R$}.
    \end{equation*}
    See Figure~\ref{fig:g} for an illustration of $g$.
    
    \begin{figure}[htbp!]        
    	\centering          
    	\includegraphics[width=0.6418\textwidth]{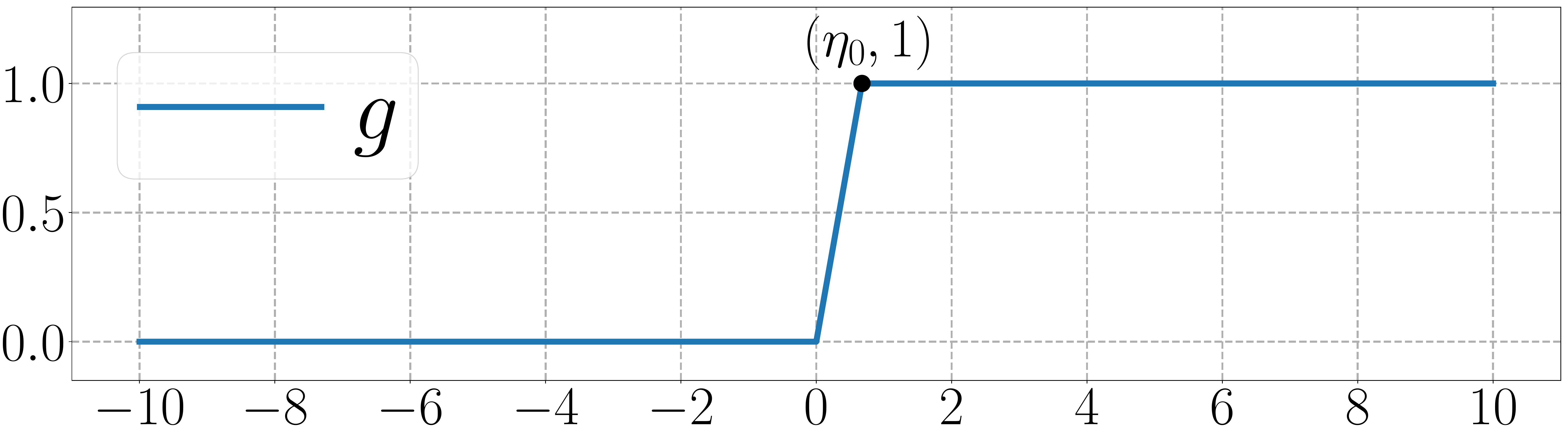}
    	\caption{An illustration of $g$ on $[-10,10]$.}
    	\label{fig:g}
    \end{figure}
    
    We will construct a $\tildesigma$-activated network to approximate $g$ well. To this end, we first design a $\tildesigma$-activated network to approximate the ReLU function well.
    For any $x\in [-M-1,M+1]$, we have $\tfrac{x}{M+1}+1\in [0,2]\subseteq [0,M]$, implying
    \begin{equation*}
       1-\psi_\delta(\tfrac{x}{M+1}+1) \rightrightarrows 1-\sigma(\tfrac{x}{M+1}+1)=|\tfrac{x}{M+1}|\quad \tn{as} \quad \delta\to 0^+,
    \end{equation*}
    where the last equality comes from $1-\sigma(y)=|y-1|$ for any $y\in [0,2]$.
    Recall that 
    \begin{equation*}
        \tn{ReLU}(x)=\tfrac{x}{2}+\tfrac{|x|}{2}=\tfrac{x}{2}+\tfrac{M+1}{2}\cdot|\tfrac{x}{M+1}|
    \end{equation*}
    for any $x\in [-M-1,M+1]$.
    For any small $\delta>0$, we define 
    \begin{equation*}
        \tildeg_\delta(x)\coloneqq \tfrac{x}{2}+\tfrac{M+1}{2}\Big(
        1-\psi_\delta(\tfrac{x}{M+1}+1) \Big)\quad \tn{for any $x\in \R$}.
    \end{equation*}
    Then, $\psi_\delta\in \tildescrH(9,4)$ implies
    $\tildeg_\delta\in \tildescrH(10,4)$.
    Moreover, for any $x\in [-M-1,M+1]$,
    \begin{equation*}
       \tildeg_\delta(x) \rightrightarrows \tfrac{x}{2}+\tfrac{M+1}{2}\cdot|\tfrac{x}{M+1}|
       =\tn{ReLU}(x) \quad \tn{as} \quad \delta\to 0^+.
    \end{equation*}
        Define 
    \begin{equation*}
        g_\delta(x)\coloneqq \frac{\tildeg_\delta(x)-\tildeg_\delta(x-\eta_0)}{\eta_0}\quad  \tn{for any $x\in\R$}.
    \end{equation*}
    Clearly, $\tildeg_\delta\in \tildescrH(10,4)$ implies $g_\delta\in \tildescrH(20,4)$. For any $x\in [-M,M]$,  we have $x,x-\eta_0\in [-M-1,M+1]$, implying
    \begin{equation*}
       g_\delta(x) =
       \frac{\tildeg_\delta(x)-\tildeg_\delta(x-\eta_0)}{\eta_0}
     \rightrightarrows
       \frac{\tn{ReLU}(x)-\tn{ReLU}(x-\eta_0)}{\eta_0}
       =g(x) \quad \tn{as}
        \quad \delta\to 0^+.
    \end{equation*}
    
    Next, motivated by Equation~\eqref{eq:phi:delta+tsigma}, we can define $\phi_\delta$ to approximate $\sigma$ well on $[-M,M]$. The definition of $\phi_\delta$ is given by
    \begin{equation*}
        \phi_\delta(x)\coloneqq \Gamma\Big(\psi_\delta(x),g_\delta(x)\Big)
        +\Gamma\Big(\tildesigma(x),1-g_\delta(x)\Big)\quad  \tn{for any $x\in \R$.}
    \end{equation*}
    Since $\Gamma\in \tildescrH(9,2)$, $\psi_\delta\in \tildescrH(9,4)$, and   $g_\delta,1-g_\delta\in \tildescrH(20,4)$, we have
    \begin{equation*}
        \phi_\delta\in          \tildescrH(9+20+1+20,4+2)     =\tildescrH(50,6).
    \end{equation*}

    Clearly, $\tildesigma(x)$, $g_\delta(x)$, and $1-g_\delta(x)$ are all in $[-4\tildeM,4\tildeM]$ for any small $\delta>0$ and all $x\in [-M,M]$.
    We will show $\psi_\delta(x)\in [-4\tildeM,4\tildeM]$ for any small $\delta>0$ and all $x\in [-M,M]$ via two cases as follows.
    \begin{itemize}
        \item For any $x\in [0,M]$, $\psi_\delta(x) \rightrightarrows \sigma(x)$ implies $\psi_\delta(x)\in [-4\tildeM,4\tildeM]$ for any small $\delta>0$.
        
        \item  For any $x\in [-M,0)$, we have $\psi_1(x)=\tfrac{(2x+1)^2}{(2M+1)^2}\in [0,1]$ and
        \begin{equation*}
        \psi_{2,\delta}(x)=\tfrac{\tildesigma(x+\delta)-\tildesigma(x)}{\delta} \rightrightarrows \tfrac{\tn{d} }{\tn{d}x}\tildesigma(x)=\tfrac{1}{(-x+1)^2}\quad \tn{as}\quad \delta\to 0^+.
    \end{equation*}
        Thus, for any $x\in [-M,0)$, as $\delta$ goes to $0^+$, we get
    \begin{equation*}
        \begin{split}
            \psi_\delta(x)
            &=\tfrac{(2M+1)^2}{c}\Gamma\Big(\psi_{1}(x), \psi_{2,\delta}(x)\Big)-\tfrac{1}{c}
            =\tfrac{(2M+1)^2}{c}\cdot\psi_{1}(x)\cdot \psi_{2,\delta}(x)-\tfrac{1}{c}\\
            &\rightrightarrows \tfrac{(2M+1)^2}{c}\cdot \tfrac{(2x+1)^2}{(2M+1)^2}\cdot \tfrac{1}{(-x+1)^2}-\tfrac{1}{c}
            =\tfrac{(2x+1)^2-1}{c(-x+1)^2}.
        \end{split}
    \end{equation*}
    For all $x\in [-M,0)$, we have $c(-x+1)^2\ge 1$, implying $\tfrac{(2x+1)^2-1}{c(-x+1)^2}\ge \tfrac{-1}{c(-x+1)^2}\ge -1$ and 
    \begin{equation*}
        \begin{split}
        \tfrac{(2x+1)^2-1}{c(-x+1)^2}
        \le 
        \tfrac{(2|x|+1)^2-1}{c(-x+1)^2}
        \le (2|x|+1)^2-1
        &= 4(|x|+1/2)^2-1\\
        &
        \le 4(M+1)^2-1=4\tildeM-1.
        \end{split}
    \end{equation*}
    That is, $\tfrac{(2x+1)^2-1}{c(-x+1)^2}\in  [-1,4\tildeM-1]$ for all $x\in [-M,0)$,  implying
    $\psi_{\delta}(x)\in [-4\tildeM,4\tildeM]$ for any small $\delta>0$.
    \end{itemize}
    Hence, for any $x\in [\eta_0,M]$,  we have  $1-g(x)=0$, implying
    \begin{equation*}
        \phi_\delta(x)=\psi_\delta(x)\cdot g_\delta(x)+\tildesigma(x)\cdot \big(1-g_\delta(x)\big)
        \rightrightarrows \sigma(x)\cdot g(x)+0=\sigma(x) \quad \tn{as}
        \quad \delta\to 0^+.
    \end{equation*}
    Similarly, for any $x\in [-M,0]$,  we have  $g(x)=0$, implying
    \begin{equation*}
        \phi_\delta(x)=\psi_\delta(x)\cdot g_\delta(x)+\tildesigma(x)\cdot \big(1-g_\delta(x)\big)
        \rightrightarrows 0 + \tildesigma(x)\cdot \big(1-g(x)\big) =\sigma(x) \quad \tn{as}
        \quad \delta\to 0^+.
    \end{equation*}
    Therefore, there exists a small $\delta_0>0$ such that  
    \begin{equation*}
        |\phi_{\delta_0}(x)-\sigma(x)|<\varepsilon \quad \tn{for any $x\in [-M,0]\bigcup [\eta_0,M]$},
    \end{equation*}
    $\|g_{\delta_0}\|_{L^\infty([0,\eta_0])}\le 2$,\quad 
    $\|1-g_{\delta_0}\|_{L^\infty([0,\eta_0])}\le  2$,\quad
    and
    \[\|\psi_{\delta_0}\|_{L^\infty([0,\eta_0])}\le 
    \|\sigma\|_{L^\infty([0,\eta_0])} +\varepsilon/12, \]
    where the above inequality comes from the fact $\psi_\delta(x)$ uniformly converges to $\sigma(x)$ for any $x\in [0,\eta_0]\subseteq [0,M]$.

    Clearly, 
    for any $x\in [0,\eta_0]$, by Equation~\eqref{eq:eta0:ineq}, we have 
    \begin{equation*}
        \begin{split}
            |\phi_{\delta_0}(x)-\sigma(x)|
            \le 
            |\phi_{\delta_0}(x)|+|\sigma(x)|
            &<\Big|\psi_{\delta_0}(x)\cdot g_{\delta_0}(x)+\tildesigma(x)\cdot \big(1-g_{\delta_0}(x)\big)\Big| +\varepsilon/6\\
            &\le \big|\psi_{\delta_0}(x)\big|\cdot\big| g_{\delta_0}(x)\big|+\big|\tildesigma(x)\big|\cdot \big|1-g_{\delta_0}(x)\big| +\varepsilon/6\\
            &\le  \big(\|\sigma\|_{L^\infty([0,\eta_0])} +\frac{\varepsilon}{12}\big) \cdot 2 + \frac{\varepsilon}{6}\cdot 2 +\frac{\varepsilon}{6}\\
            &\le \big( \frac{\varepsilon}{6}+\frac{\varepsilon}{12}\big) \cdot 2 + \frac{\varepsilon}{6}\cdot 2 +\frac{\varepsilon}{6}=\varepsilon.
        \end{split}
    \end{equation*}
    By setting $\phi=\phi_{\delta_0}$, we have 
    $\phi=\phi_{\delta_0}\in \tildescrH(50,6)$ and 
        \begin{equation*}
        |\phi(x)-\sigma(x)|=|\phi_{\delta_0}(x)-\sigma(x)|<\varepsilon \quad \tn{for any $x\in [-M,M]$}.
    \end{equation*}
    So we finish the proof.
\end{proof}

\subsection{Proof of Lemma~\ref{lem:net:varrho:to:tildevarrho}}
\label{sec:proof:lem:net:varrho:to:tildevarrho}

Let the activation function be applied to a vector elementwisely.
Then $\phi_\varrho$ can be represented in a form of function compositions as follows:
\begin{equation*}
    \phi_\varrho(\bmx) =\bmcalL_L\circ\varrho\circ\bmcalL_{L-1}\circ  \ \cdots \  \circ \varrho\circ\bmcalL_1\circ\varrho\circ\bmcalL_0(\bmx)\quad \tn{for any $\bmx\in\R^d$},
\end{equation*}
where $N_0=d$, $N_1,N_2,\cdots,N_L\in\N^+$,  $N_{L+1}=1$, $\bm{A}_\ell\in \R^{N_{\ell+1}\times N_{\ell}}$ and $\bm{b}_\ell\in \R^{N_{\ell+1}}$ are the weight matrix and the bias vector in the $\ell$-th affine linear transform $\bmcalL_\ell:\bmy \mapsto \bmA_\ell\bmy+\bmb_\ell$ for each $\ell\in \{0,1,\cdots,L\}$.
Define 
\begin{equation*}
    \phi_{\varrho_\delta}(\bmx) \coloneqq\bmcalL_L\circ{\varrho_\delta}\circ\bmcalL_{L-1}\circ  \ \cdots \  \circ {\varrho_\delta}\circ\bmcalL_1\circ\varrho_\delta\circ\bmcalL_0(\bmx)\quad \tn{for any $\bmx\in\R^d$}.
\end{equation*}
Recall that $\varrho_\delta$ can be realized by a $\tildevarrho$-activated network with width $\tildeN$ and depth $\tildeL$. Thus, $\phi_{\varrho_\delta}$ can be realized by a $\tildevarrho$-activated network with width $N\cdot\tildeN$ and depth $L\cdot\tildeL$.
We will prove 
\begin{equation*}
    \phi_{\varrho_\delta}(\bmx) 
    \rightrightarrows
    \phi_{\varrho}(\bmx)
    \quad \tn{as}\quad \delta\to 0^+
    \quad \tn{for any $\bmx\in[a,b]^d$}.
\end{equation*}

For any $\bmx\in\R^d$ and each $\ell\in \{1,2,\cdots,L+1\}$, define
\begin{equation*}
    \bmh_\ell(\bmx)
    \coloneqq \bmcalL_{\ell-1}\circ\varrho\circ\bmcalL_{\ell-2}\circ  \ \cdots \  \circ \varrho\circ\bmcalL_1\circ\varrho\circ\bmcalL_0(\bmx)
\end{equation*}
and 
\begin{equation*}
    \bmh_{\ell,\delta}(\bmx)
    \coloneqq \bmcalL_{\ell-1}\circ\varrho_\delta\circ\bmcalL_{\ell-2}\circ  \ \cdots \  \circ \varrho_\delta\circ\bmcalL_1\circ\varrho_\delta\circ\bmcalL_0(\bmx).
\end{equation*}
Note that $\bmh_{\ell}$ and $\bmh_{\ell,\delta}$ are two maps from $\R^d$ to $\R^{N_\ell}$ for each $\ell$.

We will prove by induction that
\begin{equation}\label{eq:induction:h:ell}
    \bmh_{\ell,\delta}(\bmx)\rightrightarrows \bmh_{\ell}(\bmx) \quad \tn{as}\quad  \delta\to 0^+
\end{equation}
for any $\bmx\in [a,b]^d$ and each $\ell\in \{1,2,\cdots,L+1\}$.

First, we consider the case $\ell=1$. Clearly,
\begin{equation*}
    \bmh_{1,\delta}(\bmx)=\bmcalL_0(\bmx)= \bmh_{1}(\bmx) \quad \tn{as}\quad \delta\to 0^+\quad \tn{for any $\bmx\in[a,b]^d$.}
\end{equation*}
This means Equation~\eqref{eq:induction:h:ell} holds for $\ell=1$.

Next, suppose Equation~\eqref{eq:induction:h:ell} holds for $\ell=i\in \{1,2,\cdots,L\}$. Our goal is to prove that it also holds for $\ell=i+1$. Determine $M>0$ by defining
\begin{equation*}
    M\coloneqq  \sup \Big\{\|\bmh_j(\bmx)\|_{\infty}+1:
    \bmx\in [a,b]^d,\quad j=1,2,\cdots,L+1\Big\},
\end{equation*}
where the continuity of $\varrho$ guarantees the above supremum is finite, i.e., $M\in (1,\infty)$.
By the induction hypothesis, we have
\begin{equation*}
    \bmh_{i,\delta}(\bmx)\rightrightarrows \bmh_{i}(\bmx) \quad \tn{as}\quad  \delta\to 0^+\quad \tn{for any $\bmx\in [a,b]^d$.}
\end{equation*}
Clearly, for any $\bmx\in [a,b]^d$, we have $\|\bmh_{i}(\bmx)\|_{\infty}\le M$ and  $\|\bmh_{i,\delta}(\bmx)\|_{\infty} \le \|\bmh_{i}(\bmx)\|_{\infty}+1\le  M$  for any small $\delta>0$.

 Recall the fact $\varrho_\delta(t)\rightrightarrows \varrho(t)$ as $\delta\to 0^+$ for any $t\in [-M,M]$. Then, we have 
\begin{equation*}
    \varrho_\delta\circ \bmh_{i,\delta}(\bmx)
    -\varrho\circ \bmh_{i,\delta}(\bmx)
    \rightrightarrows \bmzero\quad \tn{as}\quad \delta\to 0^+\quad \tn{for any $\bmx\in [a,b]^d$.}
\end{equation*}
The continuity of $\varrho$ implies
the uniform continuity of $\varrho$ on $[-M,M]$, from which we deduce
\begin{equation*}
    \varrho\circ \bmh_{i,\delta}(\bmx)
    -
    \varrho\circ\bmh_{i}(\bmx) 
    \rightrightarrows \bmzero
    \quad \tn{as}\quad  \delta\to 0^+\quad \tn{for any $\bmx\in [a,b]^d$.}
\end{equation*}

Therefore, for any $\bmx\in [a,b]^d$, as $\delta\to 0^+$, we have
\begin{equation*}
    \varrho_\delta\circ \bmh_{i,\delta}(\bmx)
    -
    \varrho\circ\bmh_{i}(\bmx) 
    =
    \underbrace{
    \varrho_\delta\circ \bmh_{i,\delta}(\bmx)
    -\varrho\circ \bmh_{i,\delta}(\bmx)
    }_{\rightrightarrows \bmzero}
    +
    \underbrace{\varrho\circ \bmh_{i,\delta}(\bmx)
    -
    \varrho\circ\bmh_{i}(\bmx) 
    }_{\rightrightarrows \bmzero}
    \rightrightarrows \bmzero,
\end{equation*}
implying
\begin{equation*}
    \bmh_{i+1,\delta}(\bmx)=\bmcalL_i\circ\varrho_\delta\circ \bmh_{i,\delta}(\bmx)
    \rightrightarrows
    \bmcalL_i\circ\varrho\circ \bmh_{i}(\bmx)=\bmh_{i+1}(\bmx).
\end{equation*}
This means Equation~\eqref{eq:induction:h:ell} holds for $\ell=i+1$. So we complete the inductive step.

By the principle of induction,  we have
\begin{equation*}
    \phi_{\varrho_{\delta}}(\bmx) =\bmh_{L+1,\delta}(\bmx)
    \rightrightarrows
    \bmh_{L+1}(\bmx)=
    \phi_{\varrho}(\bmx)
    \quad \tn{as}\quad \delta\to 0^+
    \quad \tn{for any $\bmx\in[a,b]^d$}.
\end{equation*}
There exists a small $\delta_0>0$ such that 
\begin{equation*}
    \big|\phi_{\varrho_{\delta_0}}(\bmx) -
    \phi_{\varrho}(\bmx)\big|<\varepsilon/2
    \quad \tn{for any $\bmx\in[a,b]^d$}.
\end{equation*}
By defining $\phi\coloneqq\phi_{\varrho_{\delta_0}}$, 
we have
\begin{equation*}
    \big|\phi(\bmx) -
    f(\bmx)\big|
    \le \big|\phi_{\varrho_{\delta_0}}(\bmx) -
    \phi_{\varrho}(\bmx)\big|
    +\big|\phi_{\varrho}(\bmx) -
    f(\bmx)\big|
    <\varepsilon/2+\varepsilon/2=\varepsilon
\end{equation*}
for any $\bmx\in[a,b]^d$.
Moreover, $\phi=\phi_{\varrho_{\delta_0}}$ can be generated by a $\tildevarrho$-activated network with width $N\cdot\tildeN$ and depth $L\cdot\tildeL$.
So we finish the proof.

\section{Detailed Proofs of Theorems~\ref{thm:main} and \ref{thm:main:classification}}
\label{sec:proof:main:and:classification}

In this section, we will give the detailed proofs of Theorems~\ref{thm:main} and \ref{thm:main:classification}. First, we prove Theorem~\ref{thm:main} based on Theorem~\ref{thm:main:d=1}, which will be proved in Section~\ref{sec:proof:main:d=1}. Next, we apply Theorem~\ref{thm:main} to prove Theorem~\ref{thm:main:classification}.

\subsection{Proof of Theorem~\ref{thm:main}}\label{sec:proof:main}

The detailed proof  of Theorem~\ref{thm:main} converts the above ideas mentioned in Section~\ref{sec:key:ideas} to implementations using neural networks with fixed sizes. The whole construction procedure can be divided into three steps.
\begin{enumerate}[(1)] 
    \item Apply KST to reduce dimension, i.e.,  represent $f\in C([a,b]^d)$ by the compositions and combinations of univariate continuous functions.
    \item Apply Theorem~\ref{thm:main:d=1} to design sub-networks to approximate the univariate continuous functions in the previous step within the desired error. 
    \item Integrate the sub-networks to form the final network and estimate its size.
\end{enumerate}

The details of these three steps can be found below.
\mystep{1}{  Apply KST to reduce dimension.    }
To apply KST,	we define a linear function $\calL_1(t)=(b-a)t+a$ for any $t\in [0,1]$. 
	Clearly, $\calL_1$ is a bijection from $[0,1]$ to $[a,b]$.
	Define
	 \begin{equation*}
	 	\tildef(\bmy)\coloneqq  f\big(\calL_1(y_1),\calL_1(y_2),\cdots,\calL_1(y_d)\big)\quad\tn{ for any  } \bmy=[y_1,y_2,\cdots,y_d]^T\in [0,1]^d.
	 \end{equation*}
	Then, $\tildef:[0,1]^d\to\R$ is a continuous function since $f\in C([a,b]^d)$.
	By Theorem~\ref{thm:kst}, there exists $\tildeh_{i,j}\in C([0,1])$ and $\tildeg_i\in C(\R)$ for $i=0,1,\cdots,2d$ and $j=1,2,\cdots,d$ such that 
	\begin{equation*}
		\tildef(\bmy)=\sum_{i=0}^{2d}\tildeg_i \Big(\sum_{j=1}^d \tildeh_{i,j}(y_j)\Big)\quad \tn{for any $\bmy=[y_1,y_2,\cdots,y_d]^T\in [0,1]^d$.}
	\end{equation*}
Let $\caltildeL_1$ be the inverse of $\calL_1$, i.e., $\caltildeL_1(t)=(t-a)/(b-a)$ for any $t\in [a,b]$. Then, for any $x_j\in [a,b]$, there exists a unique $y_j\in [0,1]$ such that $\calL_1(y_j)=x_j$ and $y_j=\caltildeL_1(x_j)$ for any $j=1,2,\cdots,d$, which implies
\begin{equation*}
	\begin{split}
		f(\bmx)&=f(x_1,x_2,\cdots,x_d)=f\big(\calL_1(y_1),\calL_1(y_2),\cdots,\calL_1(y_d)\big)=\tildef(\bmy)\\
		& =\sum_{i=0}^{2d}\tildeg_i \Big(\sum_{j=1}^d \tildeh_{i,j}(y_j)\Big)
		=\sum_{i=0}^{2d}\tildeg_i \Big(\sum_{j=1}^d \tildeh_{i,j}\big(\caltildeL_1(x_j)\big)\Big)
		=\sum_{i=0}^{2d}\tildeg_i \Big(\sum_{j=1}^d \tildeh_{i,j}\circ\caltildeL_1(x_j)\Big).
	\end{split}
\end{equation*}
It follows that
\begin{equation*}
	\begin{split}
		f(\bmx)=\sum_{i=0}^{2d}\tildeg_i \Big(\sum_{j=1}^d \tildeh_{i,j}\circ\caltildeL_1(x_j)\Big)
		=\sum_{i=0}^{2d}\tildeg_i \circ \hath_i(\bmx)\quad\tn{for any $\bmx\in [a,b]^d$,}
	\end{split}
\end{equation*}
where
\begin{equation}\label{eq:tildehi}
 	\hath_i(\bmx)=\sum_{j=1}^d \tildeh_{i,j}\circ\caltildeL_1(x_j)\quad\tn{for any $\bmx=[x_1,x_2,\cdots,x_d]^T\in[a,b]^d$.}
\end{equation}

Set 
\begin{equation*}
	M=\max_{i\in \{0,1,\cdots,2d\}} \|\hath_i\|_{L^\infty([a,b]^d)}+1>0.
\end{equation*}
Define $\calL_2(t)=(t+2M)/4M$ and $\caltildeL_2(t)=4Mt-2M$ for any $t\in \R$.
Then, $\calL_2$ is a bijection from $[-M,M]$ to $[\tfrac{1}{4},\tfrac{3}{4}]$ and  $\caltildeL_2$ is the inverse of $\calL_2$.
Clearly,
$\caltildeL_2\circ\calL_2(t)=t$ for any $t\in [-M,M]$, which implies $\hath_i(\bmx)=\caltildeL_2\circ\calL_2\circ  \hath_i(\bmx)$ for any $\bmx\in [a,b]^d$. Therefore, for any $\bmx\in [a,b]^d$, we have
\begin{equation*}
	\begin{split}
		f(\bmx)
		=\sum_{i=0}^{2d}\tildeg_i \circ\hath_i(\bmx)
		=\sum_{i=0}^{2d}\tildeg_i \circ\caltildeL_2\circ\calL_2\circ  \hath_i(\bmx)
		=\sum_{i=0}^{2d}g_i\circ  h_i(\bmx),		
	\end{split}
\end{equation*}
where 
\begin{equation}\label{eq:gi:hi}
    g_i=\tildeg_i\circ \caltildeL_2 \tn{\quad  and\quad } h_i=\calL_2\circ \hath_i\tn{\quad for $i=0,1,\cdots,2d$.}
\end{equation}
Clearly, $\calL_2(t)\in [\tfrac{1}{4},\tfrac{3}{4}]$ for any $t\in [-M,M]$, which implies
\begin{equation*}
	h_i(\bmx)=\calL_2\circ \hath_i(\bmx)\in [\tfrac{1}{4},\tfrac{3}{4}]\quad\tn{for any $\bmx\in [a,b]^d$ and $i=0,1,\cdots,2d$.}
\end{equation*}

\mystep{2}{  Design sub-networks to approximate $g_i$ and $h_i$.    }

Next, we will construct sub-networks to approximate $g_i$ and $h_i$ for each $i$. 
Obviously, $g_i=\tildeg_i\circ \caltildeL_2$ is continuous on $\R$ and hence uniformly continuous on $[0,1]$ for each $i$. Thus, for  $i=0,1,\cdots,2d$,
there exists $\delta_i>0$ such that
\begin{equation*}
	|g_i(z_1)-g_i(z_2)|<\varepsilon/(4d+2)\quad \tn{for any $z_1,z_2\in [0,1]$ with $|z_1-z_2|<\delta_i$.}
\end{equation*}
Set $\delta=\min \big(\{\delta_i:i=0,1,\cdots,2d\}\bigcup\{\tfrac{1}{4}\}\big)$. Then, for $i=0,1,\cdots,2d$, we have 
\begin{equation}\label{eq:error:g:1}
	|g_i(z_1)-g_i(z_2)|<\varepsilon/(4d+2)\quad \tn{for any $z_1,z_2\in [0,1]$ with $|z_1-z_2|<\delta$.}
\end{equation}

For each $i\in\{0,1,\cdots,2d\}$, by Theorem~\ref{thm:main:d=1}, there exists a function $\phi_i$ generated by an EUAF network with width $36$ and depth $5$ such that 
\begin{equation}\label{eq:error:g:2}
	|g_i(z)-\phi_i(z)|<\varepsilon/(4d+2)\quad \tn{for any $z\in [0,1]$.}
\end{equation}

Fix $i\in\{0,1,\cdots,2d\}$, we will design an EUAF network to generate a function $\psi_i:[a,b]^d\to\R$ satisfying
\begin{equation*}
	|h_i(\bmx)-\psi_i(\bmx)|<\delta\quad \tn{for any $\bmx\in[a,b]^d$.}
\end{equation*}
For any $\bmx=[x_1,x_2,\cdots,x_d]^T\in [a,b]^d$, by Equations~\eqref{eq:tildehi} and \eqref{eq:gi:hi}, we have
\begin{equation*}
\begin{split}
		h_i(\bmx)=\calL_2\circ \hath_i(\bmx)
		=\calL_2\Big( \sum_{j=1}^d \tildeh_{i,j}\circ\caltildeL_1(x_j)\Big)
		&=\frac{\Big( \sum_{j=1}^d \tildeh_{i,j}\circ\caltildeL_1(x_j)\Big)+2M}{4M}\\
		&=\sum_{j=1}^d \Big(\frac{\tildeh_{i,j}\circ\caltildeL_1(x_j)}{4M}+\frac{1}{2d}\Big)
		= \sum_{j=1}^d h_{i,j}(x_j),
\end{split}
\end{equation*}
where 
\begin{equation*}
	h_{i,j}(t)\coloneqq \frac{\tildeh_{i,j}\circ\caltildeL_1(t)}{4M}+\frac{1}{2d}\quad \tn{for any $t\in [a,b]$, $i=0,1,\cdots,2d$, and $j=1,2,\cdots,d$.}
\end{equation*}

It is easy to verify that $h_{i,j}\in C([a,b]^d)$  each $i\in \{0,1,\cdots,2d\}$ and each $j\in \{1,2,\cdots,d\}$, 
from which we deduce by Theorem~\ref{thm:main:d=1} that there exists a function $\psi_{i,j}$ generated by an EUAF network with width $36$ and depth $5$ such that
\begin{equation*}
	|h_{i,j}(t)-\psi_{i,j}(t)|<\delta/d\quad \tn{for any $t\in [a,b]$.}
\end{equation*}

For each $i\in \{0,1,\cdots,2d\}$, we define 
\begin{equation*}
    \psi_i(\bmx)\coloneqq \sum_{j=1}^d \psi_{i,j}(x_j)\quad \tn{ for any $\bmx=[x_1,x_2,\cdots,x_d]^T\in[a,b]^d$.}
\end{equation*}
Then, for any $\bmx=[x_1,x_2,\cdots,x_d]^T\in[a,b]^d$ and  each $i\in \{0,1,\cdots,2d\}$, we have
\begin{equation*}
	\begin{split}
		|h_i(\bmx)-\psi_i(\bmx)|=\Big| \sum_{j=1}^d h_{i,j}(x_j)-\sum_{j=1}^{d}  \psi_{i,j}(x_j)   \Big|
		=\sum_{j=1}^d\Big|  h_{i,j}(x_j)-  \psi_{i,j}(x_j)   \Big|<\sum_{j=1}^d \delta/d=\delta.
	\end{split}
\end{equation*}

\mystep{3}{Integrate sub-networks.}

Finally, we build an integrated network with the desired size to approximate the target function $f$. 
The desired function $\phi$ can be defined as 
\begin{equation*}
	\phi(\bmx)\coloneqq\sum_{i=0}^{2d}\phi_i\circ\psi_i(\bmx)=\sum_{i=0}^{2d}\phi_i\Big(\sum_{j=1}^d \psi_{i,j}(x_j)\Big)\quad \tn{for any $\bmx=[x_1,x_2,\cdots,x_d]^T\in[a,b]^d$.}
\end{equation*}
Let us first estimate the approximation error and then determine the size of the target network realizing $\phi$. See Figure~\ref{fig:phi:decomposition} for an illustration of the target network realizing $\phi$ for the case $d=2$. 

\begin{figure}[htbp!]
	\centering
	\includegraphics[width=0.8945\textwidth]{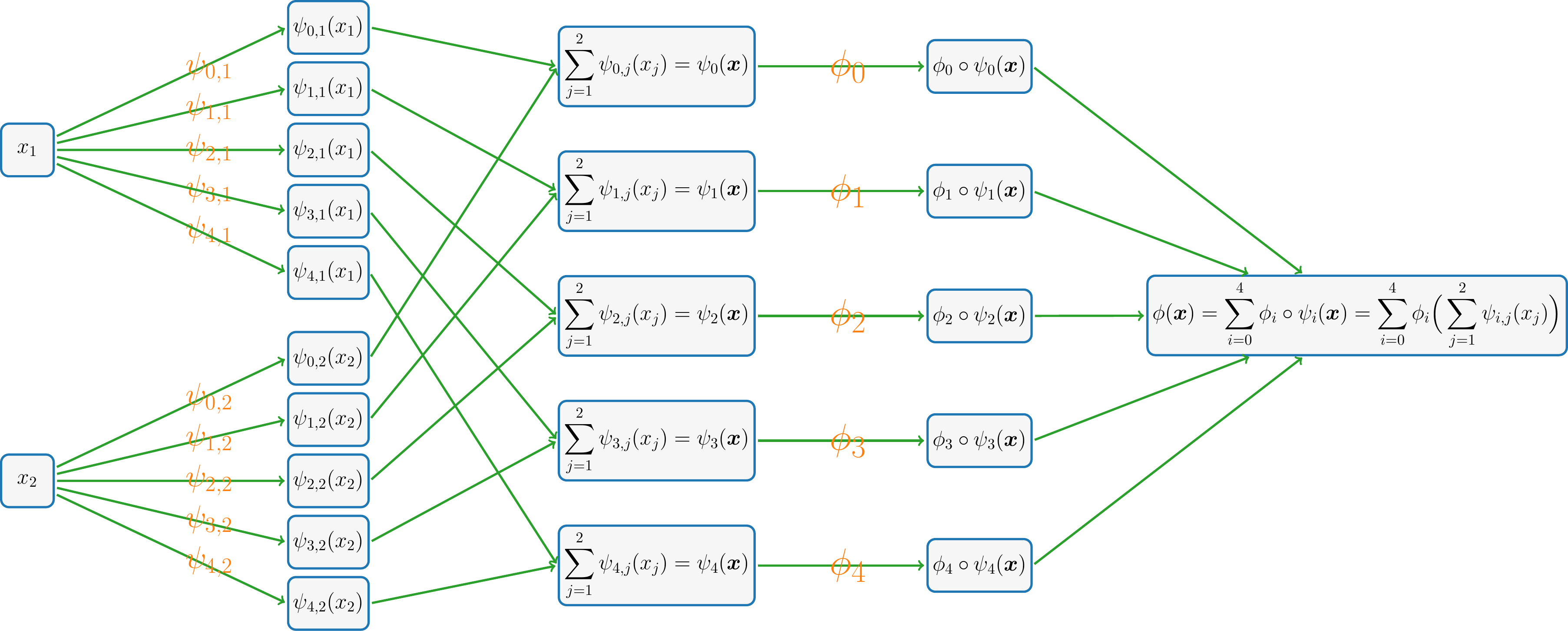}
	\caption{An illustration of the target network realizing  $\phi$ for any $\bmx\in [a,b]^d$ in the case of $d=2$. This network contains $(2d+1)d+(2d+1)=(d+1)(2d+1)$ sub-networks that realize $\psi_{i,j}$ and $\phi_i$ for $i=0,1,\cdots,2d$ and $j=1,2,\cdots,d$.}
	\label{fig:phi:decomposition}
\end{figure}

Fix $\bmx\in [a,b]^d$ and $i\in\{0,1,\cdots,2d\}$.
Recall that $h_i(\bmx)\in [\tfrac{1}{4},\tfrac{3}{4}]$ and 
\begin{equation*}
    |h_i(\bmx)-\psi_i(\bmx)|<\delta\le \tfrac{1}{4},
\end{equation*}
implying $\psi_i(\bmx)\in [0,1]$. Then, by Equation~\eqref{eq:error:g:1} (set $z_1=h_i(\bmx)$ and $z_2=\psi_i(\bmx)$ therein), we have
\begin{equation*}
	\Big|  g_i\circ  h_i(\bmx)- g_i\circ \psi_i(\bmx)\Big|
	=\Big|  g_i\big(h_i(\bmx)\big)- g_i\big(\psi_i(\bmx)\big)\Big|
	<\varepsilon/(4d+2).
\end{equation*}
By Equation~\eqref{eq:error:g:2} (set $z=\psi_i(\bmx)\in [0,1]$ therein), we have
\begin{equation*}
	\Big|  g_i\circ  \psi_i(\bmx)- \phi_i\circ \psi_i(\bmx)\Big|
	=\Big|  g_i\big(\psi_i(\bmx)\big)- \phi_i\big(\psi_i(\bmx)\big)\Big|
	<\varepsilon/(4d+2).
\end{equation*}
Therefore, for any $\bmx\in [a,b]^d$, we have
\begin{equation*}
	\begin{split}
		\big|f(\bmx)-\phi(\bmx)\big|
		&=\Big|\sum_{i=0}^{2d}g_i\circ  h_i(\bmx)-\sum_{i=0}^{2d}\phi_i\circ\psi_i(\bmx)\Big|
		=\sum_{i=0}^{2d}\Big|g_i\circ  h_i(\bmx)-\phi_i\circ\psi_i(\bmx)\Big|\\
		&\le  \sum_{i=0}^{2d}\bigg(\Big|g_i\circ  h_i(\bmx)-g_i\circ\psi_i(\bmx)\Big|+\Big|g_i\circ  \psi_i(\bmx)-\phi_i\circ\psi_i(\bmx)\Big|\bigg)\\
		&< \sum_{i=0}^{2d}\Big(\varepsilon/(4d+2)+\varepsilon/(4d+2)\Big)=\varepsilon.
	\end{split}
\end{equation*}
It remains to show $\phi$ can be generated by an EUAF network with the desired size.
Recall that, for each $i\in \{0,1,\cdots,2d\}$ and each $j\in \{1,2,\cdots,d\}$,  $\psi_{i,j}$ can be generated by an EUAF network with width $36$, depth $5$, and therefore at most 
\begin{equation*}
	(1\times 36+36)\   +\   (36\times 36+36)\times 4\  
	+\    (36\times 1+1)=5437
\end{equation*} 
nonzero parameters. Hence, for each $i\in \{0,1,\cdots,2d\}$, $\psi_i$, given by $\psi_i(\bmx)=\sum_{j=1}^d \psi_{i,j}(x_j)$, 
can be generated by an EUAF network with width $36d$, depth $5$, and at most $5437d$ nonzero parameters. 

Since $\psi_i(\bmx)\in [0,1]$ for any $\bmx\in [a,b]^d$ and $i=0,1,\cdots,2d$, we have $\sigma\big(\psi_i(\bmx) \big)= \psi_i(\bmx)$ for any $\bmx\in [a,b]^d$. Hence, $\phi_i\circ\psi_i$ can be generated by an EUAF network as visualized in Figure~\ref{fig:phii:circ:psii}.

\begin{figure}[htbp!]
	\centering
	\includegraphics[width=0.677\textwidth]{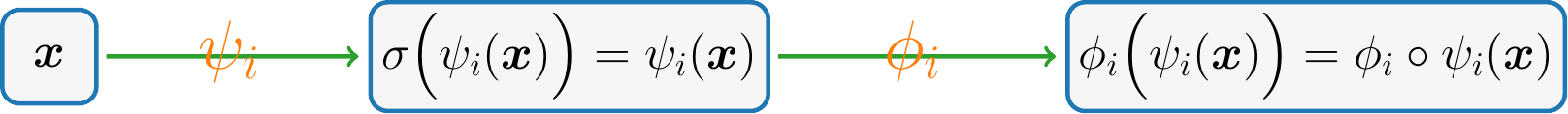}
	\caption{An illustration of the target EUAF network generating  $\phi_i\circ\psi_i(\bmx)$ for any $\bmx\in [a,b]^d$ and $i=0,1,\cdots,2d$.}
	\label{fig:phii:circ:psii}
\end{figure}

Recall that $\phi_i$ can be generated by an EUAF network with width $36$ and depth $5$. Hence, the network generating $\phi_i$ has at most $5437$ nonzero parameters.
As we can see from Figure~\ref{fig:phii:circ:psii}, $\phi_i\circ\psi_i$ can be generated by an EUAF network with width $\max\{36d,36\}=36d$, depth $5+1+5=11$, and at most $5437d+5437=5437(d+1)$ nonzero parameters. This means $\phi=\sum_{i=0}^{2d}\phi_i\circ\psi_i$ can be generated by an EUAF network with width $36d(2d+1)$, depth $11$, and therefore at most $5437(d+1)(2d+1)$ nonzero parameters as desired. So we finish the proof.

\subsection{Proof of Theorem~\ref{thm:main:classification}}
\label{sec:proof:thm:main:classification}

The proof of Theorem~\ref{thm:main:classification} relies on a basic result of real analysis given in the following lemma.
\begin{lemma}\label{lem:continuous:indicator}
Suppose $A,B\subset \R^d $ are two disjoint bounded closed sets. Then, there exists a continuous function $f\in C(\R^d)$ such that
	$f(\bmx)=1$ for any $\bmx\in A$ and $f(\bmy)=0$ for any $\bmy\in B$.
\end{lemma}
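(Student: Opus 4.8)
The plan is to give the standard explicit construction via distance functions, which is essentially Urysohn's lemma specialized to the metric space $\R^d$. For a nonempty set $S\subseteq\R^d$ write $\tn{dist}(\bmx,S)\coloneqq\inf_{\bmy\in S}\|\bmx-\bmy\|_2$. The two degenerate cases are trivial: if $A=\varnothing$ take $f\equiv 0$ and if $B=\varnothing$ take $f\equiv 1$, so we may assume both sets are nonempty. The first step is to record two elementary facts about $\tn{dist}(\cdot,S)$. It is $1$-Lipschitz, since $|\tn{dist}(\bmx,S)-\tn{dist}(\bmx',S)|\le\|\bmx-\bmx'\|_2$ by the triangle inequality, hence continuous on $\R^d$. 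And, when $S$ is closed, $\tn{dist}(\bmx,S)=0$ if and only if $\bmx\in S$: the nontrivial direction takes a minimizing sequence in $S$ converging to $\bmx$ and invokes closedness (and here boundedness even gives compactness, so the infimum is attained, though we do not need this).

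The second step uses disjointness: for every $\bmx\in\R^d$ we have $\tn{dist}(\bmx,A)+\tn{dist}(\bmx,B)>0$, because if the sum vanished then both terms would vanish, forcing $\bmx\in A\cap B=\varnothing$, a contradiction. Therefore I would define
\[
	f(\bmx)\coloneqq\frac{\tn{dist}(\bmx,B)}{\tn{dist}(\bmx,A)+\tn{dist}(\bmx,B)}\qquad\tn{for any }\bmx\in\R^d,
\]
which is continuous on $\R^d$ as a quotient of continuous functions whose denominator is everywhere positive.

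Finally I would verify the boundary values. If $\bmx\in A$ then $\tn{dist}(\bmx,A)=0$ while $\tn{dist}(\bmx,B)>0$ (since $\bmx\notin B$ by disjointness and $B$ closed), so $f(\bmx)=\tn{dist}(\bmx,B)/\tn{dist}(\bmx,B)=1$; symmetrically, if $\bmy\in B$ then $\tn{dist}(\bmy,B)=0$ and $f(\bmy)=0$. This yields the desired $f\in C(\R^d)$. There is no genuine obstacle in this argument; the only point that requires care is the strict positivity of the denominator, which is exactly where the closedness and disjointness hypotheses enter. Note that boundedness is not actually needed for the lemma, although it is available and could be used to replace the defining infima by attained minima.
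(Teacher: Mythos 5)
Your construction is exactly the one used in the paper: $f(\bmx)=\tn{dist}(\bmx,B)/\big(\tn{dist}(\bmx,A)+\tn{dist}(\bmx,B)\big)$, with continuity and the strict positivity of the denominator justified the same way. The proposal is correct and matches the paper's proof, with only minor extra detail (the Lipschitz bound and the empty-set cases) added.
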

\begin{proof}
	Define $\tn{dist}(\bmx,A)=\inf\{\|\bmx-\bmy\|_2:\bmy\in A\}$ and $\tn{dist}(\bmx,B)=\inf\{\|\bmx-\bmy\|_2:\bmy\in B\}$ for any $\bmx\in\R^d$. It is easy to verify that $\tn{dist}(\bmx,A)$ and $\tn{dist}(\bmx,B)$ are continuous in $\bmx\in\R^d$. 
	Since $A,B\in\R^d $ are two disjoint bounded closed subsets, we have $\tn{dist}(\bmx,A)+\tn{dist}(\bmx,B)>0$ for any $\bmx\in \R^d$. 
	Finally, define 
	\begin{equation*}
		f(\bmx)\coloneqq\frac{\tn{dist}(\bmx,B)}{\tn{dist}(\bmx,A)+\tn{dist}(\bmx,B)}\quad \tn{for any $\bmx\in\R^d$.}
	\end{equation*} 
	Then $f$ meets the requirements. So we finish the proof.
\end{proof}

With Lemma~\ref{lem:continuous:indicator}, we can prove Theorem~\ref{thm:main:classification}.
\begin{proof}[Proof of Theorem~\ref{thm:main:classification}]
For any $f=\sum_{j=1}^J r_j\cdot \one_{E_j}\in \scrC_d(E_1,E_2,\cdots,E_J)$, our goal is to construct a function $\phi$ generated by a $\sigma$-activated network  such that $\phi(\bmx)=f(\bmx)$ for any $\bmx\in \bigcup_{j=1}^J E_j$, where $E_1,E_2,\cdots,E_J$ are pairwise disjoint bounded closed subsets of $\R^d$.
Define  $E\coloneqq \bigcup_{j=1}^J E_j$ and choose $a,b\in\R$ properly such that $E\subseteq [a,b]^d$. 
	
	 For each $j\in \{1,2,\cdots,J\}$,
	$E_j$ and $\tildeE_j\coloneqq  E \big\backslash E_j$ are two disjoint bounded closed subsets. Then, for each $j$, by Lemma~\ref{lem:continuous:indicator}, there exists $g_j\in C(\R^d)$ such that $g_j(\bmx)=1$ for any $\bmx\in E_j$ and $g_j(\bmy)=0$ for any $\bmy\in \tildeE_j=E\backslash  E_j$.
	By defining $g\coloneqq \sum_{j=1}^{J}r_j\cdot g_j \in C(\R^d)$, we have $g(\bmx)=\sum_{j=1}^J r_j\cdot \one_{E_j}(\bmx)=f(\bmx)$ for any $\bmx \in E=\bigcup_{j=1}^J E_j$.

	Since $r_1,r_2,\cdots,r_J$ are rational numbers and $g:[a,b]^d\to \R$ is continuous, there exist $n_1,n_2\in \Z\backslash \{0\}$ such that 
	\begin{itemize}
	    \item $n_1\cdot r_j+n_2\in\N^+$ for $j=1,2,\cdots,J$;
	    \item $n_1\cdot   g(\bmx)+n_2\ge 0$ for any $\bmx\in [a,b]^d$.
	\end{itemize}
	
	By applying Theorem~\ref{thm:main} to $2(n_1\cdot  g+n_2)+1\in C([a,b]^d)$, there exists a function $\phi_1$ generated by an EUAF network with width $36d(2d+1)$, depth $11$, and at most $5437(d+1)(2d+1)$ nonzero parameters such that
	\begin{equation}\label{eq:fJ12:phi}
		\Big|2\big(n_1\cdot  g(\bmx)+n_2\big)+1-\phi_1(\bmx)\Big|\le 1/2\quad \tn{for any $\bmx\in [a,b]^d$.}
	\end{equation}
It follows that
	\begin{equation*}
		\Big|2\big(n_1\cdot  \sum_{j=1}^J r_j\cdot \one_{E_j}(\bmx)+n_2\big)+1-\phi_1(\bmx)\Big|\le 1/2\quad \tn{for any $\bmx\in E=\bigcup_{j=1}^J  E_j$.}
	\end{equation*}

	Since $E_1,E_2,\cdots,E_J$ are pairwise disjoint, we have
	\begin{equation}\label{eq:odd:half:ball}
			\Big|2(n_1\cdot  r_j +n_2)+1-\phi_1(\bmx)\Big|\le 1/2\quad \tn{for any $\bmx\in E_j$ and each $j\in\{1,2,\cdots,J\}$.}
	\end{equation}
Define $\phi_2(x)=x+1/2- \sigma(x+3/2)$ for any $x\in \R$. See Figure~\ref{fig:phi2} for an illustration.

\begin{figure}[htbp!]
	\centering
	\includegraphics[width=0.63045\textwidth]{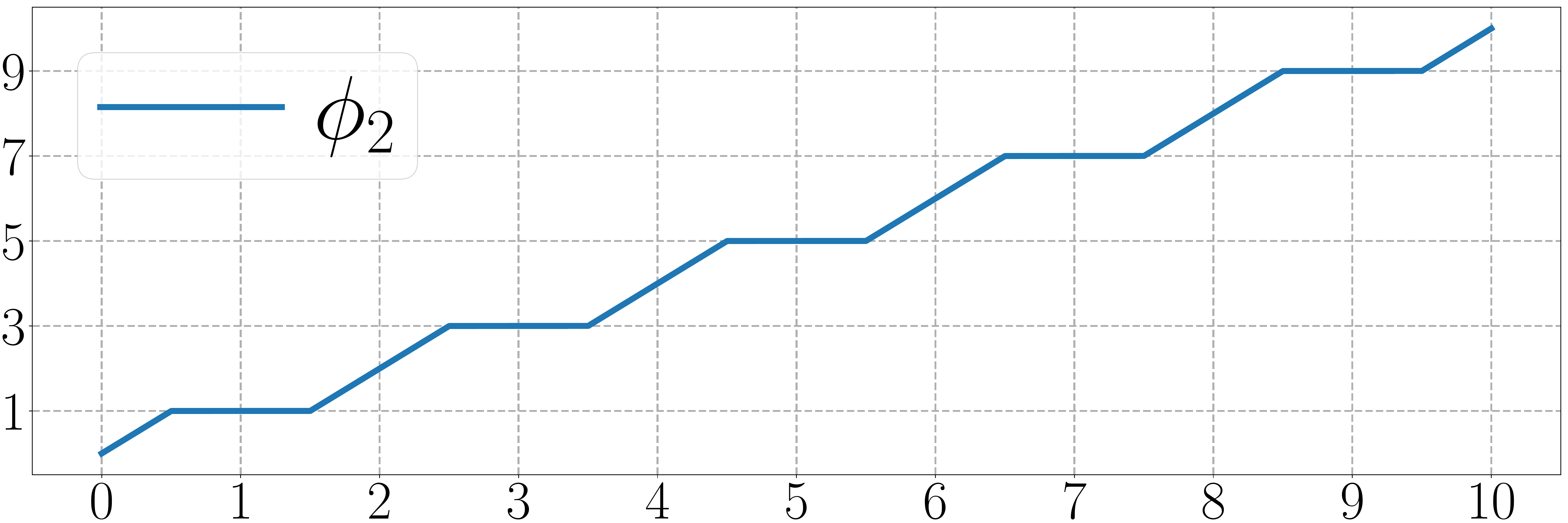}
	\caption{An illustration of $\phi_2$ on $[0,10]$.}
	\label{fig:phi2}
\end{figure}

It is easy to verify that 
\begin{equation}\label{eq:phi2:2k+1}
    \phi_2(y)=2k+1\quad \tn{ for any $y$ and $k\in \N^+$ with $|2k+1-y|\le 1/2$.}
\end{equation}
Then, by Equations~\eqref{eq:odd:half:ball} and $\eqref{eq:phi2:2k+1}$ (set $y=\phi_1(\bmx)$ and $k=n_1\cdot r_j+n_2$ therein), we have
\begin{equation*}
    \phi_2\circ\phi_1(\bmx)=\phi_2(y)=2k+1=2(n_1\cdot  r_j+n_2)+1 
\end{equation*}
for any $\bmx\in E_j$ and any $j\in\{1,2,\cdots,J\}$, which implies 
\begin{equation*}
	\frac{\phi_2\circ\phi_1(\bmx)-2n_2-1}{2n_1}=r_j\quad \tn{for any $\bmx\in E_j$ and any $j\in\{1,2,\cdots,J\}$}.
\end{equation*}
Define 
\begin{equation*}
    \phi(\bmx)\coloneqq \frac{\phi_2\circ\phi_1(\bmx)-2n_2-1}{2n_1}\quad \tn{ for any $\bmx\in[a,b]^d$. }
\end{equation*}
Clearly, we have 
$\phi(\bmx)=r_j$ for any $\bmx\in E_j$ and each $j\in\{1,2,\cdots,J\}$, which implies 
\begin{equation*}
    \phi(\bmx)=\sum_{j=1}^J r_j\cdot \one_{E_j}(\bmx)=f(\bmx)\quad \tn{ for any $\bmx\in E=\bigcup_{j=1}^J E_j$.}
\end{equation*}

It remains to show that $\phi$ can be generated by an EUAF network with the desired size.
Set $M=2\|n_1 g+n_2\|_{L^\infty([a,b]^d)}+3/2>0$. By Equation~\eqref{eq:fJ12:phi} and  the fact $n_1\cdot g(\bmx)+n_2\ge 0$ for any $\bmx\in [a,b]^d$,  we have 
\begin{equation*}
    \phi_1(\bmx)\in \Big[1/2,\   2\|n_1 g+n_2\|_{L^\infty([a,b]^d)}+1+1/2\Big]\subseteq [0,M]\quad \tn{for any $\bmx\in [a,b]^d$.}
\end{equation*}
Then, for any $\bmx\in [a,b]^d$, we have 
\begin{equation*}
    \begin{split}
        	\phi_2\circ\phi_1(\bmx)
        	&=\phi_1(\bmx)+1/2 -\sigma\big(\phi_1(\bmx)+3/2\big)\\
        	&= M\sigma\big(\phi_1(\bmx)/M\big)+1/2 -\sigma\big(\phi_1(\bmx)+3/2\big).
    \end{split}
\end{equation*}
It follows that
\begin{equation*}
	\phi(\bmx)=\frac{\phi_2\circ\phi_1(\bmx)-2n_2-1}{2n_1}= \frac{M\sigma\big(\phi_1(\bmx)/M\big) -\sigma\big(\phi_1(\bmx)+3/2\big)-2n_2-1/2}{2n_1},
\end{equation*}
for any $\bmx\in [a,b]^d$. 
That means the network realizing $\phi$ has just one more hidden layer with $2$ neurons, compared to the network realizing $\phi_1$.
Recall that $\phi_1$ can be generated by an EUAF network with width $36d(2d+1)$, depth $11$, and at most $5437(d+1)(2d+1)$ nonzero parameters. Therefore, $\phi$, limited on $[a,b]^d$, can be generated by an EUAF network with width $36d(2d+1)$, depth $12$, and at most 
\begin{equation*}
	5437(d+1)(2d+1)\  +\   \underbrace{2\times 36d(2d+1) +2\   +\   2\times  1+1}_{\tn{all possible new parameters}} \le 5509(d+1)(2d+1)
\end{equation*}
nonzero parameters. So we finish the proof. 
\end{proof}

\section{Proof of Theorem~\ref{thm:main:d=1}}
\label{sec:proof:main:d=1}

To prove Theorem~\ref{thm:main:d=1}, we need to introduce two auxiliary theorems, Theorems~\ref{thm:main:d=1:half} and \ref{thm:main:d=1:small:region}, which serve as two important intermediate steps.
\begin{theorem}\label{thm:main:d=1:half}
    Let $f\in C([0,1])$ be a continuous function.
	Given any $\varepsilon>0$, if $K$ is a positive integer satisfying
	\begin{equation}\label{eq:f:error:1/K}
		|f(x_1)-f(x_2)|<\varepsilon/2\quad \tn{for any $x_1,x_2\in[0,1]$ with $|x_1-x_2|<1/K$,}
	\end{equation}
	then there exists a function $\phi$ generated by an EUAF network  with width $2$ and depth $3$ such that 
	$\|\phi\|_{L^\infty([0,1])}\le \|f\|_{L^\infty([0,1])}+1$ and
	\begin{equation*}
		|\phi(x)-f(x)|<\varepsilon\quad \tn{for any}\  x\in \bigcup_{k=0}^{K-1}\big[\tfrac{2k}{2K},\tfrac{2k+1}{2K}\big].
	\end{equation*}
\end{theorem}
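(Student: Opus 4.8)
The plan is to reduce the statement to the point-fitting Proposition~\ref{prop:dense} by realizing an \emph{exact} staircase on the left-half subintervals inside a depth-$3$ EUAF network. The key observation is that for $x\in[\tfrac{2k}{2K},\tfrac{2k+1}{2K}]=[\tfrac{k}{K},\tfrac{k}{K}+\tfrac{1}{2K}]$ we have $2Kx\in[2k,2k+1]$, so $\sigma_1(2Kx)=2Kx-2k$ and therefore
\begin{equation*}
	\tildeg(x)\coloneqq Kx-\tfrac12\,\sigma_1(2Kx)=k\qquad\tn{for all }x\in[\tfrac{2k}{2K},\tfrac{2k+1}{2K}],\ k=0,1,\dots,K-1.
\end{equation*}
(This is precisely why the theorem only claims approximation on the left halves: on the right halves $\sigma_1(2Kx)$ is decreasing and $\tildeg$ is no longer constant.) Set $m\coloneqq\|f\|_{L^\infty([0,1])}$ and let $\alpha(t)\coloneqq\tfrac{t+m+1}{2(m+1)}$ be the affine bijection $[-m-1,m+1]\to[0,1]$, with inverse $\alpha^{-1}(s)=2(m+1)s-m-1$. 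Put $x_k\coloneqq\tfrac{k}{K}$ and $y_k\coloneqq\alpha(f(x_k))\in[0,1]$. By Proposition~\ref{prop:dense} there is $w_0\in\R$ — which by evenness of $\sigma_1$ we may take $w_0\ge0$ — with $\big|\sigma_1(\tfrac{w_0}{\pi+1+k})-y_k\big|<\tfrac{\varepsilon}{4(m+1)}$ for $k=0,\dots,K-1$. I then take $\phi(x)\coloneqq\alpha^{-1}\big(\sigma_1(\tfrac{w_0}{\pi+1+\tildeg(x)})\big)=2(m+1)\,\sigma_1\big(\tfrac{w_0}{\pi+1+\tildeg(x)}\big)-m-1$.

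Next I would verify that $\phi$ is realized by an EUAF network of width $2$ and depth $3$; the only nonobvious ingredient is the reciprocal $t\mapsto\tfrac{1}{\pi+1+t}$. Since $\sigma(y)=\sigma_2(y)=\tfrac{y}{1-y}$ for $y<0$, we have the identity $\sigma(y)+1=\tfrac{1}{1-y}$ for $y<0$, and because $\tildeg(x)\ge0$ on all of $[0,1]$ the input $y=-\pi-\tildeg(x)\le-\pi<0$ yields $\sigma(-\pi-\tildeg(x))+1=\tfrac{1}{\pi+1+\tildeg(x)}$. Concretely: the first hidden layer uses $2$ neurons and outputs $[\sigma(2Kx),\sigma(x)]=[\sigma_1(2Kx),x]$ (valid because $x\in[0,1]$), whence $\tildeg(x)=Kx-\tfrac12\sigma_1(2Kx)$ is an affine function of it; the second hidden layer uses $1$ neuron and outputs $\sigma(-\pi-\tildeg(x))$, affine in the first layer; the third hidden layer uses $1$ neuron and outputs $\sigma\big(w_0(\sigma(-\pi-\tildeg(x))+1)\big)=\sigma\big(\tfrac{w_0}{\pi+1+\tildeg(x)}\big)$, which equals $\sigma_1\big(\tfrac{w_0}{\pi+1+\tildeg(x)}\big)$ since $w_0\ge0$ makes the argument nonnegative; and the output affine map applies $\alpha^{-1}$. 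Hence $\phi$ is generated by an EUAF network with width $2$ and depth $3$.

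It remains to establish the two estimates, and both are immediate. For the sup-norm bound, $\sigma_1$ takes values in $[0,1]$, so $\sigma_1(\tfrac{w_0}{\pi+1+\tildeg(x)})\in[0,1]$ for \emph{every} $x\in[0,1]$, and $\alpha^{-1}([0,1])=[-m-1,m+1]$; hence $\|\phi\|_{L^\infty([0,1])}\le m+1=\|f\|_{L^\infty([0,1])}+1$. For the approximation bound, fix $k$ and $x\in[\tfrac{2k}{2K},\tfrac{2k+1}{2K}]$; then $\tildeg(x)=k$, so $\phi(x)=\alpha^{-1}(\sigma_1(\tfrac{w_0}{\pi+1+k}))$, and since $\alpha^{-1}$ is Lipschitz with constant $2(m+1)$,
\begin{equation*}
	|\phi(x)-f(x_k)|=\big|\alpha^{-1}(\sigma_1(\tfrac{w_0}{\pi+1+k}))-\alpha^{-1}(y_k)\big|\le 2(m+1)\cdot\tfrac{\varepsilon}{4(m+1)}=\tfrac{\varepsilon}{2}.
\end{equation*}
Since $|x-x_k|\le\tfrac{1}{2K}<\tfrac1K$, hypothesis \eqref{eq:f:error:1/K} gives $|f(x)-f(x_k)|<\varepsilon/2$, and the triangle inequality yields $|\phi(x)-f(x)|<\varepsilon$ for all $x\in\bigcup_{k=0}^{K-1}[\tfrac{2k}{2K},\tfrac{2k+1}{2K}]$.

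The substantive content is entirely concentrated in Proposition~\ref{prop:dense} (the dense-winding/Diophantine point-fitting lemma), which I treat as given; granting it, the proof above is a short explicit construction. The two genuine tricks are the exact staircase identity $Kx-\tfrac12\sigma_1(2Kx)=k$ on the left halves — the source of the ``left half'' hypothesis — and the softsign identity $\sigma(y)+1=\tfrac1{1-y}$, which lets $t\mapsto\tfrac{w_0}{\pi+1+t}$ be implemented with only two more $\sigma$-layers. The part needing the most care is the bookkeeping: checking at every layer which branch ($\sigma_1$ versus $\sigma_2$) of $\sigma$ is active, and confirming the $2\times3$ width/depth budget is exactly met.
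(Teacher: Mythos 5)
Your proposal is correct and follows essentially the same route as the paper: the same staircase identity (your $Kx-\tfrac12\sigma_1(2Kx)$ is the paper's $\psi(2Kx)/2$ up to an index shift), the same reduction to the point-fitting Proposition~\ref{prop:dense}, the same softsign identity $\sigma(y)+1=\tfrac{1}{1-y}$ for $y<0$ to implement the reciprocal, and the same width-$2$, depth-$3$ architecture with the affine rescaling by $\|f\|_{L^\infty}+1$. The only cosmetic deviation is how you force the final $\sigma$-argument to be nonnegative: you take $w_0\ge 0$ by evenness of $\sigma_1$, whereas the paper adds an even integer shift $2m_0\ge|w_0|$ and invokes periodicity; both are valid.
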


\begin{theorem}\label{thm:main:d=1:small:region}
	Let $f\in C([0,1])$ be a continuous function.  Then, for any  $\varepsilon>0$, 
	there exists a function $\phi$ generated by an EUAF network  with width $36$ and depth $5$ such that
	\begin{equation*}
		|\phi(x)-f(x)|<\varepsilon\quad \tn{for any $x\in [0,\tfrac{9}{10}]$. }
	\end{equation*}
\end{theorem}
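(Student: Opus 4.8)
The plan is to build $\phi$ by splicing two copies of the network from Theorem~\ref{thm:main:d=1:half}, one covering the ``even'' and one the ``odd'' halves of a fine equipartition of $[0,1]$. Since we only need the estimate on $[0,\tfrac9{10}]\subsetneq(0,1)$, I would first conjugate away the boundary: fix an affine bijection $L$ with $L\big([0,\tfrac9{10}]\big)=[\alpha,\beta]$ for some fixed $0<\alpha<\beta<1$, let $\Phi\in C([0,1])$ be a continuous extension to $[0,1]$ of the function $y\mapsto f(L^{-1}(y))$ on $[\alpha,\beta]$, and work with $\Phi$ (at the end everything is precomposed with $L$, which is absorbed into the first affine layer). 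Then choose $K\in\N^+$ so large that $|\Phi(y_1)-\Phi(y_2)|<\varepsilon/6$ whenever $|y_1-y_2|<1/K$, and also $\tfrac1{2K}<\alpha$ and $1-\tfrac1{2K}>\beta$; this last pair of conditions forces $[\alpha,\beta]$ to meet only ``interior'' dyadic intervals $\big[\tfrac m{2K},\tfrac{m+1}{2K}\big]$, which is exactly the slack that $\tfrac9{10}$ (rather than $1$) is there to provide.

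Applying Theorem~\ref{thm:main:d=1:half} to $\Phi$ with tolerance $\varepsilon/6$ yields a width-$2$, depth-$3$ EUAF network $\phi_1$ with $\|\phi_1\|_{L^\infty([0,1])}\le\|\Phi\|_{L^\infty([0,1])}+1$ and $|\phi_1-\Phi|<\varepsilon/6$ on $A:=\bigcup_{k=0}^{K-1}\big[\tfrac{2k}{2K},\tfrac{2k+1}{2K}\big]$; applying it to $y\mapsto\Phi(1-y)$ and then precomposing the result with $y\mapsto1-y$ yields a width-$2$, depth-$3$ network $\phi_2$ with the same sup-norm bound and $|\phi_2-\Phi|<\varepsilon/6$ on $1-A=B:=\bigcup_{k=0}^{K-1}\big[\tfrac{2k+1}{2K},\tfrac{2k+2}{2K}\big]$ (no extension of $\Phi$ outside $[0,1]$ is needed). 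Since $A\cup B\supseteq[0,1]$ and each shared endpoint $\tfrac j{2K}$ lies in both an $A$-interval and a $B$-interval, \emph{both} $\phi_1$ and $\phi_2$ are within $\varepsilon/6$ of $\Phi$ at every such endpoint; by uniform continuity of $\phi_1,\phi_2,\Phi$ there is $\eta\in(0,\tfrac1{4K})$ with $|\phi_i-\Phi|<\varepsilon/2$ on the union of the $\eta$-neighborhoods of these endpoints.

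The splice uses a continuous gate $g\colon\R\to[0,1]$ of period $\tfrac1K$, equal to $1$ on the ``cores'' of the $A$-intervals (points at distance $>\eta$ from both endpoints), equal to $0$ on the cores of the $B$-intervals, and affine on the $\eta$-windows about the endpoints --- concretely $g=\tn{clamp}_{[0,1]}\big(a\,\sigma_1(2Ky+\tfrac12)+b\big)$ for suitable $a>1$, $b$, needing no endpoint correction because $[\alpha,\beta]$ avoids the extreme dyadic intervals. Then $g\phi_1+(1-g)\phi_2$ equals $\phi_1$ on the $A$-cores, $\phi_2$ on the $B$-cores, and on each $\eta$-window is a convex combination of two values within $\varepsilon/2$ of $\Phi$; hence $\big|g\phi_1+(1-g)\phi_2-\Phi\big|<\varepsilon/2$ on $[\alpha,\beta]$, so $\phi:=\big(\phi_2+g\cdot(\phi_1-\phi_2)\big)\circ L$ satisfies $|\phi-f|<\varepsilon$ on $[0,\tfrac9{10}]$. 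To realize $\phi$ within width $36$ and depth $5$ I would use three facts about $\sigma$: (i) $2M\sigma(\tfrac{t+M}{2M})-M=t$ on $[-M,M]$, so bounded quantities pass through a layer via one neuron; (ii) $|t|=M\sigma(\tfrac tM+2)$ on $[-M,M]$, so $\tn{ReLU}$ and the clamp to $[0,1]$ cost only a few layers, enough to produce $g$ in depth $3$; and (iii) since the negative branch of $\sigma$ \emph{is} the softsign function, the construction of $x^2$ and then $xy$ from the softsign branch in the proof of Lemma~\ref{lem:sigma:by:varrho:net} gives a multiplication sub-network $\Gamma$ of width $9$, depth $2$ with $\Gamma(u,v)=uv$ on any fixed bounded box. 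Running $\phi_1\circ L$, $\phi_2\circ L$, $g\circ L$ in parallel takes $3$ layers of total width at most about $10$; feeding $g$ and $\phi_1-\phi_2$ (bounded by $2\|\Phi\|_{L^\infty}+2$, by the bound in Theorem~\ref{thm:main:d=1:half}) into $\Gamma$ takes $2$ more; the output layer returns $\phi_2+\Gamma(g,\phi_1-\phi_2)$ --- so width $36$, depth $5$ suffice with room to spare.

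The main obstacle is the splicing step: the transition scale $\eta$ is only pinned down \emph{after} $\phi_1,\phi_2$ are fixed, so the gate must be designed last; and one must then fit $\phi_1\circ L$, $\phi_2\circ L$, $g\circ L$ and the multiplication gadget together inside the tight depth-$5$ budget while checking that all the pass-through and ``$\sigma_1$-from-$\sigma$'' reductions stay within width $36$. Everything else --- the reflection reduction, the uniform-continuity estimates, and the bookkeeping of the $\varepsilon/6$'s --- is routine.
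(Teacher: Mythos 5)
Your proposal is correct, but it takes a genuinely different route from the paper's. The paper approximates $f$ on $[0,\tfrac{9}{10}]$ by taking \emph{four} shifted copies $f_i(x)=f(x-\tfrac{i}{4K})$, applying Theorem~\ref{thm:main:d=1:half} to each, and blending them with the canonical partition of unity $\sum_{i=1}^{4}\psi(2Kx+\tfrac{i}{2})=1$ built from $\psi(x)=\sigma(x+1-\sigma(x+1))$; the key point there is that each weight $\psi(2Kx+\tfrac{i}{2})$ vanishes identically on the set where the corresponding $\phi_i$ is uncontrolled, so the error estimate is purely local and needs no information about the moduli of continuity of the constructed $\phi_i$. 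You instead use \emph{two} copies (one obtained by the reflection $y\mapsto 1-y$, which indeed sends the ``even'' half onto the ``odd'' half), conjugate $[0,\tfrac{9}{10}]$ into the interior to discard the two boundary dyadic intervals, and splice with a clamped triangular-wave gate whose transition width $\eta$ is chosen \emph{after} $\phi_1,\phi_2$ are fixed, exploiting that both approximants are accurate at the shared endpoints and hence, by continuity, on small windows around them. This buys a leaner network (your honest count is width about $10$--$12$, well under $36$, with the same depth $5$, using the pass-through identity, the $|t|$-from-$\sigma$ trick, and a width-$9$ depth-$2$ multiplication gadget — which for the EUAF $\sigma$ is exactly Lemma~\ref{lem:xy}, a cleaner citation than the $\tildesigma$ analogue inside Lemma~\ref{lem:sigma:by:varrho:net}), at the cost of an a posteriori tuning of the gate and the boundary conjugation, both of which only affect parameters, not the architecture, so the fixed-architecture character of the statement is preserved. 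Two harmless bookkeeping points: to get error $\varepsilon/6$ from Theorem~\ref{thm:main:d=1:half} you should choose $K$ with modulus bound $\varepsilon/12$ rather than $\varepsilon/6$ (your slack absorbs this), and the $\phi_2$ pass-through across the two multiplication layers costs one extra neuron per layer, which your width budget easily covers.
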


To prove Theorem~\ref{thm:main:d=1:half}, we only need to care about the approximation on one ``half'' of $[0,1]$. It is not necessary to care about the approximation on the other ``half'' of $[0,1]$. Such an idea is similar to the ``trifling region'' in \cite{shijun3,shijun:thesis}. As we shall see later, the proof of Theorem~\ref{thm:main:d=1:half} can eventually be converted to a point-fitting problem, which can be solved by applying Proposition~\ref{prop:dense}.

The key idea to prove Theorem~\ref{thm:main:d=1:small:region} is to  apply Theorem~\ref{thm:main:d=1:half} to several horizontally shifted variants of the target function. Then a good approximation can be constructed via the combinations and multiplications of these variants, similar to the idea of \cite{shijun3,shijun:thesis} to obtain an error estimation with the $L^\infty$-norm from a result with the $L^p$-norm for $p\in[1,\infty)$.

The proofs of Theorems~\ref{thm:main:d=1:half} and \ref{thm:main:d=1:small:region}  will be presented in Sections~\ref{sec:proof:main:d=1:half} and \ref{sec:proof:main:d=1:small:region}, respectively. Let us first prove Theorem~\ref{thm:main:d=1} by assuming Theorem~\ref{thm:main:d=1:small:region} is true.
\begin{proof}[Proof of Theorem~\ref{thm:main:d=1}]
	Define a linear function $\calL$ by 
	$\calL(x)=a+\tfrac{10(b-a)}{9}x$ for any $x\in [0,\tfrac{9}{10}]$.
	Then $\calL$ is a bijection  from $[0,\tfrac{9}{10}]$ to $[a,b]$. It follows that $f\circ\calL$ is a continuous function on $[0,\tfrac{9}{10}]$. By Theorem~\ref{thm:main:d=1:small:region}, there exists  a function $\tildephi$ generated by an EUAF network with width $36$ and depth $5$ such that 
	\begin{equation*}
		|f\circ\calL(x)-\tildephi(x)|<\varepsilon\quad\tn{for any $x\in [0,\tfrac{9}{10}]$.}
	\end{equation*}
	
Define  $\widetilde{\calL}(y)=\tfrac{9(y-a)}{10(b-a)}$ for any $y\in [a,b]$. Clearly, it is the inverse of $\calL$, i.e., $\calL\circ\widetilde{\calL}(y)=y$ for any $y\in [a,b]$. Therefore, for any $y\in [a,b]$, we have $x=\widetilde{\calL}(y)\in [0,\tfrac{9}{10}]$, which implies 
\begin{equation*}
	\begin{split}
		|f(y)-\tildephi\circ\widetilde{\calL}(y)|
		&=\big|f\circ\calL\circ\widetilde{\calL}(y)-\tildephi\circ\widetilde{\calL}(y)\big|\\
		&=\Big|f\circ\calL\big(\widetilde{\calL}(y)\big)-\tildephi\big(\widetilde{\calL}(y)\big)\Big|= |f\circ\calL(x)-\tildephi(x)|<\varepsilon.
	\end{split}
\end{equation*}
By defining $\phi\coloneqq \tildephi\circ\widetilde{\calL}$, we have $|f(y)-\phi(y)|<\varepsilon$ for any $y\in [a,b]$ as desired. 

Note that $\tildephi$ can be realized by an EUAF network with width $36$ and depth $5$. We can compose $\widetilde{\calL}$ and the affine linear map of the network $\tildephi$ that connects the input layer and the first hidden layer.
Therefore,
$\phi= \tildephi\circ\widetilde{\calL}$ can also be realized by an EUAF network with width $36$ and depth $5$. So we finish the proof.
\end{proof}

\subsection{Proof of Theorem~\ref{thm:main:d=1:half}}
\label{sec:proof:main:d=1:half}

Partition $[0,1]$ into $2K$ small intervals $\calI_k$ and $\caltildeI_k$ for $k=1,2,\cdots, K$, i.e.,
\begin{equation*}
	\calI_k=\big[\tfrac{2k-2}{2K},\tfrac{2k-1}{2K}\big]\quad 
	\tn{and}\quad \caltildeI_k=\big[\tfrac{2k-1}{2K},\tfrac{2k}{2K}\big].
\end{equation*}
Clearly, $[0,1]=\bigcup_{k=1}^K(\calI_k\cup\caltildeI_k)$.
Let $x_k$ be the right endpoint of $\calI_k$, i.e., $x_k=\tfrac{2k-1}{2K}$ for $k=1,2,\cdots,K$.
 See an illustration of $\calI_k$, $\caltildeI_k$, and $x_k$ in Figure~\ref{fig:calI} for the case $K=5$.

\begin{figure}[H]
	\centering
	\includegraphics[width=0.775\textwidth]{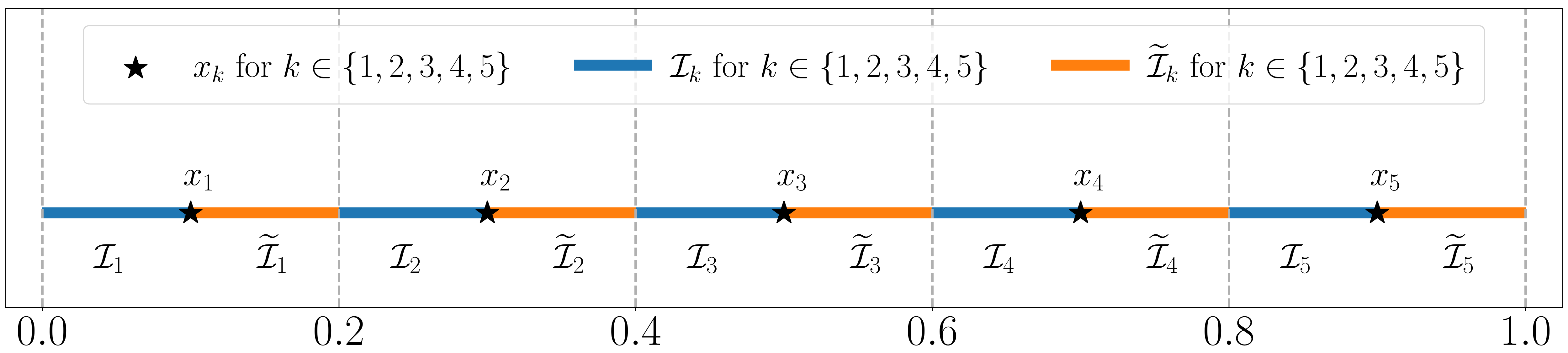}
	\caption{An illustration of $\calI_k$ and $\caltildeI_k$ for $k\in\{1,2,\cdots,K\}$ with $K=5$.}
	\label{fig:calI}
\end{figure}

Our goal is to construct a function $\phi$ generated by an EUAF network with the desired size to approximate $f$ well on 
 $\calI_k$ for  $k=1,2,\cdots,K$. It is not necessary to care about the values of $\phi$ on $\caltildeI_k$ for all $k$. In other words, we only need to care about the approximation on one ``half'' of $[0,1]$,  which is the key for our proof.

Define $\psi(x)\coloneqq  x-\sigma(x)$ for any $x\in\R$, where $\sigma$ is defined in Equation~\eqref{eq:def:sigma}. See Figure~\ref{fig:psi} for an illustration of $\psi$. 

\begin{figure}[htbp!]
	\centering
	\includegraphics[width=0.63045\textwidth]{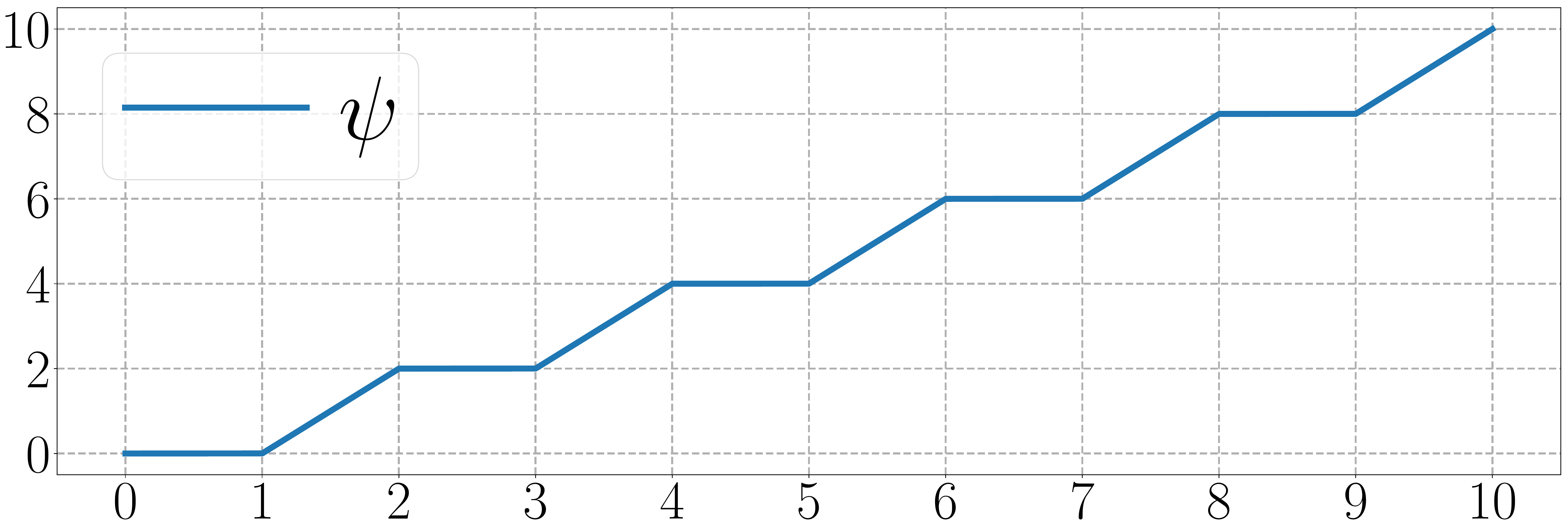}
	\caption{An illustration of $\psi$ on $[0,10]$.}
	\label{fig:psi}
\end{figure}

It is easy to verify that 
\begin{equation*}
	\psi(y)=2k-2\quad \tn{ for any $y\in [2k-2,2k-1]$ and each $k\in \{1,2,\cdots,K\}$.}
\end{equation*}
 It follows that
\begin{equation}\label{eq:psi:return:k}
	\psi(2Kx)/2+1=k\quad \tn{for any $x\in [\tfrac{2k-2}{2K},\tfrac{2k-1}{2K}]=\calI_k$ and each $k\in \{1,2,\cdots,K\}$.}
\end{equation}

Recall that $x_k$ is the right endpoint of $\calI_k$ for $k=1,2,\cdots,K$.
Set $M=\|f\|_{L^\infty([0,1])}+1$ and define
\begin{equation*}
	 \xi_k \coloneqq \tfrac{f(x_k)+M}{2M}\in [0,1]\quad \tn{ for $k=1,2,\cdots,K.$} 
\end{equation*}
 Then $[\xi_1,\xi_2,\cdots,\xi_K]^T$ is in $[0,1]^K$.
By Proposition~\ref{prop:dense}, there exists $w_0\in\R$ such that
\begin{equation*}
	\big|\sigma_1(\tfrac{w_0}{\pi+k})-\xi_k\big| <\varepsilon/(4M)\quad \tn{for $k=1,2,\cdots,K$.}
\end{equation*}

Let $m_0$ be an integer larger than $|w_0|$, e.g.,   set $m_0=\lfloor|w_0|\rfloor+1 $. It is easy to verify that
\begin{equation*}
 \tfrac{w_0}{\pi+k}+2m_0\ge  0\quad \tn{ for any $x\in [0,1]$.}
\end{equation*}
Since $\sigma(x)=\sigma_1(x)$ for any $x\ge 0$ and $\sigma_1$ is periodic with period $2$, we have
\begin{equation*}
	\big|\sigma(\tfrac{w_0}{\pi+k}+2m_0)-\xi_k\big| =\big|\sigma_1(\tfrac{w_0}{\pi+k}+2m_0)-\xi_k\big|= \big|\sigma_1(\tfrac{w_0}{\pi+k})-\xi_k\big|<\varepsilon/(4M),
\end{equation*}
for $k=1,2,\cdots,K$.
It follows that 
\begin{equation}\label{eq:f:error:dense}
\begin{split}
		 \Big|2M\sigma(\tfrac{w_0}{\pi+k}+2m_0)-M-f(x_k)\Big|&=\Big|2M\sigma(\tfrac{w_0}{\pi+k}+2m_0)-M-(2M\xi_k-M)\Big|\\
		 &=	2M\big|\sigma(\tfrac{w_0}{\pi+k}+2m_0)-\xi_k\big|
		  <2M\cdot\tfrac{\varepsilon}{4M}=\varepsilon/2,
\end{split}
\end{equation}
for $k=1,2,\cdots,K$.

The desired $\phi$ is defined as 
\begin{equation*}
	\phi(x)\coloneqq 2M\sigma\big(\tfrac{w_0}{\pi+\psi(2Kx)/2+1}+2m_0\big)-M
	\quad \tn{for any $x\in [0,1]$}.
\end{equation*}
Recall that $m_0\ge |w_0|$ and $\psi(x)\ge 0$ for any $x\ge 0$, which implies 
\begin{equation*}
    \tfrac{w_0}{\pi+\psi(2Kx)/2+1}+2m_0\ge  0\quad \tn{ for any $x\in [0,1]$.}
\end{equation*}
It follows that $\|\phi\|_{L^\infty([0,1])}\le M=\|f\|_{L^\infty([0,1])}+1$ since $0\le \sigma(y)\le 1$ for any $y\ge  0$.

For any $x\in \calI_k$ and each $k\in\{1,2,\cdots,K\}$, by Equation~\eqref{eq:psi:return:k}, we have $\psi(2Kx)/2+1=k$, which implies
\begin{equation*}
	\phi(x)=2M\sigma\big(\tfrac{w_0}{\pi+\psi(2Kx)/2+1}+2m_0\big)-M=2M\sigma\big(\tfrac{w_0}{\pi+k}+2m_0\big)-M.
\end{equation*}
Clearly, 
for any $x\in \calI_k$ and each $k\in\{1,2,\cdots,K\}$,
 we have $|x_k-x|<1/K$. Then,   by Equation~\eqref{eq:f:error:1/K}, we get
 \begin{equation*}
    |f(x_k)-f(x)|<\varepsilon/2\quad \tn{for any $x\in \calI_k$ and each $k\in\{1,2,\cdots,K\}$.}
 \end{equation*}

Therefore, by Equation~\eqref{eq:f:error:dense}, we have
\begin{equation*}
	\begin{split}
		|\phi(x)-f(x)|&= \Big|2M\sigma\big(\tfrac{w_0}{\pi+k}+2m_0\big)-M-f(x)\Big|\\
		&\le \Big|2M\sigma\big(\tfrac{w_0}{\pi+k}+2m_0\big)-M-f(x_k)\Big|+\big|f(x_k)-f(x)\big|
		< \varepsilon/2+\varepsilon/2=\varepsilon
	\end{split}
\end{equation*}
for any $x\in \calI_k$ and each $k\in\{1,2,\cdots,K\}$.
It follows that
\begin{equation*}
	\begin{split}
		|\phi(x)-f(x)|<\varepsilon\quad \tn{for any $x\in \bigcup_{j=1}^K \calI_j=\bigcup_{j=1}^{K}\big[\tfrac{2j-2}{2K},\tfrac{2j-1}{2K}\big]=\bigcup_{k=0}^{K-1}\big[\tfrac{2k}{2K},\tfrac{2k+1}{2K}\big] $}.
	\end{split}
\end{equation*}

It remains to show that $\phi$ can be generated by an EUAF network with the desired size. Observe that 
\begin{equation*}
	\sigma(y)+1=\frac{y}{|y|+1}+1=\frac{y}{-y+1}+1=\frac{1}{-y+1}\quad \tn{for any $y\le 0$.}
\end{equation*}
By setting $y=-\pi-\psi(2Kx)/2\le 0$ for any $x\in[0,1]$, we have 
\begin{equation*}
\begin{split}
    	\frac{1}{\pi+\psi(2Kx)/2+1}= \frac{1}{-y+1}=\sigma(y)+1
    	&=\sigma\big(-\pi-\psi(2Kx)/2
	\big)+1\\
	&=\sigma\Big(-\pi-\big(2Kx-\sigma(2Kx)\big)/2
	\Big)+1\\
	&=\sigma\big(-\pi-Kx+\sigma(2Kx)/2
	\big)+1,
\end{split}
\end{equation*}
where the second-to-last equality comes from $\psi(z)=z-\sigma(z)$ for any $z\in\R$.
Therefore, we get
\begin{equation}\label{eq:phi:d=1:half}
	\begin{split}
		\phi(x) &=  2M\sigma\big(\tfrac{w_0}{\pi+\psi(2Kx)/2+1}+2m_0\big)-M\\
		&=  2M\sigma\Big(w_0\sigma\big(-\pi-Kx+\sigma(2Kx)/2
		\big)+w_0+2m_0\Big)-M.
	\end{split}
\end{equation}

\begin{figure}[htbp!]        
	\centering
	\includegraphics[width=0.8945\textwidth]{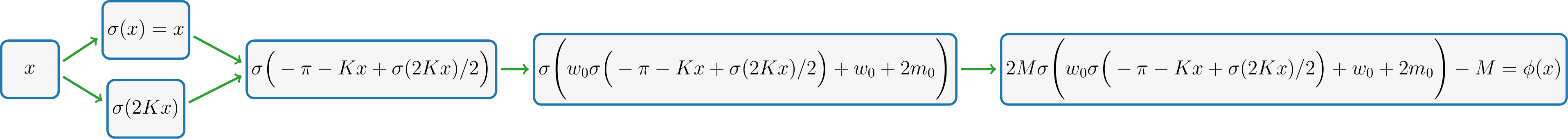}
	\caption{An illustration of the target EUAF network realizing $\phi(x)$ for $x\in [0,1]$ based on Equation~\eqref{eq:phi:d=1:half}. }
	\label{fig:phi:d=1:half}
\end{figure}

Thus, the desired EUAF network realizing $\phi$ is shown in Figure~\ref{fig:phi:d=1:half}.
Clearly, the network in Figure~\ref{fig:phi:d=1:half} has width $2$  and
depth $3$ as desired. It is easy to verify that the network architecture corresponding $\phi$ is independent of the target function $f$ and the desired error $\varepsilon$. That is, we can fix the architecture and only adjust parameters to achieve the desired approximation error. So we finish the proof.

\subsection{Proof of Theorem~\ref{thm:main:d=1:small:region}}
\label{sec:proof:main:d=1:small:region}

The key idea of proving Theorem~\ref{thm:main:d=1:small:region} is to apply Theorem~\ref{thm:main:d=1:half} to several horizontally shifted variants of the target function. Then a good approximation can be expected via combinations and multiplications of these variants. 
Thus, we need to reproduce $f(x,y)=xy$ locally via an EUAF network as shown in the following lemma.

\begin{lemma}\label{lem:xy}
	For any $M>0$, there exists a function $\phi$ generated by an EUAF network with width $9$ and depth $2$ such that
	\begin{equation*}
		\phi(x,y)=xy\quad \tn{for any $x,y\in[-M,M]$.}
	\end{equation*}
\end{lemma}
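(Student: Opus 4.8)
The plan is to reduce multiplication to squaring via polarization: since
\begin{equation*}
  xy=\tfrac{M^2}{2}\Big(\big(\tfrac{x+y}{M}\big)^2-\big(\tfrac{x}{M}\big)^2-\big(\tfrac{y}{M}\big)^2\Big),
\end{equation*}
and since $\tfrac{x}{M},\tfrac{y}{M}\in[-1,1]$ and $\tfrac{x+y}{M}\in[-2,2]$ whenever $x,y\in[-M,M]$, it suffices to exhibit a small fixed EUAF sub-network that reproduces $t\mapsto t^2$ on $[-4,4]$ and then run three copies of it in parallel, fed the affine inputs $\tfrac{x+y}{M}$, $\tfrac{x}{M}$, $\tfrac{y}{M}$.

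For the squaring gadget I would exploit the softsign branch of $\sigma$ exactly as in the proof of Lemma~\ref{lem:sigma:by:varrho:net}: because $\sigma(z)=\tfrac{z}{-z+1}$ for $z\le 0$, we have $\sigma(z)+1=\tfrac{1}{-z+1}$ for all $z\le 0$. Applying this with $z=-t-4$ and $z=-t-5$ gives $\sigma(-t-4)-\sigma(-t-5)=\tfrac{1}{(t+5)(t+6)}$ on $[-4,4]$, and applying it once more with $z=1-\tfrac{90}{(t+5)(t+6)}\le 0$ produces the closed-form identity
\begin{equation*}
  t^2=90\,\sigma\!\big(1-90\sigma(-t-4)+90\sigma(-t-5)\big)-11t+60\qquad\tn{for }t\in[-4,4].
\end{equation*}
The right-hand side is computed by two hidden layers of width $3$: the first holds $\sigma(-t-4)$, $\sigma(-t-5)$, and one neuron transporting $t$ itself (using $\sigma(s)=s$ on $[0,1]$, so that $t=8\sigma(\tfrac{t+4}{8})-4$); the second holds $\sigma\big(1-90\sigma(-t-4)+90\sigma(-t-5)\big)$ and again one neuron transporting $t$; the output layer then takes the indicated affine combination.

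Running three such width-$3$ depth-$2$ blocks side by side — with the first-layer weights adjusted so the blocks receive $\tfrac{x+y}{M}$, $\tfrac{x}{M}$, $\tfrac{y}{M}$, each an affine function of $(x,y)$ — gives an EUAF network of width $9$ and depth $2$, whose single output neuron returns $\tfrac{M^2}{2}$ times the first block's value minus $\tfrac{M^2}{2}$ times each of the other two. This is an affine function of the nine second-layer neurons and equals $xy$ on $[-M,M]^2$ by the polarization identity. I expect the only point needing care to be the width bookkeeping: the squaring gadget genuinely spends one neuron per hidden layer on carrying the linear term $-11t$, so each block uses all three of its allotted neurons and $3+3+3=9$ matches the claimed width exactly; everything else is elementary manipulation of the softsign formula.
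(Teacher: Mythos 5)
Your proposal is correct and follows essentially the same route as the paper: the polarization identity reduces $xy$ to three squarings, and the squaring gadget is the same softsign-branch trick (the paper's Lemma~\ref{lem:xx} does it on $[-1,1]$ with shifts $-x-1,-x-2$ and constant $12$, scaling inputs by $1/(2M)$, while you reuse the $[-4,4]$ identity with shifts $-t-4,-t-5$ and constant $90$ and scale by $1/M$), with the linear term carried by one pass-through neuron exploiting $\sigma(s)=s$ on $[0,1]$. The only slip is cosmetic: the final affine map acts on the six (not nine) second-hidden-layer neurons, two per block, which does not affect the width-$9$, depth-$2$ count.
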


The proof of this lemma is given in Section~\ref{sec:proof:lemma:xy}. Now let us first prove Theorem~\ref{thm:main:d=1:small:region} by assuming this lemma is true.
\begin{proof}[Proof of Theorem~\ref{thm:main:d=1:small:region}]
Set $\widetilde{\varepsilon}=\varepsilon/4$ and
extend $f$ from $[0,1]$ to $[-1,1]$ by defining $f(x)=f(0)$ for any $x\in [-1,0)$. Then $f$ is continuous on $[-1,1]$ and therefore uniformly continuous. Thus, there exists $K=K(f,\varepsilon)\in \N^+$ with $K\ge 10$ such that
\begin{equation*}
	|f(x_1)-f(x_2)|<\widetilde{\varepsilon}/2\quad \tn{for any $x_1,x_2\in[-1,1]$ with $|x_1-x_2|<1/K$.}
\end{equation*}
For $i=1,2,3,4$, define
\begin{equation*}
	f_i(x)\coloneqq f\big(x-\tfrac{i}{4K}\big)\quad\tn{for any $x\in[0,1]$.}
\end{equation*}

For each $i\in\{1,2,3,4\}$ and  any $x_1,x_2\in[0,1]$ with $|x_1-x_2|<1/K$, we have 
$x_1-\tfrac{i}{4K},x_2-\tfrac{i}{4K}\in [-1,1]$ and $\big| (x_1-\tfrac{i}{4K})- (x_2-\tfrac{i}{4K})\big|=|x_1-x_2|<1/K$, which implies
\[|f_i(x_1)-f_i(x_2)|=\big|f(x_1-\tfrac{i}{4K})-f(x_2-\tfrac{i}{4K})\big|<\widetilde{\varepsilon}/2.\]
That is, for $i=1,2,3,4$, we have
\begin{equation*}
	|f_i(x_1)-f_i(x_2)|<\widetilde{\varepsilon}/2\quad \tn{for any $x_1,x_2\in[0,1]$ with $|x_1-x_2|<1/K$,}
\end{equation*}
which means we can apply Theorem~\ref{thm:main:d=1:half} to $f_i\in C([0,1])$.
For each $i\in \{1,2,3,4\}$, by Theorem~\ref{thm:main:d=1:half}, there exists a function  $\phi_i$ generated by an EUAF network with width $2$ and depth $3$ such that 
\begin{equation*}
    \|\phi_i\|_{L^\infty([0,1])}\le \|f_i\|_{L^\infty([0,1])}+1\le \|f\|_{L^\infty([-1,1])}+1
\end{equation*}
and
\begin{equation*}
	\big|\phi_i(x)-f_i(x)\big|<\widetilde{\varepsilon}=\varepsilon/4\quad \tn{for any}\  x\in \bigcup_{k=0}^{K-1}\big[\tfrac{2k}{2K},\tfrac{2k+1}{2K}\big].
\end{equation*}

Define 
\begin{equation*}
	\psi(x)=\sigma\big(x+1-\sigma(x+1)\big)\quad\tn{for any $x\in\R$.}
\end{equation*}
See an illustration of $\psi$  on $[0,2K]$ for $K=5$ in Figure~\ref{fig:psi:0:2K}. 

\begin{figure}[htbp!]
	\centering
	\includegraphics[width=0.6398\textwidth]{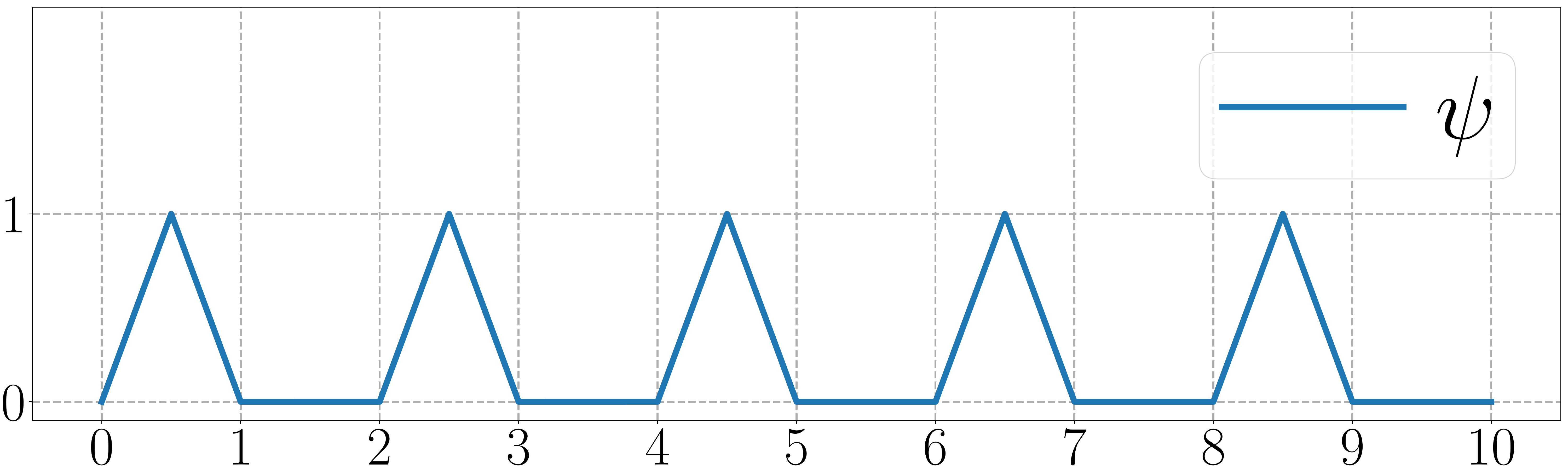}
	\caption{An illustration of $\psi$ on $[0,2K]$ for $K=5$.}
	\label{fig:psi:0:2K}
\end{figure}

Clearly, $0\le \psi(2Kx)\le 1$ for any $x\in [0,1]$, from which we deduce
\begin{equation*}
	\Big|\big(\phi_i(x)-f_i(x)\big)\psi(2Kx)\Big|\le\big|\phi_i(x)-f_i(x)\big|<\varepsilon/4\quad \tn{for any}\  x\in \bigcup_{k=0}^{K-1}\big[\tfrac{2k}{2K},\tfrac{2k+1}{2K}\big].
\end{equation*}
Observe that $\psi(y)=0$ for $y\in \bigcup_{k=0}^{K-1} [{2k+1},{2k+2}]$, which implies 
\begin{equation*}
	\psi(2Kx)=0\quad \tn{for any $x\in \bigcup_{k=0}^{K-1} [\tfrac{2k+1}{2K},\tfrac{2k+2}{2K}]\supseteq [0,1]\Big\backslash \bigcup_{k=0}^{K-1}\big[\tfrac{2k}{2K},\tfrac{2k+1}{2K}\big]$}.
\end{equation*}
It follows that
\begin{equation}\label{eq:phi:f:error:psi}
	\Big|\big(\phi_i(x)-f_i(x)\big)\psi(2Kx)\Big|<\varepsilon/4\quad \tn{for any $x\in[0,1]$ and $i=1,2,3,4$}.
\end{equation}

For each $i\in\{1,2,3,4\}$ and any $z\in[0,\tfrac{9}{10}]
\subseteq [0,1-\tfrac{1}{K}]
\subseteq [0,1-\tfrac{i}{4K}]$, we have 
\begin{equation*}
    y_i=z+\tfrac{i}{4K}\in [\tfrac{i}{4K},1]\subseteq [0,1].
\end{equation*}
Therefore, by bringing $x=y_i\in [0,1]$ into Equation~\eqref{eq:phi:f:error:psi}, we have
\begin{equation}\label{eq:error:phi:component}
	\begin{split}
		\varepsilon/4
		&> \Big|\big(\phi_i(y_i)-f_i(y_i)\big)\psi(2Ky_i)\Big|=\Big|\phi_i(y_i)\psi(2Ky_i)-f_i(y_i)\psi(2Ky_i)\Big|\\
		&=\Big|\phi_i(z+\tfrac{i}{4K})\psi\big(2K(z+\tfrac{i}{4K})\big)-f_i(z+\tfrac{i}{4K})\psi\big(2K(z+\tfrac{i}{4K})\big)\Big|\\
		&=\Big|\phi_i(z+\tfrac{i}{4K})\psi\big(2Kz+\tfrac{i}{2}\big)-f(z)\psi\big(2Kz+\tfrac{i}{2}\big)\Big|
	\end{split}
\end{equation}
for any $z\in [0,\tfrac{9}{10}]$,
where the last equality comes from the fact that $f_i(x)=f(x-\tfrac{i}{4K})$ for any $x\in[0,1]\supseteq [\tfrac{i}{4K},1]$. The desired $\phi$ is defined as 
\begin{equation*}
	\phi(x)\coloneqq\sum_{i=1}^4\phi_i(x+\tfrac{i}{4K})\psi\big(2Kx+\tfrac{i}{2}\big)\quad \tn{for any $x\in[0,\tfrac{9}{10}]$.}
\end{equation*}

It is easy to verify that
$\sum_{i=1}^4 \psi\big(x+\tfrac{i}{2}\big)=1$ for any $x\ge 0$ based on the definition of $\psi$. See Figure~\ref{fig:psi1234} for illustrations.
It follows that $\sum_{i=1}^4 \psi\big(2Kz+\tfrac{i}{2}\big)=1$ for any $z\in [0,\tfrac{9}{10}]$. 

\begin{figure}[htbp!]        
	\centering
	\begin{minipage}{0.985\textwidth}
		\centering
		\begin{subfigure}[b]{0.48\textwidth}
			\centering            
			\includegraphics[width=0.985\textwidth]{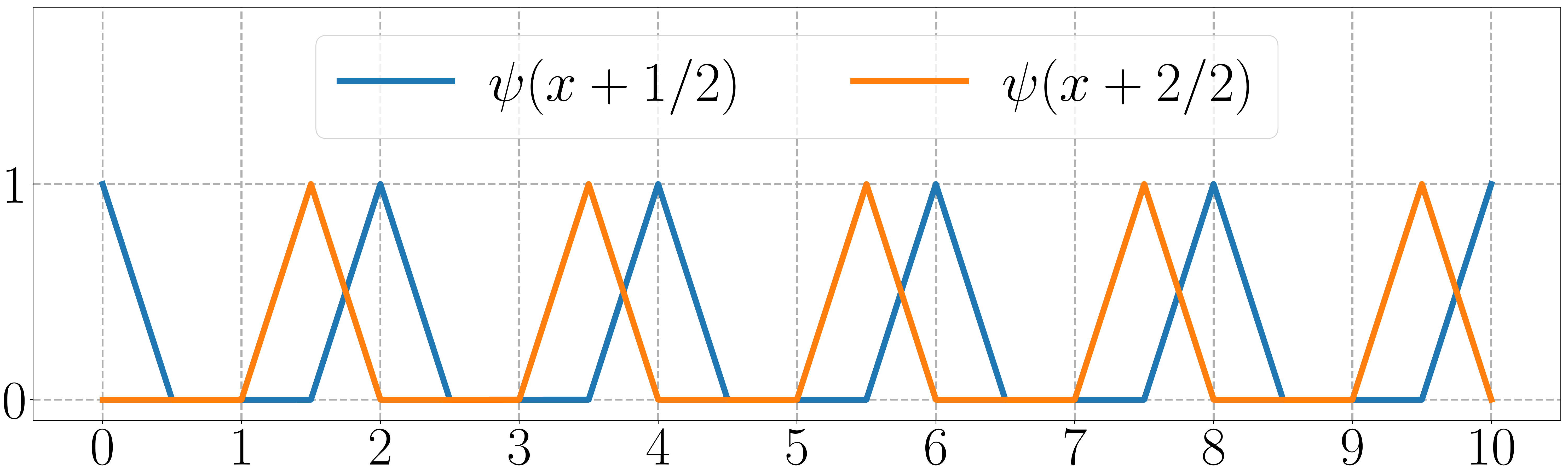}
		\end{subfigure}
		\begin{subfigure}[b]{0.48\textwidth}
			\centering            
			\includegraphics[width=0.985\textwidth]{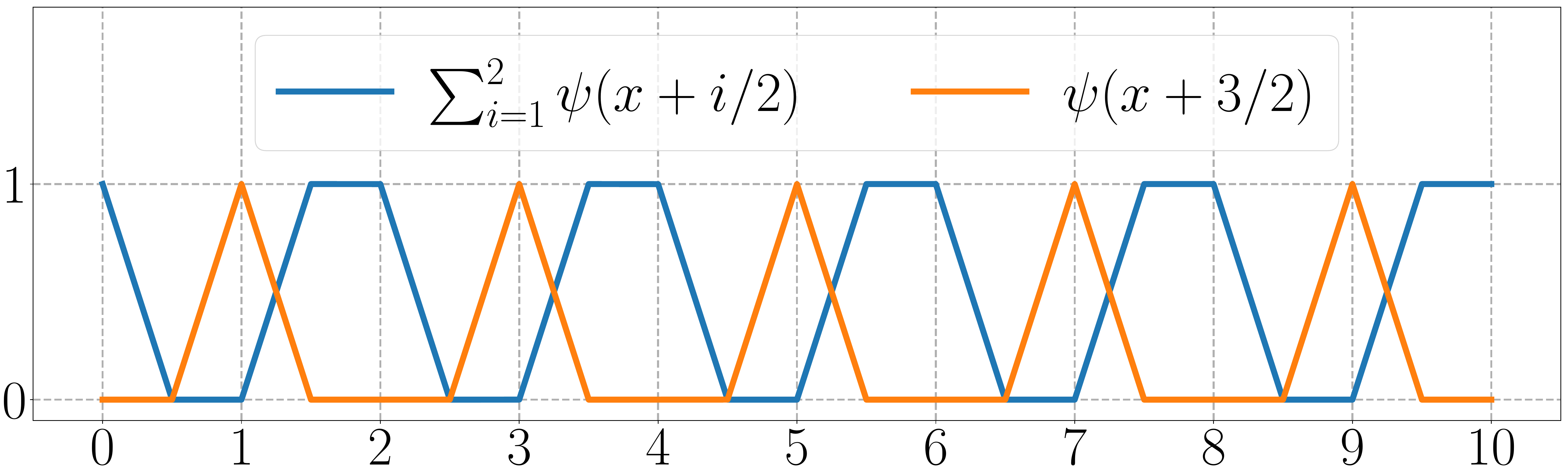}
		\end{subfigure}
		
		\vspace*{4pt}
			\begin{subfigure}[b]{0.48\textwidth}
		\centering            
		\includegraphics[width=0.985\textwidth]{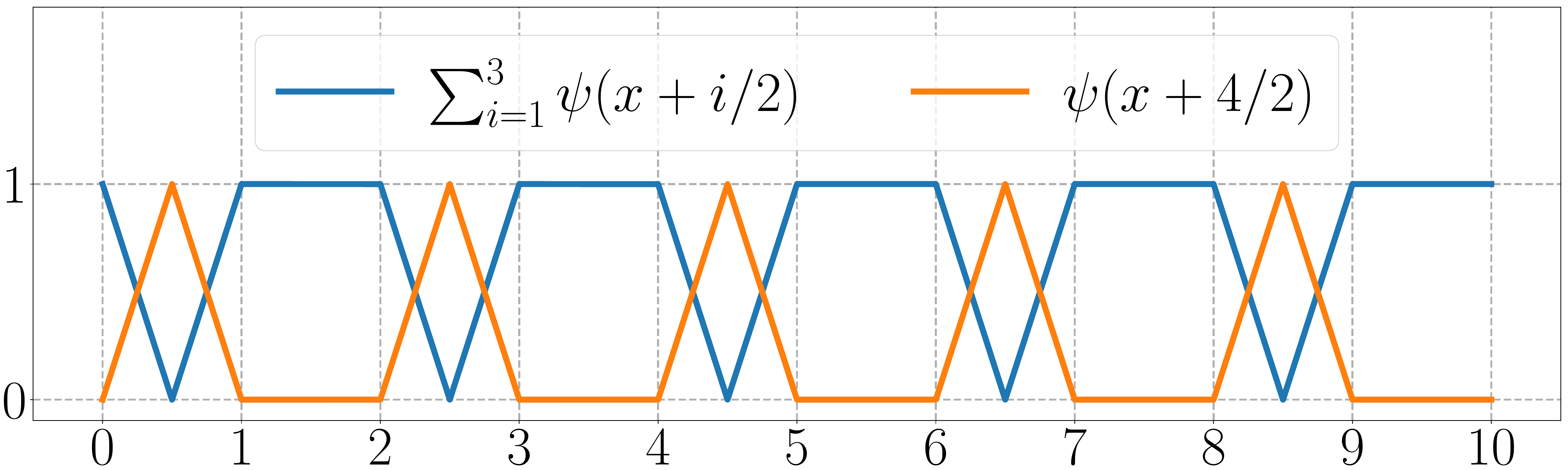}
	\end{subfigure}
			\begin{subfigure}[b]{0.485\textwidth}
	\centering            
	\includegraphics[width=0.985\textwidth]{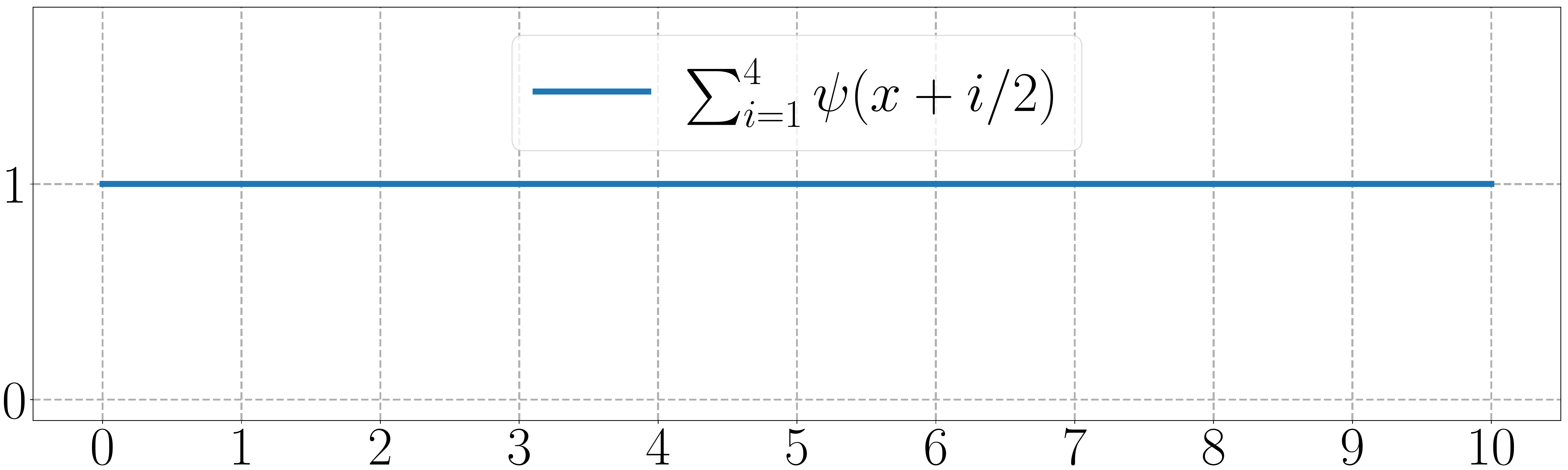}
\end{subfigure}
	\end{minipage}
	\caption{Illustrations of $\sum_{i=1}^{4}\psi(x+i/2)=1$ for any  $x\in [0,10]$.}
	\label{fig:psi1234}
\end{figure}

Hence, for any $z\in [0,\tfrac{9}{10}]$, by Equation~\eqref{eq:error:phi:component},  we have
\begin{equation*}
	\begin{split}
		\big|\phi(z)-f(z)\big|
		&=\Big|\sum_{i=1}^4\phi_i(z+\tfrac{i}{4K})\psi\big(2Kz+\tfrac{i}{2}\big)-f(z)\sum_{i=1}^4\psi\big(2Kz+\tfrac{i}{2}\big)\Big|\\
		&\le \sum_{i=1}^4\Big|\phi_i(z+\tfrac{i}{4K})\psi\big(2Kz+\tfrac{i}{2}\big)-f(z)\psi\big(2Kz+\tfrac{i}{2}\big)\Big|< 4\cdot\frac{\varepsilon}{4}=\varepsilon.
	\end{split}
\end{equation*}
That is, $|\phi(x)-f(x)|<\varepsilon$ for any $x\in[0,\tfrac{9}{10}]$ as desired. 
It remains to show that $\phi$, limited on $[0,\tfrac{9}{10}]$, can be generated by an EUAF network with the desired size.

Note that $x+1=(2K+1)\sigma(\tfrac{x+1}{2K+1})$ for any $x\in [0,2K]$, which implies
\begin{equation*}
	\psi(x)=\sigma\big(x+1-\sigma(x+1)\big)=\sigma\Big((2K+1)\sigma(\tfrac{x+1}{2K+1})-\sigma(x+1)\Big).
\end{equation*}
This means  
$\psi$, limited on $[0,2K]$, can be generated by an EUAF network with width $2$ and depth $2$. Since $0\le 2Kx+\tfrac{i}{2}\le 2K\tfrac{9}{10}+2=2K(\tfrac{9}{10}+\tfrac{1}{K})\le 2K$ for any $x\in [0,\tfrac{9}{10}]$, $\psi(2K\cdot+\tfrac{i}{2})$, limited on $[0,\tfrac{9}{10}]$, can also be generated by an EUAF network with width $2$ and depth $2$. 

Note that $\phi_i$, limited on $[0,1]$,
can also be generated by an EUAF network with width $2$ and depth $3$. Clearly, $x+\tfrac{i}{4K}\in [0,1]$ for any $x\in [0,\tfrac{9}{10}]$, and, therefore, $\phi_i(\cdot +\tfrac{i}{4K})$, limited on $[0,\tfrac{9}{10}]$, can also be generated by an EUAF network with width $2$ and depth $3$.

Recall that
$\|\phi_i\|_{L^\infty([0,1])}\le  \|f\|_{L^\infty([-1,1])}+1\eqqcolon M$. Thus, $|\phi_i(x +\tfrac{i}{4K})|\le M$ and $|\psi(2Kx+\tfrac{i}{2})|\le1\le  M$  for any $x\in [0,\tfrac{9}{10}]$ and $i=1,2,3,4$.
By Lemma~\ref{lem:xy}, there exists a function $\Gamma$ generated by an EUAF network with width $9$ and depth $2$ such that
\begin{equation*}
\Gamma(x,y)=xy\quad \tn{for any $x,y\in [-M,M]$.}
\end{equation*}
It follows that
\begin{equation*}
	\Gamma\Big(\phi_i(x+\tfrac{i}{4K}),\psi\big(2Kx+\tfrac{i}{2}\big)\Big)=\phi_i(x+\tfrac{i}{4K})\psi\big(2Kx+\tfrac{i}{2}\big)\quad\tn{for $i=1,2,3,4$.}
\end{equation*}

\begin{figure}[htbp!]        
	\centering
	\includegraphics[width=0.7245\textwidth]{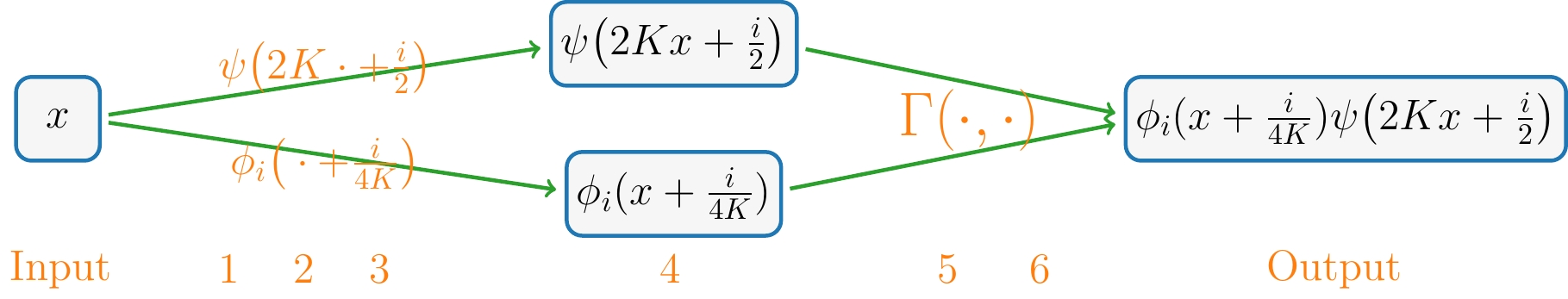}
	\caption{An illustration of the target EUAF network realizing each component of $\phi(x)$, $\phi_i(x+\tfrac{i}{4K})\psi\big(2Kx+\tfrac{i}{2}\big)$, for any $x\in [0,\tfrac{9}{10}]$ and each $i\in\{1,2,3,4\}$. The networks realizing $\phi_i(\cdot+\tfrac{i}{4K})$ and $\psi\big(2K\cdot+\tfrac{i}{2}\big)$ can be placed in parallel since we can manually add a hidden layers to $\psi$ since $\sigma\circ \psi\big(2Kx+\tfrac{i}{2}\big)=\psi\big(2Kx+\tfrac{i}{2}\big)$ for any $x\in[0,\tfrac{9}{10}]$. }
	\label{fig:phi:i:component}
\end{figure}

Therefore, each component of $\phi(x)$, $\phi_i(x+\tfrac{i}{4K})\psi\big(2Kx+\tfrac{i}{2}\big)$ for each $i\in\{1,2,3,4\}$, can be generated by the network in Figure~\ref{fig:phi:i:component} for any $x\in [0,\tfrac{9}{10}]$. Clearly, such a network has width $9$ and depth $6$. Since the $4$-th hidden layer of the network  in Figure~\ref{fig:phi:i:component} uses the identity map as an activation function for each neuron in this hidden layer, we can reduce the depth by $1$ via composing two adjacent affine linear maps to generate a new one. Thus, the network in Figure~\ref{fig:phi:i:component} can be interpreted as an EUAF network with width $9$ and depth $5$.

Note that $\phi$ is the sum of its four components, namely, 
\[\phi(x)=\sum_{i=1}^4\phi_i(x+\tfrac{i}{4K})\psi\big(2Kx+\tfrac{i}{2}\big)\quad \tn{ for any $x\in [0,\tfrac{9}{10}]$.}\] 
Therefore, $\phi$, limited on  $[0,\tfrac{9}{10}]$, 
can be
generated  by an EUAF network with width $9\times 4=36$ and depth $5$ as desired.
It is easy to verify that the designed network architecture is independent of the target function $f$ and the desired error $\varepsilon$. That is, we can fix the architecture and only adjust parameters to achieve an arbitrarily small  approximation error. So we finish the proof.
\end{proof}

\subsection{Proof of Lemma~\ref{lem:xy}}\label{sec:proof:lemma:xy}
The key idea of proving Lemma~\ref{lem:xy} is  the polarization identity  $2xy= (x+y)^2-x^2-y^2$. Thus, we need to reproduce $x^2$ locally by an EUAF network as shown in the following lemma.
\begin{lemma}\label{lem:xx}
	There exists a function $\phi$ generated by an EUAF network with width $3$ and depth $2$ such that
	\begin{equation*}
		\phi(x)=x^2\quad \tn{for any $x\in[-1,1]$.}
	\end{equation*}
\end{lemma}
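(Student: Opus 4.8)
The plan is to exhibit $x\mapsto x^2$ explicitly as a closed-form expression built from two nested copies of $\sigma$, in the spirit of the computation used in the proof of Lemma~\ref{lem:sigma:by:varrho:net}. The only tool needed is the elementary identity
\[
\sigma(y)+1=\frac{y}{-y+1}+1=\frac{1}{-y+1}\qquad\tn{for every }y\le 0,
\]
which holds because $\sigma(y)=\sigma_2(y)=\tfrac{y}{|y|+1}=\tfrac{y}{-y+1}$ when $y\le 0$.

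First I would apply this identity twice. For $x\in[-1,1]$ both $-x-2$ and $-x-3$ are negative, so
\[
\sigma(-x-2)-\sigma(-x-3)=\frac{1}{x+3}-\frac{1}{x+4}=\frac{1}{(x+3)(x+4)}.
\]
Since $(x+3)(x+4)\le 20$ on $[-1,1]$, the number $1-\tfrac{20}{(x+3)(x+4)}$ is $\le 0$, and a second application gives
\[
\sigma\Big(1-\tfrac{20}{(x+3)(x+4)}\Big)+1=\frac{(x+3)(x+4)}{20}=\frac{x^2+7x+12}{20},
\]
which rearranges into
\[
x^2=20\,\sigma\Big(1-20\,\sigma(-x-2)+20\,\sigma(-x-3)\Big)-7x+8\qquad\tn{for all }x\in[-1,1].
\]
Then I would read the network off from this formula. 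The first hidden layer has three neurons, producing $\sigma(-x-2)$, $\sigma(-x-3)$, and $\sigma(\tfrac{x+1}{4})$; the last of these equals $\tfrac{x+1}{4}$ because $\tfrac{x+1}{4}\in[0,1]$ and $\sigma$ restricts to the identity on $[0,1]$, and it serves as a carry channel keeping $x=4\,\sigma(\tfrac{x+1}{4})-1$ available. The second hidden layer has two neurons: one computes $\sigma\big(1-20\,\sigma(-x-2)+20\,\sigma(-x-3)\big)$, the other re-applies $\sigma$ to the carry channel (again the identity). The output affine map then forms a suitable affine combination of these two neurons, which by the displayed identity equals $x^2$. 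This network has width $\max\{3,2\}=3$ and depth $2$, and its architecture is fixed, independent of the input.

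I do not expect a real obstacle; the work is purely bookkeeping. One must check that every argument inserted into an inner copy of $\sigma$ is indeed $\le 0$ — this uses $x\in[-1,1]$ together with the chosen shifts $2,3$ and scaling $20$ — so that the rational identity $\sigma(y)+1=\tfrac{1}{-y+1}$ applies; and one must verify that the linear channel carrying $x$ can be routed through both hidden layers without exceeding width $3$, which relies on $\sigma|_{[0,1]}=\tn{id}$. Everything else is routine algebra.
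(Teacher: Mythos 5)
Your construction is correct and is essentially the paper's own argument: two nested uses of the identity $\sigma(y)+1=\tfrac{1}{1-y}$ for $y\le 0$ to produce $\tfrac{1}{(x+a)(x+a+1)}$ and then invert it to a quadratic, plus a carry channel exploiting $\sigma|_{[0,1]}=\mathrm{id}$ to deliver the leftover linear term, giving width $3$ and depth $2$. The only differences are cosmetic (shifts $2,3$ and scale $20$ with identity $x^2=20\,\sigma\big(1-20\sigma(-x-2)+20\sigma(-x-3)\big)-7x+8$, versus the paper's shifts $1,2$, scale $12$, and linear term routed as $11\,\sigma\big(\tfrac{6-5x}{11}\big)$), and all the stated sign and range checks go through.
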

\begin{proof}
	Observe that
	\begin{equation*}
		\sigma(y)+1=\frac{y}{|y|+1}+1=\frac{y}{-y+1}+1=\frac{1}{-y+1}\quad 
	\tn{for any $y\le 0$.}
	\end{equation*}
	For any $x\in[-1,1]$, we have $-x-1\le 0$ and $-x-2\le 0$, which implies
	\begin{equation*}
		\begin{split}
			\sigma(-x-1)-\sigma(-x-2)&=\Big(\sigma(-x-1)+1\Big)-\Big(\sigma(-x-2)+1\Big)\\
			&=\frac{1}{-(-x-1)+1}-\frac{1}{-(-x-2)+1}\\
			&=\frac{1}{x+2}-\frac{1}{x+3}=\frac{1}{(x+2)(x+3)}.
		\end{split}
	\end{equation*}
	It follows from  $1-\tfrac{12}{(x+2)(x+3)}\le 0$ for any $x\in[-1,1]$ that 
	\begin{equation*}
		\begin{split}
			\sigma\Big(1-\frac{12}{(x+2)(x+3)}\Big)+1
			=\frac{1}{-\big(1-\tfrac{12}{(x+2)(x+3)}\big)+1}
			=\frac{x^2+5x+6}{12},
		\end{split}
	\end{equation*}
	implying
	\begin{equation*}
		\begin{split}
			x^2&=12 \sigma\Big(1-\frac{12}{(x+2)(x+3)}\Big)+12-(5x+6)\\
			&=12 \sigma\Big(1- 12 \big(\sigma(-x-1)- \sigma(-x-2)\big)\Big)+11\frac{6-5x}{11}\\
			& =12 \sigma\Big(1-12\sigma(-x-1)+12\sigma(-x-2)\Big)+11\sigma\Big(\frac{6-5x}{11}\Big)\eqqcolon  \phi(x),
		\end{split}
	\end{equation*}
where the equality $\tfrac{6-5x}{11}=\sigma\big(\tfrac{6-5x}{11}\big)$ comes from two facts:  $\tfrac{6-5x}{11}\in [0,1]$ since $x\in[-1,1]$ and $\sigma(z)=z$ for any $z\in [0,1]$.

\begin{figure}[htbp!]        
	\centering
	\includegraphics[width=0.8945\textwidth]{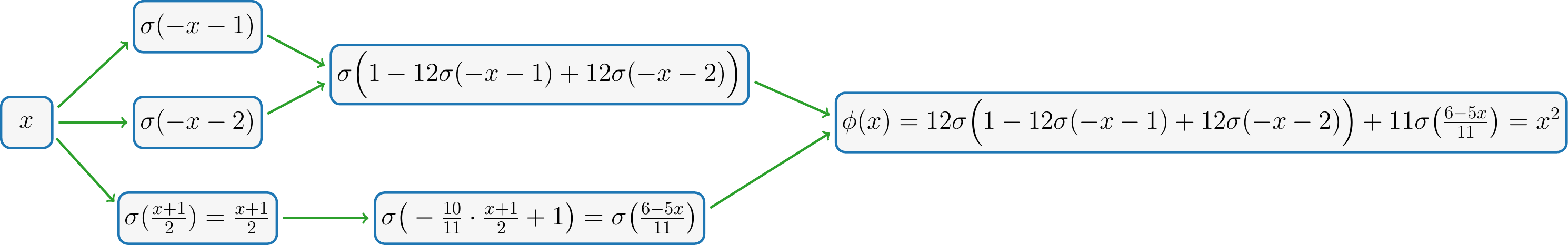}
	\caption{An illustration of the target EUAF network  realizing $\phi(x)=x^2$ for $x\in [-1,1]$. }
	\label{fig:xx}
\end{figure}

	Then,  $x^2$ can be generated by the network shown in Figure~\ref{fig:xx} for any $x\in [-1,1]$. The target network has width $3$ and depth $2$. So we finish the proof.
\end{proof}

With Lemma~\ref{lem:xx} at hand, we are ready to prove Lemma~\ref{lem:xy}.
\begin{proof}[Proof of Lemma~\ref{lem:xy}]
	By Lemma~\ref{lem:xx}, there exists a function $\tildephi$ generated by an EUAF network such that $\tildephi(t)=t^2$ for any $t\in [-1,1]$.
	Then, for any $x,y\in [-M,M]$, we have 
	\begin{equation*}
		\begin{split}
			xy&=2M^2\Big( \big(\tfrac{x+y}{2M}\big)^2- \big(\tfrac{x}{2M}\big)^2 -\big(\tfrac{y}{2M}\big)^2\Big)\\
			&=2M^2\Big( \tildephi\big(\tfrac{x+y}{2M}\big)- \tildephi\big(\tfrac{x}{2M}\big) -\tildephi\big(\tfrac{y}{2M}\big)\Big)\eqqcolon \phi(x,y).
		\end{split}
	\end{equation*}

\begin{figure}[htbp!]        
	\centering
	\includegraphics[width=0.7145\textwidth]{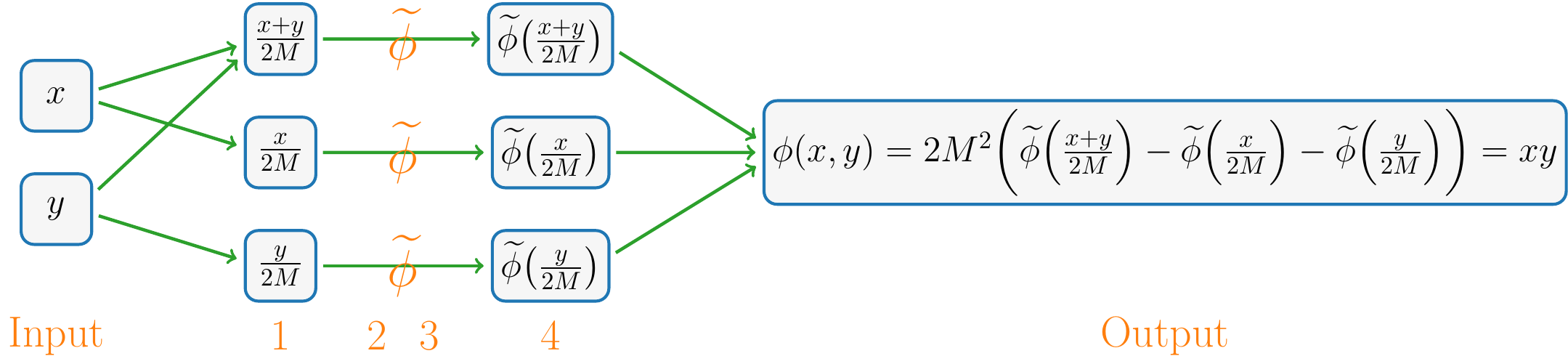}
	\caption{An illustration of the target network  realizing $\phi(x)=xy$ for $x,y\in [-M,M]$. 
		}
	\label{fig:xy}
	\end{figure}

	The target network realizing $\phi$ with width $9$ and depth $4$ is shown in Figure~\ref{fig:xy}. Note that we can reduce the depth by one if the activation function of each neuron in a hidden layer is the identity map. In fact, we can eliminate this hidden layer by composing two adjacent affine linear maps to generate a new one.
	The $1$-st and $4$-th hidden layers of the network in Figure~\ref{fig:xy} use the identity map as an activation function for each neuron. Thus, the network in Figure~\ref{fig:xy} can be interpreted as an EUAF network with width $9$ and depth $2$. So we finish the proof.	
\end{proof}

\section{Proof of Proposition~\ref{prop:dense}}
\label{sec:proof:prop:dense}

We will prove Proposition~\ref{prop:dense} in this section. 
The proof includes two main steps. First, we show how to simply generate a set of rationally independent numbers in Lemma~\ref{lem:r:i:numbers} below. Next, we prove that the target point set via a winding of the generated rationally independent numbers is dense in a hypercube. Such a proof relies on the fact that
an irrational winding on the torus is dense (e.g., see Lemma~$2$ of \cite{yarotsky:2021:02}) as shown in Lemma~\ref{lem:dense:periodic} below.
\begin{lemma}\label{lem:r:i:numbers}
	Given any $K\in\N^+$, any transcendental number $\alpha\in\R\backslash \A$, and any pairwise distinct rational numbers $r_1,r_2,\cdots,r_K\in\Q$,   the set of numbers
	\begin{equation*}
		\big\{\tfrac{1}{  \alpha+r_k}:k=1,2,\cdots,K\big\}
	\end{equation*}
	are rationally independent.
\end{lemma}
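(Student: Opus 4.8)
The plan is to argue by contradiction. Suppose $\tfrac{1}{\alpha+r_1},\dots,\tfrac{1}{\alpha+r_K}$ are rationally dependent, so there exist rationals $\lambda_1,\dots,\lambda_K\in\Q$, not all zero, with $\sum_{k=1}^K \lambda_k\cdot\tfrac{1}{\alpha+r_k}=0$. First I would clear denominators: multiplying through by $\prod_{k=1}^K(\alpha+r_k)$ yields $\sum_{k=1}^K \lambda_k\prod_{\ell\neq k}(\alpha+r_\ell)=0$. The key observation is that the left-hand side is a polynomial in $\alpha$ with rational coefficients (since $\lambda_k\in\Q$ and $r_\ell\in\Q$). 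Because $\alpha$ is transcendental, this polynomial must be the zero polynomial — every coefficient vanishes. So the remaining task is to show that the polynomial $P(t)\coloneqq\sum_{k=1}^K \lambda_k\prod_{\ell\neq k}(t+r_\ell)$ is not identically zero, which contradicts the assumption that not all $\lambda_k$ vanish.

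To see $P\not\equiv 0$, I would evaluate $P$ at the points $t=-r_j$ for $j=1,2,\dots,K$. Since the $r_k$ are pairwise distinct, for a fixed $j$ the product $\prod_{\ell\neq k}(-r_j+r_\ell)$ is zero whenever $k\neq j$ (the factor with $\ell=j$ vanishes) and is nonzero when $k=j$ (all factors $r_\ell-r_j$ with $\ell\neq j$ are nonzero). Hence $P(-r_j)=\lambda_j\prod_{\ell\neq j}(r_\ell-r_j)$. If $P\equiv 0$ then $P(-r_j)=0$ for every $j$, forcing $\lambda_j=0$ for all $j$, contradicting our choice of the $\lambda_k$. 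Therefore $P$ is a nonzero polynomial with rational coefficients having $\alpha$ as a root, contradicting the transcendence of $\alpha$. This completes the proof.

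I expect no serious obstacle here; the argument is a short application of the definition of a transcendental number combined with a Lagrange-interpolation-style evaluation trick to certify that the cleared-denominator polynomial is nonzero. The only point requiring a little care is bookkeeping: one must note $\alpha+r_k\neq 0$ for every $k$ (otherwise $\tfrac{1}{\alpha+r_k}$ is undefined), but this is immediate because $\alpha$ is transcendental while $-r_k$ is rational, hence algebraic, so $\alpha\neq -r_k$. With that remark in place the displayed identities are all legitimate and the contradiction goes through cleanly.
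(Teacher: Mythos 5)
Your proof is correct, and it reaches the conclusion by a leaner route than the paper. Both arguments share the same skeleton: assume $\sum_{k=1}^K \lambda_k\cdot\tfrac{1}{\alpha+r_k}=0$ with $\lambda_k\in\Q$, clear denominators to obtain a polynomial identity $P(\alpha)=0$ with $P$ having rational coefficients, and invoke the transcendence of $\alpha$ to conclude that $P$ vanishes identically. Where you diverge is the final step of extracting $\lambda_1=\cdots=\lambda_K=0$ from $P\equiv 0$. The paper expands everything in the monomial basis via the Lagrange basis polynomials at the nodes $-r_k$, so that transcendence kills each monomial coefficient, and then needs the invertibility of the matrix of Lagrange coefficients — a separate auxiliary result (Lemma~\ref{lem:matrix:inv}) proved through the Vandermonde matrix — to pass from vanishing coefficients back to vanishing $\lambda_k$. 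You instead exploit the polynomial identity $P\equiv 0$ as a statement about the function and simply evaluate at $t=-r_j$, where the product structure collapses to $P(-r_j)=\lambda_j\prod_{\ell\neq j}(r_\ell-r_j)$ with a nonzero product by pairwise distinctness; this yields $\lambda_j=0$ directly and makes the Vandermonde/coefficient-matrix machinery unnecessary. The two finishes are mathematically equivalent (your evaluation is exactly the Lagrange interpolation property used pointwise rather than in coordinates), but yours is shorter and more elementary, at the price of not producing the coefficient-matrix lemma, which the paper does not reuse elsewhere anyway. Your side remark that $\alpha+r_k\neq 0$ (so the quantities are defined and the multiplication by $\prod_k(\alpha+r_k)$ is harmless) is a small point the paper leaves implicit, and it is handled correctly.
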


\begin{lemma}\label{lem:dense:periodic}
	Given any rationally independent numbers $a_1,a_2,\cdots,a_K$ for any $K\in\N^+$ and an arbitrary periodic function $g:\R\to\R$ with period $T$, i.e., $g(x+T)=g(x)$ for any $x\in\R$, assume there exist $x_1,x_2\in \R$ with $0<x_2-x_1<T$ such that $g$ is continuous on $[x_1,x_2]$. Then the following set 
	\begin{equation*}
		\Big\{\big[g(wa_1),\, g(wa_2),\, \cdots,\,g(wa_K)\big]^T: w\in\R
		\Big\}
	\end{equation*}
is dense in $[M_1,M_2]^K$, where $M_1=\min\limits_{x\in [x_1,x_2]} g(x)$ and $M_2=\max\limits_{x\in [x_1,x_2]} g(x)$.
\end{lemma}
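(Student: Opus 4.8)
\textbf{Proof proposal for Lemma~\ref{lem:dense:periodic}.}

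The plan is to reduce the statement to the classical fact that an irrational line winds densely around a torus. First I would set up the torus picture: consider the map $w\mapsto \big(wa_1/T,\, wa_2/T,\,\cdots,\,wa_K/T\big) \bmod 1$ into $(\R/\Z)^K$. Since $a_1,a_2,\cdots,a_K$ are rationally independent, so are $a_1/T,\cdots,a_K/T$ together with... wait, I must be careful: the standard Weyl/Kronecker criterion for the orbit $\{w\bmalpha \bmod 1 : w\in\R\}$ to be dense in $(\R/\Z)^K$ is that $a_1/T,\cdots,a_K/T$ are rationally independent (no integer relation $\sum n_j a_j/T = 0$ with $(n_j)\neq 0$), which follows from rational independence of the $a_j$'s. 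So the orbit $\{(wa_1/T,\cdots,wa_K/T)\bmod 1 : w\in\R\}$ is dense in $(\R/\Z)^K$. This is exactly the cited Lemma~2 of \cite{yarotsky:2021:02}, which I would invoke directly rather than reprove.

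Next I would transport density through $g$. Fix a target point $\bmy=[y_1,\cdots,y_K]^T\in[M_1,M_2]^K$ and $\epsilon>0$. For each $k$, since $g$ is continuous on $[x_1,x_2]$ and attains its min $M_1$ and max $M_2$ there, by the intermediate value theorem there is $t_k\in[x_1,x_2]$ with $g(t_k)=y_k$. By uniform continuity of $g$ on $[x_1,x_2]$, pick $\rho>0$ (with $\rho<x_2-x_1$, shrinking the interval slightly so we stay strictly inside, though one should handle the endpoints) such that $|g(t)-g(t_k)|<\epsilon$ whenever $|t-t_k|<\rho$ and $t\in[x_1,x_2]$. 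Now I want $w$ such that $wa_k$ is within $\rho$ (modulo $T$) of $t_k$ for every $k$ simultaneously, \emph{and} lands inside $[x_1,x_2]$ where $g$ is controlled. Using the density of the torus orbit: the point $(t_1/T,\cdots,t_K/T)\bmod 1$ can be approximated to within $\rho/T$ in each coordinate by some $(wa_1/T,\cdots,wa_K/T)\bmod 1$. Unwinding, there are integers $m_k$ with $|wa_k - t_k - m_k T|<\rho$. Then $wa_k - m_k T\in(x_1,x_2)$ (after the slight shrinkage) and $|g(wa_k) - y_k| = |g(wa_k - m_k T) - g(t_k)| < \epsilon$ by periodicity and the modulus-of-continuity bound. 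Hence $[g(wa_1),\cdots,g(wa_K)]^T$ is within $\epsilon$ of $\bmy$ in the sup norm, proving density.

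The main obstacle — really the only subtlety beyond bookkeeping — is the endpoint/interior issue: $t_k$ might equal $x_1$ or $x_2$, at which point the torus approximation could push $wa_k - m_kT$ slightly outside $[x_1,x_2]$ where $g$ need not be continuous or even defined nicely, and the period constraint $x_2-x_1<T$ is what guarantees the "thickened" interval $[x_1-\rho,x_2+\rho]$ still injects into $\R/\Z$ so the torus point determines a unique nearby real shift. I would handle this by a standard limiting argument: approximate $\bmy$ first by a point $\bmy'$ all of whose coordinates are attained at interior points $t_k'\in(x_1,x_2)$ (possible since $g([x_1,x_2])$ is a closed interval and its interior values are attained in the open interval, or simply replace $[x_1,x_2]$ by a slightly smaller subinterval on which $g$ still ranges over $[M_1+\delta, M_2-\delta]$ and take $\delta\to0$), then run the argument on $\bmy'$, then let the approximation quality improve. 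A clean way is: it suffices to prove density of the image in $(M_1,M_2)^K$ (or even in $g(I)^K$ for every compact subinterval $I\subset(x_1,x_2)$ with $g$ nonconstant on $I$) and then take closures; I would phrase the write-up that way to sidestep the endpoints entirely. The remaining steps are routine $\epsilon$-$\delta$ manipulations.
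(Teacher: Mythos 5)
Your proposal is correct and follows essentially the same route as the paper: reduce to the density of the irrational winding $\{[\tau(wa_1/T),\ldots,\tau(wa_K/T)]:w\in\R\}$ on the torus (which the paper states and proves separately as Lemma~\ref{lem:dense:decimal}, while its proof of Lemma~\ref{lem:dense:periodic} likewise just invokes it), then pull back a target in $[M_1,M_2]^K$ via the intermediate value theorem and transfer the approximation through uniform continuity and periodicity. The only cosmetic difference is the endpoint issue: the paper shifts the preimage points inward by $\delta/2$ so the exact target is hit within $\varepsilon$ directly, whereas you shrink the interval and finish with a closure argument — both are valid.
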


The proofs of these two lemmas can be found in Sections~\ref{sec:proof:lem:r:i:numbers} and \ref{sec:proof:lem:dense:periodic}, respectively.
With these two lemmas at hand, the proof of Proposition~\ref{prop:dense} is straightforward. In fact, we can prove 
a more general result in Proposition~\ref{prop:dense:general} below, which implies Proposition~\ref{prop:dense} immediately.
\begin{proposition}\label{prop:dense:general}
	Given  an arbitrary periodic function $g:\R\to\R$ with period $T$, i.e., $g(x+T)=g(x)$ for any $x\in\R$, assume there exist $x_1,x_2\in \R$ with $0<x_2-x_1<T$ such that $g$ is continuous on $[x_1,x_2]$. Then, for any $K\in\N^+$, any transcendental number $\alpha\in \R\backslash\A$, and any pairwise distinct rational numbers $r_1,r_2,\cdots,r_K\in\Q$,  the following set 
	\begin{equation*}
		\Big\{\big[g(\tfrac{w}{\alpha+r_1}),\   g(\tfrac{w}{\alpha+r_2}),\  \cdots,\    g(\tfrac{w}{\alpha+r_K})\big]^T: w\in\R
		\Big\}
	\end{equation*}
	is dense in $[M_1,M_2]^K$, where $M_1=\min\limits_{x\in [x_1,x_2]} g(x)$ and $M_2=\max\limits_{x\in [x_1,x_2]} g(x)$. In the case of $M_1<M_2$, the following set 
		\begin{equation*}
		\bigg\{\Big[u\cdot   g(\tfrac{w}{\alpha+r_1})+v,\   u\cdot  g(\tfrac{w}{\alpha+r_2})+v,\   \cdots,\  u\cdot   g(\tfrac{w}{\alpha+r_K})+v\Big]^T: u,v,w\in\R
		\bigg\}
	\end{equation*}
is dense in $\R^K$.
\end{proposition}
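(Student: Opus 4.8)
The plan is to deduce Proposition~\ref{prop:dense:general} by chaining together the two lemmas that precede it. First I would fix $K\in\N^+$, a transcendental number $\alpha\in\R\backslash\A$, and pairwise distinct $r_1,\dots,r_K\in\Q$. By Lemma~\ref{lem:r:i:numbers}, the numbers $a_k\coloneqq\tfrac{1}{\alpha+r_k}$ for $k=1,2,\dots,K$ are rationally independent (one must check they are well-defined, i.e. $\alpha+r_k\neq 0$, which is immediate since $\alpha$ is transcendental and $-r_k$ is rational). Then I would apply Lemma~\ref{lem:dense:periodic} with this choice of $a_1,\dots,a_K$, the given periodic function $g$, and the continuity interval $[x_1,x_2]$: it yields that $\big\{[g(wa_1),\dots,g(wa_K)]^T:w\in\R\big\}$ is dense in $[M_1,M_2]^K$. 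Observing that $wa_k=\tfrac{w}{\alpha+r_k}$, this is exactly the first assertion of Proposition~\ref{prop:dense:general}.

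For the second assertion (the case $M_1<M_2$), I would show that affinely rescaling a dense subset of $[M_1,M_2]^K$ by a common $u\in\R$ and common shift $v\in\R$ produces a dense subset of $\R^K$. The key point: fix any target $\bmy=[y_1,\dots,y_K]^T\in\R^K$ and any tolerance $\epsilon>0$. Choose $u$ large enough (in absolute value) that the box $u\cdot[M_1,M_2]^K + v\bm{1}$, for a suitable $v$, contains a neighborhood of $\bmy$ — concretely, pick $v$ so that the interval $u[M_1,M_2]+v$ is centered near the coordinates of $\bmy$ and long enough to cover all of $y_1,\dots,y_K$ with room to spare. Since $\{[g(wa_1),\dots,g(wa_K)]^T:w\}$ is dense in $[M_1,M_2]^K$, there is $w$ with $[g(wa_1),\dots,g(wa_K)]^T$ within $\epsilon/|u|$ of the preimage point $u^{-1}(\bmy-v\bm{1})$, and then $[u\,g(wa_1)+v,\dots,u\,g(wa_K)+v]^T$ is within $\epsilon$ of $\bmy$. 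This is routine once $M_1<M_2$ guarantees the rescaled interval can be made arbitrarily long.

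Finally I would note that Proposition~\ref{prop:dense} is the special case $g=\sigma_1$ (which has period $T=2$ and is continuous on, e.g., $[x_1,x_2]=[0,1]$ with $M_1=0$, $M_2=1$), $\alpha=\pi$, and $r_k=k$ for $k=1,\dots,K$; here $\sigma_1(\tfrac{w}{\pi+k})\in[0,1]$ recovers the stated point set and the denseness in $[0,1]^K$ follows directly.

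The genuinely hard content of this section is \emph{not} in this proof — it lives in Lemmas~\ref{lem:r:i:numbers} and \ref{lem:dense:periodic}, which I am permitted to assume. Lemma~\ref{lem:r:i:numbers} is where transcendence of $\alpha$ enters: a rational relation $\sum_k\lambda_k\tfrac{1}{\alpha+r_k}=0$ with $\lambda_k\in\Q$ clears denominators to a polynomial identity in $\alpha$ with rational coefficients, forcing that polynomial to vanish identically (since $\alpha$ is transcendental), and a partial-fractions / Vandermonde argument then forces all $\lambda_k=0$. Lemma~\ref{lem:dense:periodic} is the Kronecker/Weyl equidistribution statement that an irrational line wraps densely on the torus, pushed through the periodic map $g$ restricted to its continuity interval. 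Relative to those, the proof of Proposition~\ref{prop:dense:general} itself is bookkeeping; the only mild obstacle is making the affine-rescaling argument for the $\R^K$ statement clean, i.e. verifying that $u$ and $v$ can indeed be chosen so that $u[M_1,M_2]+v$ simultaneously covers every coordinate of an arbitrary target with a margin — which works precisely because $M_1<M_2$ lets $|u|\to\infty$ stretch the interval without bound.
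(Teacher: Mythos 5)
Your proposal is correct and follows essentially the same route as the paper: Lemma~\ref{lem:r:i:numbers} gives rational independence of the $a_k=\tfrac{1}{\alpha+r_k}$, Lemma~\ref{lem:dense:periodic} then yields denseness in $[M_1,M_2]^K$, and the $\R^K$ statement follows by the same affine-rescaling argument (the paper just writes out the explicit choice $u_0=\tfrac{2J}{M_2-M_1}$, $v_0=\tfrac{J(M_1+M_2)}{M_1-M_2}$ with $J=\|\bmx\|_\infty+1$ and tolerance $\tfrac{M_2-M_1}{2J}\varepsilon$ that your sketch leaves implicit). No gaps.
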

Clearly, Proposition~\ref{prop:dense} is a special case of Proposition~\ref{prop:dense:general} with $g=\sigma_1$, $\alpha=\pi$, $r_k=k$ for $k=1,2,\cdots,K$. The transcendence of $\pi$ is well known (e.g., see the Lindemann-Weierstrass Theorem).  By setting $x_1=0$ and $x_2=1$, we have $[M_1,M_2]=[0,1]$ and $\sigma_1$ is continuous on $[0,1]$, which means that the following set
	\begin{equation*}
	\Big\{\big[\sigma_1(\tfrac{w}{\pi+1}),\, \sigma_1(\tfrac{w}{\pi+2}),\, \cdots,\,  \sigma_1(\tfrac{w}{\alpha+K})\big]^T: w\in\R
	\Big\}
\end{equation*}
is dense in $[0,1]^K$ as desired. 

Finally, let us prove Proposition~\ref{prop:dense:general} by assuming Lemmas~\ref{lem:r:i:numbers} and \ref{lem:dense:periodic} are true.
\begin{proof}[Proof of Proposition~\ref{prop:dense:general}]
	By Lemma~\ref{lem:r:i:numbers}, the set of numbers
	\begin{equation*}
		\Big\{\tfrac{1}{  \alpha+r_k}:k=1,2,\cdots,K\Big\}
	\end{equation*}
	are rationally independent. Denote $a_k=\tfrac{1}{   \alpha+r_k}$ for $k=1,2,\cdots,K$. Then, by Lemma~\ref{lem:dense:periodic}, 
	\begin{equation*}
		\begin{split}
				&\  \quad\Big\{\big[g(wa_1),\, g(wa_2),\, \cdots,\,g(wa_K)\big]^T: w\in\R
			\Big\}\\
			&=\Big\{\big[g(\tfrac{w}{\alpha+r_1}),\, g(\tfrac{w}{\alpha+r_2}),\, \cdots,\,g(\tfrac{w}{\alpha+r_K})\big]^T: w\in\R
			\Big\}
		\end{split}
	\end{equation*}
	is dense in $[M_1,M_2]^K$. 
	
	Next, let us consider the case $M_1<M_2$ for the latter result. For any $\varepsilon>0$ and any $\bmx\in\R^K$, by setting $J=\|\bmx\|_{\infty}+1>0$, we have $\tfrac{\bmx+J}{2J}\in[0,1]^K$, and hence
	\begin{equation*}
		\bmy\coloneqq\tfrac{\bmx+J}{2J}(M_2-M_1)+M_1\in [M_1,M_2]^K.
	\end{equation*}
	By the former result, there exists $w_0\in\R$ such that
	\begin{equation*}
		\Big\| \bmy- \big[g(\tfrac{w_0}{\alpha+r_1}),\, g(\tfrac{w_0}{\alpha+r_2}),\, \cdots,\,g(\tfrac{w_0}{\alpha+r_K})\big]^T\Big\|_{\infty}<\tfrac{M_2-M_1}{2J}\varepsilon
	\end{equation*}
It follows from $\bmy=\tfrac{\bmx+J}{2J}(M_2-M_1)+M_1$ that 
\begin{equation*}
    \bmx=\tfrac{2J}{M_2-M_1}\bmy+\tfrac{J(M_1+M_2)}{M_1-M_2}\eqqcolon u_0\bmy+v_0,
\end{equation*}
where $u_0=\tfrac{2J}{M_2-M_1}$ and $v_0=\tfrac{J(M_1+M_2)}{M_1-M_2}$. Therefore, 
\begin{equation*}
	\begin{split}
		&\quad\  \bigg\| \bmx- \Big[u_0g(\tfrac{w_0}{\alpha+r_1})+v_0,\, u_0g(\tfrac{w_0}{\alpha+r_2})+v_0,\, \cdots,\, u_0g(\tfrac{w_0}{\alpha+r_K})+v_0\Big]^T\bigg\|_{\infty}\\
		&=\bigg\| u_0\bmy+v_0- \Big[u_0g(\tfrac{w_0}{\alpha+r_1})+v_0,\, u_0g(\tfrac{w_0}{\alpha+r_2})+v_0,\, \cdots,\, u_0g(\tfrac{w_0}{\alpha+r_K})+v_0\Big]^T\bigg\|_{\infty}\\
		&<u_0  \tfrac{M_2-M_1}{2J}\varepsilon=\tfrac{2J}{M_2-M_1}\tfrac{M_2-M_1}{2J}\varepsilon =\varepsilon.
	\end{split}
\end{equation*}
Since $\varepsilon>0$ and $\bmx\in\R^K$ are arbitrary,
the following set 
\begin{equation*}
	\bigg\{\Big[u\cdot   g(\tfrac{w}{\alpha+r_1})+v,\  u\cdot  g(\tfrac{w}{\alpha+r_2})+v,\  \cdots,\ u\cdot   g(\tfrac{w}{\alpha+r_K})+v\Big]^T: u,v,w\in\R
	\bigg\}
\end{equation*}
is dense in $\R^K$. So we finish the proof.
\end{proof}

\subsection{Proof of Lemma~\ref{lem:r:i:numbers}}\label{sec:proof:lem:r:i:numbers}

Before proving Lemma~\ref{lem:r:i:numbers}, let us first briefly discuss related concepts.
Recall that a complex number  $\alpha$ is an algebraic number if and only if there exist
$\lambda_0,\lambda_1,\cdots,\lambda_J\in \Q$ with  $\sum_{j=0}^J \lambda_j\alpha^j=0$.  The set of all algebraic numbers is denoted by $\A$. We say a complex number is \textbf{transcendental} if it is not in $\A$. Almost all complex numbers are transcendental since the set $\A$ is {countable}. 
The best known transcendental numbers are $\pi$ (the ratio of a circle's circumference to its diameter) and $e$ (the natural logarithmic base). 

In order to prove Lemma~\ref{lem:r:i:numbers}, we need an auxiliary lemma below, characterizing some properties of 
coefficients of Lagrange basis polynomials. Recall that, for any given pairwise distinct numbers $x_1,x_2,\cdots,x_K\in\R$, the Lagrange basis polynomials are 
\begin{equation}\label{eq:L:poly}
	p_k(x)\coloneqq \prod_{
		\substack{   j\in\{1,2,\cdots,K\}\\ j\neq k}   }\frac{x-x_j}{x_k-x_j}\ =\  
	\frac{x-x_1}{x_k-x_1} \ \cdots\
	\frac{x-x_{k-1}}{x_k-x_{k-1}}  
	\frac{x-x_{k+1}}{x_k-x_{k+1}}\ \cdots\  
	\frac{x-x_K}{x_k-x_K}
\end{equation}
for $k=1,2,\cdots,K$. They are polynomials of degree $\le K-1$, which means we can represent each $p_k$ by
\begin{equation*}
	p_k(x)=\sum_{j=1}^{K}a_{k,j}x^{j-1}=a_{k,1}+a_{k,2}x+\cdots+a_{k,K}x^{K-1}
\end{equation*}
for $k=1,2,\cdots,K$ and any $x\in \R$.
Thus, the coefficients of these $K$ Lagrange basis polynomials $p_1,p_2,\cdots,p_K$ form a matrix 
\begin{equation}\label{eq:matrix:L:c}
	\bmA=(a_{i,j})=\left[\begin{matrix}
		a_{1,1} & a_{1,2} & \cdots & a_{1,K}\\
		a_{2,1} & a_{2,2} & \cdots & a_{2,K}\\
		\vdots & \vdots & \ddots & \vdots\\
		a_{K,1} & a_{K,2} & \cdots & a_{K,K}
	\end{matrix}\right]\in\R^{K\times K}. 
\end{equation}

The lemma below essentially characterizes the linear independence of Lagrange basis polynomials.
\begin{lemma}\label{lem:matrix:inv}
	With the same setting just above, the matrix $\bmA$ given in Equation~\eqref{eq:matrix:L:c} is invertible.
\end{lemma}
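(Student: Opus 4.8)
The plan is to exploit the single defining property of Lagrange basis polynomials: the interpolation identity $p_k(x_i)=\delta_{k,i}$ for all $i,k\in\{1,2,\cdots,K\}$, which is immediate from Equation~\eqref{eq:L:poly} since each factor $\tfrac{x-x_j}{x_k-x_j}$ is $0$ at $x=x_j$ and the product is $1$ at $x=x_k$.

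First I would recast this identity in matrix form. Let $\bm{V}=(x_i^{j-1})_{1\le i,j\le K}$ be the Vandermonde matrix of the nodes $x_1,\cdots,x_K$. Substituting $x=x_i$ into the monomial expansion $p_k(x)=\sum_{j=1}^{K}a_{k,j}x^{j-1}$ and using $p_k(x_i)=\delta_{k,i}$ gives $\sum_{j=1}^{K}a_{k,j}x_i^{j-1}=\delta_{k,i}$, i.e. $\bmA\bm{V}^T=\bm{I}_K$, where $\bmA=(a_{i,j})$ is exactly the matrix in Equation~\eqref{eq:matrix:L:c}. Since $x_1,\cdots,x_K$ are pairwise distinct, the Vandermonde determinant $\det\bm{V}=\prod_{1\le i<j\le K}(x_j-x_i)$ is nonzero, so $\bm{V}$ (hence $\bm{V}^T$) is invertible; combined with $\bmA\bm{V}^T=\bm{I}_K$ this forces $\bmA$ to be invertible, with $\bmA^{-1}=\bm{V}^T$.

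An alternative route, which avoids even invoking the Vandermonde formula, is a direct linear-independence argument: if $\bmA$ were singular there would exist scalars $c_1,\cdots,c_K$, not all zero, with $\sum_{k=1}^{K}c_k a_{k,j}=0$ for every $j$; then the polynomial $\sum_{k=1}^{K}c_k p_k$ has all its monomial coefficients equal to $0$, so it vanishes identically, yet evaluating it at each node $x_i$ yields $\sum_{k=1}^{K}c_k p_k(x_i)=c_i=0$, a contradiction. I do not expect a genuine obstacle here — the lemma is essentially the statement that a change-of-basis matrix between two bases of the degree-$\le K-1$ polynomials is invertible. The only point demanding care is the index bookkeeping (rows of $\bmA$ indexing the polynomials $p_k$, columns indexing monomial degrees $j-1$), so that the product identity comes out correctly as $\bmA\bm{V}^T=\bm{I}_K$ rather than its transpose.
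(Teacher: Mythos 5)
Your proposal is correct and takes essentially the same route as the paper: both establish the identity $\bmA\,\bm{V}^T=\bm{I}_K$ (the paper derives it by evaluating the interpolant $\sum_k y_k p_k$ at the nodes for arbitrary $\bmy$, you get it directly from $p_k(x_i)=\delta_{k,i}$) and then invoke invertibility of the Vandermonde matrix for pairwise distinct nodes. The direct delta-identity derivation and the linear-independence aside are fine, but they are only minor streamlinings of the paper's argument.
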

\begin{proof}
	For any $\bmy=[y_1,y_2,\cdots,y_K]\in \R^K$, by the definition of Lagrange basis polynomials $p_k(x)$ for $k=1,2,\cdots,K$ in Equation~\eqref{eq:L:poly}, $p(x)=\sum_{k=1}^K y_k p_k(x)$ is the target interpolation polynomial for sample points $(x_1,y_1),(x_2,y_2),\cdots,(x_K,y_K)$. That is,  for any $\ell\in \{1,2,\cdots,K\}$, we have
	\begin{equation*}
		\begin{split}
			y_\ell &=p(x_\ell)=\sum_{k=1}^K y_k p_k(x_\ell)=\sum_{k=1}^K y_k \sum_{j=1}^K a_{k,j}x_\ell^{j-1}\\
			&=[y_1,y_2,\cdots,y_K]\cdot
			\begin{bmatrix}
				a_{1,1} & a_{1,2} & \cdots & a_{1,K}\\
				a_{2,1} & a_{2,2} & \cdots & a_{2,K}\\
				\vdots & \vdots & \ddots & \vdots\\
				a_{K,1} & a_{K,2} & \cdots & a_{K,K}
			\end{bmatrix}\cdot
			\begin{bmatrix}
				x_{\ell}^0 \\
				x_{\ell}^1 \\
				\vdots \\
				x_{\ell}^{K-1}
			\end{bmatrix}
			=\bmy^T \bmA \begin{bmatrix}
				x_{\ell}^0 \\
				x_{\ell}^1 \\
				\vdots \\
				x_{\ell}^{K-1}
			\end{bmatrix}.
		\end{split}
	\end{equation*}
	It follows that
	\begin{equation*}
		\begin{split}
			\bmy^T=[y_1,y_2,\cdots,y_K]= 
			\bmy^T \bmA 
			\begin{bmatrix}
				x_{1}^0 & x_{2}^0 & \cdots & x_{K}^0\\
				x_{1}^1 & x_{2}^1 & \cdots & x_{K}^1\\
				\vdots & \vdots & \ddots & \vdots\\
				x_{1}^{K-1} & x_{2}^{K-1} & \cdots & x_{K}^{K-1}
			\end{bmatrix}.
		\end{split}
	\end{equation*}
	Since $\bmy\in\R^K$ is arbitrary, we have 
	\begin{equation*}
		\begin{split}
			\bmA \begin{bmatrix}
				x_{1}^0 & x_{2}^0 & \cdots & x_{K}^0\\
				x_{1}^1 & x_{2}^1 & \cdots & x_{K}^1\\
				\vdots & \vdots & \ddots & \vdots\\
				x_{1}^{K-1} & x_{2}^{K-1} & \cdots & x_{K}^{K-1}
			\end{bmatrix}=\bm{I}_K,
		\end{split}
	\end{equation*}
	where $\bm{I}_K\in\R^{K\times K}$ is the identity matrix. Recall that $x_1,x_2,\cdots,x_K$ are pairwise distinct, which implies the Vandermonde matrix 
	\begin{equation*}
		\begin{split}
			\begin{bmatrix}
				x_{1}^0 & x_{2}^0 & \cdots & x_{K}^0\\
				x_{1}^1 & x_{2}^1 & \cdots & x_{K}^1\\
				\vdots & \vdots & \ddots & \vdots\\
				x_{1}^{K-1} & x_{2}^{K-1} & \cdots & x_{K}^{K-1}
			\end{bmatrix}
		\end{split}
	\end{equation*}
	is invertible. Thus, $\bmA$ is also invertible. So we complete the proof.
\end{proof}

With Lemma~\ref{lem:matrix:inv} at hand, we are ready to prove Lemma~\ref{lem:r:i:numbers}.
\begin{proof}[Proof of Lemma~\ref{lem:r:i:numbers}]
	Let $x_k=-r_k\in\Q$ for $k=1,2,\cdots,K$ and define the Lagrange basis polynomials as
	\begin{equation*}
		\begin{split}
			p_k(x)\coloneqq \prod_{
				\substack{   j\in\{1,2,\cdots,K\}\\ j\neq k}   }\frac{x-x_j}{x_k-x_j}\ =\  
			w_k \prod_{
				\substack{   j\in\{1,2,\cdots,K\}\\ j\neq k}   }({x-x_j}),
		\end{split}
	\end{equation*}
	where
	\begin{equation*}
	    w_k=\prod_{
				\substack{   j\in\{1,2,\cdots,K\}\\ j\neq k}   }\frac{1}{x_k-x_j}\neq 0\quad \tn{for $k=1,2,\cdots,K$.}
	\end{equation*}
	 It follows from $x_k\in \Q$ that $w_k$ is rational and nonzero, i.e., $w_k\in \Q/\{0\}$ for any $k$. Clearly, each $p_k$ is a polynomial of degree $\le K-1$. That means we can represent $p_k$ by
	 \begin{equation*}
		p_k(x)=\sum_{j=1}^{K}a_{k,j}x^{j-1}=a_{k,1}+a_{k,2}x+\cdots+a_{k,K}x^{K-1}
	\end{equation*}
	for $k=1,2,\cdots,K$ and any $x\in \R$, where each coefficient $a_{k,j}$ is rational.
    Therefore, the coefficients of $p_1,p_2,\cdots,p_K$ form a matrix 
	\begin{equation*}
		\bmA=(a_{i,j})=\left[\begin{matrix}
			a_{1,1} & a_{1,2} & \cdots & a_{1,K}\\
			a_{2,1} & a_{2,2} & \cdots & a_{2,K}\\
			\vdots & \vdots & \ddots & \vdots\\
			a_{K,1} & a_{K,2} & \cdots & a_{K,K}
		\end{matrix}\right]\in\Q^{K\times K}. 
	\end{equation*}

	Now assume there exist rational numbers $\lambda_1,\lambda_2,\cdots,\lambda_K\in\Q$ such that $\sum_{k=1}^K \lambda_k\cdot \tfrac{1}{\alpha+r_k}=0$. Our goal is to prove $\lambda_1=\lambda_2=\cdots=\lambda_K=0$. 
	Clearly, we have 
	\begin{equation*}
		\begin{split}
			0&= \sum_{k=1}^{K}\frac{\lambda_k}{\alpha+r_k}
			=
			\underbrace{
			\sum_{k=1}^{K}\frac{\lambda_k}{\alpha-x_k}
			}_{= 0}
			=\prod_{j=1}^K (\alpha-x_j) 
			\cdot
			\underbrace{\sum_{k=1}^K \frac{\lambda_k}{\alpha-x_k}
			}_{= 0}
			=
			\sum_{k=1}^K \frac{\lambda_k}{w_k}\cdot w_k\prod_{\substack{   j\in\{1,2,\cdots,K\}\\ j\neq k}}(\alpha-x_j)\\
			&=\sum_{k=1}^K \frac{\lambda_k}{w_k}\cdot p_k(\alpha)
			= \sum_{k=1}^K \frac{\lambda_k}{w_k} \sum_{j=1}^{K}a_{k,j}\alpha^{j-1}
			= \sum_{j=1}^{K}\Big( 
			\underbrace{\sum_{k=1}^K\frac{\lambda_k}{w_k}a_{k,j}}_{=0 \tn{ since }\alpha\in\R\backslash\A}
			\Big)   \cdot \alpha^{j-1}.
		\end{split}
	\end{equation*}
	 For any $k,j\in \{1,2,\cdots,K\}$, we have $\lambda_k,w_k,a_{k,j}\in\Q$, implying $\sum_{k=1}^K\frac{\lambda_k}{w_k}a_{k,j}\in \Q$.
	Since $\alpha\in \R\backslash \A$ is a transcendental number, the coefficients must be $0$, i.e., 
	\begin{equation*}
		\sum_{k=1}^K\frac{\lambda_k}{w_k}a_{k,j}=0\quad \tn{for $j=1,2,\cdots,K$.}
	\end{equation*}
	It follows that 
	\begin{equation*}		
		\bm{0}=\begin{bmatrix}
			\frac{\lambda_1}{w_1},\frac{\lambda_2}{w_2},\cdots,\frac{\lambda_K}{w_K}
		\end{bmatrix}
		\begin{bmatrix}
			a_{1,1} & a_{1,2} & \cdots & a_{1,K}\\
			a_{2,1} & a_{2,2} & \cdots & a_{2,K}\\
			\vdots & \vdots & \ddots & \vdots\\
			a_{K,1} & a_{K,2} & \cdots & a_{K,K}
		\end{bmatrix}
		=\begin{bmatrix}
			\frac{\lambda_1}{w_1},\frac{\lambda_2}{w_2},\cdots,\frac{\lambda_K}{w_K}
		\end{bmatrix}\bmA.
	\end{equation*}
	By Lemma~\ref{lem:matrix:inv}, $\bmA$ is invertible. Thus, $\big[\frac{\lambda_1}{w_1},\frac{\lambda_2}{w_2},\cdots,\frac{\lambda_K}{w_K}\big]=\bm{0}$, which implies $\lambda_1=\lambda_2=\cdots=\lambda_K=0$. Hence, the set of numbers $\big\{\tfrac{1}{\alpha+r_k}:k=1,2,\cdots,K\big\}$
	are rationally independent, which means we finish the proof.	
\end{proof}

\subsection{Proof of Lemma~\ref{lem:dense:periodic}}\label{sec:proof:lem:dense:periodic}

The proof of Lemma~\ref{lem:dense:periodic} is mainly based on the fact that
an irrational winding  is dense on the torus (e.g., see Lemma~2 of \cite{yarotsky:2021:02}). 
For completeness, we establish a lemma below and give its detailed proof.
\begin{lemma}\label{lem:dense:decimal}
	Given any $K\in \N^+$ and an arbitrary set of rationally independent numbers $\{a_k:k=1,2,\cdots,K\}\subseteq \R$, the following set 
	\begin{equation*}
		\Big\{\,   \big[\tau(wa_1),\   \tau(wa_2),\    \cdots,\  \tau(wa_K)\big]^T \,   : \,   w\in\R \,   \Big\}\subseteq [0,1)^K
	\end{equation*}
is dense in $[0,1]^K$, where $ \tau(x)\coloneqq x-\lfloor x\rfloor$  for any $x\in\R$.
\end{lemma}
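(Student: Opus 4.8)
The plan is to recognize Lemma~\ref{lem:dense:decimal} as the classical statement that an ``irrational'' winding line is equidistributed (in particular dense) on the $K$-dimensional torus. Identifying $[0,1)^K$ with the torus $\mathbb{T}^K\coloneqq\R^K/\Z^K$ via the coordinatewise fractional-part map $\tau$, the set in question is precisely the image of the one-parameter subgroup $O\coloneqq\{\,w\bm{a}+\Z^K : w\in\R\,\}$ (with $\bm{a}=[a_1,\cdots,a_K]^T$) under this identification, and density of $O$ in $\mathbb{T}^K$ is equivalent to the asserted density in $[0,1]^K$: any boundary point of $[0,1]^K$ is approximated by interior points of $(0,1)^K$, on which $\tau$ is a homeomorphism onto an open dense subset of $\mathbb{T}^K$. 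Hence it suffices to prove that $O$ is dense in $\mathbb{T}^K$ under the hypothesis that $a_1,\cdots,a_K$ are rationally independent.

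I would prove this via Weyl's equidistribution criterion for the continuous flow $w\mapsto w\bm{a}$. For a nonzero integer vector $\bm{n}=[n_1,\cdots,n_K]^T\in\Z^K\backslash\{\bm{0}\}$, rational independence gives $\langle\bm{n},\bm{a}\rangle=\sum_{k=1}^K n_k a_k\neq 0$, so
\[
\frac{1}{T}\int_0^T e^{2\pi\mathrm{i}\,\langle\bm{n},\,w\bm{a}\rangle}\,\mathrm{d}w
=\frac{1}{T}\int_0^T e^{2\pi\mathrm{i}\, w\,\langle\bm{n},\bm{a}\rangle}\,\mathrm{d}w
=\frac{e^{2\pi\mathrm{i}\,T\langle\bm{n},\bm{a}\rangle}-1}{2\pi\mathrm{i}\,T\,\langle\bm{n},\bm{a}\rangle},
\]
whose modulus is at most $\tfrac{1}{\pi\,T\,|\langle\bm{n},\bm{a}\rangle|}\to 0$ as $T\to\infty$. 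By Weyl's theorem the pushforwards of $\tfrac1T\,\mathrm{d}w$ under $w\mapsto w\bm{a}\bmod\Z^K$ converge weakly to Lebesgue measure on $[0,1)^K$; hence for every nonempty open ball $B\subseteq[0,1)^K$ one has $\tfrac1T\,\big|\{\,w\in[0,T): w\bm{a}\bmod\Z^K\in B\,\}\big|\to|B|>0$, so $O$ meets every such $B$ and is therefore dense, as desired. An alternative that avoids equidistribution is a character-theoretic one-liner: the closure $\overline{O}$ is a closed subgroup of the compact group $\mathbb{T}^K$, and if $\overline{O}\neq\mathbb{T}^K$ then Pontryagin duality yields a character $\bm{x}\mapsto e^{2\pi\mathrm{i}\langle\bm{n},\bm{x}\rangle}$ with $\bm{n}\neq\bm{0}$ that is trivial on $O$, forcing $\langle\bm{n},\bm{a}\rangle=0$, a contradiction.

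There is no deep obstacle here; the care points are (i) invoking Weyl's criterion in the continuous-parameter (flow) form rather than the sequence form, together with the elementary exponential-integral estimate above, and (ii) the bookkeeping that converts ``dense in $\mathbb{T}^K$'' into ``dense in $[0,1]^K$'' through the fractional-part map, which is discontinuous at integer coordinates. If one insisted on a fully self-contained argument using only the pigeonhole principle (Dirichlet simultaneous approximation) and induction on $K$, the genuinely delicate step would be the passage from $K-1$ to $K$, where rational independence must be used to exclude degeneracies; this is exactly the content of Lemma~$2$ of \cite{yarotsky:2021:02}, to which one may simply appeal, since our rational-independence assumption is precisely the non-degeneracy hypothesis appearing there.
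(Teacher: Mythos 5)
Your proposal is correct, but it takes a genuinely different route from the paper. The paper proves Lemma~\ref{lem:dense:decimal} from scratch by induction on $K$: the base case is the one-dimensional observation that $\{\tau(wa_1):w\in\R\}=[0,1)$, and the inductive step uses the Bolzano--Weierstrass theorem to produce an integer $\hatn$ and near-integer multiples $\hata_j=\tfrac{\hatn}{a_J}a_j+k_j$ with $|\hata_j|<\varepsilon$, verifies that the $\hata_j$ remain rationally independent (this is where rational independence enters the inductive step), and then tracks fractional parts explicitly, ending with the absolute constant $11\varepsilon$; the whole argument is elementary and self-contained, which matches the paper's stated goal of giving a complete proof rather than citing Lemma~2 of \cite{yarotsky:2021:02}. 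You instead pass to the torus $\R^K/\Z^K$ and prove density of the one-parameter subgroup $\{w\bm{a}\bmod\Z^K\}$ either via Weyl's equidistribution criterion in the continuous-flow form (the exponential-integral bound $\tfrac{1}{\pi T|\langle\bm{n},\bm{a}\rangle|}$ is correct, and rational independence of $a_1,\cdots,a_K$ alone is indeed exactly the right non-degeneracy condition for the flow, in contrast with the discrete-orbit case) or via the Pontryagin-duality one-liner; your handling of the discontinuity of $\tau$ at the boundary of $[0,1]^K$ is also adequate. What each approach buys: yours is much shorter and yields the stronger conclusion of equidistribution (positive density of hitting times for every ball), at the cost of importing classical machinery (Weyl's theorem for flows, or structure/duality theory of compact abelian groups); the paper's induction is longer and only gives density, but is fully elementary, produces an explicit approximation constant, and exposes the specific parameter $w_0=\tfrac{\lfloor s_0\rfloor\hatn+\tildexi_J}{a_J}$, which the authors then use to remark that the required network weights may grow without bound as $\varepsilon\to 0$.
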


The proof of  Lemma~\ref{lem:dense:decimal} can be found later in this section. Now let us first prove Lemma~\ref{lem:dense:periodic} by assuming Lemma~\ref{lem:dense:decimal} is true.
\begin{proof}[Proof of Lemma~\ref{lem:dense:periodic}]
	Define $\tildeg(x)\coloneqq g(Tx)$ for any $x\in \R$. Clearly, $\tildeg$ is  periodic with period $1$ since $g$ is  periodic with period $T$.
	The continuity of $g$ on $[x_1,x_2]$ implies $\tildeg$ is continuous on $[\tfrac{x_1}{T},\tfrac{x_2}{T}]$ and therefore uniformly continuous on $[\tfrac{x_1}{T},\tfrac{x_2}{T}]$. 
	For any $\varepsilon>0$, there exists $\delta\in (0,\tfrac{x_2-x_1}{T})$ such that
	\begin{equation}\label{eq:gtilde:uniformly:cont}
		|\tildeg(u)-\tildeg(v)|<\varepsilon\quad\tn{for any $u,v\in [\tfrac{x_1}{T},\tfrac{x_2}{T}]$ with $|u-v|<\delta$.}
	\end{equation}
	
Given any $\bmxi=[\xi_1,\xi_2,\cdots,\xi_K]\in [M_1,M_2]^K$, by the extreme value theorem and the intermediate value theorem, there exists $z_1,z_2,\cdots,z_K\in [x_1,x_2]$ such that 
\begin{equation}\label{eq:g:zk:xik}
    g(z_k)=\xi_k\quad \tn{ for any $k=1,2,\cdots,K$.}
\end{equation}

For $k=1,2,\cdots,K$, 
set $y_k=z_k/T \in [\tfrac{x_1}{T},\tfrac{x_2}{T}]$ and \begin{equation*}
	\tildey_k=y_k +\tfrac{\delta}{2}\cdot\one_{\{y_k\le \tfrac{x_1}{T}+\tfrac{\delta}{2}\}}-\tfrac{\delta}{2}\cdot\one_{\{y_k\ge \tfrac{x_2}{T}-\tfrac{\delta}{2}\}}.
\end{equation*}
Then, for $k=1,2,\cdots,K$,  we have 
\begin{equation*}
	\tildey_k=y_k +\tfrac{\delta}{2}\cdot\one_{\{y_k\le \tfrac{x_1}{T}+\tfrac{\delta}{2}\}}-\tfrac{\delta}{2}\cdot\one_{\{y_k\ge \tfrac{x_2}{T}-\tfrac{\delta}{2}\}}\in \big[\tfrac{x_1}{T}+\tfrac{\delta}{2},\,\tfrac{x_2}{T}-\tfrac{\delta}{2}\big]
\end{equation*}
and 
\begin{equation*}
	|\tildey_k-y_k|\le \Big|\tfrac{\delta}{2}\cdot\one_{\{y_k\le \tfrac{x_1}{T}+\tfrac{\delta}{2}\}}-\tfrac{\delta}{2}\cdot\one_{\{y_k\ge \tfrac{x_2}{T}-\tfrac{\delta}{2}\}}\Big|\le \delta/2.
\end{equation*}

Define $\tau(x)\coloneqq  x-\lfloor x\rfloor$ for any $x\in\R$. Clearly, $[\tau(\tildey_1),\tau(\tildey_2),\cdots,\tau(\tildey_K)]^T\in [0,1]^K$. Then, by Lemma~\ref{lem:dense:decimal}, there exists $w_0\in\R$ such that
\begin{equation*}
	|\tau(w_0a_k)-\tau(\tildey_k)|<\delta/2\quad \tn{for $k=1,2,\cdots,K$.}
\end{equation*}
It follows that
\begin{equation*}
	\Big|\tau(w_0a_k)+\lfloor \tildey_k\rfloor- \tildey_k\Big|=\Big|\tau(w_0a_k)- (\tildey_k-\lfloor \tildey_k\rfloor)\Big|=\big|\tau(w_0a_k)-\tau(\tildey_k)\big|<\delta/2
\end{equation*}
for $k=1,2,\cdots,K$. Since $\tildey_k\in [\tfrac{x_1}{T}+\tfrac{\delta}{2},\tfrac{x_2}{T}-\tfrac{\delta}{2}]$, we have $\tau(w_0a_k)+\lfloor \tildey_k\rfloor\in [\tfrac{x_1}{T},\tfrac{x_2}{T}]$. 
Besides,
\begin{equation*}
	\Big|\tau(w_0a_k)+\lfloor \tildey_k\rfloor-y_k\Big|\le \Big|\tau(w_0a_k)+\lfloor \tildey_k\rfloor-\tildey_k\Big|+ \big|\tildey_k-y_k\big|<\delta/2+\delta/2=\delta
\end{equation*}
for $k=1,2,\cdots,K$. Then, by Equation~\eqref{eq:gtilde:uniformly:cont}, we have
\begin{equation*}
	\Big| \tildeg\big(\tau(w_0a_k)+\lfloor \tildey_k\rfloor\big)-\tildeg(y_k) \Big|<\varepsilon\quad \tn{for $k=1,2,\cdots,K$.}
\end{equation*}

Recall that $\tildeg$ is  periodic with period $1$, from which we deduce
\begin{equation*}
	\tildeg\big(\tau(w_0a_k)+\lfloor \tildey_k\rfloor\big)=\tildeg\big(w_0a_k-\lfloor w_0 a_k\rfloor+\lfloor \tildey_k\rfloor\big)=\tildeg(w_0a_k)=g(T\cdot  w_0a_k)
\end{equation*}
for $k=1,2,\cdots,K$. Also,  we have
\begin{equation*}
    \tildeg(y_k)=g(Ty_k)=g(z_k)=\xi_k \quad \tn{ for $k=1,2,\cdots,K$, }
\end{equation*}
where the last equality comes from Equation~\eqref{eq:g:zk:xik}.
It follows that
\begin{equation*}
	\begin{split}
		\big|g(T\cdot  w_0a_k)- \xi_k\big|=\Big| \tildeg\big(\tau(w_0a_k)+\lfloor \tildey_k\rfloor\big)-\tildeg(y_k) \Big|<\varepsilon\quad 
		\tn{for $k=1,2,\cdots,K$.}
	\end{split}
\end{equation*}
That is 
\begin{equation*}
	\Big\| \big[g(w_1a_1),\, g(w_1a_2),\, \cdots,\, g(w_1a_K)\big]^T -\bmxi    \Big\|_{\infty}<\varepsilon,
\end{equation*}
where $w_1=T\cdot  w_0\in\R$. Since $\bmxi\in [M_1,M_2]^K$ and $\varepsilon>0$ are arbitrary, the following set
\begin{equation*}
	\Big\{\big[g(wa_1),\, g(wa_2),\, \cdots,\,g(wa_K)\big]^T: w\in\R
	\Big\}
\end{equation*}
is dense in $[M_1,M_2]^K$ as desired. So we finish the proof.
\end{proof}

Finally, let us present the detailed proof of  Lemma~\ref{lem:dense:decimal}.
\begin{proof}[Proof of Lemma~\ref{lem:dense:decimal}]
	We prove this lemma by mathematical induction. 
	First, we consider the case $K=1$.  Note that $a_1\neq 0$ since it is rationally independent.
	Thus, we have $\{\tau(wa_1):w\in\R\}=[0,1)$, which implies $\{\tau(wa_1):w\in\R\}$ is dense in $[0,1]$.
	
	Now assume this lemma holds for $K=J-1\in\N^+$. Our goal is to prove the case $K=J$. 
	Given any $\varepsilon\in (0,1/100)$ and an arbitrary $\bmxi=[\xi_1,\xi_2,\cdots,\xi_J]^T\in [0,1]^J$, our goal is to find a proper $w\in\R$ such that
	\begin{equation}\label{eq:goal:w}
		|\tau(wa_j)-\xi_j|<C\varepsilon\quad \tn{for $j=1,2,\cdots,J$,\quad where $C$ is an absolute constant.}
	\end{equation}
We remark that the constant $C$ in the above equation is actually equal to $11$ in our proof.
As we shall see later, we need an assumption that the given point is in $[6\varepsilon,1-6\varepsilon]^J$. Thus, we slightly modify $\bmxi$ by setting
\begin{equation*}
	\tildexi_j=\xi_j+6\varepsilon\cdot\one_{\{\xi_j\le 6\varepsilon\}}-6\varepsilon\cdot\one_{\{\xi_j\ge 1-6\varepsilon\}}
	\quad \tn{for $j=1,2,\cdots,J$.}
\end{equation*}
Then, we have 
\begin{equation}\label{eq:tildexi1}
	\tildexi_j\in [6\varepsilon,1-6\varepsilon]\quad \tn{for $j=1,2,\cdots,J$}
\end{equation}
and
\begin{equation}\label{eq:tildexi2}
	\big|\xi_j-\tildexi_j\big|
	=\big|6\varepsilon\cdot\one_{\{\xi_j\le 6\varepsilon\}}-6\varepsilon\cdot\one_{\{\xi_j\ge 1-6\varepsilon\}}\big|\le 6\varepsilon\quad \tn{for $j=1,2,\cdots,J$.}
\end{equation}

For any $n\in \N^+$, we define 
\begin{equation*}
	\hatxi_j\coloneqq \tau(\tildexi_j-\tfrac{\tildexi_J}{a_J}a_j)\quad \tn{for $j=1,2,\cdots,J$.}
\end{equation*}
Then $\hatxi_J=0$ and $\hatxi_j\in [0,1)$ for $j=1,2,\cdots,J-1$. To approximate $[\hatxi_1,\hatxi_2,\cdots,\hatxi_{J-1}]^T\in [0,1)^{J-1}$, we only need to consider $J-1$ indices, and, therefore, we can use the induction hypothesis to continue our proof.

Clearly, the rational independence of $a_1,a_2,\cdots,a_J$ implies none of them is equal to  zero. Define
\begin{equation*}
	\bmb_n\coloneqq \big[\tau(\tfrac{n}{a_J}a_1),\, \tau(\tfrac{n}{a_J}a_2),\, \cdots,\, \tau(\tfrac{n}{a_J}a_{J-1})\big]^T\in [0,1)^{J-1}.
\end{equation*}
Then, the bounded sequence $(\bmb_n)_{n=1}^\infty$ has a convergent subsequence by the Bolzano-Weierstrass Theorem. 
Thus, there exist $n_1,n_2\in \N^+$ with $n_1<n_2$ such that 
$\|\bmb_{n_2}-\bmb_{n_1}\|_{\infty}<\varepsilon$, i.e.,
\begin{equation*}
	\big| \tau(\tfrac{n_2}{a_J}a_j) - \tau(\tfrac{n_1}{a_J}a_j) \big|<\varepsilon\quad \tn{for $j=1,2,\cdots,J-1$.}
\end{equation*}
Set $\hatn=n_2-n_1\in\N^+$ and 
\begin{equation*}
    k_j=\big\lfloor \tfrac{n_1}{a_J}a_j\big\rfloor-\big\lfloor \tfrac{n_2}{a_J}a_j\big\rfloor\in \Z\quad \tn{  for $j=1,2,\cdots,J-1$.}
\end{equation*}
Then, by defining 
\begin{equation*}
	\hata_j\coloneqq \tfrac{\hatn}{a_J}a_j +k_j\quad \tn{ for $j=1,2,\cdots,J-1$,}
\end{equation*} 
we have 
\begin{equation}\label{eq:hata:bound}
\begin{split}
		|\hata_j|=\big| \tfrac{\hatn}{a_J}a_j +k_j \big|
	&=\Big| \tfrac{n_2}{a_J}a_j - \tfrac{n_1}{a_J}a_j +\big\lfloor \tfrac{n_1}{a_J}a_j\big\rfloor-\big\lfloor \tfrac{n_2}{a_J}a_j\big\rfloor \Big|\\
	&=\Big| \big(\tfrac{n_2}{a_J}a_j -\big\lfloor \tfrac{n_2}{a_J}a_j\big\rfloor\big) 
	- \big(\tfrac{n_1}{a_J}a_j -\big\lfloor \tfrac{n_1}{a_J}a_j\big\rfloor\big)\Big|\\
	&=\big| \tau(\tfrac{n_2}{a_J}a_j) - \tau(\tfrac{n_1}{a_J}a_j) \big|<\varepsilon.
\end{split}
\end{equation}

It is easy to verify that $\hata_1,\hata_2,\cdots,\hata_{J-1}$ are rationally independent. To see this, assume there exist $\lambda_1,\lambda_2,\cdots,\lambda_{J-1}\in \Q$ such that
\begin{equation*}
 0=\sum_{j=1}^{J-1}\lambda_j \hata_j=	\sum_{j=1}^{J-1}\lambda_j\big( \tfrac{\hatn}{a_J}a_j +k_j \big)=\sum_{j=1}^{J-1}\lambda_j\tfrac{\hatn}{a_J}a_j+ \sum_{j=1}^{J-1}\lambda_jk_j.
\end{equation*}
It follows that
\begin{equation*}
	0=	\sum_{j=1}^{J-1}\lambda_j\hatn a_j+ \Big(\sum_{j=1}^{J-1}\lambda_j k_j\Big)a_J.
\end{equation*}
Recall that $\hatn\in \N^+$, $k_j\in \Z$, and $\lambda_j\in \Q$ for any $j$. That means the coefficients $\lambda_j\hatn$ and $\sum_{j=1}^{J-1}\lambda_jk_j$ are rational for any $j$.
Since $a_1,a_2,\cdots,a_J$ are rationally independent, we have 
\begin{equation*}
    \lambda_j\hatn=0\quad\tn{and}\quad \sum_{j=1}^{J-1}\lambda_jk_j=0 \quad \tn { for $j=1,2,\cdots,J-1$.}
\end{equation*}
It follows from  $\hatn=n_2-n_1>0$ that $\lambda_1=\lambda_2=\cdots=\lambda_{J-1}=0$. Therefore, $\hata_1,\hata_2,\cdots,\hata_{J-1}$ are rationally independent as desired.

By the  induction hypothesis, the following set
\begin{equation*}
	\Big\{\big[\tau( s\cdot\hata_1),  \  
	\tau(s\cdot\hata_2),\ \cdots,\ 
	\tau(s\cdot\hata_{J-1})\big]^T:s\in\R \Big\}\subseteq [0,1)^{J-1}
\end{equation*}
is dense in $[0,1]^{J-1}$. Recall that 
$\hatxi_j=\tau(\tildexi_j-\tfrac{\tildexi_J}{a_J}a_j)\in [0,1]$ for $j=1,2,\cdots,J-1$, implying
\begin{equation*}
	\hatxi_j+3\varepsilon\cdot\one_{\{\hatxi_j\le 3\varepsilon\}}-3\varepsilon\cdot\one_{\{\hatxi_j\ge 1- 3\varepsilon\}} \in [3\varepsilon,1-3\varepsilon].
\end{equation*}
Hence, there exists $s_0\in\R$ such that
\begin{equation*}
	\Big| \tau(s_0\hata_j) -\Big(
	\hatxi_j+3\varepsilon\cdot\one_{\{\hatxi_j\le 3\varepsilon\}}-3\varepsilon\cdot\one_{\{\hatxi_j\ge 1- 3\varepsilon\}}
	\Big)\Big|<\varepsilon
\end{equation*}
for $j=1,2,\cdots,J-1$. It follows that 
\begin{equation*}
	\tau(s_0\hata_j)\in [2\varepsilon,1-2\varepsilon] \quad \tn{for $j=1,2,\cdots,J-1$}
\end{equation*}
and 
\begin{equation}\label{eq:s0:hatxi}
	\Big| \tau(s_0\hata_j) -
	\hatxi_j	\Big|
	<\varepsilon+\Big|3\varepsilon\cdot\one_{\{\hatxi_j\le 3\varepsilon\}}-3\varepsilon\cdot\one_{\{\hatxi_j\ge 1- 3\varepsilon\}}\Big|\le 4\varepsilon
\end{equation}
for $j=1,2,\cdots,J-1$.

To estimate $\tau(\lfloor s_0\rfloor\hata_j)-\hatxi_j$, we need to bound $\tau(s_0\hata_j)-\tau(\lfloor s_0\rfloor\hata_j)$. To this end, we need an 
observation for any $x,y\in\R$ as follows.
\begin{equation}\label{eq:observe:xy}
	|x-y|<\varepsilon \tn{\ \  and \   } \tau(x)\in [2\varepsilon,1-2\varepsilon]  
	 \quad   \Longrightarrow \quad    |\tau(x)-\tau(y)|<\varepsilon.
\end{equation}
In fact, $\tau(x)\in [2\varepsilon,1-2\varepsilon]$ implies $\varepsilon\le \tau(x)-\varepsilon\le \tau(x)+\varepsilon\le 1-\varepsilon$, from which we deduce
\begin{equation*}
    \begin{split}
        	y\in [x-\varepsilon,x+\varepsilon]
        	&=
	\Big[\lfloor x\rfloor + \underbrace{\tau(x)-\varepsilon}_{\ge \varepsilon}, 
	\lfloor x\rfloor +\underbrace{\tau(x)+\varepsilon}_{\le 1- \varepsilon}    \Big]\\
	&\subseteq 
	\Big[\lfloor x\rfloor +\varepsilon, \lfloor x\rfloor +1-\varepsilon\Big]\subseteq \Big[\lfloor x\rfloor, \lfloor x\rfloor +1\Big).
    \end{split}
\end{equation*}
Then, we have $\lfloor y\rfloor=\lfloor x\rfloor$, which implies 
\begin{equation*}
    \begin{split}
        |\tau(x)-\tau(y)|&= \big|\tau(x)-\tau(y)+\lfloor x\rfloor-\lfloor y\rfloor\big|\\
        &= \Big|\big(\tau(x)+\lfloor x\rfloor\big)-\big(\tau(y)+\lfloor y\rfloor\big)\Big|=|x-y|<\varepsilon.
    \end{split}
\end{equation*}
Thus, Equation~\eqref{eq:observe:xy} is proved.

By Equation~\eqref{eq:hata:bound}, we have
\begin{equation*}
	\Big|s_0\hata_j-\lfloor s_0\rfloor \hata_j\Big|\le \Big|s_0-\lfloor s_0\rfloor \Big|\cdot|\hata_j|\le |\hata_j|<\varepsilon\quad \tn{for $j=1,2,\cdots,J-1$.}
\end{equation*}
Recall that 
\begin{equation*}
    \tau(s_0\hata_j)\in [2\varepsilon,1-2\varepsilon]\quad \tn{ for $j=1,\cdots,J-1$.}
\end{equation*}
Then, for each $j\in\{1,2,\cdots,J-1\}$, by the observation above in Equation~\eqref{eq:observe:xy} (set $x=s_0\hata_j$ and $y=\lfloor s_0\rfloor\hata_j$ therein), we have $\big|\tau(s_0\hata_j)-\tau(\lfloor s_0\rfloor\hata_j)\big|<\varepsilon$.

Recall that $\hatxi_j= \tau(\tildexi_j-\tfrac{\tildexi_J}{a_J}a_j)$ for $j=1,2,\cdots,J$.
Therefore, by Equation~\eqref{eq:s0:hatxi}, we have
\begin{equation*}
	\begin{split}
			\Big| \tau(\lfloor s_0\rfloor\hata_j) - \tau(\tildexi_j-\tfrac{\tildexi_J}{a_J}a_j)	\Big|
			&=\Big| \tau(\lfloor s_0\rfloor\hata_j) - \hatxi_j	\Big|\\ 
			&\le \Big| \tau(\lfloor s_0\rfloor\hata_j) - \tau(s_0\hata_j)\Big| + \Big| \tau(s_0\hata_j) -
			\hatxi_j	\Big|< \varepsilon+4\varepsilon=5\varepsilon,
	\end{split}
\end{equation*}
for $j=1,2,\cdots,J-1$. 

Observe that, for any $x,y\in \R$, there exist $z\in \Z$ such that
$\tau(x)-\tau(y)=x-y-z$. To see this, we set $z=\lfloor x\rfloor-\lfloor y\rfloor\in\Z$ and then  $\tau(x)-\tau(y)=x-\lfloor x\rfloor -\big(y-\lfloor y\rfloor \big)=x-y-z$.
Therefore, for $j=1,2,\cdots,J-1$, there exists $z_j\in \Z$ such that 
\begin{equation*}
	\tau(\lfloor s_0\rfloor\hata_j) - \tau(\tildexi_j-\tfrac{\tildexi_J}{a_J}a_j)
	=\lfloor s_0\rfloor\hata_j -\big(\tildexi_j-\tfrac{\tildexi_J}{a_J}a_j\big) -z_j
	=\lfloor s_0\rfloor\hata_j+\tfrac{\tildexi_J}{a_J}a_j -(z_j+\tildexi_j),
\end{equation*}
which implies
\begin{equation*}
	\Big|\lfloor s_0\rfloor\hata_j+\tfrac{\tildexi_J}{a_J}a_j -(z_j+\tildexi_j)\Big|
	=\Big|\tau(\lfloor s_0\rfloor\hata_j) - \tau(\tildexi_j-\tfrac{\tildexi_J}{a_J}a_j)\Big|<5\varepsilon.
\end{equation*}
It follows that, for $j=1,2,\cdots,J-1$, 
\begin{equation*}
	\lfloor s_0\rfloor\hata_j+\tfrac{\tildexi_J}{a_J}a_j\in 
	[z_j+   \underbrace{\tildexi_j-5\varepsilon}_{\ge \varepsilon }, 
	z_j+    \underbrace{\tildexi_j+5\varepsilon}_{\le 1- \varepsilon }  ]\subseteq [z_j+\varepsilon,z_j+1-\varepsilon],
\end{equation*}
where the fact  $\varepsilon\le \tildexi_j-5\varepsilon\le \tildexi_j+5\varepsilon\le 1-\varepsilon$ comes from Equation~\eqref{eq:tildexi1}. Therefore, 
we have 
\begin{equation*}
    \Big\lfloor \lfloor s_0\rfloor\hata_j+\tfrac{\tildexi_J}{a_J}a_j)\Big\rfloor=z_j\quad \tn{
for $j=1,2,\cdots,J-1$,}
\end{equation*}
 implying
\begin{equation*}
	\tau(\lfloor s_0\rfloor\hata_j+\tfrac{\tildexi_J}{a_J}a_j)=\big(\lfloor s_0\rfloor\hata_j+\tfrac{\tildexi_J}{a_J}a_j\big)-z_j\in 
	[\tildexi_j-5\varepsilon, 
	\tildexi_j+5\varepsilon].
\end{equation*}
 
 Clearly, we have 
\begin{equation*}
	\lfloor s_0\rfloor\hata_j+\tfrac{\tildexi_J}{a_J}a_j=\lfloor s_0\rfloor\Big(\tfrac{\hatn}{a_J}a_j+k_j\Big)+\tfrac{\tildexi_J}{a_J}a_j=\tfrac{\lfloor s_0\rfloor \hatn+\tildexi_J}{a_J}a_j+\underbrace{k_j\lfloor s_0\rfloor}_{\in\Z}
\end{equation*}
for $j=1,2,\cdots,J-1$, which implies 
\begin{equation*}
	\tau(\tfrac{\lfloor s_0\rfloor \hatn+\tildexi_J}{a_J}a_j)=\tau(\lfloor s_0\rfloor\hata_j+\tfrac{\tildexi_J}{a_J}a_j)\in 
	[\tildexi_j-5\varepsilon, 
	\tildexi_j+5\varepsilon].
\end{equation*}
We also need to consider the case $j=J$. By Equation~\eqref{eq:tildexi1}, we have $\tildexi_J\in [6\varepsilon,1-6\varepsilon]$, from which we deduce
\begin{equation*}
    \tau(\tfrac{\lfloor s_0\rfloor \hatn+\tildexi_J}{a_J}a_J)
    =\tau(\underbrace{\lfloor s_0\rfloor \hatn}_{\in\Z}+\tildexi_J)=\tildexi_J.
\end{equation*}
Thus, for $j=1,2,\cdots,J$, we have
\begin{equation*}
	\Big|\tau(\tfrac{\lfloor s_0\rfloor \hatn+\tildexi_J}{a_J}a_j)-\tildexi_j\Big|\le 5\varepsilon.
\end{equation*}
By Equation~\eqref{eq:tildexi2},  we have $|\tildexi_j-\xi_j|<6\varepsilon$ for $j=1,2,\cdots,J$, which implies
\begin{equation*}
	\Big|\tau(\tfrac{\lfloor s_0\rfloor \hatn+\tildexi_J}{a_J}a_j)-\xi_j\Big|\le 
	\Big|\tau(\tfrac{\lfloor s_0\rfloor \hatn+\tildexi_J}{a_J}a_j)-\tildexi_j\Big|+\big|\tildexi_j-\xi_j\big|\le 5\varepsilon+6\varepsilon=11\varepsilon.
\end{equation*}
That means $w_0=\tfrac{\lfloor s_0\rfloor \hatn+\tildexi_J}{a_J}$ is the desired $w$ in Equation~\eqref{eq:goal:w} and the constant $C>0$ therein is $11$. Therefore,
\begin{equation*}
	\big|\tau(w_0a_j)-\xi_j\big|\le 11\varepsilon\quad\tn{for $j=1,2,\cdots,J$.}
\end{equation*}
Since $\bmxi=[\xi_1,\xi_2,\cdots,\xi_J]^T\in [0,1]^J$ and $\varepsilon>0$ are arbitrary, the following set
\begin{equation*}
	\Big\{\big[\tau(w a_1),  \  
	\tau(wa_2),\ \cdots,\ 
	\tau(wa_J)\big]^T:w\in\R \Big\}\subseteq [0,1)^{J}
\end{equation*}
is dense in $[0,1]^{J}$ as desired. We finish the process of  mathematical induction and therefore finish the proof by the principle of mathematical induction.
\end{proof}
We remark that the target parameter $w_0=\tfrac{\lfloor s_0\rfloor \hatn+\tildexi_J}{a_J}$ designed in the above proof may not be bounded uniformly for any approximation error $\varepsilon$ since $\hatn$ can be arbitrarily large as $\varepsilon$ goes to $0$. Therefore, the network in Theorem~\ref{thm:main} may require sufficiently large parameters to achieve an arbitrarily small error $\varepsilon$.

\section{Conclusion}\label{sec:conclusion}
This paper studies the super approximation power of deep feed-forward neural networks activated by EUAF with a fixed size.
It is proved by construction that there exists an EUAF network architecture with $d$ input neurons, 
a maximum width $36d(2d+1)$, $11$ hidden layers, and at most $5437(d+1)(2d+1)$ nonzero parameters, 
achieving the universal approximation property by only adjusting its finitely many  parameters. That is, without changing the network size, our EUAF network can approximate any continuous function $f:[a,b]^d\to \R$ within an arbitrarily small error $\varepsilon>0$ with appropriate parameters depending on $f$, $\varepsilon$, $d$, $a$, and $b$. 
Moreover, augmenting this EUAF network using one more layer with $2$ neurons can exactly realize a classification function $\sum_{j=1}^J r_j\cdot \one_{E_j}$ in $\bigcup_{j=1}^J E_j$ for any $J\in\N^+$, where $r_1,r_2,\cdots,r_J$ are distinct rational numbers and $E_1,E_2,\cdots,E_J$ are arbitrary pairwise disjoint bounded closed subsets of $\mathbb{R}^d$.

While we are interested in the analysis of the approximation error here,
it would be very interesting to investigate the generalization and optimization errors of EUAF networks.
Acting as a proof of concept, our experimentation shows the numerical advantages of EUAF compared to ReLU.
We believe our EUAF activation function could be further developed and applied to real-world applications.

\acks{Z.~Shen was supported by 
Distinguished Professorship of National University of Singapore.
H.~Yang was partially supported by the US National Science Foundation under award DMS-2244988, DMS-2206333, and the Office of Naval Research Young Investigator Award.
}
\vskip 0.2in
\bibliography{references}
\end{document}